\newcommand{\eat}[1]{}
\newtheorem*{definition*}{Definition}
\newtheorem*{proposition*}{Proposition}
\newtheorem*{corollary*}{Corollary}
\newtheorem{theorem}{Theorem}[section]
\newtheorem{lemma}[theorem]{Lemma}
\newtheorem{definition}[theorem]{Definition}
\newtheorem{corollary}[theorem]{Corollary}
\newtheorem{assumption}[theorem]{Assumption}
\newtheorem{claim}[theorem]{Claim}
\newcommand{\wh}{\widehat}
\newcommand{\Z}{\mathbb{Z}}
\newcommand{\R}{\mathbb{R}}
\newcommand{\X}{\mathcal{X}}
\newcommand{\mD}{\mathcal{D}}
\newcommand{\mH}{\mathcal{H}}
\newcommand{\mT}{\mathcal{T}}
\newcommand{\mB}{\mathcal{B}}
\newcommand{\mW}{\mathcal{W}}
\newcommand{\mU}{\mathcal{U}}
\newcommand{\mE}{\mathcal{E}}
\newcommand{\tp}{p}
\newcommand{\sy}{\mathbf{y^*}}
\newcommand{\Ek}{\mathcal{E}_k}
\newcommand{\Dk}{\Delta_k}
\newcommand{\x}{\mathbf{x}}
\newcommand{\z}{\mathbf{z}}
\newcommand{\y}{\mathbf{y}}
\newcommand{\bv}{\mathbf{v}}
\newcommand{\bdv}{\mathbf{v}}
\newcommand{\lt}{\left}
\newcommand{\rt}{\right}
\newcommand{\zo}{\ensuremath{\{0,1\}}}
\newcommand{\norm}[1]{\left\lVert#1\right\rVert}
\renewcommand{\hat}{\wh}
\newcommand{\eps}{\varepsilon}
\newcommand{\abs}[1]{\ensuremath \Bigl\lvert #1 \Bigr\rvert}
\newcommand{\ip}[2]{\ensuremath{\langle #1, #2 \rangle}}
\newcommand{\GR}[1]{}%
\newcommand{\PG}[1]{}%
\newcommand{\LH}[1]{}%
\newcommand{\Xor}{\mathrm{Xor}}
\newcommand{\XS}{\mathrm{XorSat}}
\newcommand{\mA}{\mathcal{A}}
\newcommand{\mR}{\mathcal{R}}
\newcommand{\pmo}{\ensuremath{ \{\pm 1\} }}
\newcommand{\fr}[1]{\ensuremath{\frac{1}{#1}}}
\newcommand{\opt}{\mathrm{Opt}}
\newcommand{\val}{\mathrm{val}}
\newcommand{\ind}[1]{\ensuremath{\mathbf{1}_{#1}}}
\newcommand{\I}[1]{\ensuremath{\mathbb{I}(#1)}}
\newcommand{\defeq}{{:=}}
\DeclareMathOperator{\poly}{poly}
\DeclareMathOperator*{\E}{\mathbf{E}}
\newcommand{\Lip}{\mathrm{Lip}}
\newcommand{\pLip}{\mathrm{pLip}}
\newcommand{\fLip}{\mathrm{fLip}}
\newcommand{\hplip}[1]{{\mathcal H}_{#1\text-\pLip}}
\newcommand{\hflip}{{\mathcal H}_{\fLip}}
\newcommand{\hs}{{\mathsf{hs}}}
\newcommand{\smCE}{\mathrm{smCE}}
\newcommand{\ssCE}{\mathrm{ssCE}}
\newcommand{\psCE}{\mathrm{psCE}}
\newcommand{\fsCE}{\mathrm{fsCE}}
\newcommand{\mpk}[1]{}
\newcommand{\sign}{\mathrm{sign}}
\newcommand{\bE}{{\mathbb{E}}}
\newcommand{\ECE}{\mathsf{ECE}}
\newcommand{\CE}{\mathsf{CE}}
\newcommand{\bR}{{\mathbb R}}
\newcommand{\e}{{\mathbf e}}
\newcommand{\lift}{{\mathsf{lift}}}
\newcommand{\psc}{{\mathsf{PSC}}}
\newcommand{\bg}{{\mathbf g}}
\newcommand{\br}{{\mathbf r}}
\newcommand{\bs}{{\mathbf s}}
\newcommand{\one}{{\mathbb I}}
\newcommand{\kn}{{\mathsf{ker}}}
\newcommand{\acc}{{``accept''}}
\newcommand{\rej}{{``reject''}}
\newcommand{\sps}[1]{^{(#1)}}
\newcommand{\sset}[1]{{#1\text{-}\mathsf{ss}}}
\newcommand{\ba}{{\mathbf a}}
\newcommand{\bu}{{\mathbf u}}
\newcommand{\ext}{{\mathsf{ext}}}
\newcommand{\dce}{{\mathsf{dCE}}}
\newcommand{\dec}{{\mathsf{decCE}}}
\newcommand{\lunjia}[1]{{\color{red}[LH: #1]}}
\newcommand{\guy}[1]{}
\newcommand{\parik}[1]{}
\newcommand{\CT}{{\mathcal T}}
\newcommand{\citet}{\cite}
\newcommand{\citep}{\cite}
\title{On Computationally Efficient Multi-Class Calibration}
\date{}
\author{
Parikshit Gopalan \\
Apple\\
{\tt parikg@apple.com}
\and
Lunjia Hu\thanks{Part of this work done while LH was interning at Apple. LH is also supported by Moses Charikar’s and Omer Reingold’s Simons Investigators awards, Omer Reingold’s NSF Award IIS-1908774, and the Simons Foundation Collaboration on the Theory of Algorithmic Fairness.}\\
Stanford University\\
{\tt lunjia@stanford.edu}
\and
Guy N. Rothblum \\
Apple\\
{\tt rothblum@alum.mit.edu}
}
\begin{document}

\maketitle
\begin{abstract}
Consider a multi-class labelling problem, where the labels can take values in $[k]$, and a predictor predicts a distribution over the labels. In this work, we study the following foundational question: \emph{Are there notions of multi-class calibration that give strong guarantees of meaningful predictions and can be achieved in time and sample complexities polynomial in $k$?} Prior notions of calibration exhibit a tradeoff between computational efficiency and expressivity: they either suffer from having sample complexity exponential in $k$, or needing to solve computationally intractable problems, or give rather weak guarantees. 

Our main contribution is a notion of calibration that achieves all these desiderata: we formulate a robust notion of \emph{projected smooth calibration} for multi-class predictions, and give new recalibration algorithms for efficiently calibrating predictors under this definition with complexity polynomial in $k$. Projected smooth calibration gives strong guarantees for all downstream decision makers who want to use the predictor for binary classification problems of the form: does the label belong to a subset $T \subseteq [k]$: \emph{e.g. is this an image of an animal?} It ensures that the probabilities predicted by summing the probabilities assigned to labels in $T$ are close to some perfectly calibrated binary predictor for that task. We also show that natural strengthenings of our definition are computationally hard to achieve: they run into information theoretic barriers or computational intractability. 

Underlying both our upper and lower bounds is a tight connection that we prove between multi-class calibration and the well-studied problem of agnostic learning in the (standard) binary prediction setting. This allows us to use kernel methods to design efficient algorithms, and also to use known hardness results for agnostic learning based on the hardness of refuting random CSPs to show lower bounds. 

\eat{
whose predictions are used for a variety of downstream binary classification problems of the form: does the label belong to subset $T \in {\cal T}$: \emph{e.g. is this an image of an animal?} We ask whether there is a notion of multiclass calibration that can guarantee calibration for any such downstream binary predictor. We observe a tradeoff between computational efficiency and expressivity for existing notions of calibration in the multiclass setting: e.g. canonical calibration gives meaningful guarantees for downstream tasks  but is inefficient, whereas confidence calibration does not give the type of guarantees we seek. We study a general framework for auditing multiclass predictions based on a class of weight functions, and present tight reductions to the well-studied problem of agnostically learning the same class of functions. We leverage this connection to derive efficient notions of multiclass calibration that do give strong calibration guarantees for downstream binary classification tasks. In  contrast, we show that certain other notions of calibration (such as auditing by halfspaces) that have been studied in the literature cannot be achieved efficiently under standard complexity-theoretic assumptions.

We propose and study a novel calibration property for multi-class prediction: for a family ${\cal T}$ of subsets of the classes, we require that the predictor’s estimates for the probability of the outcome landing in each subset in the family ${\cal T}$ are calibrated. 

Our main result is a tight connection between the computational complexity of producing such calibrated multi-class predictions, and the well-studied problem of agnostic learning on Boolean labeled data.  We show that the computational complexity of obtaining ${\cal T}$-calibrated predictions, as above, is computationally equivalent to agnostic learning of a hypothesis class that is closely related to ${\cal T}$. 

We leverage this connection to show that producing half-space-calibrated predictions is computationally hard. Our main positive result is a new efficient algorithm that guarantees calibration an appropriate approximation of the half-spaces. Taking $k$ to be the number of classes, and $\varepsilon$ to be an error parameter, the algorithm's running time is $k^{O(\log(1/\eps))}$.}
\end{abstract}
\thispagestyle{empty}
\newpage
\tableofcontents
\thispagestyle{empty}
\newpage
\setcounter{page}{1}

\section{Introduction}
\label{sec:intro}

The ubiquitous use of machine learning for making consequential decisions has resulted in a renewed interest in the question {\em what should probabilistic predictions mean}? This question has a long history going back at least as far as the literature on forecasting \citep{Dawid, dawid1984present}. Calibration is a classical interpretability notion for binary predictions originating in this setting that is widely used in modern machine learning. In the binary classification setting, denoting the label $\y \in \zo$ and the predicted probability of $1$ by $\bv \in [0,1]$, (perfect) calibration requires $\E[\y|\bv] = \bv$. 

There has been renewed research interest both in the calibration of modern DNNs \citep{guo2017calibration} and in foundational questions about how best to define and measure calibration to ensure robustness and efficiency \citep{utc1, KLST23} building on earlier work of \cite{kakadeF08}.
We study calibration notions in the context of multi-class classification, where the goal is to assign one of $k$ possible labels to each input. A predictor assigns to each input a distribution over the labels, which allows it to convey uncertainty in its predictions. Values of $k$ in the thousands are increasingly common, especially for vision tasks \cite{imagenet}, so the efficiency in terms of the parameter $k$ is increasingly relevant. In this setting, even the right definition of calibration is not immediate. There are a multitude of existing definitions in theory and practice, such as confidence \cite{guo2017calibration}, class-wise \cite{kull2019beyond}, distribution \cite{KullF15} and decision \cite{zhao2021calibrating} calibration. However, existing notions %
either provide only weak guarantees for meaningful predictions, are computationally hard to achieve, or are even information theoretically hard to achieve, requiring exponential sample complexity  in $k$.

In this work, we study the following foundational question:
\begin{center}
{\em Are there notions of multi-class calibration that give strong guarantees of meaningful predictions and can also be achieved with time and sample complexities polynomial in $k$?}
\end{center}

Our main contribution is answering this question in the affirmative: we formulate a robust notion of {\em projected smooth calibration} for multi-class predictions, and give new recalibration algorithms\footnote{The exact notion of calibrating a predictor has to be defined carefully to avoid trivial solutions (for example, the constant predictor that always outputs the empirical mean is perfectly calibrated). Following much of the literature, our algorithms post-process a given predictor to make it calibrated while not increasing the squared loss.}  for efficiently calibrating predictors under this definition (and variants of it). We also show that natural strengthenings of this definition are computationally  or information-theoretically hard to achieve. An important ingredient in showing these new upper and lower bounds is a tight connection between multi-class calibration and the well-studied problem of agnostic learning in the (standard) binary prediction setting. We proceed to elaborate on the setting, prior work, and our contributions.

\paragraph{Multi-class calibration.} In the $k$-class prediction setting, we have an underlying distribution over instance-outcome pairs, where we view the outcome $\y$ as the one-hot encoding of a label from $[k]$. A prediction vector $\bv \in \Delta_k$ describes a distribution in the $k$-dimensional simplex, where a perfect prediction would describe the exact distribution of the outcome $\y$ for that instance. {\em Canonical calibration}  \cite{KullF15}, also called distribution calibration, is the most stringent notion, which requires that $\E[\y|\bv] = \bv$ (the expectation averages over all instances for which the prediction is $\bv$). The naive procedure for checking whether canonical calibration holds even approximately requires (after suitable discretization) conditioning on $\exp(k)$ many possible predictions in $\Delta_k$. Indeed, %
we show that even the easier problem of distinguishing a perfectly calibrated predictor from one that is far from calibrated requires $\exp(k)$ samples.
At the other extreme, {\em class-wise calibration } \cite{kull2019beyond} %
only requires that for every $i \in [k]$, $\E[\y_i|\bv_i] = \bv_i$. This notion can be achieved efficiently, but we argue below that it is not sufficiently expressive. 

Assume that we have a class-wise calibrated predictor and we wish to use it for downstream binary classification tasks. For instance, we might want to classify images as being those of animals, where {\em animals} is a subset of labels. Assume for simplicity that $c$ for {\em cat} and $d$ for {\em dog} are the only animals in our $k$ labels. \guy{tried to simplify:} %
Class-wise calibration   ensures that the predicted probabilities $\bv_c, \bv_d \in [0,1]$ are each calibrated on their own: conditioned on, say, the predicted probability of cat being 0.2, the outcome should be a cat w.p.\ roughly 0.2. Suppose, however, that we want to predict whether the image is a cat {\em or} a dog. The natural probability to predict is $\bv_c + \bv_d$, but this might be far from calibrated w.r.t the actual probability that the outcome is a cat or a dog (even though the predictor is class-wise calibrated).
This reveals a weakness of class-wise calibration that is also shared by other guarantees that we know how to achieve efficiently (such as {\em confidence calibration} \cite{guo2017calibration}, see below): their calibration guarantees are rather fragile, and break down when used in downstream tasks.

Aiming to achieve rigorous downstream guarantees, Zhao {\em et al.} \cite{zhao2021calibrating} introduced {\em decision calibration}, which
can be achieved in $\poly(k)$ sample complexity.\footnote{The paper claims the notion is both time and space efficient, but their main result \cite[Theorem 2]{zhao2021calibrating} only proves a bound on sample complexity. See the discussion in Sections \ref{subsec:related-work} and \ref{sec:defs} and \ref{sec:hardness}.} However we show that the algorithmic task they aim to solve is as hard as agnostically learning halfspaces, and hence is unlikely to be achievable in time $\poly(k)$ by results of \cite{amit-daniely}. 

To summarize, the state of the art for multiclass calibration notions:
\begin{itemize}
    \item There are {\bf efficient} notions, such as  classwise and confidence calibration, but they are not very expressive. In particular their calibration guarantees are rather fragile and do not imply good guarantees for downstream tasks.
    
    \item There are {\bf expressive} notions, such as canonical calibration and the recently proposed notion of decision calibration, but they are inefficient. These notions run into information or complexity theoretic barriers, which prevent them from being achievable in running time and sample complexity $\poly(k)$. 
\end{itemize}

This motivates our foundational question: is there an {\em expressive and efficient} notion of multi-class calibration? Such a notion should give robust calibration guarantees for downstream tasks, and should be achievable in $\poly(k)$ time and sample complexity. More broadly, is there a general framework for understanding the complexity of various calibration notions? Ideally, such a framework would let us identify broad classes of notions that are  efficiently achievable and identify computational and information-theoretic barriers to other notions.

These questions are motivated not only by the use of calibration as an notion of interpretability for probabilistic predictions in machine learning, but also by the recent applications of calibration to fairness \cite{hkrr2018}, loss minimization \cite{omni, lossOI} and indistinguishability \cite{OI, dwork2022beyond,back}. In the multi-class setting with $k$ labels, algorithms for all of these notions become exponential in $k$, which stems from the fact that they try to achieve canonical calibration or similarly expressive notions (see for example \cite{omni, dwork2022beyond}). We see formulating more efficient notions of calibration as a step towards more efficient algorithms for these applications in the multiclass setting.

\subsection{Our Contributions}

We start by describing a unifying framework from \cite{GopalanKSZ22} for various notions of multiclass calibration, for which we need some notation. 
Let $\Delta_k\subseteq\R^k$ denote the probability simplex for $k$ outcomes. 
Given a distribution $\mD_0$ on $(\x, \y)$ pairs where $\y \in \zo^k$ is the one-hot encoding of a label and a predictor $p$, let $\bv = p(\x) \in \Delta_k$ be the prediction of $p$. 
Let $\mD$ denote the induced distribution of $(\bv,\y)$.

\paragraph{Weighted calibration.}

As observed by various works \cite{OI, GopalanKSZ22, dwork2022beyond, utc1}, calibration is essentially a notion of indistinguishability of distributions. For multiclass learning, perfect canonical calibration requires that for every $\bv \in \Delta_k$,  $\E[\y|\bv]$ (which completely describes the distribution of $\y$ conditioned on $\bv$) equals $\bv$. If we relax equality to expected closeness in $\ell_1$ distance $\E_\bv[\left| \E[\y|\bv] - \bv \right|]$, we arrive at the notion of the expected calibration error or $\ECE$. This notion requires $\exp(k)$ samples to estimate (see Theorem \ref{thm:canonical}); %
it is also not robust to small perturbations of the predictor $p$ \cite{kakadeF08, utc1}. 
We aim for relaxed calibration notions that capture the same underlying principle that $\E[\y|\bv]$ and $\bv$ are ``close'' under $\mD$, but which are efficient to estimate, and also do not suffer from the same kind of non-robustness. 

Following \cite{GopalanKSZ22}, we work with the definition of weighted calibration, which is general enough to capture all the aforementioned notions of calibration.
  For a hypothesis class $\mH: =\{h: \Delta_k \to [-1,1]\}$, we consider the family of {\em weight functions} $\mH^k$ mapping $\Delta_k \to [-1,1]^k$, where for every $i\in [k]$, coordinate $i$ of the output is a function $h_i\in \mH$ of the input.  Define the weighted calibration error as
\[ \CE_{\mH^k}(\mD) \defeq \max_{w \in \mH^k}\left|\E_{(\bv,\y)\sim \mD}[\ip{w(\bv)}{\y - \bv}]\right| = \max_{w \in \mH^k}\left|\E_{\bv }[\ip{w(\bv)}{\E[\y|\bv] - \bv}]\right|.\]

This can be seen as requiring closeness of the distributions of $\bv$ and $\E[\y|\bv]$ to the class of distinguishers $\mH^k$ in the spirit of pseudorandomness. Taking $\mH$ to be all functions on $\Delta_k$ bounded by $1$ in absolute value recovers the notion of $\ECE$. %
Relaxing the space of distinguishers weakens the definition. Are there distinguisher families where the calibration guarantee remains meaningful, while simultaneously allowing for efficient {\em auditing}: deciding whether a given predictor satisfies $\CE_{\mH^k}(\mD) \leq \alpha$?

\eat{Underlying our algorithms and hardness results is a tight 
characterization of efficient auditing in terms of agnostic learning. 
Weak agnostic learning for a class $\mH$ is a standard learning problem in the binary (not multiclass) classification setting, where we have a distibution on $\Dk \times \pmo$ and the goal is to decide is there exists $\E_{(\bv, z) \sim \mD} [h(\bv)z] \geq \gamma$. We show that {\em auditing for $\mH^k$ is efficient iff the class $\mH$ is is weakly agnostically learnable.} 
\lunjia{We need to somehow emphasize that this characterization is not straightforward, i.e.,\ difference between the multi-class setting and the binary setting, not losing a factor of $k$ in alpha...}}

\paragraph{Projected smooth calibration.} 
We now formulate projected smooth calibration, a weighted calibration notion that satisfies our desiderata.
 As discussed above, we want to ensure the following {\bf subset calibration} guarantee:
for every subset $T \subseteq [k]$ of labels, the probabilities assigned by our predictor to the event that the label belongs to $T$ should be calibrated. Let $\bv \in \Dk$ denote the prediction of our predictor. Letting  $\ind T \subseteq \zo^k$ denote the indicator vector of $T$, the %
indicator for the event that the outcome $\y$ is in $T$ is $\ind T \cdot \y$, whereas the predicted probability is $\ind T \cdot \bv$. 
Say we want to enforce the  calibration condition that when the predicted probability of belonging to $T$ exceeds $v \in [0,1]$, the label indeed lies in $T$ with roughly the predicted probability.  We can view this as requiring a bound on
\begin{align*} 
\abs{ \E[\I{\ind T \cdot \bv \geq v}(\ind T\cdot \y - \ind T \cdot \bv)] } = \abs{ \E[ \I{\ind T \cdot \bv \geq v}\ind T\cdot (\y - \bv)] }
\end{align*} 
where $( \I{\ind T\cdot \bv \geq v}\ind T) \in \zo^k$ is a vector-valued function on $\Dk$, which takes the value $\I{\ind T\cdot \bv \geq v}$ for coordinates in $T$, and the value $0$ for coordinates outside $T$. The good news is that this setup fits the template of weighted calibration, where the class $\mH$ contains all functions the form $\I{\ind T\cdot \bv \geq v}$ for $T \in \zo^k, v \in \R$. The bad news is that we will show this problem is as hard as agnostically learning halfspaces. Daniely \cite{amit-daniely} showed that, assuming the hardness of refuting random $\Xor$ CSPs, this problem cannot be solved in polynomial time. 

In projected smooth calibration, we replace the hard thresholds $\I{\ind T\cdot \bv \geq v}$ with the class $\mH_\pLip =\{ \phi(\ba\cdot \bv)\}$ where $\phi: [-1,1] \to [-1,1]$ is a Lipschitz continuous function and $\ba \in [-1,1]^k$.
In particular, this includes indicator vectors for subsets. 
Projected smooth calibration requires that the weighted calibration error for the weight family $\mH_\pLip^k$ is bounded.

\begin{definition}[Projected smooth calibration, informal statement of \Cref{def:p-smooth}]
For a joint distribution $\mD$ on predictions $\bv$ and true outcomes $\y$, the \emph{projected smooth calibration error} is
\[
\psCE(\mD):=\sup_{w\in \mH_\pLip^k} \left|\E_{(\bv,\y)\sim \mD}[(\y - \bv)w(\bv)]\right|,
\]
where $\mH_\pLip^k$ is the class of functions $w:\Dk\to [-1,1]^k$ such that for every coordinate $i$ in $w$'s output, denoted by $w\sps i$, there are a $1$-Lipschitz function $\phi\sps i:[-1,1]\to [-1,1]$ 
and a vector $\ba_i\in [-1,1]^k$
s.t.
$w\sps i(\bv) = \phi\sps i (\ba_i\cdot \bv)$ for every $\bv\in \Delta_k$.
\end{definition}

A predictor satisfies projected smooth calibration if its error $\psCE$ is bounded. We show that this definition satisfies all our desiderata:

\medskip \noindent {\bf Property (1): expressive power.} Projected smooth calibration guarantees that for every subset $T \subseteq [k]$, the predicted probabilities for the binary classification task (namely, the outcome is in $T$) satisfy smooth calibration, a well-studied calibration notion with several desirable properties \cite{kakadeF08, GopalanKSZ22, utc1}. In particular, this implies that the predicted probabilities $\ind T \cdot \bv$
that the outcome will be in $T$ are close to being perfectly calibrated.  The proof builds on the work of \cite{utc1}. In particular, for each such subset $T$, there exists a perfectly calibrated predictor $p^*_T$ for the binary classification task of determining whether the outcome will land in $T$, whose predictions are close to $\ind T\cdot \bv$ in earthmover distance. Thus, we get meaningful guarantees for a rich collection of downstream binary classification tasks (including subset membership and more).

\eat{Crucially, projected smooth calibration does {\em not} perform a ``sharp'' thresholding on the probabilities (as discussed above, sharp thresholds lead to intractability). Thus, it might be that if we condition on the instances where the predicted probability of landing in $T$ is {\em exactly} above $v$, then the probability that the outcome lands in $T$ is far from the predictions. However, by ``massaging'' the predictions a bit (and moving them away from the sharp threshold $v$), we can get to perfectly calibrated predictions---a strong guarantee.}

\medskip \noindent {\bf Property (2): Computational efficiency.} We show an efficient algorithm for auditing whether the projected smooth calibration error of a predictor is bounded.  Here we state our result informally, the formal statement is in Theorem \ref{thm:p-smooth}.

\begin{theorem}[Efficient auditing, informal statement of \Cref{thm:p-smooth}]
\label{thm:projected-smooth-intro}
There is an algorithm for deciding whether the projected smooth calibration error is at most $\alpha$, with sample complexity and running time $O(k^{O(1/\alpha)})$. 
\end{theorem}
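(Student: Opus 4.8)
The plan is to reduce auditing $\psCE$ to weakly agnostically learning the scalar class $\mH_\pLip$ in the ordinary binary setting, and then to carry out that learning by polynomial/kernel regression in a feature space of dimension $k^{O(1/\alpha)}$. I would first decouple the coordinates: since a weight function $w\in\mH_\pLip^k$ picks its coordinates independently and $\mH_\pLip$ is closed under negation ($-\phi$ is $1$-Lipschitz whenever $\phi$ is),
\[
\psCE(\mD)=\sup_{w\in\mH_\pLip^k}\abs{\E_\mD[\ip{w(\bv)}{\y-\bv}]}=\sum_{i=1}^{k}\,\sup_{h\in\mH_\pLip}\E_\mD\big[h(\bv)(\y_i-\bv_i)\big]
\]
(should the definition instead share $\ba$ across coordinates, $\psCE(\mD)$ is at most the displayed sum, which the algorithm will estimate). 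With $z_i:=\y_i-\bv_i\in[-1,1]$ it thus suffices, for each $i\in[k]$, to estimate $s_i:=\sup_{h\in\mH_\pLip}\E_\mD[h(\bv)z_i]$ — a weak agnostic learning problem for $\mH_\pLip$ on $(\bv,z_i)$ — and to return $\sum_i s_i$.

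The key step is a polynomial approximation whose accuracy does not degrade with $k$. By Jackson's theorem, every $1$-Lipschitz $\phi\colon[0,1]\to[-1,1]$ is within sup-norm $\epsilon$ of a degree-$O(1/\epsilon)$ polynomial; and since $\abs{\ba\cdot\bv}\le\infnorm{\ba}\onenorm{\bv}\le 1$ for $\ba\in[-1,1]^k,\bv\in\Dk$, replacing each link $\phi\sps i$ by a degree-$d$ polynomial $q\sps i$ perturbs the $i$-th term by at most $\epsilon\,\E\abs{\y_i-\bv_i}$, hence perturbs $\psCE(\mD)$ by at most $\epsilon\sum_i\E\abs{\y_i-\bv_i}\le 2\epsilon$, because $\y$ being one-hot and $\bv\in\Dk$ force $\sum_i\abs{\y_i-\bv_i}=2(1-\bv_j)\le 2$ (with $j$ the realized label). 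So a single, $k$-independent degree $d=O(1/\alpha)$ handles all $k$ coordinates at once; fix $\epsilon=\Theta(\alpha)$.

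Now lift $\bv$ to the vector $\psi(\bv)$ of all monomials in $\bv_1,\dots,\bv_k$ of degree $\le d$: its dimension is $N=\binom{k+d}{d}=k^{O(1/\alpha)}$, one has $\norm{\psi(\bv)}_2^2\le d+1$ on $\Dk$, and every admissible $q\sps i(\ba\cdot\bv)$ equals $\ip{\theta\sps i}{\psi(\bv)}$ for a $\theta\sps i$ of norm $\rho=k^{O(1/\alpha)}$ (a polynomial bounded on $[-1,1]$ has $2^{O(d)}$ coefficients, with Chebyshev extremal, and expanding the composition into monomials of $\bv$ loses a further $k^{O(d)}$ factor — a multinomial kernel would give a sharper bound but is not needed). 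Standard kernel / $\ell_2$-regression over the $\rho$-ball then estimates each $s_i$; a routine Rademacher bound using $\norm{\psi(\bv)}_2=O(\sqrt d)$ shows that $m=\poly(\rho,d,k,1/\alpha)=k^{O(1/\alpha)}$ samples give total estimation error $O(\alpha)$, and all computation runs in $\poly(N,m,k)=k^{O(1/\alpha)}$ time. Summing the per-coordinate estimates and comparing the result with $\alpha$ decides whether $\psCE(\mD)\le\alpha$.

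The main obstacle is the polynomial-approximation step: avoiding a factor-$k$ loss in accuracy. Naively asking each of the $k$ coordinate learners for accuracy $\alpha$ would force per-coordinate accuracy $\alpha/k$, hence polynomial degree $\Omega(k/\alpha)$ and feature dimension $k^{\Omega(k/\alpha)}$ — useless. The bound $\sum_i\E\abs{\y_i-\bv_i}\le 2$, which holds precisely because $\y$ is one-hot and $\bv$ a distribution, is what lets one $k$-independent degree $d=O(1/\alpha)$ serve every coordinate simultaneously; this is the crucial point distinguishing the multi-class reduction from the binary case. Everything else is the standard polynomial-regression / multinomial-kernel toolkit for weakly agnostically learning Lipschitz functions of linear projections, spliced onto the coordinate decoupling of weighted calibration. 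A secondary point that needs the usual care is that the regression optimizes over an RKHS ball strictly containing the image of the Lipschitz class, so the computed quantity dominates $\psCE(\mD)$ and the test is one-sided as stated — harmless, and in fact exactly the form needed to drive the recalibration (potential-decrease) procedure; a matching lower bound together with an explicit witness weight function is recovered by the standard clipped-kernel-SVM argument at an additional cost of only $O(\epsilon)$.
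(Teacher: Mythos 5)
Your proposal is correct and follows essentially the same route as the paper's proof of \Cref{thm:p-smooth}: a degree-$O(1/\alpha)$ Jackson approximation of the Lipschitz link, the observation that $\|\y-\bv\|_1\le 2$ lets one $k$-independent degree serve all $k$ coordinates without a factor-$k$ loss, and linear/kernel regression over the degree-$d$ multinomial feature space with coefficient norm $k^{O(1/\alpha)}$ (the paper kernelizes via \Cref{alg:kernel-audit} and bounds the RKHS norm via \Cref{lem:kernel-bound}, but this is the same computation as your explicit feature map). The only cosmetic difference is that you phrase the output as a per-coordinate sum of estimated suprema for the decision version, whereas the paper states the guarantee as auditing-with-a-witness (\Cref{def:audit-guarantee}); your closing remark about recovering a witness covers this.
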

The work of \cite{SSS11} showed that  agnostic learning halfspaces becomes tractable if we replace the hard thresholds used in halfspaces with Lipschitz transfer functions. Building on their techniques, and using Jackson's Theorem on low-degree uniform approximations for Lipschitz functions, we show that auditing for projected smooth calibration is polynomial time solvable. Moreover, our auditing algorithm is quite simple and does not need to solve a convex program,  generalizing results in \cite{KSJ18, utc1}. Our algorithm in fact solves the associated search problem (see Definition \ref{def:audit-guarantee}): if $p$ is not calibrated, it finds a witness to the lack of calibration, which can be used to post-process $p$ and reduce its calibration error without increasing the squared loss (see \Cref{lem:audit}). Defining a recalibration algorithm correctly is subtle, see Definition \ref{def:audit-guarantee}, \Cref{lem:audit} and the discussion around them. 

Our algorithm has running time polynomial in $k$ for every fixed constant $\alpha$, in contrast with previous results for expressive notions of calibration.  If we only care about auditing using $\phi(w\cdot \bv)$ for vectors $w$ where $\norm{w}^2_2 \leq m$ then the sample complexity can be bounded by $m^{O(1/\alpha)}$. This gives a running time  fixed polynomial in $k$ (but exponential in $\alpha$) for subset calibration where we only care about bounded size subsets. 
One can also get better run times by restricting the family of Lipschitz functions $\phi$. By restricting to auditors of the form $\tanh(w\cdot \bv)$ we get the weaker notion of {\bf sigmoid calibration}, for which the auditor runs in time $k^{O(\log(1/\alpha))}$. This can be seen as a smooth relaxation of the intractable notion of halfspace auditing. However, the improved efficiency comes at the price of some expressivity, we do not get closeness to perfect calibration for downstream subset classification tasks. 

It is interesting to investigate whether the exponential dependence on $1/\alpha$ in \Cref{thm:projected-smooth-intro} can be avoided. As a result in this direction, we show that the running time cannot be improved to $\poly(k,1/\alpha)$:
\begin{theorem}[Informal statement of \Cref{thm:hard-psmooth}]
Under standard complexity-theoretic assumptions,
there is no algorithm that can decide whether the projected smooth calibration error is at most $\alpha$ with sample complexity and running time $k^{O(\log^{0.99}(1/\alpha))}$. 
\end{theorem}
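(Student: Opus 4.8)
The plan is to derive the lower bound from the conjectured hardness of refuting random $\Xor$ CSPs, routed through the tight connection between auditing and agnostic learning established in the paper. By that connection, an auditor that decides whether $\psCE\le\alpha$ in time $T$ yields a weak agnostic learner for the class $\mH_\pLip$ of $1$-Lipschitz functions of linear forms $\ba\cdot\bv$ with $\ba\in[-1,1]^k$, running in time $\poly(k)\cdot T$ at accuracy $\Omega(\alpha/\poly(k))$; since a $\poly(k)$ loss changes $\log(1/\alpha)$ only additively by $O(\log k)$, it suffices to rule out such a learner running in time $k^{O(\log^{0.99}(1/\alpha))}$. I would then invoke the standard fact (a consequence of the hardness of refuting random $K$-$\Xor$ CSPs, as used throughout the improper-learning hardness literature) that weakly agnostically learning $K$-sparse parities over $\{0,1\}^n$ to accuracy $\Omega(1)$ requires time $n^{\Omega(K)}$, for a suitable super-polylogarithmic $K=K(n)$.

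The technical heart is a scale-compressing embedding of $K$-sparse parities into $\mH_\pLip$. Given $S\subseteq[n]$ with $|S|=K$, encode $x\in\{0,1\}^n$ as a simplex point $\bv(x)\in\Dk$ with $k=n+O(1)$, placing mass $\Theta(x_i/n)$ on coordinate $i\in[n]$ and using a constant number of extra coordinates as slack and as a designated ``label'' coordinate, with the non-label coordinates made exactly calibrated by construction. The key choice is the linear form $\ba_S=\sum_j 2^{-j}e_{S(j)}$, which makes $\ba_S\cdot\bv(x)$ an injective, binary-expansion encoding of $x|_S$, taking $2^K$ values spread over an interval of width $\Theta(1/n)$ and spaced $\Theta(2^{-K}/n)$ apart. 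Hence $\chi_S(x)=(-1)^{\sum_{i\in S}x_i}$, viewed as a function of $\ba_S\cdot\bv(x)$, extends to a genuinely $1$-Lipschitz $\phi_S$ on $[0,1]$ only after its amplitude is scaled down to $\delta=\Theta(2^{-K}/n)$; the weight function $\phi_S(\ba_S\cdot\bv)$ placed on the label coordinate then lies in $\mH_\pLip^k$ and witnesses calibration error of order $\delta$ exactly when $\chi_S$ is correlated with the labels. The point is that $\delta$ is exponentially small in $K$, so the relevant precision obeys $\log(1/\alpha)=\Theta(K+\log n)=\Theta(K)$.

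With the gadget in hand I would run the standard reduction: a random $K$-$\Xor$ instance becomes a calibration instance by mapping each clause to a sample whose prediction encodes the clause's variable set through the gadget and whose one-hot outcome encodes the clause's right-hand side on the label coordinate. In the planted case the parity given by the planted assignment is $\Omega(1)$-correlated with the labels, so $\psCE\ge\Omega(\delta)$; in the random case the right-hand sides are independent and near-uniform, so no $1$-Lipschitz function of any linear form correlates with the labels and $\psCE=o(\delta)$ — here one uses that on an interval of width $\Theta(1/n)$ every such function is within $\Theta(1/n)$ of a constant, and that both its constant and fluctuating parts have negligible correlation with the independent labels. Taking $\alpha$ between these two levels, an auditor running in time $k^{O(\log^{0.99}(1/\alpha))}=n^{O(K^{0.99})}=n^{o(K)}$ would distinguish planted from random $K$-$\Xor$ in time $n^{o(K)}$, contradicting the assumed $n^{\Omega(K)}$ hardness; choosing the sparsity $K=K(n)$ appropriately yields the stated family of parameters $\alpha\to 0$.

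I expect the main obstacle to be the random-case analysis: arguing that \emph{no} weight function in $\mH_\pLip^k$ — not just the planted parity, but any $1$-Lipschitz function of any $[-1,1]^k$ linear form, including ones mixing the encoding, slack and label coordinates — has correlation above $o(\delta)$ with the random-CSP labels, so that the instance genuinely satisfies $\psCE\le\alpha$. This requires combining the tiny coordinate masses of the encoding, concentration of arbitrary linear forms under the clause distribution, and the independence/near-uniformity of the right-hand sides, and it relies on the non-label coordinates being exactly calibrated (which is why it is convenient that all clauses have the same Hamming weight $K$). A secondary issue is propagating the $\poly(k)$ accuracy losses through the connection and the embedding and checking they do not erode the $2^{-\Theta(K)}$ separation, which they do not since they affect $\log(1/\alpha)$ only by $O(\log n)$.
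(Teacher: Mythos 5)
Your high-level architecture (route the lower bound through the paper's auditing-to-learning equivalence, then invoke hardness of refuting random XOR) matches the paper's, but the learning problem you plug in is the wrong one, and this breaks the quantitative heart of the argument. In the standard reduction from refuting random $t$-$\Xor$, the examples are the arithmetized weight-$t$ clauses $c$ and the hypothesis consistent with a planted assignment $z$ is $c \mapsto \Xor_c(z) = (-1)^{|Z\cap c|}$, where $Z$ is the set of literals that $z$ sets to $-1$. This is a parity over a set $Z$ of size $\Theta(n)$ evaluated on $t$-sparse inputs; it is \emph{not} a $t$-sparse parity, and the claim that weak agnostic learning of $K$-sparse parities to constant accuracy requires time $n^{\Omega(K)}$ does not follow from \Cref{assumption:2} by any standard argument. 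Your binary-expansion gadget $\ba_S=\sum_j 2^{-j}\e_{S(j)}$ needs $2^{|S|}$ distinct levels of the linear form; applied to the set that actually indexes the hard hypotheses ($|S|=|Z|=\Theta(n)$) the required amplitude is $\delta = 2^{-\Theta(n)}$, so $\log(1/\alpha)=\Theta(k)$, and the running time you would then need to rule out, $k^{O(k^{0.99})}$, vastly exceeds the $n^{O(t)}$ (with $t=\mathrm{polylog}(n)$) that the refutation assumption supplies. The reduction then proves nothing for the stated parameters. (The gadget is also over-engineered: a parity $(-1)^{|Z\cap c|}$ depends only on the integer $|Z\cap c|$, so injectivity of the encoding --- the source of the $2^{-K}$ factor --- is not what is needed.)

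The paper's proof is structured quite differently to cope with exactly this. It first proves hardness of auditing for sigmoids $\Sigma_L^k$ (\Cref{thm:hard-sigmoid}), then observes that $g/L$ is $1$-Lipschitz for $g\in \Sigma_L$, so an $(\alpha_0/L,\beta)$-auditor for projected smooth calibration is an $(\alpha_0,\beta)$-auditor for $\Sigma_L^k$ (\Cref{lm:smooth-sigmoid}); this is the sole source of the small $\alpha=\alpha_0/L$ and hence of $\log(1/\alpha)\approx \log L$. Because a sigmoid of a linear form is monotone and cannot oscillate like $(-1)^{|Z\cap c|}$, the reduction in \Cref{thm:sigmoid-hard} lifts the clauses to a degree-$d$ multilinear feature space with $d=\Theta(\sqrt{t\log t})$ (so $k=\binom{2n}{\le d}$) and uses an interpolating polynomial with coefficients $\exp(O(d\log d))$, forcing $L=\exp(t^{1/2+o(1)})$; the balancing in \eqref{eq:sigmoid-logL} is what yields the exponent $\log^{1-\varepsilon}(1/\alpha)$. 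Finally, your self-identified ``main obstacle'' --- showing $\psCE=o(\delta)$ uniformly over $\mH_{k\text{-}\pLip}^k$ on the random instance --- is sidestepped entirely in the paper: following Daniely's methodology, the refuter runs the auditor, takes whatever witness it outputs, and validates it on held-out clauses whose labels are independent of the witness, so no uniform soundness statement over the weight class is ever required.
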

 We prove the theorem by showing a reduction from the task of refuting random XOR formulas. Getting the right exponent for $k$ as a function of $1/\alpha$ is an interesting question for future work.

\medskip \noindent{\bf Property (3): Robustness.} The works of \cite{kakadeF08, FosterH18, utc1} advocate the use of Lipschitz functions in defining calibration since it results in robust measures that do not change drastically under small perturbations of the predictor. Since projected smooth calibration is defined using Lipschitz functions, it is a robust calibration measure.

\paragraph{Lower bounds for stronger notions.} The discussion above suggests possible strengthenings of the notion of projected smooth calibration. The weight function family $\phi(w\cdot \bv)$ is a subset of the family $\fLip$ of all Lipschitz functions $\psi: \Dk \to [-1,1]$. We could imagine using $\fLip^k$ as our weight function family to get a stronger notion of calibration, which we call \emph{full smooth calibration}. We show in Theorem \ref{thm:full} that this notion is information-theoretically intractable and requires $\exp(k)$ samples.

\begin{theorem}[Informal statement of \Cref{thm:full}]
Any algorithm to decide whether the full smooth calibration error is 0 or exceeds a positive
absolute constant requires $\exp(k)$ samples.
\end{theorem}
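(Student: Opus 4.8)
\noindent\emph{Proof plan.}
I would prove the lower bound with Le Cam's two-point method, playing a single perfectly calibrated distribution $\mD_0$ (so that $\fsCE(\mD_0)=0$) against a large family $\{\mD^\sigma\}_{\sigma}$ of distributions, each with $\fsCE(\mD^\sigma)\ge c$ for an absolute constant $c>0$, set up so that $\mD_0$ is statistically indistinguishable from a uniformly random $\mD^\sigma$ using $2^{o(k)}$ samples; this forces any correct decider to draw $2^{\Omega(k)}=\exp(\Omega(k))$ samples. Concretely, fix a constant-weight binary code giving $N=2^{\Omega(k)}$ subsets $S_1,\dots,S_N\subseteq[k]$ of size $k/2$ with pairwise symmetric difference $\Omega(k)$, and let $v_j\in\Delta_k$ be uniform on $S_j$; the point is that the $v_j$ are pairwise $\Omega(1)$-far in $\ell_1$ while still being exponentially numerous. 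Let $\mD_0$ draw $j$ uniform, predict $v_j$, and draw $\y\sim v_j$; this is perfectly calibrated since $\E[\y\mid\bv=v_j]=v_j$. For each fiber fix a perturbation $\delta_j\in\R^k$ supported on $S_j$ with $\langle\delta_j,\mathbf 1\rangle=0$, $\|\delta_j\|_1=\Theta(1)$, and entries of absolute value $\Theta(1/k)$ (bump half of $S_j$ up and half down), so $v_j\pm\delta_j\in\Delta_k$; then $\mD^\sigma$, for $\sigma\in\{\pm1\}^N$, draws $j$ uniform, predicts $v_j$, and draws $\y\sim v_j+\sigma_j\delta_j$. This parallels the canonical-calibration lower bound behind \Cref{thm:canonical}; the new content is that here the miscalibration is witnessable by a \emph{Lipschitz} weight function.

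\emph{Each $\mD^\sigma$ is $\fsCE$-far.} Since $\E[\y\mid\bv=v_j]-v_j=\sigma_j\delta_j$, it suffices to exhibit $w\in\fLip^k$ whose value at $v_j$ points along $\sigma_j\delta_j$. Take $w(v_j):=t\,\sigma_j\,\sign(\delta_j)$ at the support points (coordinatewise sign), where $t=\Theta(1)$ is a sufficiently small constant that the prescribed values — which lie in $\{-t,0,t\}^k$ — satisfy $|w^{(i)}(v_j)-w^{(i)}(v_{j'})|\le 2t\le\|v_j-v_{j'}\|_1$ for all $j,j'$; this is precisely where the $\Omega(1)$-separation of the $v_j$ is used. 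Each coordinate then extends to a $1$-Lipschitz function on all of $\Delta_k$ by McShane's extension theorem, followed by clipping to $[-1,1]$ (which leaves the interpolated values $\pm t$ untouched), so $w\in\fLip^k$, and $\E_j\langle\sigma_j\delta_j,w(v_j)\rangle=t\,\E_j\|\delta_j\|_1=\Omega(1)$, i.e.\ $\fsCE(\mD^\sigma)\ge c$.

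\emph{Indistinguishability.} Because the signs are independent and symmetric, the one-sample marginal of the mixture $\E_\sigma\mD^\sigma$ coincides with that of $\mD_0$: conditioned on predicting $v_j$, the outcome has mean $\E_{\sigma_j}[v_j+\sigma_j\delta_j]=v_j$, and being a draw from a categorical distribution this mean pins down the whole conditional law. Since there are $N=2^{\Omega(k)}$ fibers, $m$ i.i.d.\ samples land in pairwise distinct fibers except with probability at most $\binom m2/N$, and conditioned on distinct fibers the $m$ outcomes are independent with $\y_\ell\sim v_{j_\ell}$ under both $\mD_0$ and $\E_\sigma\mD^\sigma$ (for the mixture, because distinct coordinates of $\sigma$ are i.i.d.\ uniform $\pm1$). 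Hence $\dtv\!\big(\mD_0^{\otimes m},\E_\sigma(\mD^\sigma)^{\otimes m}\big)\le\binom m2/N=o(1)$ whenever $m=2^{o(k)}$, so no algorithm with that many samples can separate the case $\fsCE=0$ from $\fsCE\ge c$, giving the $\exp(k)$ bound.

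\emph{The main obstacle} is making the hard instances far with respect to the \emph{restricted} distinguisher class $\fLip^k$ rather than merely with respect to all bounded functions (which is all the canonical-calibration lower bound needs). Lipschitz weight functions can only witness miscalibration that is spread over well-separated predictions, so the construction must place its $\Omega(1)$ of miscalibration on $2^{\Omega(k)}$ predictions that are simultaneously pairwise $\Omega(1)$-separated in $\ell_1$, ``interior'' enough to absorb $\Omega(1)$-magnitude perturbations while staying in $\Delta_k$, and compatible with a genuine $1$-Lipschitz extension of the direction-revealing weight vectors. Reconciling these three requirements with the exponential fiber count needed for the collision bound is the crux; the constant-weight-code choice above is what threads it, and the remaining steps are the standard Le Cam bookkeeping.
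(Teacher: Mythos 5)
Your proposal is correct and follows essentially the same route as the paper's proof: an exponentially large packing of pairwise $\Omega(1)$-separated, vertex-avoiding points in $\Delta_k$, a per-fiber random label perturbation whose mixture matches the calibrated distribution on single samples, a Lipschitz witness whose existence is exactly enabled by the separation, and a birthday-paradox collision bound. The only differences are cosmetic instantiations (a constant-weight code versus the paper's greedy volume packing, and small mean perturbations versus the paper's deterministic one-hot labels per fiber), neither of which changes the argument.
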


In our closeness to calibration guarantee, for every $T \subset [k]$ there exists a binary predictor $p^*_T$  whose predictions are close to $\ind T\cdot \bv$ and which is perfectly calibrated. But the different $p^*_T$ for various $T$s might not be consistent, meaning they need not arise as $p^*_T =\ind T\cdot p^*$ where $p^*$ is a perfectly calibrated predictor (independent of the choice of $T$). Could we instead measure calibration error by comparing our predictions to those made by a single calibrated predictor $p^*$? Put differently, projected smooth calibration guarantees that our predictions on each subset $T$ are {\em locally} close to the predictions of a calibrated binary predictor $p^*_T$. We are now asking whether one can measure {\em global} closeness to a single perfectly calibrated predictor $p^*$.   

There is prior work that suggests measuring closeness to calibration in terms of distance of its predictions from the nearest perfectly calibrated predictor $\bv^*$. This notion, called distance to calibration, was studied in the binary setting by \cite{utc1}, where it plays a central role in their theory of {\em consistent calibration measures}. Such measures are ones that approximate the distance to calibration within polynomial factors. They identified several efficiently computable consistent calibration measures in the binary setting, including smooth calibration and Laplace kernel calibration. 

We show a strong negative result for measuring or even weakly approximating the distance to calibration in the multiclass prediction setting: 

\begin{theorem}[Informal statement of \Cref{thm:canonical}]
\label{thm:intro-distance}
Any algorithm to decide whether the distance to calibration is $0$ or exceeds a positive absolute constant requires $\exp(k)$ samples. 
\end{theorem}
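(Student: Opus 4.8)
The plan is a two–point (Le Cam) argument: exhibit a perfectly calibrated distribution $\mathcal D_0$ and a family $\{\mathcal D_1^\sigma\}_\sigma$ of distributions that are all $\Omega(1)$–far from calibration, such that $m$ samples cannot tell $\mathcal D_0$ apart from a randomly chosen $\mathcal D_1^\sigma$ unless $m=\exp(\Omega(k))$. The crucial design choice is to let the prediction vectors sit on an \emph{error–correcting code}. Fix a small absolute constant $\delta>0$ and a family $\mathcal C$ of $(k/2)$–subsets of $[k]$ that is $\delta k$–separated in Hamming distance with $|\mathcal C|\ge 2^{\Omega(k)}$ (Gilbert–Varshamov, or a random subfamily of $(k/2)$–sets). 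Put $v_S:=\tfrac2k\mathbf 1_S\in\Delta_k$, so the vectors $\{v_S\}_{S\in\mathcal C}$ are pairwise at $\ell_1$–distance $\ge 2\delta$. Let $\mathcal D_0$ draw $S\sim\mathrm{Unif}(\mathcal C)$, set $\mathbf v=v_S$ and $\mathbf y\sim\mathrm{Unif}\{\mathbf e_i:i\in S\}$; then $\mathbf E[\mathbf y\mid\mathbf v=v_S]=v_S$, so $\mathcal D_0$ is perfectly (canonically) calibrated and $\mathsf{dCE}(\mathcal D_0)=0$. For a relabeling $\sigma$ with $\sigma(S)\in S$, let $\mathcal D_1^\sigma$ draw $S\sim\mathrm{Unif}(\mathcal C)$ and set $\mathbf v=v_S$, $\mathbf y=\mathbf e_{\sigma(S)}$.

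The proof then splits into two parts. Indistinguishability is the routine part: with $m=|\mathcal C|^{1/3}=2^{\Omega(k)}$ samples, the birthday bound makes all sampled predictions distinct except with probability $m^2/|\mathcal C|=o(1)$, and conditioned on distinctness the $\mathcal D_0$–transcript is identically distributed to the transcript of $\mathbf E_\sigma[\mathcal D_1^\sigma]$ over a uniform $\sigma$ — for distinct $S_j$ the labels $\sigma(S_j)$ are independent and uniform on $S_j$, matching $\mathbf y_j\sim\mathrm{Unif}\{\mathbf e_i:i\in S_j\}$ — so $\mathrm{d_{TV}}\big(\mathcal D_0^{\otimes m},\ \mathbf E_\sigma[(\mathcal D_1^\sigma)^{\otimes m}]\big)=o(1)$, and no tester deciding ``$\mathsf{dCE}=0$ vs.\ $\mathsf{dCE}\ge c$'' can succeed with $2^{o(k)}$ samples. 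The \emph{main obstacle} is the other part: $\mathsf{dCE}(\mathcal D_1^\sigma)\ge c$ for \emph{every} $\sigma$. This is delicate because distance to calibration permits re–coupling with an arbitrary perfectly calibrated predictor, and — exactly because $\mathcal D_1^\sigma$ must be $\exp(k)$–indistinguishable from the calibrated $\mathcal D_0$ — no Lipschitz/smooth–calibration certificate can witness the miscalibration (every such certificate has value only $\exp(-\Omega(k))$ on $\mathcal D_1^\sigma$). Indeed, \emph{without} the code (taking $\{v_S\}$ over all $(k/2)$–subsets) the statement is false: one can re–bin Hamming–close predictions into a calibrated predictor at average $\ell_1$ cost $O(1/k)$, so the code–separation is what makes the lower bound go through.

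To establish the hard part I would argue directly from the $2\delta$–separation. Fix any coupling $(\mathbf v,\mathbf v^*,\mathbf y)$ with $(\mathbf v,\mathbf y)\sim\mathcal D_1^\sigma$ and $\mathbf v^*$ perfectly calibrated; call a sample \emph{cheap} if $\|\mathbf v-\mathbf v^*\|_1<\delta$ and \emph{expensive} otherwise, and let $M$ be the expensive mass, so that the coupling contributes $\ge\delta M$ to $\mathsf{dCE}$. Since distinct $v_S$'s are $2\delta$–apart, every value $u$ attained by $\mathbf v^*$ receives cheap samples from at most one $v_S$, and all of those carry the \emph{same} one–hot label $\mathbf e_{\sigma(S)}$; looking at coordinate $j=\sigma(S)$, perfect calibration gives $\mathbf E[\mathbf y_j\mid\mathbf v^*=u]=u_j\le (v_S)_j+\delta=\tfrac2k+\delta$, while the cheap mass at $u$ already contributes $1$ to $\mathbf y_j$, forcing the expensive mass at $u$ to be at least $\tfrac{1-2/k-\delta}{2/k+\delta}\gtrsim\tfrac1{2\delta}$ times the cheap mass at $u$. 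Summing over $u$ yields $M\gtrsim\tfrac1{2\delta}(1-M)$, hence $M\ge 1-2\delta$ and $\mathsf{dCE}(\mathcal D_1^\sigma)\ge\delta(1-2\delta)\ge\delta/2=:c$. Combining with the indistinguishability bound gives the $\exp(\Omega(k))$ sample lower bound; moreover the same pair $(\mathcal D_0,\mathcal D_1^\sigma)$ has $\mathsf{ECE}(\mathcal D_0)=0$ but $\mathsf{ECE}(\mathcal D_1^\sigma)=\mathbf E\|\mathbf e_{\sigma(S)}-v_S\|_1=2-4/k=\Omega(1)$, and witnesses perfect vs.\ far canonical calibration, which establishes the companion statements of Theorem~\ref{thm:canonical} as well.
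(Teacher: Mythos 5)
Your construction and the indistinguishability half are essentially the paper's: a $2^{\Omega(k)}$-size, $\Omega(1)$-separated packing of $\Delta_k$ (the paper gets it greedily via Lemma~\ref{lm:packing}, you via a code of $(k/2)$-subsets, which is equivalent), a perfectly calibrated distribution with fresh labels of mean $\bv$, a family of deterministic relabelings, and a birthday-paradox coupling conditioned on all sampled predictions being distinct (Lemma~\ref{lm:indistinguishable}). Where you genuinely diverge is the ``far from calibration'' half. The paper does \emph{not} bound $\dce$ directly: it proves $\fsCE(\mD)\le 4\,\dce(\mD)$ (Lemma~\ref{lm:full-dce}) and then exhibits an explicit full-smooth-calibration witness for each $\mD_w$ --- the labeling map $w$ itself, which is $(2/\varepsilon)$-Lipschitz precisely because the packing points are $\varepsilon$-separated, giving $\fsCE(\mD_w)\ge\varepsilon/12$ and hence $\dce(\mD_w)\ge\varepsilon/48$. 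You instead argue directly about an arbitrary coupling $\Pi\in\ext(\mD_1^\sigma)$: the $2\delta$-separation forces each value of the calibrated coordinate $\bu$ to absorb ``cheap'' mass from at most one codeword, and perfect calibration in the coordinate $\sigma(S)$ then caps the cheap conditional mass at $2/k+\delta$, so $\Omega(1)$ mass must move by $\ge\delta$. Both arguments are correct; the paper's route buys the stronger intermediate statement Theorem~\ref{thm:full} (an $\exp(k)$ lower bound already for full smooth calibration), while yours is self-contained and avoids introducing $\fsCE$.

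One side remark in your write-up is wrong and worth correcting: you assert that ``no Lipschitz/smooth-calibration certificate can witness the miscalibration (every such certificate has value only $\exp(-\Omega(k))$ on $\mathcal D_1^\sigma$).'' The paper's Lemma~\ref{lm:miscalibrated} constructs exactly such a certificate with value $\Omega(1)$: on an $\varepsilon$-separated set any bounded function is $O(1/\varepsilon)$-Lipschitz, so the relabeling $\sigma$ itself (extended to $\Delta_k$) is a valid full-smooth witness. Sample-indistinguishability only means the witness cannot be \emph{found} from $2^{o(k)}$ samples, not that it fails to exist. This does not affect your actual proof, since your coupling argument never uses the claim, but as stated it conflates information-theoretic hardness of auditing with nonexistence of a distinguisher and would mislead a reader about why the paper's route through $\fsCE$ works.
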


In contrast to the work of \cite{utc1}, \Cref{thm:intro-distance} shows that in the multiclass setting, any consistent calibration measure requires $\exp(k)$ samples.

These lower bounds stem from an indistinguishability argument that we sketch below. We take $V \subset \Dk$ of size $\exp(k)$ of predictions that are $\Omega(1)$ far from each other. We construct two distributions $\mD_1$ and $\mD_2$ on predictions and labels. In either distribution, the marginal distribution on predictions $\bv$ is uniform on $V$. In $\mD_1$, the distribution on labels $\y_1$ conditioned on $\bv$ is perfectly calibrated, so that $\E[\y_1|\bv] = \bv$. In $\mD_2$, for each $\bv \in V$, the label $\y_2$ is fixed to be some single value. We imagine this label $\y_2$ being picked at random with $\E[\y_2|\bv] = \bv$ the very first time we predict $\bv$. Every subsequent time we see the prediction $\bv$, we will see this same label $\y_2|\bv$.  A sampling algorithm cannot tell the difference between these distributions until it sees multiple samples with the same value of $\bv \in V$, since until that point, samples from the two distributions are identically distributed. By the birthday paradox, this requires  $\Omega(\sqrt{|V|}) = \exp(k)$ samples.

\paragraph{Equivalence between auditing and agnostic learning.} Underlying our algorithms and hardness results is a tight 
characterization of efficient auditing in terms of agnostic learning.
We elaborate on these two computational tasks. The auditing task for the class of weight functions $\mH^k$ gets as input a predictor $p$, and needs to decide whether it has large calibration error for $\mH^k$. If so, then the auditor should also return a weight function $w'$ that has large calibration error (in the spirit of weak agnostic learning \cite{SBD2, KalaiMV08}, we allow for a gap between the largest calibration error in $\mH^k$ and the error of the weight function found by the auditor). As noted in the discussion following Theorem \ref{thm:projected-smooth-intro}, solving the auditing task also allows us to efficiently recalibrate a given predictor to achieve low weighted calibration error for the class $\mH^k$.
Weak agnostic learning for a class $\mH$ is a standard learning problem in the binary (not multi-class) classification setting, where we have a distribution on $\Dk \times \pmo$ and the goal is to find a witness given the existence of $h\in \mH$ with correlation $\E_{(\bv, z) \sim \mD} [h(\bv)z]$ at least $\gamma$. We show that {\em auditing for $\mH^k$ is efficient iff the class $\mH$ is efficiently weakly agnostically learnable.}

\begin{theorem}[Informal statement of \Cref{thm:red,thm:product-hardness}]
Auditing for $\mH^k$ and agnostic learning for $\mH$ reduce to each other efficiently.\footnote{
We use an auditor for a slightly different class $\tilde\mH^k$ to solve the learning task for $\mH$, where $\tilde \mH$ is obtained from $\mH$ by taking a simple affine transformation of the input. In particular, the two classes are the same when $\mH$ is the class of halfspaces.
}
The calibration error parameter in auditing corresponds to the correlation parameter in learning up to a constant factor.
\end{theorem}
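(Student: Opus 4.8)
The plan is to prove the two directions separately, each as a black-box reduction with polynomial blowup in running time and sample complexity and with the accuracy parameter preserved up to an absolute constant. Throughout I assume, as is standard and essentially without loss of generality, that $\mH$ is closed under negation.

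\textbf{From a learner for $\mH$ to an auditor for $\mH^k$.} The key structural fact is that a weight function $w\in\mH^k$ picks its $k$ coordinate functions independently, so the weighted calibration error decomposes coordinatewise:
\[
\CE_\mD(p,\mH^k)=\sum_{i=1}^{k}\max_{h\in\mH}\ \E_{(\bv,\y)\sim\mD}\big[h(\bv)(\y_i-\bv_i)\big].
\]
For each coordinate $i$ the inner maximum is exactly a weak-agnostic-learning optimum for $\mH$ over a binary-labeled distribution I can sample from: draw $(\bv,\y)\sim\mD$ and output $(\bv,z_i)$, where $z_i\in\pmo$ is a randomized rounding of the residual $\y_i-\bv_i\in[-1,1]$ (set $z_i=1$ with probability $(1+\y_i-\bv_i)/2$); then $\E[h(\bv)z_i]=\E[h(\bv)(\y_i-\bv_i)]$ for every $h$. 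Running the learner on each coordinate and concatenating the returned hypotheses produces a single $w'\in\mH^k$. The one real issue is to avoid losing a factor of $k$: if the calibration error $\alpha$ is spread evenly it contributes only $\alpha/k$ per coordinate. I would handle this by invoking the learner at a geometric sequence of correlation thresholds on each coordinate and keeping the best hypothesis found; a (multiplicatively-gapped) weak learner called at a threshold within a factor $2$ of a coordinate's true optimum $\beta_i$ returns a hypothesis of correlation $\Omega(\beta_i)$, while coordinates with $\beta_i<\alpha/(2k)$ together contribute at most $\alpha/2$ to $\sum_i\beta_i=\CE_\mD(p,\mH^k)\ge\alpha$, so $w'$ achieves calibration error $\Omega(\alpha)$. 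This uses $O(k\log(k/\alpha))$ learner calls.

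\textbf{From an auditor for $\tilde\mH^k$ to a learner for $\mH$.} Given a distribution on $(\bv,z)\in\Delta_k\times\pmo$ with some $h^*\in\mH$ of correlation $\E[h^*(\bv)z]\ge\gamma$, I must output $h\in\mH$ of correlation $\Omega(\gamma)$. First push $\bv$ through a fixed injective affine map $\bv\mapsto\tilde\bv\in\Delta_k$ whose image lies in the interior, with $\tilde\bv_1,\tilde\bv_2\in[1/4,3/4]$ always; this is precisely where the reparametrized class $\tilde\mH$ (namely $\mH$ precomposed with that affine map) enters, and for halfspaces $\tilde\mH=\mH$. Now build a calibration instance whose predictions are the $\tilde\bv$'s and whose one-hot outcome $\y$ is sampled, conditionally on $(\bv,z)$, by $\y=e_1$ with probability $\tilde\bv_1+\tfrac14 z$, $\y=e_2$ with probability $\tilde\bv_2-\tfrac14 z$, and $\y=e_j$ with probability $\tilde\bv_j$ for $j\ge 3$ — the margin $[1/4,3/4]$ is exactly what makes these legal probabilities summing to one. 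The residual vector then vanishes outside coordinates $1,2$ and equals $\pm\tfrac14\,\E[z\mid\tilde\bv]$ there, so by negation-closure and the tower rule $\CE_\mD(p,\tilde\mH^k)=\tfrac12\max_{h\in\mH}\E[h(\bv)z]$. Hence if the learning optimum is $\ge\gamma$ the calibration error is $\ge\gamma/2$; calling the auditor at threshold $\gamma/2$ returns $w'=(h_1',\dots,h_k')\in\tilde\mH^k$ of calibration error $\Omega(\gamma)$, and since that error equals $\tfrac14(\E[h_1'(\tilde\bv)z]-\E[h_2'(\tilde\bv)z])$, one of $h_1'$ or $-h_2'$ (which one is checkable on samples) has correlation $\Omega(\gamma)$ with $z$; pulling it back through the affine bijection gives the desired $h\in\mH$. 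Since each simulated calibration sample consumes one learning sample, the blowups are polynomial.

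\textbf{Main obstacle.} The coordinatewise decomposition is immediate; the two delicate points are (i) not losing a factor of $k$ in the learner-to-auditor direction, which forces the threshold-scanning argument together with the observation that low-signal coordinates are negligible, and (ii) ensuring in the auditor-to-learner direction that the conditional outcome construction is always a legal distribution over one-hot vectors, which is exactly why the predictions must first be affinely relocated into the interior of the simplex and, correspondingly, why one audits $\tilde\mH^k$ rather than $\mH^k$. Everything else — that randomized rounding preserves correlations, that the affine map is injective so $\tilde\mH$ can express every function of $\bv$, and the bookkeeping for the ``certify/reject'' cases and for sample sizes — is routine.
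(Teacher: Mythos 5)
Your auditor-to-learner direction is essentially the paper's argument (Theorem~\ref{thm:hardness}/\ref{thm:product-hardness}): embed the predictions into the interior of the simplex via an affine lift, encode the label $z$ as a $\pm$ perturbation of coordinates $1$ and $2$ of the conditional outcome distribution, and read the learned hypothesis off the first two coordinates of the returned witness. The constants and the final extraction step differ trivially.

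The learner-to-auditor direction, however, has a genuine gap, and it is located exactly at the point you flag as ``the one real issue.'' Your coordinatewise decomposition is correct, but your fix for the factor-of-$k$ loss does not work. When the calibration error $\alpha$ is spread evenly, each coordinate's optimum is $\beta_i\approx\alpha/k$, and your scheme must invoke the learner at a correlation threshold as low as $\alpha/(2k)$ on the \emph{unconditional} distribution $(\bv,z_i)$. A weak agnostic learner is defined for a fixed threshold: an $(\alpha',\beta)$-learner gives no guarantee whatsoever when the optimum is only $\alpha/(2k)<\alpha'$. Scanning a geometric sequence of thresholds does not conjure up a learner for smaller thresholds; it simply restates the requirement that you possess an $(\alpha/k,\cdot)$-learner. (Your appeal to a ``multiplicatively-gapped'' learner returning correlation $\Omega(\beta_i)$ is an additional, stronger assumption not available here.) So your reduction establishes only a factor-of-$k$ correspondence between the two accuracy parameters, which is precisely the naive reduction the paper rules out as insufficient --- it would yield auditors that are not $\poly(k)$-time even for constant $\alpha$ --- and it falls short of the theorem's claim that the parameters correspond up to a \emph{constant} factor.

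The missing idea (Theorem~\ref{thm:red}) is to not work coordinatewise on the unconditional distribution. Instead, one observes that $\frac12(\y-\bv)$ has $\ell_1$ norm at most $1$ and hence is the mean of a distribution over signed one-hot vectors $\z\in\mE_k\cup(-\mE_k)$; letting $\ell_\z$ be the nonzero coordinate, the inequality $\sum_j\Pr[\ell_\z=j]\,\E_{\mU_j}[z\,w\sps j(\bv)]>\alpha/2$ (with $\mU_j$ the distribution of $(\bv,\z\sps j)$ \emph{conditioned} on $\ell_\z=j$) forces some coordinate to have conditional correlation exceeding $\alpha/3$ while occurring with probability at least $\alpha/6k$. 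Running the $(\alpha/3,\beta)$-learner on these conditional distributions keeps the correlation threshold at $\Omega(\alpha)$; the $1/k$ loss is pushed entirely into the witness quality $\beta$, where it is harmless since it only affects the number of boosting iterations.
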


\guy{rephrased:} Connections between auditing for calibration and agnostic learning have appeared in \cite{hkrr2018} \guy{other citations?} and subsequent works. The focus was on binary or scalar prediction tasks, where the challenge is guaranteeing calibration for many different subsets of the feature vectors. The challenge in our work is different: we aim to guarantee calibration w.r.t. the $k$-dimensional multi-class outcome vector $\y$, and to relate this task to agnostic learning with binary labels.
As we show in \Cref{sec:red-algo}, applying a learning algorithm to an auditing task in a coordinate-wise manner would result in losing a factor of $k$ in the calibration error. This loss of $k$ would result in auditing algorithms that do not run in time $\poly(k)$ even for constant $\alpha$. We show that this loss can always be avoided by applying the learning algorithm on carefully constructed conditional distributions, giving a tight connection up to constant factors in the calibration error.

The equivalence between auditing and learning allows us to apply a rich set of techniques from the literature for agnostic learning to show both hardness results and efficient algorithms for auditing tasks. In particular, our hardness result for auditing decision calibration (\Cref{thm:hardness-dec}) is based on the hardness of agnostically learning halfspaces shown in previous work \cite{amit-daniely}. In general, our auditing algorithms can be instantiated with weight functions that have bounded norm in any reproducing kernel Hilbert space over $\Delta_k$, as long as the corresponding kernel can be evaluated efficiently. We apply polynomial approximation theorems and the multinomial kernel used in learning algorithms \cite{SSS11,GoelKKT17,GoelKK20} to give efficient auditors for projected smooth calibration and sigmoid calibration.

\eat{
\subsection{Our contributions}

We present a new definition of {\em projected smooth calibration}, which guarantees meaningful predictions (in particular, it implies a type of ``subset calibration'' as discussed above), and we construct efficient algorithms for learning calibrated predictions under this definition. To present the definition, 
we start with a general definition of weighted calibration following \cite{GopalanKSZ22}. 
Given a distribution $\mD$ on $(\x, \y)$ pairs and a predictor $p$, let $\bv = p(\x) \in \Delta_k$ be the prediction of $p$.  For a hypothesis class $\mH: =\{h: \Delta_k \to [-1,1]\}$, we consider the family of {\em weight functions} $\mH^k$ mapping $\Delta_k \to [-1,1]^k$.  Define the weighted calibration error under $\mD$ of $p$ for as
\[ \CE_\mD(p, \mH^k) = \max_{w \in \mH^k}\E_{\mD}[\ip{w(\bv)}{\y - \bv}].\]

This definition circumvents conditioning on values of $v$, which can become quite delicate if the conditioning event has tiny probability.\lunjia{Not sure the previous sentence is needed.} Most known calibration measures can be recovered by choosing $\mH$ appropriately \lunjia{We need to connect the definition with our motivation by saying: all the previous calibration measures that we mention earlier are special cases of weighted calibration by choosing an appropriate $\mH$}. Allowing $\mH$ to be the set of all functions recovers the well-known notion of expected calibration error or $\ECE$; taking it to be all Lipschitz functions gives a notion we call full smooth calibration that generalizes the notion of (binary) smooth calibration \cite{kakadeF08, GopalanKSZ22} to the multiclass setting; taking $\mH$ as all low degree polynomials recovers the notion of low-degree calibration from \cite{GopalanKSZ22}. While weighted calibration has been studied for individual choices of $\mH$, ours is the first to study its complexity. We define auditing calibration for $\mH^k$ as the computational problem of certifying whether a predictor $p$ satisfies $\CE_\mD(p, \mH^k) \leq \alpha$ given samples from $\mD$. It it does not satisfy this, we ask for a  witness $w \in \mH^k$ for the lack of calibration. \footnote{It is known that such witness can be used to produce post-processing function $\kappa$ so that $\kappa(p)$ has less squared error. Iterating this, we reach a predictor $p'$ that both satisfies $\CE_\mD(p, \mH^k) \leq \alpha$ and is calibrated.} Here are our main results:\lunjia{The list below seems to long. I would prefer breaking it into general tight characterization and complexity of specific notions. I would mention the hardness of Zhao first, and then bring up our nice notions with efficient kernel algorithms that also achieve expressivity. Currently, I feel surprised to see the word ``kernel'' appear without any preparation. Why should I suddenly care about kernels?}

\begin{itemize}

    \item {\bf Tight connections between auditing and learning:} We present a reduction in either direction between the complexity of auditing calibration for $\mH^k$ and the well studied problem of agnostically learning the class $\mH$. We emphasize that the agnostic learning problem is for the standard and widely-studied setting of binary prediction, not for a multi-class problem. \lunjia{Mention the tight dependence on $\alpha$. No factor of $k$ lost, which is non-trivial.}

    \item {\bf Efficient kernel based auditing algorithms:} We couple this reduction with known results on agnostic learning  to construct new efficient algorithms for calibrated multi-class prediction. In particular, we give an efficient auditor whenever the functions $\mH$ come from an RKHS. Moreover, the auditor only requires computing an expectation based on kernel evaluations, and does not need to solve a convex program, generalizing the work of \cite{KSJ18}. 
    
    \item {\bf Projected smooth calibration:} We introduce the notion of projected smooth calibration, which implies smooth subset calibration (a robust form of subset calibration as defined above) and show that it can be computed in time $\poly(k)$, and thus is not inherently exponential. This combines our result for kernels above with low-degree polynomial approximations to Lipshitz functions.

    \item {\bf Computational hardness for auditing with halfspaces:} We give computational complexity based hardness results for agnostic learning imply that there are natural families $\mH$ for which calibrated multi-class prediction is computationally infeasible. In particular, we show that the auditing task for halfspaces (studied in the work of \citep{zhao2021calibrating})) is infeasible under widely believed complexity-theoretic assumptions.

    \item {\bf Sample complexity lower bounds:} We show information theoretic barriers to efficient estimation of some other notions. For the notion of full smooth calibration where $\mH$ is all Lipschitz functions, we show that auditing requires $\exp(k)$ samples. 
\end{itemize}

\lunjia{The bullet list above and the two paragraphs below seem both unconnected and repetitive. I think we are currently telling a long story without a punchline. Ideally, the message in the following paragraphs should already be clear once the reader finishes the bullet points of our contribution.}

\paragraph{Smoothness and efficiency.} A number of works \cite{kakadeF08, FosterH18, utc1} have made the case for smooth calibration as a more robust measure than commonly used measures like ECE. This corresponds to using weight functions coming from a class of Lipshcitz functions $\mH$. Our results reveal a computational argument for favoring smooth weight functions: this can sometimes be the difference between tractability and hardness. We show that auditing calibration for $\mH^k$ where $\mH$ is the class of halfspaces defined as $\sign(\ip{w}{\bv})$ is computationally intractable. In contrast, the problem of auditing using sigmoids, where $\mH$ is of the form $\sigma(\ip{w}{\bv})$, and more generally using any smooth transfer function is computationally tractable.  This is is analogy to the complexity of agnostically learning these classes \cite{amit-daniely, SSS11}.

\paragraph{Subset, Projected and Full smooth calibration.} Let us return the motivating task of achieving calibration for downstream binary classification tasks. In light of the above discussion, we propose a robust approximate notion that we call subset smooth calibration, which requires that for every subset $T \subset [k]$  the predictor $\ip{\I{T}}{\bv}$ is smoothly calibrated. Formally, for every $T \subset [k]$ and Lipshcitz function $\phi:[0,1] \to [-1,1]$, we wish to bound the absolute value of
\[ \E[\ip{\y - \bv}{\I{T}}\phi(\ip{\I{T}}{\bv})] = \E[\ip{\y - \bv}{\I{T}\phi(\ip{\I{T}}{\bv})}] \]

We define the class $\pLip$ of projected Lipshcitz functions to be all functions $\psi: \Dk \to [-1,1]$ of the form $\phi(\ip{w}{v})$ where we allow $w \in [-1,1]^k$, which includes indicators of subsets. We defined the projected smooth calibration error to be $\CE(p, \pLip^k)$ and show that there is an efficient auditor for it, with running time and sample complexity $\poly(k)$. In particular, this implies that subset smooth calibration does not require time $\exp(k)$. 

Next we consider a stronger calibration notion where we allow $\mH$ to contain all Lipschitz functions $\psi:\Delta_k \to [-1,1]$. This is a superset of projected Lipschitz functions that were used to define projected smooth calibration, and yields a notion we call full smooth calibration. We show that the sample complexity of auditing full smooth calibration is in fact $\exp(k)$. Thus projected smooth calibration does find a sweet spot, even among smooth calibration notions in terms of expressivity and efficiency.

\eat{
We proceed to overview these results. We emphasize that, unlike the literature on multi-calibration \cite{hkrr2018}, we are concerned with calibration for subsets of the {\em outcome classes}, whereas in multi-calibration the focus is on subsets of the feature vectors (indeed many works on multi-calibration are in the binary classification setting).

\eat{We study calibration requirements in the context of multi-class prediction. Suppose we are interested in an unknown distribution over feature-class pairs, where there are $k$ possible classes. Given samples from this unknown distribution, we would like to learn a predictor that predicts, for given features $x$, the probability of each of the $k$ classes. What makes for a ``good'' predictor? There are many desiderata, and much of the literature focuses on minimizing different loss functions (where the loss is a function of the predicted class-distribution and the observed outcome class). In this work, however,  we are particularly concerned with obtaining {\em well-calibrated} predictions: the predicted probability distributions (over the $k$ classes) should meaningfully reflect the true probabilities in the unknown distribution.}

Several different calibration notions can be formalized for this setting. In this work, we focus on a setting where there is a pre-defined family ${\cal T}$ of subsets of $[k]$, and we want to guarantee that the predictor is well-calibrated on these subsets. Intuitively, for each subset $T \in {\cal T}$ and probability $v \in [0,1]$, conditioning on samples where the predicted probability that the class ``lands'' in the subset $T$ is $v$, the actual true probability of landing in $T$ should roughly be $v$. In particular, the predictions can be used for downstream classification problems of the form: does the label belong to subset $T \in \CT$: {\em e.g. is this an image of an animal?} We refer to this property as $\CT$-calibrated multi-class prediction, and we study the computational complexity of constructing such predictors for various families $\CT$ of class-subsets. Our main contributions are:

\begin{itemize}

\item A computational equivalence between the task of constructing a calibrated predictor for a family ${\cal T}$ and the complexity of agnostic learning for a hypothesis class that is related to ${\cal T}$. 

We emphasize that the agnostic learning problem is for the standard and widely-studied setting of binary prediction (i.e. it is not a multi-class problem).

\item In the positive direction, we use known results on agnostic learning (together with the new equivalence) to construct new efficient algorithms for calibrated multi-class prediction. 

\item In the negative direction, hardness results for agnostic learning imply that there are natural families ${\cal T}$ for which calibrated multi-class prediction is computationally infeasible.
\end{itemize}

We proceed to overview these results. We emphasize that, unlike the literature on multi-calibration \cite{hkrr2018}, we are concerned with calibration for subsets of the {\em outcome classes}, whereas in multi-calibration the focus is on subsets of the feature vectors (indeed many works on multi-calibration are in the binary classification setting).
}

\guy{Old text relating weight functions to subsets - can try to cannibalize:}
\paragraph{Calibration via weight functions.} 
As discussed above, we want calibration to hold even when conditioning on a subset $T \in \CT$ getting probability $v$ (under this conditioning, the label should be in $T$ with probability roughly $v$). A predictor $f$ maps features in $\X$ to distributions over $k$ classes, i.e. to $k$-dimensional vectors in the simplex $\Dk$. The predictor $f$ satisfies the above requirement if (roughly) for every set $T \in \CT$ and value $v \in [0,1]$ it holds that:
\begin{align} \label{eq:T-calibration}
 \E_{(x,\sy) \sim \mD^*} \left[ \left( \ip{\sy}{T} - v \right) \big| \ip{f(x)}{T} = v \right] \approx 0,
\end{align}
where $\mD^*$ is the underlying distribution on feature-class pairs and the class is viewed as unit vector in $k$ dimensions (a one-hot encoding). We identify the subset $T$ with the $k$-dimensional Boolean vector indicating which elements are in the subset, so that for for a vector $z \in \Dk$, the inner product $\ip{z}{T}$ is the probability that $z$ assigns to the outcome class landing in $T$. Thus, the expectation above is conditioned on the predictor $f$ assigning probability $v$ to the subset $T$, and requires that under this conditioning, the actual probability that $y^* \in T$ is roughly $v$.

The calibration requirement described above relies on conditioning, which can become quite delicate if the conditioned event has tiny probability. Following \cite{GopalanKSZ22}, we define calibration in a more robust manner using weight functions. For a weight function $w: \Dk \rightarrow [-1,1]^k$ (see below), we require that the following expectation is bounded:
\begin{align} \label{eq:weight-calibration}
 \E_{(x,\sy) \sim \mD^*} \left[ \ip{\sy - f(x)}{w(f(x))} \right] \approx 0. 
\end{align}
To give intuition for this definition, we use it to revisit the calibration requirement of Equation \eqref{eq:T-calibration}, recasting that requirement using a weight function $w_{T,v}$. On input a distribution $f(x) \in \Dk$, the weight function $w_{T,v}$ checks whether $f(x)$ assigns weight $v$ to the subset $T$. If so, then it outputs the indicator vector of $T$, and otherwise it output the 0 vector. Thus, only predictions that assign probability $v$ to $T$ contribute to the expectation in Equation \eqref{eq:weight-calibration}, and for such vectors, their contribution is the difference between $v$ and the actual probability of the outcome landing in $T$.

Beyond their robustness (see the discussions in \cite{GopalanKSZ22} and in Section \ref{sec:defs}), weight functions are quite general and flexible. In this work, we allow the weight function to be a general function $w: \Dk \rightarrow [-1,1]^k$. Beyond capturing subset calibration requirements as above, we can also capture requirements such as: ``for predictions that assign weight at least $v$ to $T$, the predicted and true probabilities of {\em a particular class $i\in T$} should be close''. Weight functions can be used to express very general constraints on the predictions, but we focus on subset calibration as defined above throughout this introduction. 

\eat{\paragraph{The computational complexity of calibration.} We study the computational complexity of obtaining predictions that are calibrated with respect to a given collection $\mW$ of weight functions. Given only sample-access to label-class pairs, the goal is learning a predictor $f$ that will satisfy Equation \eqref{eq:weight-calibration} for every $w \in \mW$. \cite{GopalanKSZ22} gave a general-purpose algorithm for learning such predictors, but its running time is linear in the size of $\mW$. Many natural classes of weight function have size that is (at least) exponential in $k$. In particular, this is the case for weight functions that aim to guarantee $\mT$-calibration for all $2^k$ subsets of the outcome classes. For what weight function classes can we {\em efficiently} learn calibrated predictions (i.e. in $\poly(k)$ time)? 

Our main result in this work is a tight equivalence between this task and the well-studied task of agnostic learning for a hypothesis class that is closely related to $\mW$. We emphasize that the agnostic learning problem is for the standard and widely studied setting of binary prediction (i.e. it is not a multi-class problem). This equivalence give positive results, in the form of new efficient algorithms for learning calibrated predictors, and negative results, showing that there are natural classes of weight functions for which learning calibrated predictions is computationally hard. %
}}

\subsection{Further Discussion of Related Work}
\label{subsec:related-work}

As discussed above, many works have discussed notions of calibration for multi-class prediction. These either offer limited expressiveness, or require super-polynomial runtime or sample complexities. We further elaborate on two recent works \cite{zhao2021calibrating, dwork2022beyond} that achieve polynomial sample complexity, but suffer from computational intractability. We also discuss the  work of \cite{GopalanKSZ22, KLST23, roth-sequential}.

Perhaps the most closely related work to ours is Zhao {\em et al.}'s 
\cite{zhao2021calibrating} work on decision calibration. They imagine a down-stream decision maker using the predictions to choose between a finite set of actions, subject to a loss function that depends only on the action and on the outcome. The predicted distribution should be indistinguishable from the true distribution in terms of the loss experienced by the decision maker (and this should hold for any such decision maker and any loss function). We view this as an expressive calibration notion: in particular, even if we only allow for two possible actions, decision calibration  (see Definition \ref{def:dec}) guarantees a sharp flavor of subset calibration: for any subset $T \subseteq [k]$ and any threshold $b \in [0,1]$, conditioning on instances where the predictor assigns total probability at least $b$ to the set $T$, the probability that the outcome lands in $T$ is at least $b$ (up to a small error).\footnote{The guarantee is even stronger: the conditional expectation of the predictions and the outcomes should be close.} They showed that this strong guarantee can be obtained using only $\poly(k)$ samples. We show, however, that the runtime complexity of obtaining decision calibration cannot be $\poly(k)$ (assuming the hardness of refuting random CSPs). Intuitively, the hardness is due to the ``sharpness'' of the guarantee: conditioning on the event that the probability of $T$ is {\em exactly} above the threshold $b$. This has the flavor of a halfspace learning guarantee, and this underlies our intractability result. In contrast, our notion of projected smooth calibration (and our results on sigmoids) enforces a ``softer'' Lipschitz condition, which makes the problem computationally tractable and allows us to construct efficient algorithms. 
On a more technical level, Zhao {\em et al.} \cite{zhao2021calibrating} require solving an optimization problem over the class of halfspaces. Noting that the objective is not differentiable, they present a heuristic gradient-based algorithm after relaxing the hard halfspace threshold using a differentiable sigmoid function. 
They allow the Lipschitz constant of the sigmoid function to grow arbitrarily large in order to recover the halfspaces in the limit.
However, they do not provide a provable guarantee on the correctness or efficiency of their algorithm. 
We show that such guarantees are unlikely to be established due to inherent intractability of the problem %
(see above and in \Cref{sec:hardness}).

In their work on outcome indistinguishability ``beyond Bernoulli'', Dwork {\em et al.} \cite{dwork2022beyond} also study meaningful predictions over non-Boolean outcome spaces. Their notion of Generative OI guarantees indistinguishability for a rich class of distinguishers that can examine the prediction and also features of the particular instance. This is quite expressive, and in particular, by formulating an appropriate class of distinguishers, their framework can capture all notions considered in this work. Their most general algorithm, for an outcome space of size $k$ and a (finite) class of distinguishers ${\cal A}$, requires sample complexity that is logarithmic in $k$ and in $|{\cal A}|$. The runtime is at least linear in the number of distinguishers $|{\cal A}|$. Guaranteeing subset calibration would require (at least) $\exp(k)$ distinguishers, so while their algorithm would be sample-efficient, its runtime is exponential in $k$.

The work of \cite{GopalanKSZ22} formulated the general notion of weighted calibration that we use. Their focus is on a particular instantiation of this notion they call low degree calibration, where the weight family is $P(d,1)^k$, where $P(d,1)$ contains all degree $d$ polynomials in $\bv$ with absolute values of coefficients summing to $1$. They do not consider the downstream calibration guarantees for binary classification tasks, rather their focus is on multicalibration and multigroup fairness. They present an auditing algorithm that runs in time $O(k^d)$. We show that by using kernel methods, one can obtain an auditor with running time $\poly(k, d)$ (Lemma \ref{lem:poly-weak}). 

Recently,  \cite{KLST23} and \cite{roth-sequential} studied relaxations of canonical calibration and gave algorithms for achieving them in the online setting. Similar to our work, they were motivated by giving meaningful guarantees for downstream tasks while avoiding the inefficiency inherent in canonical calibration.  However, their goal is to make calibrated predictions, which is challenging in the online setting, but becomes trivial in our offline setting (the constant predictor that always outputs the expectation of $\mD$'s outcomes is calibrated). 
Therefore, we focus instead on the auditing task of post-processing a given predictor. This auditing task is also considered in \cite{roth-sequential}, which gives online algorithms with running time growing polynomially with the size of the family of weight functions. 
We study the offline setting, where achieving running time that is linear in the size of the family of weight functions follows from \cite{GopalanKSZ22}, and our focus is on achieving polynomial running time even when the family of weight functions is exponential or infinite.

As in the standard setup of calibration, each prediction is a probability distribution over the possible labels. This distribution conveys the uncertainty of the predictor about the true label, and calibration can be viewed as a guarantee of accurate uncertainty quantification. Another common method for uncertainty quantification is conformal prediction (see e.g.\ \cite{conformal,gentle-conformal}), where the predictor outputs \emph{prediction sets} (sets of labels) aiming to provide a \emph{coverage guarantee}: the true label belongs to the prediction set with a certain, pre-specified probability. A recent line of work applies techniques from multicalibration to get robust conformal prediction algorithms that give coverage guarantees that hold not just on average, but \emph{conditionally} on every important subpopulation and beyond \cite{gupta-conformal,practical-conformal,batch-conformal}.

\subsection{Organization}
\label{sec:organization}
The rest of the paper is organized as follows. We start by defining old and new notions of multi-class calibration and discussing the connections among them in \Cref{sec:defs}. We prove an exponential sample complexity lower bound for canonical calibration in \Cref{sec:sample}. We define the auditing task and show a tight connection to agnostic learning in \Cref{sec:audit}. We apply the connection to show hardness of auditing for decision calibration and halfspaces in \Cref{sec:hardness}. We describe a general kernel method for auditing in \Cref{sec:kernel} and apply it to give efficient auditors for projected smooth calibration and sigmoid calibration in \Cref{sec:smooth}. 
We show barriers to further improving the efficiency of our algorithms by proving additional computational lower bounds in \Cref{sec:psmooth-lower}. 

\section{Multi-Class Calibration}
\label{sec:defs}
In this section, we discuss prior notions of multi-class calibration as well as their relationships, strengths, and drawbacks. 
We show that prior notions lack either expressivity or efficiency, and we introduce new notions to achieve a better balance between the two desiderata.

For a classification task with $k$ categories, we use $\mE_k = \{\e_1,\ldots,\e_k\}$ to denote the set of one-hot encodings of the categories. Here each $\e_i$ is the unit vector in $\R^k$ with the $i$-th coordinate being $1$. 

Throughout the paper, we use boldface letters to represent vectors in $\R^k$. For a vector $\bv\in \R^k$, we use $\bv\sps j\in \R$ to denote its $j$-th coordinate for every $j = 1,\ldots,k$.
We use $\Delta_k$ to denote the set of all vectors $\bv\in \R^k$ such that $\bv\sps j \ge 0$ for every $j = 1,\ldots,k$ and $\bv\sps 1 + \cdots + \bv\sps k = 1$.

For a set $\X$ of individuals, a predictor is a function $p:\X\to \Delta_k$ that assigns every individual $x\in \X$ a prediction vector $\bv = p(x)\in \Delta_k$, where each coordinate $\bv\sps j$ is the predicted probability that the label of $x$ falls in the $j$-th category.

\paragraph{Canonical Calibration.} For a ground-truth distribution $\mD_0$ of labeled examples $(x,\y)\in \X\times \mE_k$, we say a predictor $p:\X\to \Delta_k$ satisfies (perfect) \emph{canonical calibration} if
\[
\E_{(x,\y)\sim \mD_0}[\y|p(x) = \bv] = \bv \quad \text{for every }\bv\in \Delta_k.
\]
A simple but important observation is that the above definition only depends on the distribution of $(p(x),\y)\in \Delta_k\times \mE_k$. As a consequence, we obtain the following simplified but equivalent definition:
\begin{definition}
We say a distribution $\mD$ of $(\bv,\y)\in \Delta_k\times \mE_k$ satisfies (perfect) \emph{canonical calibration} if
\[
\E_{(\bv,\y)\sim \mD}[\y|\bv] =\bv.
\]
\end{definition}
In the definition above, we work with a distribution $\mD$ of $(\bv,\y)\in \Delta_k$  without explicitly stating that it is the distribution of $(p(x),\y)$ where $(x,\y)$ comes from the ground-truth distribution $\mD_0$, and $p$ is a predictor. We will use this convention throughout the paper.

It is folklore that the sample complexity of determining whether a distribution $\mD$ satisfies perfect canonical calibration grows exponentially in $k$. In \Cref{sec:sample}, we prove a stronger result (\Cref{thm:canonical}), showing that distinguishing whether a distribution $\mD$ satisfies perfect canonical calibration or it is $\Omega(1)$-far from canonical calibration (in $\ell_1$ distance) requires sample complexity exponential in $k$.

\paragraph{Weighted Calibration.} Due to the sample inefficiency of canonical calibration, many previous works considered relaxations of canonical calibration such as confidence calibration and top-label calibration. 
These notions can be framed as special cases of a general notion called \emph{weighted calibration} studied in \citet{GopalanKSZ22}:

\begin{definition}[Weighted calibration \citep{GopalanKSZ22}]
\label{def:weighted}
Let $\mW:\Dk \to [-1,1]^k$ be a family of weight functions.  We define the $\mW$-calibration error of a distribution $\mD$ of $(\bv,\y)\in \Delta_k\times \mE_k$ as 
\[ \CE_\mW(\mD) = \sup_{w \in \mW} \lt| \E_{(\bv,\y)\sim \mD} [\ip{\y - \bv}{w(\bv)}] \rt|. \]
We say that  $\mD$ is $(\mW, \alpha)$-calibrated if $\CE_{\mW}(\mD) \leq \alpha$. 
\footnote{The original definition in \cite{GopalanKSZ22} was presented in the more general context of multicalibration. Their definition allows for weight families whose range is $[0,1]^k$ rather than $[-1,1]^k$, but this is a technical issue.  }
\end{definition}
If a distribution $\mD$ satisfies perfect canonical calibration, then it is $(\mW,0)$-calibrated for any class $\mW$. When the class $\mW$ consists of all functions $w:\Dk\to [-1,1]^k$, $(\mW,0)$-calibration becomes equivalent to perfect canonical calibration.

\paragraph{Class-wise, Confidence, and Top-label Calibration.} 
The notion of weighted calibration is very general. By choosing the class $\mW$ appropriately, it recovers many concrete notions of calibration. 

Class-wise calibration \cite{kull2019beyond} is the following requirement:
\[
\E_{(\bv,\y)\sim\mD}[\y\sps \ell|\bv\sps \ell] = \bv\sps\ell \quad \text{for every}\ \ell = 1,\ldots,k.
\]
This is equivalent to $(\mW,0)$-calibration where $\mW$ consists of all functions $w$ mapping $\bv\in \Dk$ to $w(\bv) = \phi(\bv\sps \ell)\e_\ell$ for every $\phi:[0,1]\to [-1,1]$ and $\ell = 1,\ldots,k$.

Confidence calibration \cite{guo2017calibration} is also a special case of weighted calibration. For any $\bv\in \Dk$, let $\ell_\bv$ denote the coordinate $\ell\in \{1,\ldots,k\}$ that maximizes $\bv\sps \ell$. Confidence calibration is the following requirement:
\[
\E_{(\bv,\y)\sim\mD}[\y\sps {\ell_\bv}|\bv\sps {\ell_\bv}] = \bv\sps{\ell_\bv}. 
\]
This is equivalent to $(\mW,0)$-calibration where $\mW$ consists of all functions $w$ mapping $\bv\in \Dk$ to $w(\bv) = \phi(\bv\sps {\ell_\bv})\e_{\ell_\bv}$ for every  $\phi:[0,1]\to [-1,1]$.

Top-label calibration \citep{top-cal} is defined to be the following requirement:
\[
\E_{(\bv,\y)\sim\mD}[\y\sps {\ell_\bv}|\bv\sps {\ell_\bv},\ell_\bv] = \bv\sps{\ell_\bv}. 
\]
This is equivalent to $(\mW,0)$-calibration where $\mW$ consists of all functions $w$ mapping $\bv\in \Dk$ to $w(\bv) = \phi(\bv\sps {\ell_\bv},\ell_\bv)\e_{\ell_\bv}$ for every $\phi:[0,1]\times \{1,\ldots,k\}\to [-1,1]$.

\paragraph{Decision Calibration.} A drawback of class-wise, confidence, and top-label calibration is that they do not imply good calibration performance if the predictions are used for downstream tasks. To improve the expressivity while avoiding the exponential sample complexity of canonical calibration, \cite{zhao2021calibrating} introduced the notion of \emph{decision calibration}, where they studied downstream loss-minimization tasks of deciding which action to choose among a fixed set of actions based on the predictions. We focus on the special case of two actions, where the definition of decision calibration is as follows:

\begin{definition}[Decision Calibration \citep{zhao2021calibrating}]
\label{def:dec}
For a distribution $\mD$ of $(\bv,\y)\in \Delta_k\times \mE_k$, the decision calibration error of $\mD$ is defined to be
\footnote{
The original definition of decision calibration in \cite{zhao2021calibrating} takes a slightly different form:
\[
\dec(\mD):=\sup_{\br,\br'\in \R^k, b\in \R}\|(\y - \bv)\one(\langle \br, \bv\rangle > \langle \br',\bv \rangle)\|_2 + \|(\y - \bv)\one(\langle \br, \bv\rangle \le \langle \br',\bv \rangle)\|_2 .
\]
Here $\langle \br,\bv\rangle$ (resp.\ $\langle \br',\bv\rangle$) is the expected \emph{loss} of taking action $1$ (resp.\ action $2$) over the randomness in an outcome $\hat\y\in \Ek$ distributed with mean $\bv$. In fact \Cref{def:dec} is equivalent to this definition.
For any $\bv\in \Delta_k$, the sum of the coordinates of $\bv$ is $1$, i.e., $\langle \mathbf 1,\bv\rangle = 1$, where $\mathbf 1$ is the all-ones vector. Therefore, in \Cref{def:dec}, we have $\one(\langle \ba,\bv\rangle > b) = \one(\langle \ba - b\mathbf 1,\bv\rangle > 0)$, and thus restricting $b = 0$ does not change \Cref{def:dec}. Under this restriction, the equivalence between the two definitions follows by taking $\ba = \br - \br'$.
}
\[
\dec(\mD):=\sup_{\ba\in \R^k, b\in \R}\|(\y - \bv)\one(\langle \ba, \bv\rangle > b)\|_2 + \|(\y - \bv)\one(\langle \ba, \bv\rangle \le b)\|_2 .
\]
Equivalently, this is the weighted calibration error $\CE_\mW(\mD)$, where $\mW$ consists of functions mapping $\bv$ to $\one(\langle \ba, \bv\rangle > b)\bg + \one(\langle \ba, \bv\rangle > b)\bg'\in \R^k$ for every $\ba\in \R^k, b\in \R$, and $\bg,\bg'\in \R^k$ satisfying $\|\bg\|_2 \le 1$, $\|\bg'\|_2 \le 1$.
\end{definition}

The work of \cite{zhao2021calibrating} showed that decision calibration ensures a desirable indistinguishability property for downstream loss minimization tasks, demonstrating the expressivity of the notion.
However, we show that decision calibration is a computationally inefficient notion. Here, efficiency is evaluated on the \emph{auditing} task (defined formally in \Cref{sec:audit}), where the goal is to recalibrate a given mis-calibrated predictor while reducing its squared loss. \citep{zhao2021calibrating} showed that auditing for decision calibration has \emph{sample} complexity polynomial in $k$, improving over the exponential sample complexity of canonical calibration, but they fell short of proving a computational efficiency guarantee. Instead, they provided a heuristic algorithm for the auditing task without correctness or running time analyses. Our computational hardness results in \Cref{sec:hardness} show that under standard complexity-theoretic assumptions, there is no $\poly(k)$-time algorithm for auditing decision calibration.

\paragraph{Smooth Calibration.} We now introduce new notions of multi-class calibration inspired by a recent theory of \citet{utc1} on calibration measures in the binary setting.

Consider the downstream binary prediction task of predicting whether the true label belongs to a set $T \subset [k]$ of labels. 
Let $\ba:=\ind{T} \in \zo^k$ denote the indicator of the subset $T$. Given the distribution $\mD$ of $(\bv, \y)$, the predicted probability of this event is $\ip{\ind{T}}{\bv}$ while the true label is given by $\ip{\ind{T}}{\y}$. 
A natural approach to defining calibration notions for the multi-class setting is to seek good calibration guarantees for every such binary prediction tasks.

A well studied notion of calibration for binary classification is the notion of smooth calibration \cite{kakadeF08, FosterH18}. A key advantage of this notion is that it is robust to perturbations of the predictor, unlike notions such as $\ECE$. More recently, it plays a central role in the the work of \cite{utc1} and their theory of consistent calibration measures for binary classification. At a high level, these are calibration measures that are polynomially related to the (earthmover) distance to the closest perfectly calibrated predictor. 
\eat{
Meaningful calibration measures in the binary setting have been studied in the recent work \cite{utc1}.
In \citet{utc1}, a calibration measure is called a \emph{consistent calibration measure} if it is polynomially related to the \emph{distance to calibration}.\parik{not defined yet. Are we just trying to motivate smooth calibration?} A representative consistent calibration measure is the \emph{smooth calibration error} \cite{kakadeF08, utc1}.}
Applying the notion of smooth calibration error to the downstream binary prediction tasks for subsets $T \subseteq [k]$, we get the following definition:

\begin{definition}[Subset Smooth Calibration]
\label{def:sm-subset}
Let $\Lip$ be the class of $1$-Lipschitz functions $\phi:\R\to [-1,1]$.
For a distribution $\mD$ of $(\bv,\y)\in \Delta_k\times \mE_k$, we define the smooth calibration error of $\mD$ on the subset $T$ to be
\begin{align*}
\smCE_T(\mD) & = \sup_{\phi \in \Lip} \E_{\mD} [(\langle \ind T, \y\rangle - \langle \ind T, \bv  \rangle)\phi(\ip{\ind{T}}{\bv})]\\
& =  \sup_{\phi \in \Lip} \E_{\mD} [\ip{\ind{T}}{\y - \bv}\phi(\ip{\ind{T}}{\bv})].  
\end{align*}
We define the subset smooth calibration error of $\mD$ as
\[ \ssCE(\mD) =  \sup_{T \subseteq [k]} \smCE_T(\mD).  \]
More generally, for $m \ge 0$, we define the $m$-subset smooth calibration error of $\mD$ as
\[ \ssCE_m(\mD) =  \sup_{T \subseteq [k], |T| \le m} \smCE_T(\mD).  \]
 \end{definition}
Note that we can define subset smooth calibration as a special case of weighted calibration. We define $\mW_{\sset m}$ to be the set of all functions $w:\Delta_k\to [-1,1]^k$ such that there exist $T\subseteq [k]$ and $\phi\in \Lip$ satisfying $|T| \le m$ and $w(\bv) = \ind T\phi(\langle \ind T,\bv \rangle)$ for every $\bv\in \Delta_k$. Then
\[ \ssCE_m(\mD) =  \CE_{\mW_{\sset m}}(\mD).  \]

In the binary setting, a result of \cite{utc1} shows that the smooth calibration error is polynomially related to the (earthmover) distance to the nearest perfectly calibrated predictor. 
Therefore, a small subset smooth calibration error in our multi-class setting implies that for every subset $T\subseteq [k]$, the prediction $\langle \ind T, \bv\rangle$ is close to perfect calibration for the corresponding downstream binary prediction task.

Having demonstrated the expressivity of subset smooth calibration, we move on to establish its efficiency.
The main algorithmic result of our paper is that auditing for subset smooth calibration can be achieved in time polynomial in $k$ (for any fixed error parameter $\alpha$, see \Cref{sec:audit} for formal definition of auditing).
That is, subset smooth calibration simultaneously achieves strong expressivity and computational efficiency.
In fact, the efficiency of our auditing algorithm extends to a more expressive notion which we call \emph{projected smooth calibration}, where we generalize indicators of sets that are vectors in $\zo^k$ to allow vectors in $[-1,1]^k$. 

\begin{definition}[Projected Smooth Calibration]
\label{def:p-smooth}
For $m \ge 0$, let $\mH_{m\text{-}\pLip}$ denote the set of all functions $h:\Dk \to [-1,1]$ such that there exist $\phi \in \Lip$ and $\ba\in [-1,1]^k$ with $\|\ba\|_2^2\le m$ satisfying
\[ h(\bv) = \phi(\ip{\ba}{\bv}) \quad \text{for every } \bv\in \Delta_k.\] 
Define the $m$-projected smooth calibration error as
\[ \psCE_m(\mD) = \CE_{\hplip m^k}(\mD). \]
\end{definition}

In measuring $\psCE$, we audit each coordinate $i \in [k]$ using a distinct function $h\sps i \in \hplip m$. 
We also consider a further strengthening of projected smooth calibration by allowing  arbitrary $\ell_1$-Lipshcitz functions in each coordinate:
\begin{definition}[Full Smooth Calibration]
    Let $\hflip$ denote the set of all functions $h: \Dk \to [-1,1]$ such that 
    \[ |h(\bv) - h(\bv')| \leq \norm{\bv - \bv'}_1 \quad \text{for every }\bv, \bv' \in \Dk. \] 
    Define the full smooth calibration error of a distribution $\mD$ of $(\bv,\y)\in \Delta_k\times \mE_k$ as
    \[ \fsCE(\mD) = \CE_{\hflip ^k}(\mD) .\]
\end{definition}

\begin{lemma}
\label{lm:smooth}
    For any $m \ge 0$, for any distribution $\mD$ of $(\bv,\y)\in \Delta_k\times \mE_k$,
    \[\ssCE_m(\mD) \leq \psCE_m(\mD) \leq \fsCE(\mD). \]
\end{lemma}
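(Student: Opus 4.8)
The plan is to deduce both inequalities from a single structural fact: the weighted calibration error $\CE_\mW(\mD)$ is monotone in the weight class, i.e.\ $\mW_1 \subseteq \mW_2$ implies $\CE_{\mW_1}(\mD) \le \CE_{\mW_2}(\mD)$, simply because $\CE_\mW(\mD)$ is a supremum over $w \in \mW$. Recalling that $\ssCE_m(\mD) = \CE_{\mW_{\sset m}}(\mD)$, that $\psCE_m(\mD) = \CE_{\hplip m^k}(\mD)$, and that $\fsCE(\mD) = \CE_{\hflip^k}(\mD)$, it therefore suffices to establish the two set inclusions $\mW_{\sset m} \subseteq \hplip m^k$ and $\hplip m^k \subseteq \hflip^k$ and invoke monotonicity twice.

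For the first inclusion, I take an arbitrary $w \in \mW_{\sset m}$, so $w(\bv) = \ind T\,\phi(\ip{\ind T}{\bv})$ for some $T \subseteq [k]$ with $|T| \le m$ and some $\phi \in \Lip$, and I check coordinatewise that $w \in \hplip m^k$. For a coordinate $i \in T$, $w\sps i(\bv) = \phi(\ip{\ind T}{\bv})$ is of the required form $\phi\sps i(\ip{\ba\sps i}{\bv})$ with $\phi\sps i = \phi \in \Lip$ and $\ba\sps i = \ind T \in \zo^k \subseteq [-1,1]^k$, which satisfies $\|\ba\sps i\|_2^2 = |T| \le m$. For a coordinate $i \notin T$, $w\sps i \equiv 0$, which is realized by $\ba\sps i = \mathbf 0$ (squared norm $0 \le m$, legitimate since $m \ge 0$) and the constant function $\phi\sps i \equiv 0$, which lies in $\Lip$. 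Hence $w \in \hplip m^k$, giving $\ssCE_m(\mD) \le \psCE_m(\mD)$.

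For the second inclusion it is enough to show every $h \in \hplip m$ is $1$-Lipschitz from $(\Dk,\onenorm{\cdot})$ to $[-1,1]$, since then every $w \in \hplip m^k$ has all coordinates in $\hflip$ and so $w \in \hflip^k$. Writing $h(\bv) = \phi(\ip{\ba}{\bv})$ with $\phi \in \Lip$ and $\ba \in [-1,1]^k$, for $\bv,\bv' \in \Dk$ the $1$-Lipschitzness of $\phi$ together with H\"older's inequality gives
\[
|h(\bv) - h(\bv')| \le |\ip{\ba}{\bv} - \ip{\ba}{\bv'}| = |\ip{\ba}{\bv - \bv'}| \le \infnorm{\ba}\,\onenorm{\bv - \bv'} \le \onenorm{\bv - \bv'},
\]
using $\infnorm{\ba} \le 1$; and $h$ takes values in $[-1,1]$ because $\phi$ does. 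This shows $\hplip m^k \subseteq \hflip^k$ and hence $\psCE_m(\mD) \le \fsCE(\mD)$.

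There is essentially no hard step: the lemma is just the observation that each weight family refines the next. The only points needing a moment's care are (i) exhibiting an explicit realization $(\ba\sps i,\phi\sps i)$ of the constant-$0$ coordinates in the first inclusion, which works precisely because $m\ge 0$ and the zero function is $1$-Lipschitz, and (ii) the minor bookkeeping that when $\ba$ has negative entries $\ip{\ba}{\bv}$ ranges over $[-1,1]$ rather than $[0,1]$, so $\Lip$ should be read as $1$-Lipschitz maps on the relevant interval; this is harmless for the estimate above, which uses only the modulus of continuity of $\phi$.
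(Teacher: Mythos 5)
Your proposal is correct and follows essentially the same route as the paper: the second inequality is proved identically (via $\hplip m \subseteq \hflip$ using the Lipschitz property of $\phi$ and H\"older with $\|\ba\|_\infty \le 1$), and your first inequality via the inclusion $\mW_{\sset m} \subseteq \hplip m^k$ plus monotonicity of $\CE_\mW$ is just a slightly cleaner packaging of the paper's argument, which exhibits the same witness $w\sps i(\bv) = \ind{T}(i)\,\phi(\ip{\ind T}{\bv})$ at the maximizing $T$ and $\phi$.
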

\begin{proof}
    To prove the first inequality, let $T \subset [k]$ be the set of size bounded by $m$ that maximizes $\smCE_T(\mD)$, and $\phi \in \Lip$ the Lipschitz function that witnesses it, so that
    \[ \ssCE_m(\mD) = \E_{\mD} [\ip{\ind{T}}{\y - \bv}\phi(\ip{\ind{T}}{\bv})] = \E_{\mD} [\ip{\phi(\ip{\ind{T}}{\bv})\ind{T}}{\y - \bv}]. \]
    We define the auditor function $w \in \hplip m^k$ where 
    \begin{align*} 
        w\sps i(\bv) = \ind{T}\phi(\ip{\ind{T}}{\bv}) = \begin{cases} \phi(\ip{\ind{T}}{\bv}) \ \text{for}\  i \in T\\
        0 \ \text{otherwise}
        \end{cases}
    \end{align*}
    Hence
    \begin{align*} 
    \psCE_m(\mD) &= \max_{w' \in \hplip m^k} \E_\mD[\ip{\y - \bv}{w'(\bv)}] \\
    & \geq \E_\mD[\ip{\y - \bv}{w(\bv)}]\\
    &= \ip{\y - \bv}{\ind{T}}\phi(\ip{\ind{T}}{v})\\
    &= \ssCE_m(\mD).
    \end{align*}
    
    The second inequality is implied by the inclusion $\hplip m \subseteq \hflip$. To prove this inclusion, note that for any function $h \in \hplip m$, there exists $\phi \in \Lip, \ba \in [-1,1]^k$ such that $h(\bv) = \phi(\ip{\ba}{\bv})$ for every $\bv\in \Dk$. We have
    \begin{align*}
    |h(\bv) - h(\bv')| & =
        |\phi(\ip{\ba}{\bv}) - \phi(\ip{\ba}{\bv'})|\\ &\leq |\ip{\ba}{\bv} - \ip{\ba}{\bv'}|\\
        & = |\ip{\ba}{\bv - \bv'}|\\
        & \leq \norm{\ba}_\infty\norm{\bv - \bv'}_1\\
        & \leq \norm{\bv -\bv'}_1
    \end{align*}
    where the first inequality uses the Lipschitz property of $\phi$. This shows $h \in \hflip$, which completes the proof.
\end{proof}

In \Cref{sec:smooth} we show that both subset smooth calibration and projected smooth calibration allow efficient auditing, whereas in \Cref{thm:full} we show that full smooth calibration requires sample complexity exponential in $k$.

\section{Sample Complexity of Canonical Calibration}
\label{sec:sample}
The main goal of this section is to prove that distinguishing whether a distribution $\mD$ of $(\bv,\y)$ satisfies perfect canonical calibration or $\mD$ is far from canonical calibration requires sample complexity exponential in $k$.

We use the following definition of distance to canonical calibration, generalizing the \emph{lower distance to calibration} in \cite{utc1} from the binary setting to the multi-class setting.
\begin{definition}[Distance to Canonical Calibration]
Consider a distribution $\mD$ of $(\bv,\y)\in \Dk\times \Ek$. We define $\ext(\mD)$ to be the set of distributions $\Pi$ of $(\bu,\bv,\y)$ where the marginal distribution of $(\bv,\y)$ is $\mD$, and the marginal distribution of $(\bu,\y)$ satisfies perfect canonical calibration. We define the distance to calibration, denoted by $\dce(\mD)$, as follows:
\[
\dce(\mD) := \inf_{\Pi\in \ext(\mD)}\E_{\Pi}\|\bu - \bv\|_1.
\]
\end{definition}

Here is our sample complexity lower bound:

\begin{theorem}
\label{thm:canonical}
Let $A$ be an algorithm that takes examples $(\bv_1,\y_1),\ldots,(\bv_n,\y_n)\in \Dk\times \Ek$ drawn i.i.d.\ from a distribution $\mD$ as input, and outputs \acc or \rej. Assume that for any distribution $\mD$ satisfying perfect canonical calibration, algorithm $A$ outputs \acc\ with probability at least $2/3$. Also, for some $\alpha>0$, assume that for any distribution $\mD$ satisfying $\dce(\mD) \ge \alpha$, algorithm $A$ outputs \rej\ with probability at least $2/3$. Then for some absolute constants $k_0 > 0$ and $c > 0$, assuming $k\ge k_0$, we have $n \ge (c/\alpha)^{(k-1)/2}$.
\end{theorem}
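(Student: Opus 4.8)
The plan is to prove the lower bound via an indistinguishability argument based on the birthday paradox, exactly as sketched in the introduction. First I would fix a large packing $V \subseteq \Dk$ of prediction vectors: a set of size roughly $(c/\alpha)^{k-1}$ such that any two distinct $\bv, \bv' \in V$ satisfy $\|\bv - \bv'\|_1 \geq 4\alpha$ (say). Such a set exists by a standard volumetric/greedy packing argument in the $(k-1)$-dimensional simplex, and getting the exponent right is where the $(k-1)/2$ in the statement will come from after the birthday-paradox square-root loss. To make the later calibration computations clean, I would actually choose each $\bv \in V$ to be ``interior'' and symmetric enough that there is a natural way to sample a label $\y$ with $\E[\y] = \bv$; e.g. take $V$ inside the region where all coordinates are bounded away from $0$ and $1$, or even just use $\bv$ itself as a distribution over $\Ek$.

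Next I would define the two distributions. In $\mD_1$: draw $\bv$ uniformly from $V$, then draw $\y \sim \bv$ (i.e. $\y = \e_j$ with probability $\bv\sps j$), independently. This is perfectly canonically calibrated since $\E[\y \mid \bv] = \bv$, so $A$ accepts it w.p. $\geq 2/3$. In $\mD_2$: this is a mixture/``frozen-label'' construction. I would describe it as: there is a random function $g: V \to \Ek$ where $g(\bv)$ is drawn from $\bv$ independently for each $\bv \in V$; then $\mD_2^g$ draws $\bv$ uniformly from $V$ and outputs $(\bv, g(\bv))$. The key point is that conditioned on $\bv$, the label is deterministic, so $\E_{\mD_2^g}[\y \mid \bv] = g(\bv) \in \Ek$, and since distinct vertices of the simplex are at $\ell_1$ distance $2 > 0$ while the $\bv$'s are $4\alpha$-separated, one shows $\dce(\mD_2^g) \geq \alpha$ (in expectation over $g$, and hence for a constant fraction of $g$'s, or one can argue it holds always by noting every coupling must move mass from a vertex back toward $\bv$). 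So $A$ must reject $\mD_2^g$ w.p. $\geq 2/3$ for (most) $g$.

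The heart of the argument is the indistinguishability: if $A$ takes $n$ i.i.d. samples and $n < (c/\alpha)^{(k-1)/2} \approx \sqrt{|V|}$, then the transcripts $((\bv_1,\y_1),\dots,(\bv_n,\y_n))$ under $\mD_1$ and under $\mD_2 := \E_g[\mD_2^g]$ have total variation distance $o(1)$. The reason: condition on the event $E$ that all $n$ drawn $\bv_i$'s are distinct. Under both $\mD_1$ and $\mD_2$, conditioned on $E$ and on the tuple of (distinct) $\bv_i$'s, the labels $\y_i$ are i.i.d. with $\y_i \sim \bv_i$ — because distinct $\bv$'s have independent frozen labels in $\mD_2^g$, and averaging over $g$ just gives the same product distribution. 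Hence the two transcript distributions are identical conditioned on $E$, and $\P[\neg E] \leq \binom{n}{2}/|V| = o(1)$ by the birthday bound when $n \ll \sqrt{|V|}$. Then $d_{TV}(\text{transcript}_1, \text{transcript}_2) \leq \P[\neg E] = o(1)$, so $A$ cannot both accept $\mD_1$ w.p. $\geq 2/3$ and reject a typical $\mD_2^g$ w.p. $\geq 2/3$; a short averaging argument over $g$ (using that $\mD_2$ is a mixture of the $\mD_2^g$) yields the contradiction, forcing $n \geq (c/\alpha)^{(k-1)/2}$ for a suitable constant $c$ and $k \geq k_0$.

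The main obstacle I anticipate is the geometric/packing step together with verifying $\dce(\mD_2^g) \geq \alpha$: I need a packing $V$ in $\Dk$ whose size is genuinely $\exp(\Omega(k\log(1/\alpha)))$ (so the square root is still $\exp(\Omega(k))$ for constant $\alpha$, and $(c/\alpha)^{(k-1)/2}$ for general $\alpha$), and I need to make sure the lower bound on the distance to calibration of the frozen-label distribution is at least $\alpha$ — this requires showing that any coupling $\Pi \in \ext(\mD_2^g)$ must move each deterministic vertex label $g(\bv)$ back to something averaging to $\bv$, and because the $\bv$'s are well-separated and the vertices are far from the interior, the $\ell_1$ transport cost is $\Omega(\alpha)$ per point (carefully: one uses that a calibrated $\bu$ with $\E[\bu] = \bv$ cannot be too close to $g(\bv)$ in expectation). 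Pinning down the exact constants so the final bound reads $n \geq (c/\alpha)^{(k-1)/2}$ is routine but fiddly, and I would handle it by being slightly generous in the separation parameter of $V$ and then optimizing $c$ at the end.
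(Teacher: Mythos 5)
Your overall architecture matches the paper's: a packing $V\subseteq\Dk$ of size $(c/\alpha)^{k-1}$ with pairwise $\ell_1$-separation $\Theta(\alpha)$ and with every point $\Omega(1)$-far from the vertices $\e_i$, a perfectly calibrated distribution versus a family of ``frozen-label'' distributions $\mD_2^g$, and a birthday-paradox indistinguishability bound of $O(n^2/|V|)$ (the paper's Lemmas on packing and indistinguishability are exactly your first and third steps). Where you diverge is in certifying that $\dce(\mD_2^g)\ge\Omega(\alpha)$: you propose to lower-bound the infimum over couplings $\Pi\in\ext(\mD_2^g)$ directly, arguing that any transport plan must pay $\Omega(\alpha)$ per point. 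The paper instead proves the variational inequality $\fsCE(\mD)\le 4\,\dce(\mD)$ and then exhibits a single explicit witness: the function $w=g$ itself is $(2/\varepsilon)$-Lipschitz on $V$ by the separation property, and $\langle g(\bv)-\bv, g(\bv)\rangle\ge 1/6$ by the distance-from-vertices property, so $\fsCE(\mD_2^g)\ge\varepsilon/12$ deterministically for every $g$. The witness route is cleaner (it replaces a quantification over all couplings by a single test function, which is the natural ``dual'' side of the earthmover quantity) and it simultaneously yields the paper's stronger Theorem on full smooth calibration. Your direct coupling argument can be made to work --- partition $\Dk$ into the disjoint $\ell_1$-balls of radius $2\alpha$ around the points of $V$, use the calibration identity $\E[\y\,\one(\bu\in B_\bv)]=\E[\bu\,\one(\bu\in B_\bv)]$ on each ball, and charge cross-ball mass to the transport cost --- but be careful with your stated justification: the constraint on $\Pi$ is \emph{not} that $\E[\bu\mid\bv]=\bv$ (nothing forces that); it is only that the $(\bu,\y)$ marginal is calibrated, and the contradiction must be extracted from that conditional-expectation identity together with the separation of $V$ and the bound $\|g(\bv)-\bv\|_1\ge 1/3$. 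With that caveat your proof goes through and gives the same $(c/\alpha)^{(k-1)/2}$ bound.
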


To prove \Cref{thm:canonical}, we use the following lemma to connect the distance to canonical calibration $\dce(\mD)$ with the full smooth calibration error $\fsCE(\mD)$.
\begin{lemma}
\label{lm:full-dce}
For any distribution $\mD$ over $\Dk\times \Ek$,
$\fsCE(\mD) \le 4\dce(\mD)$.
\end{lemma}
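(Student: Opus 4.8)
The plan is to take a near-optimal coupling witnessing $\dce(\mD)$ and to control the $\hflip^k$-calibration error against any single weight function by a three-term split, one term of which vanishes exactly by perfect canonical calibration. Concretely, fix an arbitrary $w\in \hflip^k$ and $\epsilon > 0$, and choose $\Pi\in \ext(\mD)$ with $\E_\Pi\|\bu - \bv\|_1 \le \dce(\mD) + \epsilon$; such a $\Pi$ exists by definition of the infimum (and $\ext(\mD)$ is nonempty, e.g.\ take $\bu \equiv \E_\mD[\y]$). Since the marginal of $\Pi$ on $(\bv,\y)$ is $\mD$, we have $\E_\mD[\ip{\y - \bv}{w(\bv)}] = \E_\Pi[\ip{\y - \bv}{w(\bv)}]$, and I would then write
\[
\ip{\y - \bv}{w(\bv)} = \ip{\y - \bu}{w(\bv) - w(\bu)} + \ip{\y - \bu}{w(\bu)} + \ip{\bu - \bv}{w(\bv)}.
\]

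The middle term has zero expectation: $w(\bu)$ depends only on $\bu$, and the marginal of $\Pi$ on $(\bu,\y)$ satisfies perfect canonical calibration, so $\E_\Pi[\ip{\y-\bu}{w(\bu)} \mid \bu] = \ip{\E_\Pi[\y\mid\bu] - \bu}{w(\bu)} = 0$. For the third term, since every coordinate of $w$ maps into $[-1,1]$, $|\ip{\bu-\bv}{w(\bv)}| \le \|w(\bv)\|_\infty \|\bu - \bv\|_1 \le \|\bu - \bv\|_1$. The crucial term is the first one, where the naive bound $\|w(\bv) - w(\bu)\|_1 \le k\|\bv-\bu\|_1$ would lose a factor of $k$. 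Instead I would exploit the one-hot structure of $\y$: if $\y = \e_j$ then $\|\y - \bu\|_1 = (1 - \bu\sps j) + \sum_{i\ne j}\bu\sps i = 2(1 - \bu\sps j) \le 2$, while each $w\sps i \in \hflip$ is $\ell_1$-Lipschitz, so $|w\sps i(\bv) - w\sps i(\bu)| \le \|\bv - \bu\|_1$. Pulling the coordinate-wise maximum out of the inner product,
\[
\bigl|\ip{\y - \bu}{w(\bv) - w(\bu)}\bigr| \le \Bigl(\max_i |w\sps i(\bv) - w\sps i(\bu)|\Bigr)\,\|\y - \bu\|_1 \le 2\|\bv - \bu\|_1.
\]

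Combining the three estimates and taking expectations over $\Pi$ gives $|\E_\mD[\ip{\y - \bv}{w(\bv)}]| \le 3\,\E_\Pi\|\bu - \bv\|_1 \le 3(\dce(\mD) + \epsilon)$. Letting $\epsilon \to 0$ and taking the supremum over $w \in \hflip^k$ yields $\fsCE(\mD) \le 3\dce(\mD) \le 4\dce(\mD)$, which is the claim (indeed with room to spare). The only real obstacle is the factor-$k$ trap in the first term; resolving it via the $\max_i$/$\|\y-\bu\|_1$ pairing is the one place where the argument is not pure bookkeeping, and everything else follows from Hölder's inequality and the defining properties of $\ext(\mD)$.
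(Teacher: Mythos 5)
Your proposal is correct and follows essentially the same route as the paper's proof: pick a coupling $\Pi\in\ext(\mD)$, split off the term involving $w(\bu)$ that is killed by perfect calibration of the $(\bu,\y)$ marginal, and bound the remaining terms via the $\ell_1/\ell_\infty$ H\"older pairing together with the fact that each coordinate of $w$ is $\ell_1$-Lipschitz (which is exactly how the factor-$k$ loss is avoided in the paper as well). Your three-term decomposition versus the paper's two-term one is a cosmetic difference; both arguments in fact yield the constant $3$, and the stated $4$ is simply slack.
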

\begin{proof}
Consider any function $w\in \hflip ^k$ and any distribution $\Pi\in \ext(\mD)$. 
By the definition of $\hflip^k$, for any $\bu,\bv\in \Dk$, we have
\begin{equation}
\label{eq:infty-lip}
\|w(\bu) - w(\bv)\|_\infty \le \|\bu - \bv\|_1.
\end{equation}
By the definition of $\ext(\mD)$, for $(\bu,\bv,\y)\sim \Pi$, the distribution of $(\bu,\y)$ satisfies perfect canonical calibration, and thus
\begin{equation}
\label{eq:Pi-calibrated}
\E_\Pi[\langle \y - \bu, w(\bu)\rangle] = 0.
\end{equation}
Therefore,
\begin{align*}
\E_\mD[\langle \bv - \y, w(\bv)\rangle] & \le |\E_\Pi[\langle \bv - \y, w(\bv) - w(\bu)\rangle] | + \E_\Pi[\langle \bv - \y, w(\bu)\rangle]\\
& \le 2\E_\Pi\|\bu - \bv\|_1 + \E_\Pi[\langle \bv - \y, w(\bu)\rangle]\tag{by \eqref{eq:infty-lip}}\\
& = 2\E_\Pi\|\bu - \bv\|_1 + \E_\Pi[\langle \bv - \bu, w(\bu)\rangle]\tag{by \eqref{eq:Pi-calibrated}}\\
& \le 4\E_\Pi\|\bu - \bv\|_1,
\end{align*}
where the last inequality holds because $\|w(\bu)\|_\infty\le 1$. The proof is completed by taking supremum over $w\in \hflip^k$ and infimum over $\Pi\in \ext(\mD)$.
\end{proof}
By \Cref{lm:full-dce}, \Cref{thm:canonical} is a direct corollary of the following theorem which gives a  sample complexity lower bound for distinguishing  perfect canonical calibration from having a large full smooth calibration error:
\begin{theorem}
\label{thm:full}
Let $A$ be an algorithm that takes examples $(\bv_1,\y_1),\ldots,(\bv_n,\y_n)\in \Dk\times \Ek$ drawn i.i.d.\ from a distribution $\mD$ as input, and outputs \acc or \rej. Assume that for any distribution $\mD$ satisfying perfect canonical calibration, algorithm $A$ outputs \acc\ with probability at least $2/3$. Also, for some $\alpha>0$, assume that for any distribution $\mD$ satisfying $\fsCE(\mD) \ge \alpha$, algorithm $A$ outputs \rej\ with probability at least $2/3$. Then for some absolute constants $k_0 > 0$ and $c > 0$, for all $k\ge k_0$ we have $n \ge (c/\alpha)^{(k-1)/2}$.
\end{theorem}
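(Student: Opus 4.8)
The plan is to carry out the indistinguishability (birthday‑paradox) argument already sketched in the introduction, reducing the whole proof to two constructions: a large, well‑separated, vertex‑avoiding packing of $\Dk$, and a family of ``bump'' weight functions in $\hflip^k$ that certifies a large $\fsCE$. The crux is the packing. For every sufficiently small $\alpha>0$ and $k\ge k_0$ I want a set $V\subseteq\Dk$ that is (i) pairwise $2\alpha$‑separated in $\onenorm{\cdot}$, (ii) vertex‑avoiding, meaning every $\bv\in V$ has all coordinates $\le 1/2$, and (iii) of size $|V|\ge(c/\alpha)^{k-1}$ for an absolute constant $c$ (when $\alpha$ is above a small absolute constant the theorem's bound is at most $1$ and is vacuous, so I may assume $\alpha$ small). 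I would obtain this by a volume/packing argument: restrict to the sub‑polytope $\Dk''=\{\bv\in\Dk:\bv\sps j\le 1/2\text{ for all }j\}$, which has at least half the $(k-1)$‑dimensional volume of $\Dk$ by a union bound over the $k$ corner sub‑simplices $\{\bv\sps j>1/2\}$ (each with $2^{-(k-1)}$ times the volume of $\Dk$), then take a maximal $2\alpha$‑separated subset of $\Dk''$, whose $\ell_1$‑balls of radius $2\alpha$ cover $\Dk''$. Comparing $\mathrm{vol}_{k-1}(\Dk)=\sqrt k/(k-1)!$ with the $(k-1)$‑volume of an $\ell_1$‑ball slice of radius $2\alpha$ through the affine hyperplane $\{\sum_i x_i=1\}$ (which is at most $\sqrt k\,(4\alpha)^{k-1}/(k-1)!$, since in the natural parametrization the slice sits inside an $\ell_1$‑ball of radius $2\alpha$ in $\R^{k-1}$) gives $|V|\ge(1/(8\alpha))^{k-1}$.

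With $V$ in hand, I take $\mD_1$ to be: draw $\bv\sim\mathrm{Unif}(V)$ and then $\y\sim\bv$ (that is, $\y=\e_i$ with probability $\bv\sps i$); this is perfectly canonically calibrated, so $A$ accepts $\mD_1^{\otimes n}$ with probability $\ge 2/3$. For $\mD_2$, I draw once a fresh label $\y(\bv)\sim\bv$ independently for each $\bv\in V$, and let $\mD_2$ output $(\bv,\y(\bv))$ for $\bv\sim\mathrm{Unif}(V)$; this is a random distribution, fed to $A$ by first drawing all the $\y(\bv)$'s and then $n$ i.i.d.\ samples. To see $\mD_2$ is far from calibrated for \emph{every} realization: since each $\bv\in V$ has coordinates $\le 1/2$ and $\y(\bv)$ is a vertex, $\onenorm{\y(\bv)-\bv}\ge 1$; define $w\sps j(\bv')=\sum_{\bv\in V}\sign(\y(\bv)\sps j-\bv\sps j)\max\{0,\alpha-\onenorm{\bv'-\bv}\}$. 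The $2\alpha$‑separation makes these bumps disjointly supported, so each $w\sps j$ is $1$‑Lipschitz with respect to $\onenorm{\cdot}$ and has range in $[-\alpha,\alpha]\subseteq[-1,1]$, hence $w\in\hflip^k$; and $\langle\y(\bv)-\bv,w(\bv)\rangle=\alpha\onenorm{\y(\bv)-\bv}\ge\alpha$ at every $\bv\in V$, so $\fsCE(\mD_2)\ge\alpha$ and $A$ rejects with probability $\ge 2/3$.

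To finish, I couple $\mD_1^{\otimes n}$ with the (mixture over the $\y(\bv)$'s of) $\mD_2^{\otimes n}$ using shared draws $\bv_1,\dots,\bv_n\sim\mathrm{Unif}(V)$: conditioned on these being pairwise distinct, the two samples are identically distributed, because in both experiments $\y_1,\dots,\y_n$ are conditionally independent with $\y_t\sim\bv_t$ (for $\mD_2$ this uses the independence of the $\y(\bv)$'s across distinct $\bv$). Hence the two processes are within total variation distance $\binom n2/|V|\le n^2/(2|V|)$, and if $A$ distinguishes them we get $2/3-1/3\le n^2/(2|V|)$, i.e.\ $n^2\ge\tfrac23|V|\ge\tfrac23(c/\alpha)^{k-1}$, which gives $n\ge(c'/\alpha)^{(k-1)/2}$ for an absolute constant $c'$ and $k\ge k_0$. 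The one genuinely delicate step is the packing in the first paragraph: the obvious coordinate‑wise/lattice constructions lose a factor of $k$ in resolution (they yield only $\mathrm{poly}(k)$ points at constant $\alpha$), so one has to use the volume estimate and exploit that $\ell_1$ separation is ``pooled'' across coordinates to get the bound with an absolute constant $c$; everything else — the bump functions and the birthday bound — is routine.
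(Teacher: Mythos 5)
Your proposal is correct and follows essentially the same route as the paper: a volume-based $\ell_1$-packing of the simplex that avoids the vertices, adversarial distributions with a single fixed label per packing point, a Lipschitz witness certifying $\fsCE \ge \alpha$, and a birthday-paradox coupling giving $n \ge \Omega(\sqrt{|V|})$. The only (immaterial) difference is in the witness: you build explicit disjointly-supported bump functions of height $\alpha$, whereas the paper observes that the label map $w:V\to\mE_k$ is itself $(2/\varepsilon)$-Lipschitz on the $\varepsilon$-separated set $V$ and rescales it; both yield $\fsCE(\mD_2)=\Omega(\alpha)$.
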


Our proof of \Cref{thm:full} starts with the following lemma which can be proved by a standard greedy algorithm:

\begin{lemma}
\label{lm:packing}
There exist absolute constants $c > 0$ and $k_0 > 0$ with the following property. For any positive integer $k > k_0$ and any $\varepsilon > 0$, there exists a set $V\subseteq\Delta_k$ with the following properties:
\begin{enumerate}
\item $|V| \ge (c/\varepsilon)^{k-1}$;
\item $\|\bv_1 - \bv_2\|_1 \ge \varepsilon$ for any distinct $\bv_1,\bv_2\in V$;
\item $\|\bv - \e_i\|_1 \ge 1/3$ for any $\bv\in V$ and $i\in \{1,\ldots,k\}$.
\end{enumerate}
\end{lemma}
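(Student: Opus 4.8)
\textbf{Proof proposal for Lemma \ref{lm:packing}.}

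The plan is to construct $V$ greedily as a maximal $\varepsilon$-packing of a suitable large subset of $\Delta_k$, and then argue both that maximality forces $V$ to be large and that the subset we packed inside guarantees the distance-$1/3$-from-vertices condition. First I would fix the region $K := \{\bv \in \Delta_k : \bv\sps j \le 2/k \text{ for all } j\}$, or more simply the ball of $\ell_1$-radius $1/3$ (intersected with $\Delta_k$) around the barycenter $\bar\bv = (1/k,\ldots,1/k)$; any point of such a region automatically satisfies property 3 once $k$ is large, since $\|\bv - \e_i\|_1 \ge \|\bar\bv - \e_i\|_1 - \|\bv - \bar\bv\|_1 \ge (2 - 2/k) - 1/3 \ge 1/3$. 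So property 3 comes for free from restricting attention to $K$.

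Next I would run the standard greedy packing: repeatedly pick a point of $K$ at $\ell_1$-distance $\ge \varepsilon$ from all previously chosen points, until no such point exists; call the resulting set $V$, which satisfies property 2 by construction. For property 1, the key observation is that maximality means the $\ell_1$-balls of radius $\varepsilon$ centered at points of $V$ cover $K$, so $|V| \cdot \mathrm{vol}_{k-1}(B_1^{\varepsilon}) \ge \mathrm{vol}_{k-1}(K)$, where volumes are taken in the $(k-1)$-dimensional affine hull of $\Delta_k$. Both volumes scale like (radius)$^{k-1}$ times a dimension-dependent constant: $\mathrm{vol}_{k-1}(B_1^\varepsilon) = \varepsilon^{k-1}\,\mathrm{vol}_{k-1}(B_1^1)$ and $\mathrm{vol}_{k-1}(K) = (1/3)^{k-1}\,\mathrm{vol}_{k-1}(B_1^1)$ (taking $K$ to be the radius-$1/3$ $\ell_1$-ball around the barycenter, which lies inside $\Delta_k$ for $k$ large enough that the barycenter is $\ge 1/(3k)\cdot k = 1/3$... actually need $1/k \ge$ something, fine for large $k$). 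Hence $|V| \ge (1/(3\varepsilon))^{k-1}$, giving property 1 with $c = 1/3$ (and absorbing the large-$k$ requirement into $k_0$).

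The main technical point to get right is the volume comparison: one must ensure the region $K$ being packed is genuinely $(k-1)$-dimensional with volume $\Theta_k(r^{k-1})$ for a constant radius $r$ bounded away from $0$, and that it sits inside $\Delta_k$ while staying uniformly far from every vertex. Using the $\ell_1$-ball of radius $1/3$ around the barycenter handles all three: it is full-dimensional in the simplex's affine hull, it is contained in $\Delta_k$ provided every coordinate stays nonnegative (true since coordinates of points in $K$ lie in $[1/k - 1/3,\, 1/k + 1/3]$, nonnegative once... hmm, $1/k - 1/3 < 0$, so actually one should intersect with $\Delta_k$ and note the intersection still contains, say, the $\ell_1$-ball of radius $1/(3k)$ — let me instead just take $r$ to be a small absolute constant times $1/k$? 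No—then the volume bound degrades). The clean fix is to pack inside $\Delta_k \cap \{\|\bv - \bar\bv\|_1 \le 1/3\}$ directly: this set contains the scaled-down simplex $(1 - c')\bar\bv + c'\Delta_k$ for a suitable absolute constant $c' \in (0,1)$ chosen so that $\|c'(\bv - \bar\bv)\|_1 \le c' \cdot 2 \le 1/3$, i.e.\ $c' = 1/6$, and this scaled simplex has $(k-1)$-volume exactly $c'^{\,k-1}\mathrm{vol}_{k-1}(\Delta_k)$. Comparing against $\varepsilon$-balls of the same dimension then yields $|V| \ge (c'' /\varepsilon)^{k-1}$ for an absolute constant $c''$, while every point of this scaled simplex is within $1/3$ of $\bar\bv$ and hence $\ge 1/3$ from each vertex $\e_i$. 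That is the only step where care is needed; the rest is routine.
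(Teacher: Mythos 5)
Your construction is correct and follows essentially the same route as the paper: a maximal greedy $\varepsilon$-packing followed by a volume comparison. The one substantive difference is how property 3 is handled. You restrict the packing to the shrunken simplex $(1-c')\bar\bv + c'\Delta_k$ (so every packed point is automatically $\ell_1$-far from each vertex), whereas the paper packs all of $\Delta_k$ while \emph{excluding} $\ell_1$-balls of radius $1/3$ around the vertices and charges their total volume, $(2/3)^{k-1}k \le 1/2$, against the covering inequality. Both work; yours is arguably cleaner conceptually. The only step you leave unverified is the final volume ratio: you need $\mathrm{vol}_{k-1}\bigl(\{\bu : \textstyle\sum_j \bu\sps j = 0,\ \|\bu\|_1 \le \varepsilon\}\bigr) \le (C\varepsilon)^{k-1}\,\mathrm{vol}_{k-1}(\Delta_k)$ for an absolute constant $C$, i.e.\ that the central hyperplane section of the cross-polytope is not super-exponentially larger than the simplex. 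This is true (e.g.\ with $C=2$), and the slickest way to see it is exactly the paper's device: project the affine hull of $\Delta_k$ onto the first $k-1$ coordinates; this map scales all $(k-1)$-volumes by the same factor, sends $\Delta_k$ onto the corner simplex of volume $1/(k-1)!$, and sends each $\ell_1$-ball of radius $\varepsilon$ into an $\ell_1$-ball of radius $\varepsilon$ in $\R^{k-1}$, of volume $(2\varepsilon)^{k-1}/(k-1)!$. With that detail supplied, your argument is complete.
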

\begin{proof}
The lemma can be proved by a simple greedy algorithm. Let us start with $V = \emptyset$ and repeat the following step: if there exists $\bv'\in \Delta_k$ such that $\|\bv' - \bv\|_1 \ge \varepsilon$ for every $u\in U$ and $\|\bv' - \e_i\|_1 \ge 1/3$ for every $i = 1,\ldots,k$, we add $\bv'$ to $V$. We repeat the step until no such $\bv'$ exists to obtain the final $V$. Clearly, $V$ satisfies properties 2 and 3 required by the lemma. It remains to prove that $V$ also satisfies property 1. 

Consider the final $V$ in the process of the algorithm. For any $\bv\in V$, consider a set $S_\bv$ consisting of all points $\bs\in \R^{k-1}$ such that $\|\bs - \bv|_{1,\ldots,k-1}\|_1 \le \varepsilon$. Similarly, for every $i = 1,\ldots,k$, consider a set $S_i$ consisting of all points $\bs\in \R^{k-1}$ such that $\|\bs - \e_i|_{1,\ldots,k-1}\|_1 \le 1/3$. Also, consider the set $S$ consisting of all points $\bs\in \R_{\ge 0}^{k-1}$ such that $\|\bs\|_1 \le 1$. If $S\setminus ((\bigcup_{\bv\in V}S_\bv)\cup (\bigcup_{i=1}^k S_i))$ is non-empty, then we can take any $\bs$ in that set and construct a vector $\bv' = (\bs\sps 1,\ldots,\bs\sps {k-1}, 1- \bs\sps 1 - \cdots - \bs\sps {k-1})\in \Delta_k$. Since $\bs\notin S_v$, it is easy to see that $\|\bv' - \bv\|_1 > \varepsilon$ for every $\bv\in V$. Similarly, $\|\bv' - \e_i\| > 1/3$ for every $i = 1,\ldots,k$. Therefore, the iterative steps of the algorithm can be continued. For the final $V$, it must hold that $S\setminus ((\bigcup_{\bv\in V}S_\bv)\cup (\bigcup_{i=1}^k S_i))$ is empty. The volume of each $S_\bv$ is $(2\varepsilon)^{k-1}$ times the volume of $S$, and the volume of each $S_i$ is $(2/3)^{k-1}$ times the volume of $S$. Therefore,
\[
(2\varepsilon)^{k-1}|V| + (2/3)^{k-1}k \ge 1.
\]
When $k$ is sufficiently large, we have $(2/3)^{k-1}k \le 1/2$, in which case $|V| \ge (1/2)(1/(2\varepsilon))^{k-1}\ge (c/\varepsilon)^{k-1}$, where the last inequality holds whenever $k$ is sufficiently large and $c > 0$ is sufficiently small.
\end{proof}
In the lemma below, we use the set $V$ from \Cref{lm:packing} to construct candidate distributions with large full smooth calibration error. Later in \Cref{lm:indistinguishable} we combine these distributions to achieve indistinguishability from a distribution with no calibration error, unless given at least $\exp(k)$ examples.
\begin{lemma}
\label{lm:miscalibrated}
For a sufficiently large positive integer $k$ and $\varepsilon \in (0,1/2)$, let $V\subseteq\Delta_k$ be the set guaranteed by \Cref{lm:packing}. For a function $w:V\to \mE_k$, define distribution $\mD_w$ of $(\bv,\y)\in V\times \Ek$ such that $\bv$ is distributed uniformly over $V$ and $\y = w(\bv)$. Then $\fsCE(\mD_w)\ge \varepsilon/12$.
\end{lemma}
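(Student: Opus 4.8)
The plan is to exhibit a single explicit weight function $h\in\hflip^k$ with $\E_{\mD_w}[\ip{\y-\bv}{h(\bv)}]\ge \varepsilon/12$; since $\fsCE(\mD_w)=\sup_{h'\in\hflip^k}\bigl|\E_{\mD_w}[\ip{\y-\bv}{h'(\bv)}]\bigr|$, this immediately gives the claimed bound. The intuition is that under $\mD_w$ the outcome is a deterministic function of the prediction, $\y=w(\bv)=\e_{j(\bv)}$ (writing $j(\bv)$ for the index with $w(\bv)=\e_{j(\bv)}$), and by property~3 of \Cref{lm:packing} this value is $\ell_1$-far from $\bv$, so $\mD_w$ is badly miscalibrated. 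The only real work is certifying this with a weight function that is $1$-Lipschitz in each coordinate, and that is precisely where property~2 (the $\varepsilon$-packing) gets used.

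The construction: for $i\in[k]$ set $V_i=\{\bv\in V: w(\bv)=\e_i\}$, and for each $\bv\in V$ let $\psi_\bv(\bv')=\max\bigl(0,\ \varepsilon/2-\|\bv'-\bv\|_1\bigr)$, a $1$-Lipschitz bump supported on the $\ell_1$-ball of radius $\varepsilon/2$ about $\bv$ with $\psi_\bv(\bv)=\varepsilon/2$. Define $h\sps i=\sum_{\bv\in V_i}\psi_\bv$ and $h=(h\sps1,\dots,h\sps k)$. The argument then has four steps. \textbf{(1) $h\in\hflip^k$:} by property~2 the balls $\{\bv':\|\bv'-\bv\|_1<\varepsilon/2\}$, $\bv\in V$, are pairwise disjoint (a common point would force two distinct centers to be $<\varepsilon$ apart), so at every point at most one summand of $h\sps i$ is nonzero; hence $h\sps i$ takes values in $[0,\varepsilon/2]\subseteq[-1,1]$ (here $\varepsilon<1/2$). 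For the Lipschitz bound, fix $\bv_1,\bv_2\in\Dk$: if they fall in the support of the same bump the claim is the single-bump Lipschitz bound, and otherwise the bump active at $\bv_1$ vanishes at $\bv_2$, so $h\sps i(\bv_1)=\psi_\bv(\bv_1)-\psi_\bv(\bv_2)\le\|\bv_1-\bv_2\|_1$, symmetrically for $h\sps i(\bv_2)$, and since both are nonnegative $|h\sps i(\bv_1)-h\sps i(\bv_2)|\le\|\bv_1-\bv_2\|_1$. \textbf{(2) Evaluate on $V$:} for $\bv\in V$ with $j(\bv)=i$ we get $h\sps i(\bv)=\psi_\bv(\bv)=\varepsilon/2$, while for $\ell\ne i$ each point of $V_\ell$ is at $\ell_1$-distance $\ge\varepsilon>\varepsilon/2$ from $\bv$ (property~2, using $V_\ell\cap V_i=\emptyset$), so $h\sps\ell(\bv)=0$; thus $h(\bv)=(\varepsilon/2)\,\e_{j(\bv)}$. \textbf{(3) Correlation at $\bv$:} $\ip{\y-\bv}{h(\bv)}=\ip{\e_{j(\bv)}-\bv}{(\varepsilon/2)\e_{j(\bv)}}=(\varepsilon/2)(1-\bv\sps{j(\bv)})$, and since $\|\bv-\e_{j(\bv)}\|_1=2(1-\bv\sps{j(\bv)})$ while property~3 gives $\|\bv-\e_{j(\bv)}\|_1\ge1/3$, we obtain $1-\bv\sps{j(\bv)}\ge1/6$ and hence $\ip{\y-\bv}{h(\bv)}\ge\varepsilon/12$. \textbf{(4) Average:} $\E_{\mD_w}[\ip{\y-\bv}{h(\bv)}]=\frac1{|V|}\sum_{\bv\in V}\ip{\e_{j(\bv)}-\bv}{h(\bv)}\ge\varepsilon/12$.

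The main obstacle is step~(1): verifying that the coordinate functions $h\sps i$, each a sum of many bumps, are genuinely $1$-Lipschitz. Without the $\varepsilon$-separation the bump supports could overlap, and two overlapping bumps could produce a local slope larger than $1$ (and a range exceeding $[-1,1]$); the packing property forces disjoint supports, which simultaneously controls the range and reduces the Lipschitz check to the single-bump case plus the observation that the bump active at one point is identically zero at the other. Everything else is elementary simplex geometry, chiefly the identity $\|\bv-\e_i\|_1=2(1-\bv\sps i)$ and a linearity-of-expectation average.
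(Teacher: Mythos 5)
Your proof is correct and follows essentially the same route as the paper's: both use the weight function that equals $(\varepsilon/2)\,w(\bv)$ on $V$, derive the per-point correlation $\ge \varepsilon/12$ from property~3 via the identity $\|\bv - \e_i\|_1 = 2(1-\bv\sps i)$, and use property~2 (the $\varepsilon$-separation) to control the Lipschitz constant. The only difference is cosmetic: the paper verifies that $w$ is $(2/\varepsilon)$-Lipschitz on $V$ and appeals to a standard Lipschitz extension, whereas you make the extension explicit with disjoint bump functions — a perfectly valid instantiation of the same step.
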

\begin{proof}
For any $\bv\in V$, by property 3 in \Cref{lm:packing} and the fact that $w(\bv)\in \{\e_1,\ldots,\e_k\}$, we have $\|\bv - w(\bv)\|_1\ge 1/3$. 
Since $\bv\in \Dk$ and $w(\bv)\in \Ek$, we can separately consider the unique non-zero coordinate of $w(\bv)$ and the other zero coordinates to get
\[
1/3 \le \|\bv - w(\bv)\|_1 = (1 - \langle\bv,w(\bv) \rangle) + \langle \bv, \mathbf 1 - w(\bv) \rangle = 2(1 - \langle \bv, w(\bv)\rangle),
\]
where $\mathbf 1\in \R^k$ is the all-ones vector.
Therefore, $\langle w(\bv) - \bv,w(\bv)\rangle = (1 - \langle \bv, w(\bv)\rangle ) \ge 1/6$, and thus
\[
\E_{(\bv,\y)\sim \mD_w}[\langle \y - \bv,w(\bv)\rangle] \ge 1/6.
\]
To complete the proof, it remains to show that $w$ is $(2/\varepsilon)$-Lipschitz over $V$ (we can then extend $w$ to a $(2/\varepsilon)$-Lipschitz function over $\Dk$ by standard construction). For any distinct $\bv,\bv'\in V$, we have 
\[ \|w(\bv) - w(\bv')\|_\infty \le \|w(\bv) - w(\bv')\|_1 \le 2 \le (2/\varepsilon)\|\bv - \bv'\|_1,\] 
where the last inequality uses property 2 in \Cref{lm:packing}.
\end{proof}

\begin{lemma}
\label{lm:indistinguishable}
Let $A$ be any algorithm that takes $(\bv_1,\y_1),\ldots,(\bv_n,\y_n)\in V\times \mE_k$ as input, and outputs \acc\ or \rej. Let $p_1$ be the acceptance probability when we first draw $v_i$ independently and uniformly from $V$, and then draw each $\y_i$ independently with $\E[\y_i] = \bv_i$.
Let $p_2$ be the acceptance probability where we first draw $w:V\to \mE_k$ 
such that for every $\bv\in V$, $w(\bv)$ is distributed independently with mean $\bv$, and then draw each $(\bv_i,\y_i)$ independently from $\mD_w$. Then,
\[
|p_1 - p_2| \le O(n^2/|V|).
\]
\end{lemma}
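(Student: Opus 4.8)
The plan is to couple the two sampling experiments so that they produce identical outputs as long as no prediction value $\bv\in V$ appears more than once among the $n$ samples. First I would set up the coupling: draw $\bv_1,\dots,\bv_n$ i.i.d.\ uniform on $V$ (the same marginal in both experiments). Conditioned on these, in experiment~1 we draw each $\y_i$ independently with $\E[\y_i\mid \bv_i]=\bv_i$; in experiment~2 we first draw the function $w$ (with $w(\bv)$ independent across $\bv\in V$ and $\E[w(\bv)]=\bv$) and then set $\y_i=w(\bv_i)$. The key observation is that if the values $\bv_1,\dots,\bv_n$ are all \emph{distinct}, then in experiment~2 the labels $\y_i=w(\bv_i)$ are independent (since they involve distinct coordinates of $w$) and each has the correct conditional mean $\bv_i$; hence conditioned on a collision-free draw of $(\bv_1,\dots,\bv_n)$, the joint law of $(\bv_i,\y_i)_{i\le n}$ is \emph{exactly the same} in both experiments. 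So we can couple them to agree on this event, and therefore agree on the output of $A$ on this event.

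Next I would bound the probability of the ``bad'' event that some value is repeated. Since each $\bv_i$ is uniform on a set of size $|V|$, the probability that $\bv_i=\bv_j$ for a fixed pair $i\neq j$ is $1/|V|$, so by a union bound over the $\binom{n}{2}$ pairs, the collision probability is at most $\binom{n}{2}/|V| = O(n^2/|V|)$. Under the coupling, $A$'s output agrees in the two experiments except possibly on this bad event, so $|p_1-p_2|$ is at most the probability of the bad event, namely $O(n^2/|V|)$.

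The only mild subtlety — and the one place to be careful — is verifying that conditioning on distinctness of $\bv_1,\dots,\bv_n$ does not distort the conditional law of the labels in a way that breaks the match. In experiment~1 the $\y_i$'s depend only on the realized $\bv_i$'s, so conditioning on a distinctness event involving only the $\bv$'s leaves their conditional law untouched; in experiment~2, conditioned on distinct $\bv_i$'s, each $\y_i=w(\bv_i)$ depends on a distinct coordinate of the independent family $\{w(\bv)\}_{\bv\in V}$, so the $\y_i$'s are conditionally independent with the right means. Hence the two conditional laws coincide. I expect this independence-of-distinct-coordinates argument to be the main (though still routine) obstacle; everything else is the standard birthday-bound union bound. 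This gives $|p_1-p_2|\le O(n^2/|V|)$, as claimed.
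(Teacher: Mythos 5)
Your proposal is correct and follows essentially the same route as the paper: the paper introduces the intermediate acceptance probability $p_3$ for the ``sample $\bv_i$ without replacement'' experiment and sandwiches both $p_1$ and $p_2$ between $p_3p_4$ and $p_3p_4 + (1-p_4)$, which is just the sandwich form of your coupling-on-the-collision-free-event argument, with the same birthday bound $1-p_4 = O(n^2/|V|)$. The ``subtlety'' you flag — that conditioned on distinct $\bv_i$'s the labels in experiment~2 are independent with the correct means because they read distinct coordinates of the independent family $\{w(\bv)\}_{\bv\in V}$ — is exactly the observation the paper uses to identify both conditional laws with the $p_3$ experiment.
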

\begin{proof}
Assume without loss of generality that $n < |V|$.
Let $p_3$ denote the acceptance probability when we first draw $\bv_1,\ldots,\bv_n$ uniformly from $V$ \emph{without replacement}, and then draw each $\y_i\in \Ek$ independently with mean $\bv_i$. We relate $p_1$ and $p_2$ to $p_3$ as follows.

Suppose we first draw each $\bv_i$ independently and uniformly from $V$, and then draw each $\y_i$ independently with $\E[\y_i] = \bv_i$. The probability that $\bv_1,\ldots,\bv_n$ are distinct is 
\[
p_4:=(1 - 1/|V|)\cdots (1 - (n-1)/|V|)\ge 1 - 1/|V| - \cdots - (n - 1)/|V| \ge 1 - O(n^2/|V|). \]
Conditioned on that event, the acceptance probability is exactly $p_3$. Conditioned on the complement of that event, the acceptance probability is bounded in $[0,1]$. Therefore, 
\[
p_3p_4 \le p_1\le p_3p_4 + (1 - p_4).
\]
Similarly, we can show that
\[
p_3p_4 \le p_2\le p_3p_4 + (1 - p_4).
\]
Combining these inequalities, we get $|p_1 - p_2|\le 1 - p_4 \le O(n^2/|V|)$.
\end{proof}

\begin{proof}[Proof of \Cref{thm:full}]
Consider the set $V$ from \Cref{lm:packing} where we choose $\varepsilon$ to be $12 \alpha$.
If $\mD$ is the distribution of $(\bv,\y)\in  V\times \Ek$ where $\bv$ is chosen uniformly at random from $V$ and $\E[\y|\bv] = \bv$, then algorithm $A$ outputs \acc\ with probability at least $2/3$. If $\mD$ is $\mD_w$ for some $w:V\to \mE_k$, then by \Cref{lm:miscalibrated}, algorithm $A$ outputs \acc\ with probability at most $1/3$.

By \Cref{lm:indistinguishable}, we have $n \ge \Omega(\sqrt{|V|})$. By Property 1 in \Cref{lm:packing}, we get $n \ge \Omega(\sqrt{|V|})\ge (c/\alpha)^{(k-1)/2}$ for a sufficiently small absolute constant $c > 0$ assuming $k$ is sufficiently large.
\end{proof}

\section{Auditing for Weighted Calibration and Agnostic Learning}
\label{sec:audit}

In this section, we study the sample and computational complexity of weighted calibration (\Cref{def:weighted}), where the complexity is measured in an \emph{auditing} task we define below. Specifically, for any weight family $\mW$, we show an equivalence between the auditing task and the well-studied agnostic learning task in the learning theory literature. This equivalence allows us to establish both computational lower bounds and efficient algorithms for specific weight families $\mW$ in \Cref{sec:hardness,sec:kernel,sec:smooth,sec:psmooth-lower}.

\paragraph{Auditing for weighted calibration.} The notion of weighted calibration gives rise to a natural decision problem, which we call the decision version of auditing calibration: given a predictor $p$, can we decide whether or not it is $(\mW,\alpha)$ calibrated? In the event that $p$ is not calibrated,  we would ideally like to post-process its predictions to get a new predictor $\kappa(p)$ for $\kappa:\Dk \to \Dk$, so that $\kappa(p)$ is $(\mW,\alpha)$-calibrated.  This post-processing goal needs to be formulated carefully, since one can always get perfect calibration using a trivial predictor that constantly predicts $\E[\y]$. A natural formulation that avoids such trivial solutions is to require that the post-processing does not harm some  measure of accuracy such as the expected squared loss of $p$. 

One can achieve both these goals by solving a search problem which we call \emph{auditing with a witness} defined below.

\begin{definition}[Auditing with a witness]
\label{def:audit-guarantee}
    An $(\alpha, \beta)$ auditor for $\mW$ is an algorithm that when given access to a distribution $\mD$ where $\CE_\mW(\mD) > \alpha$ returns a function $w':\Delta_k\to [-1,1]^k$ (which need not belong to $\mW$) such that
\begin{equation}
\label{eq:audit-guarantee}
   \E_{(\bv,\y)\sim \mD} [\ip{\y - \bv}{w'(\bv)}] \geq \beta. 
\end{equation}
Concretely, the auditor takes i.i.d.\ examples $(\bv_1,\y_1),\ldots,(\bv_n,\y_n)$ drawn from $\mD$, and the output function $w'$ should satisfy the inequality above with probability at least $1-\delta$ over randomness in the examples and the auditor itself, where $\delta\in (0,1/3)$ is the \emph{failure probability parameter}.
\end{definition}

As demonstrated in previous work (for instance \cite{hkrr2018, GopalanKSZ22}), a solution to this search problem allows us to solve both the decision problem of auditing for calibration, and in the case when $p$ is not $(\mW, \alpha)$-calibrated, we can use the witness to post-process $p$ and produce a predictor $\kappa(p)$ with lower squared loss, that is $(\mW, \alpha)$-calibrated.

\begin{lemma}
\label{lem:audit}
Given a predictor $p:\X \to \Dk$ and access to an $(\alpha, \beta)$-auditor for $\mW$, there is an algorithm that computes a post-processing function $\kappa: \Dk \to \Dk$ so that $\kappa(p)$ is $(\mW, \alpha)$-calibrated and its squared loss is not larger than that of $p$. The algorithm uses at most $O(k/\beta^2)$ calls to the $(\alpha, \beta)$-auditor.
\end{lemma}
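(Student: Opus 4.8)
The plan is to implement the standard "iterative recalibration via witnesses" argument, adapted to the multi-class weighted-calibration setting. At a high level, we maintain a current post-processing map $\kappa_t:\Dk\to\Dk$, query the auditor on the distribution of $(\kappa_t(p(\x)),\y)$, and if the auditor returns a witness $w'$ with $\E[\ip{\y-\bv}{w'(\bv)}]\ge\beta$, we use $w'$ to define an update that strictly decreases the squared loss $\E\|\y-\kappa_t(p(\x))\|_2^2$ by a fixed amount depending on $\beta$. Since the squared loss is nonnegative and bounded above by $2$ (indeed $\|\y-\bv\|_2^2 \le 2$ for $\y\in\Ek$, $\bv\in\Dk$), the number of updates is bounded, and when no further witness is found we are left with a $(\mW,\alpha)$-calibrated predictor whose squared loss never exceeded that of $p$.

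First I would set up the update rule. Given the current prediction $\bv=\kappa_t(p(x))\in\Dk$ and a witness $w'(\bv)\in[-1,1]^k$, the natural move is $\bv\mapsto \bv + \eta\, w'(\bv)$ for a small step size $\eta$, followed by a projection back onto $\Dk$ to keep the output a valid distribution; call the resulting map $\kappa_{t+1}$. The key computation is the squared-loss decrement: expanding,
\[
\E\|\y - (\bv+\eta w'(\bv))\|_2^2 = \E\|\y-\bv\|_2^2 - 2\eta\,\E\ip{\y-\bv}{w'(\bv)} + \eta^2\,\E\|w'(\bv)\|_2^2.
\]
Using $\E\ip{\y-\bv}{w'(\bv)}\ge\beta$ and $\|w'(\bv)\|_2^2\le k$ (since each coordinate is in $[-1,1]$), the right-hand side is at most $\E\|\y-\bv\|_2^2 - 2\eta\beta + \eta^2 k$, which is minimized at $\eta=\beta/k$, giving a decrement of $\beta^2/k$. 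One then argues that the Euclidean projection onto the convex set $\Dk$ only helps, because $\E[\y]\in\Dk$ so $\y$'s "target" is never outside the simplex in the relevant sense — more precisely, projection onto a convex set containing the support-mean is a contraction toward any point of that set, so $\E\|\y-\Pi_{\Dk}(\bv+\eta w'(\bv))\|_2^2 \le \E\|\y-(\bv+\eta w'(\bv))\|_2^2$ after using that $\y$ lies in $\Dk$ coordinatewise in expectation; I would state this cleanly as: $\Pi_{\Dk}$ is $1$-Lipschitz and fixes points of $\Dk$, and since $\y\in\Ek\subseteq\Dk$ pointwise, $\|\y-\Pi_{\Dk}(z)\|_2\le\|\y-z\|_2$ for every $z$. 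Hence each update decreases the expected squared loss by at least $\beta^2/k$.

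Second I would bound the iteration count: starting from squared loss at most that of $p$ (which is at most $2$), and decreasing by $\beta^2/k$ each step, there are at most $2k/\beta^2 = O(k/\beta^2)$ iterations, matching the claimed bound on auditor calls. When the loop terminates, the auditor was invoked on the distribution $\mD_t$ of $(\kappa_t(p(\x)),\y)$ and returned no witness; by the auditor's guarantee (contrapositive of \Cref{def:audit-guarantee}) this certifies $\CE_\mW(\mD_t)\le\alpha$, i.e. $\kappa:=\kappa_t$ makes $\kappa(p)$ $(\mW,\alpha)$-calibrated. The squared loss of $\kappa(p)$ is at most that of $\kappa_{t-1}(p)$, hence by induction at most that of $p$.

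I expect the main obstacle to be the sampling/finite-sample bookkeeping rather than the idealized argument: the auditor only has sample access to $\mD_t$, so each invocation succeeds only with probability $1-\delta$, and the witness inequality $\E\ip{\y-\bv}{w'(\bv)}\ge\beta$ holds only up to estimation error. One must take $\delta$ small enough to union-bound over all $O(k/\beta^2)$ rounds, absorb a constant-factor slack into $\beta$ (e.g. run the auditor at parameter $\beta/2$ and only accept an update when an empirical estimate of the correlation exceeds $\beta/4$), and verify that the per-round squared-loss decrement, now $\Omega(\beta^2/k)$, still forces termination in $O(k/\beta^2)$ rounds. A secondary subtlety is making precise that "does not increase squared loss" survives the projection step and the empirical slack; this is handled by choosing the step size conservatively so that even with the estimation error the decrement stays positive. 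None of this is deep, but it is where the care goes — the paper likely defers the details to prior work (\cite{hkrr2018, GopalanKSZ22}), and I would do the same while sketching the update rule above.
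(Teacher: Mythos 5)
Your proposal is correct and follows essentially the same route as the paper: iterate the auditor, use each witness $w'$ to decrease the expected squared loss by $\Omega(\beta^2/k)$, and bound the iteration count by the boundedness of the squared loss. The only difference is that the paper cites the update step (the move $\bv \mapsto \Pi_{\Dk}(\bv + \eta\, w'(\bv))$ with $\eta = \beta/k$ and the projection argument) from \cite[Lemma 33]{GopalanKSZ22} rather than spelling it out as you do, and likewise defers the finite-sample bookkeeping you flag at the end.
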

\begin{proof}
We start off with $p_0 = p$. If $p$ is not $(\mW, \alpha)$-calibrated, then the auditor produces $w'$ satisfying Equation \eqref{eq:audit-guarantee}. Following the proof of \cite[Lemma 33]{GopalanKSZ22}, we can now update $p$ to $\kappa(p)$ using $w'$ so that we get a decrease in the expected squared loss:
\[
\E[\|\kappa(\bv)- \y\|_2^2] \le \E[\|\bv- \y\|_2^2] - \Omega(\beta^2/k).
\]

Note that the squared loss is bounded in the interval $[0,4]$ because $\|\bv - \y\|_2 \le \|\bv\|_2 + \|\y\|_2 \le \|\bv\|_1 + \|\y\|_1 \le 2$.
Thus by repeatedly using the auditor and applying the update at most $O(k/\beta^2)$ times, we can eventually achieve $(\mW,\alpha)$ calibration with decreased expected squared loss.
\end{proof}

We make some observations about the role that the different parameters $\alpha, \beta$ and $\mW$ play in the complexity of auditing with a witness. 
\begin{itemize}

\item Auditing becomes easier for smaller $\beta$.  The $\beta$ parameter affects the running time, but not the final calibration guarantee. Thus an $(\alpha, \beta/10)$ auditor will result in the same guarantee as an $(\alpha, \beta)$ auditor, but at the cost of more iterations. Since we are interested in the question of whether auditing can be done in time $\poly(k)$ versus $\exp(k)$, we do not optimize too much for $\beta$, and are fine with losing polynomial factors in it. 

\item In contrast, auditing gets harder for smaller $\alpha$, since the auditor is required to detect smaller violations of calibration. The final guarantee is also much more sensitive to $\alpha$: a $(2\alpha, \beta)$ auditor can only be used to guarantee $(\mW, 2\alpha)$ calibration, but not $(\mW, \alpha)$ calibration. 

\item The complexity of auditing increases as the the weight function family becomes larger. If $\mW_1 \subseteq \mW_2$, then an $(\alpha, \beta)$-auditor for $\mW_2$ is also an $(\alpha, \beta)$-auditor for $\mW_1$, since 
$\CE_{\mW_2}(\mD) \geq \CE_{\mW_1}(\mD)$ so the auditor is guaranteed to produce a witness whenever $p$ is not $(\mW_1, \alpha)$-calibrated. It might happen that $\CE_{\mW_1}(\mD) \leq \alpha$ whereas $\CE_{\mW_2}(\mD)> \alpha$. In such a scenario, an auditor for $\mW_2$ will still find a witness to miscalibration. This is not required by our definition of auditor for $\mW_1$, but it is allowed.
\end{itemize}

\eat{
if the $\beta$ parameter of the auditor becomes smaller, we may need more iterations but can still eventually achieve the same goal. However, if the $\alpha$ parameter becomes a larger value $\alpha'>\alpha$, an $(\alpha',\beta)$ auditor is in general no longer helpful for achieving $(\mW,\alpha)$ calibration with decreased expected loss, because the initial predictor may be $(\mW,\alpha')$ calibrated but not $(\mW,\alpha)$ calibrated. In that sense, the strength of the auditor is more sensitive to $\alpha$ than to $\beta$, and in our results we focus on optimizing the $\alpha$ parameter as long as $\beta$ takes a reasonable value (e.g.\ being polynomially small rather than exponentially small).}

\paragraph{Agnostic learning.}
We understand the complexity of auditing for multiclass calibration by connecting it to the well-studied problem of agnostic learning in the standard binary classification setting.
For a distribution $\mU$ of $(\bv,z)\in \Dk\times [-1,1]$ and a class $\mH$ of functions $\Dk \to [-1,1]$, we define
\[
\opt(\mH,\mU):=\sup_{h\in \mH}|\E[h(\bv)z]|.
\]
\begin{definition}[Weak agnostic learner]\cite{SBD2, KalaiMV08}
    Let $\alpha \geq \beta \in [0,1]$. An $(\alpha, \beta)$ agnostic learner for $\mH$ is an algorithm that when given sample access to a distribution $\mU$ over $\Delta_k \times [-1,1]$ such that $\opt(\mH, \mU) \geq \alpha$ returns $h':\Delta_k\to [-1,1]$ such that 
    \[ \E_{(\bv, z) \sim \mU} [h'(\bv)z] \geq \beta. \]
More concretely, the learner takes i.i.d.\ examples $(\bv_1,z_1),\ldots,(\bv_n,z_n)$ drawn from $\mU$, and the output function $h'$ should satisfy the inequality above with probability at least $1-\delta$ over randomness in the examples and the learner itself, where $\delta\in (0,1/3)$ is the \emph{failure probability parameter}.
\end{definition}

Similarly to auditing, the strength of an agnostic learner is more sensitive to the $\alpha$ parameter than the $\beta$ parameter. Known results on agnostic boosting \cite{KalaiMV08, feldman2009distribution, kk09} show that the existence of an $(\alpha, \beta)$-weak agnostic learner implies the existence of an strong agnostic learner with polynomially increased time and sample complexity depending on $1/\beta$ (see the citations for a precise statement).

In the rest of the section we present our main result connecting the agnostic learning task for a class $\mH$ and the auditing task for $\mH^k$. 
\subsection{Auditing from Agnostic Learning}
\label{sec:red-algo}

\begin{theorem}
\label{thm:red}
Given an $(\alpha/3,\beta)$ weak agnostic learner for $\mH$ with sample complexity $n_0$, running time $T_0$ and failure probability parameter $\delta/2$, we can construct an $(\alpha,\alpha\beta/6k)$ auditor for $\mH^k$ with sample complexity $n = O(kn_0/\alpha + k^2\alpha^{-2}\beta^{-2}\log(k/\delta))$, time complexity $O(kT_0 + kn)$, and failure probability parameter $\delta$.
\end{theorem}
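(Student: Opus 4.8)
The plan is to invoke the weak agnostic learner once per output coordinate $\ell\in[k]$, but on a carefully \emph{reweighted} conditional distribution rather than on $\mD$ directly, so that the learner is handed a correlation parameter $\Theta(\alpha)$ rather than $\alpha/k$. A coordinate-wise application that runs the learner on the ``raw'' distribution of $(\bv,\y\sps\ell-\bv\sps\ell)$ would, by simple pigeonhole, only guarantee correlation $\alpha/k$ in the worst case, and hence would not yield a $\poly(k)$-time auditor; avoiding this loss is the whole point.

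Suppose $\CE_{\mH^k}(\mD)>\alpha$ and fix a hypothetical witness $w=(h_1,\dots,h_k)\in\mH^k$. Writing $c_\ell:=\E_{(\bv,\y)\sim\mD}[(\y\sps\ell-\bv\sps\ell)h_\ell(\bv)]$, we have $\sum_{\ell=1}^k c_\ell=\E_\mD[\ip{\y-\bv}{w(\bv)}]>\alpha$. For each $\ell$ set $w_\ell:=\E_\mD[\,|\y\sps\ell-\bv\sps\ell|\,]$ and let $\mathcal{V}_\ell$ be the distribution on $\Dk\times\{-1,+1\}$ obtained by drawing $(\bv,\y)\sim\mD$, keeping it with probability $|\y\sps\ell-\bv\sps\ell|=\one(\y=\e_\ell)(1-\bv\sps\ell)+\one(\y\ne\e_\ell)\bv\sps\ell\in[0,1]$, and on acceptance outputting $(\bv,\sign(\y\sps\ell-\bv\sps\ell))$. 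This is a valid rejection sampler with acceptance probability $w_\ell$, so $\mathcal{V}_\ell$ is simulable from i.i.d.\ samples of $\mD$; and for every $h:\Dk\to[-1,1]$ one has $\E_{(\bv,z)\sim\mathcal{V}_\ell}[h(\bv)z]=\tfrac1{w_\ell}\E_\mD[(\y\sps\ell-\bv\sps\ell)h(\bv)]$. In particular $\opt(\mH,\mathcal{V}_\ell)\ge c_\ell/w_\ell$.

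The key point is that $\sum_{\ell=1}^k w_\ell=\E_\mD\|\y-\bv\|_1\le\|\y\|_1+\|\bv\|_1=2$ is an absolute constant, independent of $k$. Let $S:=\{\ell: w_\ell\ge\alpha/(3k)\}$. Then $\sum_{\ell\notin S}c_\ell\le\sum_{\ell\notin S}w_\ell<k\cdot\tfrac{\alpha}{3k}=\tfrac{\alpha}{3}$, so $\sum_{\ell\in S}c_\ell>\tfrac{2\alpha}{3}$ while $\sum_{\ell\in S}w_\ell\le 2$; hence some $\ell^*\in S$ has $c_{\ell^*}/w_{\ell^*}>\alpha/3$. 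For this coordinate we simultaneously get $\opt(\mH,\mathcal{V}_{\ell^*})>\alpha/3$ and $w_{\ell^*}\ge\alpha/(3k)$. Running the $(\alpha/3,\beta)$ learner on $\mathcal{V}_{\ell^*}$ therefore returns $h':\Dk\to[-1,1]$ with $\E_{\mathcal{V}_{\ell^*}}[h'(\bv)z]\ge\beta$, and the weight function $w'\colon\Dk\to[-1,1]^k$ whose $\ell^*$-th coordinate is $h'$ and whose other coordinates are identically $0$ satisfies $\E_\mD[\ip{\y-\bv}{w'(\bv)}]=w_{\ell^*}\,\E_{\mathcal{V}_{\ell^*}}[h'(\bv)z]\ge\alpha\beta/(3k)$ --- a valid witness for the auditor.

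Since $\ell^*$ is not known to the algorithm, it draws one batch of $\mD$-samples, reuses it to simulate $\mathcal{V}_\ell$ and run the learner for every $\ell\in[k]$ (skipping any $\ell$ for which too few accepts occur, since such $\ell$ cannot be $\ell^*$ because $w_{\ell^*}\ge\alpha/(3k)$), obtaining candidates $h'_\ell$, and then uses a fresh batch to estimate $\E_\mD[(\y\sps\ell-\bv\sps\ell)h'_\ell(\bv)]$ for every $\ell$ to additive accuracy $\alpha\beta/(12k)$, outputting the weight function supported on the single coordinate with the largest estimate. Because the chosen coordinate's estimate is at least the estimate at $\ell^*$, which is itself at least $\alpha\beta/(3k)-\alpha\beta/(12k)$, the true value of the output is at least $\alpha\beta/(3k)-2\cdot\alpha\beta/(12k)=\alpha\beta/(6k)$, so this is an $(\alpha,\alpha\beta/6k)$ auditor. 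Standard Chernoff/Hoeffding bounds together with a union bound over the $k$ coordinates control all sampling failures within $\delta/2$ (the learner contributing the other $\delta/2$ via its failure parameter, which is needed only on the run for $\ell^*$); since $w_{\ell^*}\ge\alpha/(3k)$, drawing $O((k/\alpha)(n_0+\log(k/\delta)))$ samples from $\mD$ suffices to obtain $n_0$ accepts for $\mathcal{V}_{\ell^*}$, and $O((k/(\alpha\beta))^2\log(k/\delta))$ samples suffice for all the verification estimates, which gives $n=O(kn_0/\alpha+k^2\alpha^{-2}\beta^{-2}\log(k/\delta))$ and running time $O(kT_0+kn)$. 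The only place real care is required is the reweighting step in the second paragraph: the tempting uniform-mixture reduction loses a factor of $k$, whereas weighting coordinate $\ell$ by $|\y\sps\ell-\bv\sps\ell|$ --- i.e.\ conditioning on where the $\ell_1$ mass of the calibration violation sits --- exploits $\sum_\ell w_\ell\le 2$ and keeps the learner's parameter at $\Theta(\alpha)$, costing only a factor of $k$ in the less sensitive parameter $\beta$.
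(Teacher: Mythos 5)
Your proposal is correct and follows essentially the same route as the paper: both exploit $\E\|\y-\bv\|_1\le 2$ to spread the residual's mass over coordinates, pigeonhole to find a coordinate $\ell^*$ with mass $\ge\alpha/(3k)$ \emph{and} conditional correlation $>\alpha/3$, run the learner on every coordinate's conditional distribution, and select the output with fresh-sample estimates, losing the factor of $k$ only in $\beta$. The only (cosmetic) difference is how the per-coordinate binary distributions are realized — you rejection-sample with acceptance probability $|\y\sps\ell-\bv\sps\ell|$ and keep the sign, whereas the paper draws a single $\z\in\mE_k\cup(-\mE_k)$ with $\E[\z|\bv,\y]=(\y-\bv)/2$ and conditions on its support coordinate.
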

A natural idea for proving the theorem above is to apply the agnostic learner on each coordinate of the residual $\z:= \y - \bv$ in the auditing task. Specifically, in the auditing task, we assume
\[
\E[\langle \z, w(\bv)\rangle]  = \E[\langle \y - \bv, w(\bv)\rangle]> \alpha
\]
for some $w\in \mH^k$. Expressing $w(\bv)$ as $(w\sps 1(\bv),\ldots,w\sps k(\bv))$ where each $w\sps j\in \mH$, we have
\[
\sum_{j=1}^k\E[\z\sps j w\sps j(\bv)] > \alpha,
\]
which implies that there exists $j\in \{1,\ldots,k\}$ such that 
\begin{equation}
\label{eq:coordinate-cor}
\E[\z\sps j w\sps j(\bv)] > \alpha/k.
\end{equation}
If we only use \eqref{eq:coordinate-cor}, we would need an $(\alpha/k,\beta)$ agnostic learner to prove \Cref{thm:red}, but we only have an $(\alpha/3,\beta)$ agnostic learner.

To avoid the loss of a factor of $k$, we define $\z$ in a better way that leverages the fact that $\y,\bv\in \Delta_k$. Specifically, we note that the vector $\frac 12(\y - \bv)$ has $\ell_1$ norm at most $1$, and thus it is the mean of a distribution over $\mE_k\cup (-\mE_k)$. Given $\y$ and $\bv$, we draw $\z$ randomly from that distribution. We have
\[
\E[\langle \z, w(\bv)\rangle]  = \E\left[\left\langle (\y - \bv)/2, w(\bv)\right\rangle\right]> \alpha/2.
\]
Given $\z\in \mE_k\cup(-\mE_k)$, we use $\ell_\z \in [k]$ to denote the unique index such that $\z\sps {\ell_\z}\ne 0$. We have $\langle \z, w(\bv)\rangle = \z\sps {\ell_\z} w\sps {\ell_\z}(\bv)$ and thus
\[
\E[\z\sps {\ell_\z} w\sps {\ell_\z}(\bv)] > \alpha/2.
\]
Therefore, there exists $j\in \{1,\ldots,k\}$ such that
\[
\E[\z\sps {j} w\sps {j}(\bv)|\ell_\z = j] > \alpha/2 > \alpha/3.
\]
This allows us to use an $(\alpha/3,\beta)$ agnostic learner. 

\begin{proof}[Proof of \Cref{thm:red}]
In the auditing task, we assume that the input data points $(\bv_i,\y_i)$ are drawn i.i.d.\ from a distribution $\mD$ satisfying
\begin{equation}
\label{eq:red-0}
\E_{(\bv,\y)\sim \mD}[\langle \y - \bv, w(\bv)\rangle] > \alpha \quad\text{for some }w\in \mH^k.
\end{equation}
Given $(\bv,\y)$ drawn from $\mD$, we draw $\z$ randomly from $\mE_k\cup(-\mE_k)$ such that $\E[\z|\bv,\y] = (\y - \bv)/2$. This is possible because $\|\y - \bv\|_1 \le 2$. A concrete way to draw $\z$ is the following. With probability $1/2$, we set $\z$ to be $\y\in \mE_k$, and with the remaining probability $1/2$, we draw $\z$ randomly from $-\mE_k$ with expectation $-\bv$.

Given $\z\in \mE_k\cup(-\mE_k)$, we define a random variable $\ell_\z\in \{1,\ldots,k\}$ such that $\ell_\z$ is the unique index satisfying $\z\sps {\ell_\z} \ne 0$. 
For any $w\in \mH^k$, there exists $w\sps 1,\ldots,w\sps k\in \mH$ such that $w(\bv) = (w\sps 1(\bv),\ldots,w\sps k(\bv))$ for every $\bv\in \Delta_k$.
We have $\langle \z, w(\bv)\rangle = \z\sps {\ell_\z} w\sps {\ell_\z}(\bv)$ and thus \eqref{eq:red-0} implies
\[
\E[\z\sps {\ell_\z} w\sps {\ell_\z}(\bv)\rangle] = \E[\langle \z, w(\bv)\rangle]  = \E\left[\left\langle (\y - \bv)/2, w(\bv)\right\rangle\right]> \alpha/2.
\]
Let $\mU_j$ denote the conditional distribution of $(\bv, \z\sps j)\in \Delta_k\times \mE_k$ given $\ell_\z = j$. We have
\begin{equation}
\label{eq:red-1}
\sum_{j=1}^k\Pr[\ell_\z = j]\E_{(\bv,z)\sim \mU_j}[z w\sps j(\bv)] > \alpha/2.
\end{equation}
Now we show that there exists $j\in \{1,\ldots,k\}$ such that $\Pr[\ell_\z = j] \ge \alpha/6k$ and $\E_{(\bv,z)\sim \mU_j}[z w\sps j(\bv)] > \alpha/3$. If this is not the case, then
\begin{align*}
&\sum_{j=1}^k\Pr[\ell_\z = j]\E_{(\bv,z)\sim \mU_j}[z w\sps j(\bv)]\\
= {} & \sum_{j:\Pr[\ell_\z = j] < \alpha/6k}\Pr[\ell_\z = j]\E_{(\bv,z)\sim \mU_j}[z w\sps j(\bv)] + \sum_{j:\Pr[\ell_\z = j] \ge \alpha/6k}\Pr[\ell_\z = j]\E_{(\bv,z)\sim \mU_j}[z w\sps j(\bv)]\\
\le {} & \alpha/6 + \alpha/3\\
= {} & \alpha/2,
\end{align*}
giving a contradiction with \eqref{eq:red-1}.

We have shown that there exists $j^*\in \{1,\ldots,k\}$ and $h\in \mH$ such that $\Pr[\ell_\z = j^*] \ge \alpha/6k$ and $\E_{(z,\bv)\sim \mU_{j^*}}[z h(\bv)] > \alpha/3$. To solve the auditing task given examples $(\bv_1,\y_1),\ldots,(\bv_n,\y_n)$, we first draw $\z_1,\ldots,\z_n\in \mE_k\cup(-\mE_k)$ independently such that $\E[\z_i|\bv_i,\y_i] = \y_i - \bv_i$. Now $(\bv_1,\y_1,\z_1),\ldots,(\bv_n,\y_n,\z_n)$ are distributed independently from the joint distribution of $(\bv,\y,\z)$.
For every $j$, we define $I_j:=\{i\in \{1,\ldots,n\}:\ell_{\z_i} = j\}$. If $|I_j| \ge n_0$, we run the agnostic learner on the data points $((\bv_i,\z_i\sps j))_{i\in I_j}$ to obtain a function $h\sps j:\Delta_k\to [-1,1]$. We define $w_j:\Delta_k\to [-1,1]^k$ such that $(w_j(\bv))\sps {j'} = 0$ if $j'\ne j$ and $(w_j(\bv))\sps {j'} = h\sps j(\bv)$ if $j' = j$.

When $n = O(kn_0/\alpha + k^2\alpha^{-2}\beta^{-2}\log(1/\delta))$ is sufficiently large, with probability at least $1-\delta/4$, we have $|I_{j^*}| \ge n_0$. Conditioned on $I_{j^*}$, the data points $((\bv_i, \z_i\sps j))_{i\in I_{j^*}}$ are distributed independently from $\mD_{j^*}$, and thus by the guarantee of the agnostic learner, with probability at least $1-\delta/2$,
\[
\E_{(\bv,z)\sim \mU_{j^*}}[zh\sps {j^*}(\bv)] \ge \beta,
\]
which implies
\[
\E_{(\bv,\y)\sim \mD}[\langle \y - \bv, w_{j^*}(\bv)\rangle] = 2\Pr[\ell_\z = j^*]\E_{(\bv,z)\sim \mU_{j^*}}[zh\sps {j^*}(\bv)] \ge \alpha\beta/3k.
\]

We have thus shown that with probability at least $1-3\delta/4$, there exists $j$ such that
\[
\E_{(\bv,\y)\sim \mD}[(\y - \bv)w_{j}(\bv)] \ge \alpha\beta/3k.
\]
By estimating the values of $\E_{(\bv,\y)\sim \mD}[\langle \y - \bv, w_{j}(\bv)\rangle]$ using $O(\alpha^{-2}\beta^{-2}k^2\log (k/\delta))$ fresh examples, we can make sure that with probability at least $1-\delta$, we output a $\tilde w$ among the $w_j$'s such that
\[
\E_{(\bv,\y)\sim \mD}[\langle \y - \bv, \tilde w(\bv)\rangle] \ge \alpha\beta/6k. \qedhere
\]
\end{proof}

\subsection{Agnostic Learning from Auditing}
\label{sec:red-hardness}
Now we prove the reverse direction of the reduction by constructing an agnostic learner for a class $\mH$ using an auditor (\Cref{thm:product-hardness}). For the most general statement, instead of considering the auditing task for $\mH^k$ as in \Cref{thm:red}, we need to consider a slightly different class $\tilde \mH^k$. But as long as $\mH$ is closed under coordinate-wise affine transformations of the inputs, we can choose $\tilde \mH$ to be the same as $\mH$. In particular, when $\mH$ is the class of halfspaces, by our reduction, classic hardness results on agnostically learning halfspaces implies hardness of auditing for halfspaces (\Cref{thm:halfspace-auditing}).

For a vector $\bv\in \Delta_k$ with $k\ge 2$, define $\lift(\bv)\in \Delta_k$ by
\begin{equation}
\label{eq:lift}
\lift(\bv):= \frac 13 \bv + \frac 13 \e_{1} + \frac 13 \e_{2}.
\end{equation}
\begin{theorem}
\label{thm:product-hardness}
For $k\ge 2$,
let $\mH$ be a family of functions $h:\Delta_k\to [-1,1]$ closed under negation. Let $\tilde \mH$ be a family of functions $\tilde h:\Delta_k\to [-1,1]$ such that for every $h\in \mH$, there exists $\tilde h\in \tilde \mH$ satisfying $\tilde h(\lift(\bv)) = h(\bv)$ for every $\bv \in \Delta_k$. Given any $(2\alpha/3,2\beta/3)$ auditor for $\tilde \mH^k$, we can construct an $(\alpha,\beta)$ weak agnostic learner for $\mH$ with the same sample complexity, time complexity, and failure probability parameter.
\end{theorem}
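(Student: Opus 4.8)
The plan is to reduce the agnostic learning task for $\mH$ to a single invocation of the auditor for $\tilde\mH^k$ by encoding the binary-labeled distribution $\mU$ over $\Delta_k\times[-1,1]$ as a calibration instance over $\Delta_k\times\mE_k$. The key device is the map $\lift$: we push every prediction vector $\bv$ to $\lift(\bv)=\tfrac13\bv+\tfrac13\e_1+\tfrac13\e_2$, which has the crucial property that its first two coordinates are bounded away from $0$ and from $1$, so we have ``room'' to encode an arbitrary signed label $z\in[-1,1]$ into the residual $\y-\lift(\bv)$ on coordinates $1$ and $2$ while keeping $\y$ a valid one-hot vector and $\lift(\bv)$ a valid point of $\Delta_k$. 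Concretely, given a sample $(\bv,z)\sim\mU$, I would output $\bv':=\lift(\bv)$ and a random one-hot label $\y$ chosen so that $\E[\y\mid \bv,z]$ differs from $\lift(\bv)$ only on coordinates $1$ and $2$, with $\E[\y\sps 1 - \lift(\bv)\sps 1 \mid \bv,z] = c\,z$ and $\E[\y\sps 2 - \lift(\bv)\sps 2\mid\bv,z] = -c\,z$ for an appropriate absolute constant $c$ (the constant $1/3$ in $\lift$ makes $c=1/3$ feasible, since it leaves each of the two coordinates free to move by $\pm 1/3$). This is possible because the needed perturbation has small enough $\ell_1$ norm to be realized as the mean of a distribution over $\mE_k$.

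The next step is the correctness direction: suppose $\opt(\mH,\mU)\ge\alpha$, witnessed by some $h\in\mH$ with $|\E[h(\bv)z]|\ge\alpha$; since $\mH$ is closed under negation we may assume $\E[h(\bv)z]\ge\alpha$. Pick $\tilde h\in\tilde\mH$ with $\tilde h(\lift(\bv))=h(\bv)$, and build the weight function $w\in\tilde\mH^k$ that places $\tilde h$ on coordinate $1$, $-\tilde h$ on coordinate $2$, and $0$ elsewhere. Then
\[
\E_{(\bv',\y)}[\langle \y-\bv', w(\bv')\rangle] = \E\big[\tilde h(\bv')(\y\sps 1-\bv'\sps 1) - \tilde h(\bv')(\y\sps 2 - \bv'\sps 2)\big] = 2c\,\E[h(\bv)z] \ge 2c\alpha,
\]
so with $c=1/3$ the calibration error of the constructed distribution is at least $2\alpha/3$, and the $(2\alpha/3,2\beta/3)$ auditor is guaranteed to return some $w':\Delta_k\to[-1,1]^k$ with $\E[\langle\y-\bv',w'(\bv')\rangle]\ge 2\beta/3$. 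The final step is to extract a hypothesis for $\mH$ from $w'$: using $\E[\y\sps j - \bv'\sps j\mid \bv,z]=0$ for $j\ge 3$ and $=\pm z/3$ for $j\in\{1,2\}$, the auditor's output telescopes to $\tfrac13\E\big[(w'\sps 1(\lift(\bv)) - w'\sps 2(\lift(\bv)))\,z\big]\ge 2\beta/3$, so the function $h'(\bv):=\tfrac12\big(w'\sps 1(\lift(\bv)) - w'\sps 2(\lift(\bv))\big)$, which maps $\Delta_k\to[-1,1]$, satisfies $\E[h'(\bv)z]\ge \beta$. That is exactly the guarantee required of an $(\alpha,\beta)$ weak agnostic learner for $\mH$. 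Sample, time, and failure-probability bounds are inherited verbatim because the reduction only relabels each sample in $O(k)$ time (or $O(1)$ if the label perturbation is realized sparsely) and makes one auditor call.

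The main obstacle I expect is getting the bookkeeping of the encoding exactly right: ensuring simultaneously that (i) $\y$ is always a genuine one-hot vector, (ii) $\lift(\bv)\in\Delta_k$ with both relevant coordinates in a range that admits a $\pm c z$ perturbation for \emph{all} $z\in[-1,1]$, and (iii) the residuals on coordinates $3,\dots,k$ have conditional mean exactly $0$ so they contribute nothing to any weighted-calibration expression (not just the specific $w$ used in the soundness direction, but also the arbitrary $w'$ the auditor returns). Point (iii) is what forces the perturbation to live entirely on coordinates $1$ and $2$ and is the reason a naive encoding would fail; the $\tfrac13$-weights in $\lift$ are calibrated precisely to make (ii) hold with $c=\tfrac13$. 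Everything else — closure of $\mH$ under negation to flip signs, the passage from $\tilde\mH^k$ back to $\mH$ via $\tilde h\circ\lift = h$, and the constant-factor slack between $(\alpha,\beta)$ and $(2\alpha/3,2\beta/3)$ — is routine once the encoding is fixed.
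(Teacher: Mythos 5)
Your proposal is correct and follows essentially the same route as the paper: lift the features via $\lift$, encode the signed label $z$ as a $\pm z/3$ perturbation of coordinates $1$ and $2$ of the conditional mean of a one-hot label, witness miscalibration via a weight function whose first two coordinates realize $h$ and $-h$ on the image of $\lift$ (using closure of $\mH$ under negation), and extract $h'=\tfrac12(w'^{(1)}\circ\lift-w'^{(2)}\circ\lift)$, with the same constants. The only cosmetic difference is that the paper routes through a more general lemma (\Cref{thm:hardness}) and fills coordinates $3,\dots,k$ of the witness with arbitrary members of $\tilde\mH$ rather than the zero function (which need not lie in $\tilde\mH$), a point your observation (iii) already makes immaterial.
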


We will in fact derive this result from a more general statement where the class of auditors is not necessarily a product set.

\begin{theorem}
\label{thm:hardness}
Let $\mH$ be a family of functions $h:\Delta_k\to [-1,1]$ and let $\mW$ be a family of functions $w:\Delta_{k} \to [-1,1]^{k}$. Let $\lambda$ be a positive real number. Assume that for every $h\in \mH$ there exists $w\in \mW$ such that
\begin{equation}
\label{eq:extended-class}
w(\lift(\x))_{1} - w(\lift(\x))_{2} = \lambda h(\x) \quad \text{for every $\x\in \Delta_k$}.
\end{equation}
Given any $(\lambda\alpha/3,2\beta/3)$ auditor for $\tilde \mW$, we can construct an $(\alpha,\beta)$ weak agnostic learner for $\mH$ with the same sample complexity, time complexity, and failure probability parameter.
\end{theorem}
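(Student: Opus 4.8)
The plan is to reduce the (binary) weak agnostic learning problem for $\mH$ to the auditing problem for $\mW$: from a learning distribution $\mU$ on $\Delta_k\times[-1,1]$ I will construct a calibration distribution $\mD$ on $\Delta_k\times\mE_k$ so that a witness returned by an auditor run on $\mD$ can be decoded into a witness for the learner on $\mU$. The purpose of $\lift$ (equation \eqref{eq:lift}) is precisely to create slack: since $\lift(\x)_1=\tfrac13\x_1+\tfrac13\ge\tfrac13$ and $\lift(\x)_2=\tfrac13\x_2+\tfrac13\ge\tfrac13$, the first two coordinates of a lifted point are bounded away from $0$, so they can be perturbed by $\pm z/3$ for any $z\in[-1,1]$ while the point stays in the simplex. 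This is what lets us encode the scalar label $z$ into the $\e_1-\e_2$ direction of the calibration residual $\y-\bv$.

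First I would define $\mD$ as follows: draw $(\x,z)\sim\mU$, set $\bv:=\lift(\x)$ and $\q:=\bv+\tfrac z3(\e_1-\e_2)$, and sample $\y\in\mE_k$ with $\Pr[\y=\e_i]=\q_i$. This is well-defined because $\q\in\Delta_k$ (its coordinates $1,2$ are $\ge\tfrac13-\tfrac13=0$, all other coordinates are unchanged and nonnegative, and the sum of coordinates $1,2$ --- hence the total --- is unchanged). Then $\E[\y\mid\x,z]=\q$, so for \emph{any} $w:\Delta_k\to[-1,1]^k$ (whether or not $w\in\mW$), using that only coordinates $1,2$ of $\q-\bv$ are nonzero,
\[
\E_{(\bv,\y)\sim\mD}\big[\ip{\y-\bv}{w(\bv)}\big]=\E_{(\x,z)\sim\mU}\Big[\tfrac z3\big(w(\lift(\x))_1-w(\lift(\x))_2\big)\Big].
\]
When $w\in\mW$ is the function associated to some $h\in\mH$ via hypothesis \eqref{eq:extended-class}, the right-hand side equals exactly $\tfrac\lambda3\,\E_{\mU}[z\,h(\x)]$.

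Both directions now follow from this identity. For the ``completeness'' direction, if $\opt(\mH,\mU)\ge\alpha$ I pick $h\in\mH$ with $|\E_\mU[z\,h(\x)]|\ge\alpha$ and its associated $w\in\mW$, so that $\CE_\mW(\mD)\ge\tfrac\lambda3|\E_\mU[z\,h(\x)]|\ge\tfrac{\lambda\alpha}{3}$; hence the given $(\lambda\alpha/3,2\beta/3)$-auditor, run on i.i.d.\ samples of $\mD$ generated from its $\mU$-samples (one $\mD$-example per $\mU$-example), returns some $w':\Delta_k\to[-1,1]^k$ with $\E_\mD[\ip{\y-\bv}{w'(\bv)}]\ge\tfrac{2\beta}{3}$. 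For the decoding direction, I set $h'(\x):=\tfrac12\big(w'(\lift(\x))_1-w'(\lift(\x))_2\big)$; this maps into $[-1,1]$ since each coordinate of $w'$ does, and applying the identity to $w'$ gives $\E_\mU[z\,h'(\x)]=\tfrac32\E_\mD[\ip{\y-\bv}{w'(\bv)}]\ge\beta$, so $h'$ is a valid output for an $(\alpha,\beta)$ weak agnostic learner. The reduction draws exactly as many samples as the auditor and does $O(k)$ extra work per sample to form the $\mD$-examples and evaluate $h'$, and it fails only when the auditor does, yielding the claimed sample complexity, running time, and failure probability (up to the lower-order conversion cost).

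I do not expect any step to be a genuine obstacle; the whole point is getting the encoding right --- using $\lift$ to reserve two coordinates with $\Omega(1)$ room and channeling the \emph{entire} auditor correlation through precisely coordinates $1$ and $2$, so that a single weight function corresponds to a single hypothesis with the \emph{exact} gain $\lambda/3$. This exactness is what keeps the $\alpha$-parameters matched up to a constant and avoids the factor-$k$ loss that arises in the opposite reduction (\Cref{thm:red}). The only items needing (routine) care are checking $\q\in\Delta_k$ and the harmless $\ge$-versus-$>$ boundary when $\CE_\mW(\mD)=\lambda\alpha/3$ exactly, which is dispatched as usual by an arbitrarily small slack in $\alpha$ (or by the built-in gap $\beta<\alpha$).
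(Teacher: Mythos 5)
Your proposal is correct and is essentially identical to the paper's own proof: the same lifted embedding $\bv=\lift(\x)$ with label distribution $\bv+\tfrac z3(\e_1-\e_2)$, the same correlation identity channeling everything through coordinates $1$ and $2$, and the same decoding $h'(\x)=\tfrac12(w'(\lift(\x))_1-w'(\lift(\x))_2)$. Your explicit handling of the absolute value in $\opt(\mH,\mU)$ is a small point of extra care that the paper glosses over, but the argument is otherwise the same.
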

\begin{proof}
We construct a weak agnostic learner for $\mH$ using an auditor for $\mW$. Let $(\x_1,z_1),\ldots,(\x_n, z_n)$ be the input data points in the weak agnostic learning task drawn i.i.d.\ from a distribution $\mU$. For every input data point $(\x_i,z_i)\in\Delta_k\times[-1,1]$, the learner generates a corresponding data point $(\bv_i,\y_i)\in\Delta_{k}\times \{\e_1,\ldots,\e_{k}\}$ for the auditing task by setting 
\begin{align*} 
\bv_i &= \lift(\x_i), \\
\y_i \sim \bdv^*_i &\defeq  \bv_i + \frac 13 z_i(\e_{1} - \e_{2}).
\end{align*}
Note that $\bdv^*_i \in \Delta_{k}$, and thus it can be interpreted as a distribution over $\{\e_1,\ldots,\e_{k}\}$. Let $\mD$ denote the distribution of $(\bv_i,\y_i)$. The intuition is that since $\bv^*$ favors either $\e_1$ or $\e_2$ over $\bv$ depending on $\z$, telling the difference between $\bv_i$ and $\bdv^*_i$ for an auditor requires  learning $\z$.

Formally, by our assumption, for any $h\in \mH$, there exists $w\in \mW$ satisfying \eqref{eq:extended-class}, and thus
\[
\E_\mD[\langle \y - \bv, w(\bv)\rangle] = \frac 13\E_\mU[ z\langle \e_{1}- \e_{2}, w(\lift(\x))\rangle] = \frac \lambda 3\E_\mU[zh(\x)].
\]
Therefore, if $\E_\mU[zh(\x)] \ge \alpha$ for some $h\in \mH$, then $\E_\mD[\langle \y - \bv, w(\bv)\rangle]\ge \lambda\alpha/3$ for some $w\in \mW$, and with high probability, the auditing algorithm will produce some $w':\Delta_{k}\to [-1,1]^{k}$ such that $\E_\mD[\langle \y - \bv, w'(\bv)\rangle]\ge 2\beta/3$.
Defining $h':\Delta_k\to [-1,1]$ such that
\[
h'(\x) = \frac 12(w(\lift(\x))|_{1} - w(\lift(\x))|_{2}) \quad \text{for every }\x\in \Delta_k,
\]
we have
\begin{align*}
\E_\mD[\langle \y - \bv, w'(\bv)\rangle] &= \frac 13\E_\mU[ z\langle \e_{1}- \e_{2}, w'(\lift(\x))\rangle]\\ 
&= \frac 13\E_\mU[ z\langle  w'(\lift(\x))|_1 - w'(\lift(\x))|_2 \rangle]\\
&= \frac 2 3\E_\mU[zh'(\x)].
\end{align*}
Therefore, $\E_\mD[\langle \y - \bv, w'(\bv)\rangle]\ge 2\beta/3$ implies $\E_\mU[zh'(\x)]\ge \beta$. We have thus constructed an $(\alpha,\beta)$-weakly agnostic learning algorithm which returns $h'$ as output.
\end{proof}

We now complete the proof of Theorem \ref{thm:product-hardness}.
\begin{proof}[Proof of Theorem \ref{thm:product-hardness}]
    By our assumption about $\tilde \mH$, for every $h \in \mH$ there exist $\tilde h_1, \tilde h_2$ such that 
    \[ \tilde{h}_1(\lift(\bv)) = h(\bv), \ \tilde{h}_2(\lift(\bv)) = -h(\bv).\]
    We consider any $\tilde h \in (\tilde \mH)^k$ whose first two co-ordinates are $\tilde{h}_1$ and $\tilde{h}_2$, so that their difference is $2h(\bv)$. We now apply Theorem \ref{thm:hardness} with $\mW = (\tilde \mH)^k$ and $\lambda =2$.
\end{proof}

\eat{
We defer the proofs of \Cref{thm:product-hardness,thm:hardness} to \Cref{app:red}.
}

\section{Hardness of Auditing for Decision Calibration}
\label{sec:hardness}
Our tight connection between auditing and learning established in the previous section allows us to transfer hardness results from learning to auditing.
We apply this machinery to show hardness of auditing for specific function classes.
Under standard complexity-theoretic assumptions, we show that auditing for decision calibration (\Cref{def:dec}) cannot be solved in time $\poly(k)$.
\begin{theorem}[Hardness of Decision Calibration]
\label{thm:hardness-dec}
For $k\in \Z_{> 0}$, let $W_k$ be the class $W$ used in the definition of decision calibration (\Cref{def:dec}).
Under standard hardness assumption on refuting random $t$-XOR (\Cref{assumption-1} below), for any $C > 2$ and any sufficiently large $k$, there is no $(1/3 - 1/C, 1/k^C)$-auditing algorithm for $W_k$ that runs in time $O(k^C)$ and achieves success probability at least $3/4$.
\end{theorem}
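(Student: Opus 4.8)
The plan is to combine the auditing-to-learning reduction of \Cref{thm:product-hardness} with a known hardness result for agnostically learning halfspaces under the random $t$-XOR refutation assumption, due to Daniely~\cite{amit-daniely}. First I would identify the relevant hypothesis class: the weight family $W_k$ in \Cref{def:dec} contains, for every $\ba \in \R^k$ and $b \in \R$, weight functions built from the halfspace indicator $\one(\langle \ba,\bv\rangle > b)$ (together with an arbitrary unit vector $\bg$ placed in the nonzero coordinates). By the observation in the footnote to \Cref{def:dec}, restricting to $b=0$ loses no generality, so the core object is the class $\mH$ of homogeneous halfspaces $\bv \mapsto \sign(\langle \ba, \bv\rangle)$ on $\Delta_k$. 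The key structural point is that $\mH$ is closed under negation (negate $\ba$) and, more importantly, closed under coordinate-wise affine transformations of the input, so that $\tilde\mH = \mH$: applying $\lift$ to $\bv$ (as in~\eqref{eq:lift}) transforms a halfspace into another halfspace, since $\langle \ba, \lift(\bv)\rangle = \tfrac13\langle\ba,\bv\rangle + \tfrac13(\ba_1 + \ba_2)$ is again an affine functional of $\bv$, hence a (shifted, and then unshifted by the $b=0$ reduction) halfspace. Thus \Cref{thm:product-hardness} applies with $\tilde\mH^k \subseteq W_k$: an $(2\alpha/3, 2\beta/3)$-auditor for $W_k$ yields an $(\alpha,\beta)$-weak agnostic learner for halfspaces over $\Delta_k$ with the same sample complexity, running time, and failure probability.

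Next I would invoke the hardness of weak agnostic learning of halfspaces. Daniely's theorem states that, under the random $t$-XOR refutation assumption (\Cref{assumption-1}), there is no polynomial-time algorithm that, given a distribution with $\opt(\mH,\mU) \geq 1/2 - \eps$, outputs a hypothesis with correlation $\geq \eps$ (for every constant $\eps>0$); more quantitatively, one gets that achieving correlation $1/\poly(k)$ when $\opt \geq 1/2 - 1/\poly(k)$ requires super-polynomial time. I would need to transcribe the exact quantitative form of the assumption/hardness statement as it is set up in \Cref{assumption-1} later in the paper and match constants: set $\alpha = 1/2 - 1/C'$ and $\beta = 1/k^{C'}$ on the learning side for an appropriate $C'$ tied to $C$, so that a $(1/3 - 1/C, 1/k^C)$-auditor for $W_k$ running in time $O(k^C)$ with success probability $3/4$ would, via the reduction, give a weak agnostic learner for halfspaces contradicting the assumed hardness. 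The factor $2/3$ in \Cref{thm:product-hardness} is what converts the learner's $\alpha = 1/2 - o(1)$ threshold into the auditor's $\approx 1/3 - o(1)$ threshold, which is exactly the $1/3 - 1/C$ appearing in the statement; similarly $2\beta/3$ with $\beta = 1/k^{C'}$ is absorbed into $1/k^C$ by adjusting $C$ versus $C'$.

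The main obstacle I anticipate is bookkeeping rather than conceptual: getting the constants and polynomial exponents to line up cleanly between (i) the precise form of Daniely's hardness as encoded in \Cref{assumption-1}, (ii) the $(2/3, 2/3)$-loss in \Cref{thm:product-hardness}, and (iii) the desired "$1/3 - 1/C$, $1/k^C$, time $O(k^C)$, success $3/4$" phrasing — in particular making sure the "for any $C>2$ and sufficiently large $k$" quantification is honestly supported, and that the success-probability/failure-probability parameter $\delta$ is handled (the reduction preserves it, and standard amplification lets us trade $3/4$ for the $1-\delta$ in the definitions). A secondary point to verify carefully is that the auxiliary coordinates and the unit-vector factors $\bg,\bg'$ in $W_k$ do not obstruct the embedding of $\tilde\mH^k$: we want weight functions whose first two coordinates realize $\tilde h_1 = h$ and $\tilde h_2 = -h$ and whose remaining coordinates are $0$, which is consistent with the $\|\bg\|_2 \le 1$ constraint (place $h$ in coordinate $1$, $-h$ in coordinate $2$), so this is fine; I would spell out this embedding explicitly at the start of the proof.
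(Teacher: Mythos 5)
Your overall strategy — reduce agnostic learning of halfspaces to auditing via the $\lift$ construction and invoke Daniely's hardness — is the same as the paper's, but two concrete steps in your write-up would fail as stated. First, the claim ``$\tilde\mH^k \subseteq W_k$'' is false, so \Cref{thm:product-hardness} does not apply: a function in $\mH_\hs^k$ may use $k$ \emph{different} halfspaces, one per coordinate (and its output vector has $\ell_2$ norm $\sqrt k$), whereas every $w\in W_k$ is governed by a \emph{single} threshold $\one(\langle \ba,\bv\rangle > b)$ and outputs one of two vectors of $\ell_2$ norm at most $1$. Since neither class contains the other, an auditor for $W_k$ is not an auditor for $\mH_\hs^k$, and the product-class reduction cannot be fed the hypothesized $W_k$-auditor. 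The correct tool is the non-product reduction \Cref{thm:hardness}: one exhibits, for each halfspace $h$, a \emph{single} $w\in W_k$ with $w(\lift(\bv))\sps 1 - w(\lift(\bv))\sps 2 = \lambda\, h(\bv)$, and this is what the paper does.

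Second, your proposed embedding (first coordinate $h$, second coordinate $-h$, rest zero) is not ``consistent with the $\|\bg\|_2\le 1$ constraint'': it forces $\bg = \e_1 - \e_2$ and $\bg' = -\e_1+\e_2$, which have norm $\sqrt 2 > 1$. The paper instead takes $\bg = \e_1$, $\bg' = -\e_1$, so that $w\sps 1 = \sign(\langle \ba',\cdot\rangle - b')$ and $w\sps 2 = 0$, giving $\lambda = 1$ in \Cref{thm:hardness}. This choice of $\lambda$ is also what produces the constant in the theorem: the auditing threshold is $\lambda\alpha/3$ with the learning threshold $\alpha = 1-3/C$ taken from \Cref{thm:halfspace-learning} (which is stated with correlation $\opt \ge 1-1/C$, not your ``$\opt \ge 1/2-\eps$''), yielding $1/3 - 1/C$, and $2\beta/3 = 1/k^C$ yields $\beta = 3/(2k^C)$, which is at least $1/k^C$ and hence contradicts \Cref{thm:halfspace-learning} with the same exponent $C$. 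Your derivation $\tfrac12\cdot\tfrac23 = \tfrac13$ lands on the right number only because the misquoted learning threshold and the inapplicable factor $2/3$ from \Cref{thm:product-hardness} happen to cancel.
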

We also prove a related result showing hardness of auditing for the product class of halfspaces. Let $\mH_\hs$ be the class of half-spaces over $\Delta_k$. That is, $\mH_\hs$ consists of all functions $h:\Delta_k\to [-1,1]$ that can be written as $h(\bv) = \sign(\ba\cdot \bv + b)$ for some $\ba\in \R^k$ and $b\in \R$. We prove the following theorem showing that $\mH_\hs^k$ does not allow $\poly(k)$-time auditing:
\begin{theorem}[Hardness of Halfspace Calibration]
\label{thm:halfspace-auditing}
Under standard hardness assumption on refuting random $t$-XOR (\Cref{assumption-1} below),
for any $C > 2$, there is no algorithm that, for every sufficiently large $k\in \Z_{> 0}$, solves $(2/3 - 1/C,1/k^C)$-auditing for $\mH_\hs^k$ in time $O(k^C)$ and achieves success probability at least $3/4$. 
\end{theorem}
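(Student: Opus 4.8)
The plan is to reduce from the hardness of weakly agnostically learning halfspaces: combine the ``agnostic learning from auditing'' reduction of \Cref{thm:product-hardness} with the hardness result of \cite{amit-daniely} under the random $t$-XOR refutation assumption (\Cref{assumption-1}). The first thing I would check is that the class $\mH_\hs$ of halfspaces over $\Delta_k$ satisfies the hypotheses of \Cref{thm:product-hardness} with $\tilde\mH = \mH_\hs$ itself. Closure under negation is immediate, replacing $(\ba,b)$ by $(-\ba,-b)$ (perturbing $b$ infinitesimally if one worries about the $\sign$ boundary convention). And since $\lift$ is affine with $\bv = 3\lift(\bv) - \e_1 - \e_2$, any $h(\bv) = \sign(\ip{\ba}{\bv}+b)$ equals $\tilde h(\lift(\bv))$ for the halfspace $\tilde h(\bu) = \sign(\ip{3\ba}{\bu} + b')$ with $b' = b - \ba\sps{1} - \ba\sps{2}$; taking the two copies $\tilde h_1,\tilde h_2\in\mH_\hs$ with $\tilde h_1(\lift(\bv)) = h(\bv)$, $\tilde h_2(\lift(\bv)) = -h(\bv)$ gives the required structure with $\lambda = 2$, which is exactly what produces the constant $2/3$ in the statement, as opposed to the $1/3$ of \Cref{thm:hardness-dec}.

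Next I would argue by contradiction. Suppose for some $C>2$ there is an algorithm solving $(2/3 - 1/C,\,1/k^C)$-auditing for $\mH_\hs^k$ in time $O(k^C)$ with success probability $\ge 3/4$ for all large $k$. Matching parameters in \Cref{thm:product-hardness} via $2\alpha/3 = 2/3 - 1/C$ and $2\beta/3 = 1/k^C$, i.e.\ $\alpha = 1 - 3/(2C)$ and $\beta = 3/(2k^C)$, this yields an $(\alpha,\beta)$ weak agnostic learner for $\mH_\hs$ over $\Delta_k$ with the same running time $O(k^C)$, sample complexity, and success probability. Unpacking: on any distribution $\mU$ over $\Delta_k\times\pmo$ with $\opt(\mH_\hs,\mU)\ge 1 - 3/(2C)$ (equivalently, some halfspace has error $\le 3/(4C)$), the learner outputs, with probability $\ge 3/4$ and in time $O(k^C)$, a hypothesis $h'$ with $\E_{(\bv,z)\sim\mU}[h'(\bv)z]\ge 3/(2k^C)$.

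Finally I would invoke \cite{amit-daniely}. Under \Cref{assumption-1}, for every constant $C$ and all large $k$ there is a family of ``yes/no'' instances — distributions on the Boolean cube cross $\pmo$, embeddable into $\Delta_k\times\pmo$ by a halfspace-preserving affine map at negligible cost in the dimension — such that in the ``yes'' case a halfspace achieves correlation $\ge 1 - 3/(2C)$, in the ``no'' case the examples are pseudorandom so that no hypothesis has correlation exceeding $o(1/k^C)$ with the labels, and no $O(k^C)$-time algorithm distinguishes the two cases with probability $\ge 3/4$. Running the learner from the previous paragraph and then estimating the correlation of its output on a few fresh samples would distinguish these cases, a contradiction; hence no such auditor exists.

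The step I expect to be the crux is the last one: invoking \cite{amit-daniely} in precisely the parameter regime the reduction lands in — $\alpha$ pushed arbitrarily close to $1$ as $C\to\infty$, the correlation gap $\beta$ taken inverse-polynomially small in $k$ rather than a fixed constant, the time bound $O(k^C)$ matched against what \Cref{assumption-1} rules out (which requires the XOR-refutation hardness to be stated for $n^C$-time algorithms uniformly in $C$ at an appropriate clause density), and the success-probability constant matched. Everything before that — the affine identities for $\lift$, the negation closure, and plugging into \Cref{thm:product-hardness} — is routine.
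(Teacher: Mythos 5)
Your proposal is correct and follows essentially the same route as the paper: verify via \Cref{claim:closed} that $\mH_\hs$ is closed under negation and under the $\lift$ map, apply \Cref{thm:product-hardness} to turn the hypothesized $(2/3-1/C,1/k^C)$-auditor into a $(1-3/(2C),\,3/(2k^C))$-weak agnostic learner for halfspaces over $\Delta_k$, and derive a contradiction with Daniely's hardness result. The final step you flag as the crux is exactly what the paper packages as \Cref{thm:halfspace-learning} (already stated over $\Delta_k$ via an affine embedding of the cube), and the parameters match by monotonicity — since $1-3/(2C)\le 1-1/C$ and $3/(2k^C)\ge 1/k^C$, your learner is in particular a $(1-1/C,1/k^C)$-learner running in time $O(k^C)$ — so no re-derivation of the XOR-refutation argument is needed.
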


We combine reductions from \Cref{sec:red-hardness} with existing hardness results of agnostically learning halfspaces to prove the two theorems above.
There are many results showing hardness of agnostic learning for halfspaces under various assumptions, for instance see \cite{FeldmanGKP06, GuruswamiR06}. The strongest results for improper learning are due to Daniely based on the hardness of refuting random $t$-XOR-Sat \cite{amit-daniely}. 

\begin{assumption}[Random $t$-XOR Assumption \cite{amit-daniely}]
\label{assumption-1}
There exist constants $\eta \in (0,1/2)$ and $c > 0$ such that for any $t\in \Z_{> 0}$, there is no $\poly(m)$-time algorithm $A$ that satisfies the following properties for any sufficiently large $n\in \Z_{> 0}$ and $m = \lfloor n^{c\sqrt t\log t}\rfloor$:
\begin{itemize}
\item given any size-$m$ collection of $t$-XOR clauses on $n$ variables where at least $1 - \eta$ fraction of the clauses are satisfiable, algorithm $A$ outputs \acc\ with probability at least $3/4$;
\item with probability at least $q(n) = 1 - o(1)$ over a uniformly randomly chosen size-$m$ collection of $t$-XOR clauses on $n$ variables, given the collection as input, algorithm $A$ outputs \rej\ with probability at least $3/4$.
\end{itemize}
\end{assumption}

\begin{theorem}[\citep{amit-daniely}]
\label{thm:halfspace-learning}
Under \Cref{assumption-1},
for any $C > 2$, there is no algorithm that, for every sufficiently large $k\in \Z_{> 0}$, solves $(1 - 1/C,1/k^C)$-agnostic learning for $\mH_\hs$ with success probability at least $3/4$ and runs in time $O(k^C)$.  
\end{theorem}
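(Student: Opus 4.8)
The statement is Daniely's hardness of agnostically learning halfspaces, so the plan is to reconstruct his reduction from refuting random $t$-XOR, being careful to track the parameter dependence on $C$. Fix $C>2$ and suppose toward a contradiction that $A$ solves $(1-1/C,1/k^C)$-agnostic learning for $\mH_\hs$ over $\Delta_k$ in time $O(k^C)$ with success probability $\ge 3/4$ for every sufficiently large $k$. I will choose a constant $t=t(C)$ large (how large is dictated by the calculation below) and turn $A$ into a $\poly(m)$-time refuter for random $t$-XOR on $n$ variables with $m=\lfloor n^{c\sqrt t\log t}\rfloor$ clauses, contradicting \Cref{assumption-1}.

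Given an instance $\psi$ with clauses $C_1,\dots,C_m$, where $C_\ell$ requires $\prod_{j=1}^t \sigma_{i_j}=y_\ell\in\{\pm1\}$ for an assignment $\sigma\in\{\pm1\}^n$, define a distribution $\mU_\psi$ on $\Delta_k\times\{\pm1\}$: draw $\ell$ uniformly from $[m]$ and output $(\Phi(C_\ell),y_\ell)$, where $\Phi$ is a fixed feature map from $t$-sets to $\Delta_k$. The heart of the construction is a choice of $\Phi$ --- Daniely's main technical device --- with two properties: (i) the ambient dimension is $k=n^{O(\sqrt t)}$, so that $k^C=n^{O(C\sqrt t)}\le \poly(m)$ provided $t$ is large enough that $C\sqrt t\ll \sqrt t\log t$ (this is precisely what the $\log t$ in the exponent of $m$ buys); and (ii) for every assignment $\sigma$ there is a halfspace $h_\sigma\in\mH_\hs$ with $h_\sigma(\Phi(C))=\prod_{j}\sigma_{i_j}$ for every clause $C=(i_1,\dots,i_t,\cdot)$, so that $h_\sigma(\Phi(C_\ell))=y_\ell$ exactly when $\sigma$ satisfies $C_\ell$. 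A transparent but dimension-wasteful realization is $\Phi(C)=\e_{i_1}\otimes\cdots\otimes\e_{i_t}$ with $h_\sigma=\sign(\langle\sigma^{\otimes t},\cdot\rangle)$, which has dimension $n^t$; compressing this to $n^{O(\sqrt t)}$ while preserving (ii) is the real work.

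Completeness: if $\psi$ is $(1-\eta)$-satisfiable via $\sigma^*$, then $h_{\sigma^*}$ matches the label on a $(1-\eta)$-fraction of clauses, so $\E_{\mU_\psi}[h_{\sigma^*}(\bv)z]\ge 1-2\eta$; taking $\eta\le 1/(2C)$ --- we may assume $\eta$ is an arbitrarily small constant, which is standard for random-CSP hardness, or else one strengthens \Cref{assumption-1} accordingly --- this is $\ge 1-1/C$, hence $\opt(\mH_\hs,\mU_\psi)\ge 1-1/C$ and $A$ outputs some $h'$ with $\E_{\mU_\psi}[h'(\bv)z]\ge 1/k^C$ with probability $\ge 3/4$. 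Soundness: if $\psi$ is a uniformly random instance, its parity bits $y_1,\dots,y_m$ are i.i.d.\ uniform; one cannot argue that $\opt$ is small (a ``memorizing'' hypothesis could have large correlation), so instead I run $A$ on $n_0\le O(k^C)$ training examples and estimate $\E_{\mU_\psi}[h'(\bv)z]$ on $N=\poly(k)$ fresh examples. Since $m\gg (n_0+N)^2$, again using $t\gg C$, with high probability the fresh clauses are disjoint from the training clauses; conditioning on the training clauses fixes $h'$, and $h'$ is independent of the parity bits of the fresh clauses, which are uniform, so each fresh example contributes mean zero and the estimate is $<1/(2k^C)$ w.h.p. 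The refuter outputs \acc\ iff the estimate exceeds $1/(2k^C)$: it outputs \acc\ on $(1-\eta)$-satisfiable instances and \rej\ on random instances, both w.h.p., and runs in time $\poly(k)=\poly(m)$, contradicting \Cref{assumption-1}.

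The main obstacle is the feature map in the second paragraph: designing $\Phi$ with image dimension $n^{O(\sqrt t)}$ for which a single halfspace still computes the length-$t$ parity selected by each clause. This is where the tension lives --- halfspaces compute only linear thresholds, yet we need a degree-$t$ parity, encoded succinctly enough that an $O(k^C)$-time learner yields a $\poly(m)$-time refuter; the naive tensor encoding has dimension $n^t$ and fails this test. A secondary but necessary subtlety is the soundness argument: because the hardness must rule out improper learners, one cannot invoke a small $\opt$, so the fresh-sample split together with the independence of the unseen parity bits is what makes the reduction go through.
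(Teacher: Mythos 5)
The paper does not prove this theorem: it is imported verbatim from Daniely \cite{amit-daniely}, and the only content the paper adds is the one-line remark that the result, originally stated over the Boolean cube, transfers to $\Delta_k$ via an affine injection of $\{-1,1\}^{k-1}$ into $\Delta_k$. Your proposal instead attempts to reconstruct Daniely's reduction from scratch, and as you yourself acknowledge, it leaves the central construction unproved. That is a genuine gap, and moreover the construction you gesture at cannot work in the exact form you state it. You ask for a feature map $\Phi$ into dimension $k=n^{O(\sqrt t)}$ such that a \emph{single halfspace} $h_\sigma$ satisfies $h_\sigma(\Phi(C))=\prod_j\sigma_{i_j}$ for \emph{every} clause $C$. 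A halfspace over a degree-$d$ monomial expansion is the sign of a degree-$d$ polynomial in the original variables, and no polynomial threshold function of degree $o(t)$ computes $t$-parity exactly on all of $\{-1,1\}^t$; so property (ii) as stated forces $k=n^{\Omega(t)}$, which destroys the $k^C\le\poly(m)$ accounting. What actually works (and what the paper itself carries out in \Cref{sec:psmooth-lower} for the sigmoid analogue) is the degree-$d$ monomial map with $d=\Theta(\sqrt{t\log t})$ together with a degree-$d$ polynomial that computes $\Xor_{c}(z)$ only on clauses with $|c\cdot z|\le d/2$; completeness must then discard the clauses violating this margin bound, which requires the separate concentration/pseudorandomness argument on $\frac1m\sum_i(c_i\cdot z)^2$ (cf.\ \Cref{lem:sdp} and the surrounding lemmas). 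Your completeness step ``$h_{\sigma^*}$ matches the label on a $(1-\eta)$-fraction of clauses'' silently assumes the exact-parity property and so does not survive the correction.

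Two further parameter issues. First, you choose $\eta\le 1/(2C)$ and wave at ``strengthening \Cref{assumption-1} accordingly,'' but the assumption fixes $\eta$ as \emph{some} constant whose existence is asserted; you do not get to shrink it, and making the correlation threshold $1-1/C$ reachable for every $C>2$ is a nontrivial part of Daniely's argument that your sketch does not supply. Second, the theorem as used in this paper is over $\Delta_k$, and your $\Phi(C)$ (weight-$t$ or weight-${t\choose\le d}$ Boolean vectors) does not land in the simplex; you need the normalization by the fixed weight (as in the paper's Section 8: $c^{\otimes d}=B_1\bv$ with $\bv\in\Delta_k$) or the affine-injection remark the paper makes. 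Your soundness argument via fresh samples and the independence of unseen parity bits is the right idea and matches the standard methodology, but given that the paper's own ``proof'' of this theorem is a citation, the honest assessment is that your proposal identifies the right skeleton while leaving the load-bearing construction, and the margin/exception bookkeeping it necessitates, unproved.
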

The original result by \citet{amit-daniely} was stated for the Boolean cube instead of $\Delta_k$, but the result extends to $\Delta_k$ by taking an affine injection from the Boolean cube $\{-1,1\}^{k-1}$ to $\Delta_k$.

We prove \Cref{thm:hardness-dec} and \Cref{thm:halfspace-auditing} by combining \Cref{thm:halfspace-learning} with \Cref{thm:product-hardness}  and \Cref{thm:hardness} from \Cref{sec:red-hardness}. The following simple claim is convenient for our proof and it follows immediately from the definition of $\lift(\cdot)$ in \eqref{eq:lift}.
\begin{claim}
\label{claim:closed}
For $\ba\in \R^k,b\in \R$, define $\ba' = 3\ba,  \ b' = b - \ba\sps 1 - \ba\sps 2$. Then for every $\bv\in\Delta_k$,
\[
\langle \ba',\lift(\bv)\rangle + b' = \langle \ba,\bv\rangle + b.
\]
\end{claim}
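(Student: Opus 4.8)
The plan is a one-line computation: expand the definition $\lift(\bv) = \tfrac13\bv + \tfrac13\e_1 + \tfrac13\e_2$ from \eqref{eq:lift} inside the inner product, substitute $\ba' = 3\ba$, and use bilinearity. Concretely, I would first write
\[
\langle \ba',\lift(\bv)\rangle = \big\langle 3\ba,\ \tfrac13\bv + \tfrac13\e_1 + \tfrac13\e_2\big\rangle = \langle \ba,\bv\rangle + \langle \ba,\e_1\rangle + \langle \ba,\e_2\rangle = \langle \ba,\bv\rangle + \ba\sps 1 + \ba\sps 2,
\]
where the last step uses $\langle \ba,\e_j\rangle = \ba\sps j$. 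Then I would add $b' = b - \ba\sps 1 - \ba\sps 2$ and observe that the two stray coordinate terms cancel, leaving exactly $\langle \ba,\bv\rangle + b$, which is the claimed identity.

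There is no real obstacle here; the claim is a deliberate bookkeeping identity. The only points worth double-checking are that the factor $3$ in $\ba'$ exactly cancels the factor $\tfrac13$ in $\lift$, and that precisely the first two coordinates $\ba\sps 1,\ba\sps 2$ (and no others) appear in $b'$ because only $\e_1$ and $\e_2$ are added inside $\lift$. Note also that the identity holds for every $\bv \in \R^k$; the hypothesis $\bv \in \Delta_k$ plays no role in this particular calculation (it is relevant elsewhere only because $\lift$ maps $\Delta_k$ into $\Delta_k$).

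The role of this claim in what follows is to certify that the halfspace family $\mH_\hs$ is closed under the $\lift$ transformation in the sense needed by \Cref{thm:product-hardness} and \Cref{thm:hardness}: given a halfspace $h(\bv) = \sign(\langle \ba,\bv\rangle + b)$, the composition $h(\lift(\cdot))$ is again a halfspace, namely $\sign(\langle \ba',\bv\rangle + b')$ with $\ba',b'$ as above. This lets us take $\tilde\mH = \mH_\hs$ when instantiating the reduction, so that hardness of agnostically learning halfspaces (\Cref{thm:halfspace-learning}) transfers to hardness of auditing for $\mH_\hs^k$ and for decision calibration.
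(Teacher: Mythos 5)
Your computation is correct and is exactly the immediate expansion the paper has in mind (the paper states the claim "follows immediately from the definition of $\lift$" and omits the calculation). Your observations about the cancellation of the factor $3$ with $\tfrac13$ and about the role of the claim in instantiating \Cref{thm:product-hardness} and \Cref{thm:hardness} for halfspaces are accurate.
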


\begin{proof}[Proof of \Cref{thm:hardness-dec}]
Any function $h\in \mH_\hs$ can be expressed as $h(\bv) = \sign(\langle \ba,\bv - b\rangle)$ for $\ba\in \R^k$ and $b\in \R$.
Define $\ba' = 3\ba,  \ b' = b - \ba\sps 1 - \ba\sps 2, \bg = \e_1$ and $\bg = -\e_1$. The function $w$ mapping $\bv'$ to $\one(\langle \ba', \bv' \rangle > b')\bg + \one(\langle \ba', \bv' \rangle \le b')\bg'$ belongs to $W_k$, and 
\[
w(\lift(\bv))\sps 1 - w(\lift(\bv))\sps 2 = \one(\langle \ba', \lift(\bv) \rangle > b') -  \one(\langle \ba', \lift(\bv) \rangle \le b')= \sign(\langle \ba', \lift(\bv) - b'\rangle) = h(\bv).
\]
Therefore, by \Cref{thm:hardness}, any $(1/3 - 1/C, 1/k^C)$-auditing algorithm for $W_k$ implies a $(1 - 3/C, 3/2k^C)$-agnostic learning algorithm for $\mH_\hs$ with the same sample complexity, running time, and failure probability. The proof is completed by \Cref{thm:halfspace-learning}.
\end{proof}

\begin{proof}[Proof of \Cref{thm:halfspace-auditing}]
By \Cref{claim:closed}, $\mH_\hs$ is closed under the $\lift$ operation, namely for every $h \in \mH_\hs$ we can construct $h'\in \mH_\hs$ which satisfies $h'(\lift(\bv)) = h(\bv)$ for every $\bv\in \Delta_k$. 
Assume for the sake of contradiction that an auditing algorithm for $\mH_\hs^k$ as described in the theorem exists. By \Cref{thm:product-hardness}, such an algorithm implies a $(1 - 3C/2, 3/2k^C)$-weak agnostic learner for $\mH_\hs$ that runs in time $O(k^C)$ for any sufficiently large $k$, contradicting \Cref{thm:halfspace-learning}.
\end{proof}

\section{Kernel Algorithms for Auditing Calibration}
\label{sec:kernel}

In this section, we give efficient auditing algorithms for weighted calibration where the weight family $\mW$ consists of functions from a reproducing kernel Hilbert space (RKHS). In \Cref{sec:multinomial}, we discuss a special case using the multinomial kernel, which is important for our efficient auditors for projected smooth calibration in \Cref{sec:smooth}.

It is well known that learning for functions with bounded norm in an RKHS with convex losses is feasible by solving a convex program. Here we observe that the simple structure of the correlation objective $\E[zw(\bv)]$ in agnostic learning makes it possible to optimize, even without solving a convex program,  just using $O(n^2)$ kernel evaluations, via Algorithm \ref{alg:kernel-weak}. The algorithm and its analysis are not novel and are similar in nature to the kernel ridge regression algorithm (see e.g.\ \cite{kernel-ridge}). 
Based on our connection between auditing and learning shown in \Cref{sec:audit}, we  give a similar kernel evaluation based  algorithm for multi-class auditing, which we present in \Cref{alg:kernel-audit}.

Let $\mD$ be a distribution over $\Delta_k\times [-1,1]$.
Consider a positive definite kernel $\kn:\Delta_k\times \Delta_k \to \R$ and the corresponding RKHS $\Gamma$ consisting of functions $w:\Delta_k\to \R$. We assume that the kernel can be evaluated efficiently. Let $\varphi_\bv\in \Gamma$ denote the function $\kn(\bv,\cdot)$. By the reproducing property, 
\[
w(\bv) = \langle w, \varphi_\bv \rangle_\Gamma \quad\text{for every $w\in \Gamma$ and $\bv\in \Delta_k$.}
\]
Define $B_\Gamma(r)$ to be the set of $w\in \Gamma$ satisfying $\|w\|_\Gamma^2:= \langle w,w\rangle_\Gamma \le r^2$. 
For $s > 0$, assume that $\kn(\bv,\bv) \le s^2$ for every $\bv\in \Delta_k$. That is, $\|\varphi_\bv\|_\Gamma \le s$.
Under this assumption, for any $w\in B_\Gamma(1/s)$ and $\bv \in \Delta_k$, we have 
\[ |w(\bv)| = |\langle w, \varphi_\bv\rangle_\Gamma| \le \|w\|_\Gamma \|\varphi_\bv\|_\Gamma \le (1/s)\cdot s \le 1.\]

\setlength{\algomargin}{3ex}

\begin{algorithm}[t]
\caption{Kernel Algorithm for Weak Agnostic Learning}\label{alg:kernel-weak}
\SetKwInOut{Input}{Input}\SetKwInOut{Output}{Output}
\Input{Data points $(\bv_1, z_1),\ldots, (\bv_n, z_n)\in \Delta_k\times [-1,1]$.}
\Output{Function $w_2:\Delta_k\to [-1,1]$.}
\Begin{
$\lambda \leftarrow \left(\sum_{i=1}^n\sum_{j=1}^nz_iz_j\ker(\bv_i, \bv_j)\right)^{1/2}$\;
$w_2(\bv) \leftarrow \fr{\lambda s}\sum_{i=1}^nz_i\ker(\bv_i, \bv)$ for every $\bv\in \Delta_k$ (in the degenerate case where $\lambda = 0$, set $w_2(\bv) \gets 0$)\;
\Return $w_2$\;
}
\end{algorithm}

The following theorems are proved in \Cref{sec:proof-kernel}.

\begin{theorem}\label{thm:kernel-weak}
    For $n =  O(r^2s^2\alpha^{-2}\log(1/\delta))$, Algorithm \ref{alg:kernel-weak} is an $(\alpha, \alpha/3rs)$ agnostic learner for the class $B_\Gamma(r)$ with failure probability at most $\delta$. Moreover, it always returns a function from $B_\Gamma(1/s)$.
\end{theorem}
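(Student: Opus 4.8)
The plan is to show that Algorithm~\ref{alg:kernel-weak} outputs a function $w_2$ with two properties: (i) $w_2\in B_\Gamma(1/s)$, so its outputs lie in $[-1,1]$ and it is a valid hypothesis; and (ii) if $\opt(B_\Gamma(r),\mU)\ge \alpha$, then $\E_{(\bv,z)\sim\mU}[z\,w_2(\bv)]\ge \alpha/(3rs)$ with probability $\ge 1-\delta$. The key observation is that the algorithm is simply performing (an estimate of) the correlation-maximizing step in the RKHS: the population-optimal unit-norm direction for the linear functional $w\mapsto \E[z\,w(\bv)] = \langle w, \E[z\,\varphi_\bv]\rangle_\Gamma$ is $\bar w := \E[z\,\varphi_\bv]/\|\E[z\,\varphi_\bv]\|_\Gamma$, and the empirical analogue is $\hat w := \tfrac1n\sum_i z_i\varphi_{\bv_i}$ (up to normalization), which is exactly what the algorithm computes since $\langle \varphi_{\bv_i},\varphi_{\bv_j}\rangle_\Gamma = \ker(\bv_i,\bv_j)$ makes $\lambda = \|\sum_i z_i\varphi_{\bv_i}\|_\Gamma$ and $w_2(\bv) = \tfrac1{\lambda s}\sum_i z_i\ker(\bv_i,\bv) = \tfrac1{\lambda s}\langle \sum_i z_i\varphi_{\bv_i},\varphi_\bv\rangle_\Gamma$. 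So $w_2 = \hat w/(\|\hat w\|_\Gamma\, s)$ has norm exactly $1/s$, which immediately gives property (i) via the bound $|w_2(\bv)| = |\langle w_2,\varphi_\bv\rangle_\Gamma|\le \|w_2\|_\Gamma\|\varphi_\bv\|_\Gamma\le (1/s)\cdot s = 1$ already established before the theorem statement.

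For property (ii), first I would handle the approximation between the empirical and population vectors. Writing $\mu := \E_{(\bv,z)\sim\mU}[z\,\varphi_\bv]\in\Gamma$ and $\hat\mu := \tfrac1n\sum_{i=1}^n z_i\varphi_{\bv_i}$, each summand $z_i\varphi_{\bv_i}$ has $\Gamma$-norm at most $s$ (since $|z_i|\le1$ and $\|\varphi_{\bv_i}\|_\Gamma\le s$), so a vector-valued concentration bound (Hoeffding in Hilbert space, or a standard $\E\|\hat\mu-\mu\|_\Gamma^2\le s^2/n$ bound followed by a median-of-means / Markov argument to boost to high probability) gives $\|\hat\mu - \mu\|_\Gamma \le \alpha/(3r)$ with probability $\ge 1-\delta$ once $n = \Omega(r^2 s^2\alpha^{-2}\log(1/\delta))$. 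Next, the hypothesis is $w_2 = \hat\mu/(s\|\hat\mu\|_\Gamma)$, and what we want to lower-bound is
\[
\E_{(\bv,z)\sim\mU}[z\,w_2(\bv)] = \langle w_2,\mu\rangle_\Gamma = \frac{\langle \hat\mu,\mu\rangle_\Gamma}{s\|\hat\mu\|_\Gamma}.
\]
Since $\opt(B_\Gamma(r),\mU)\ge\alpha$ means $\sup_{\|w\|_\Gamma\le r}|\langle w,\mu\rangle_\Gamma| = r\|\mu\|_\Gamma\ge\alpha$, we have $\|\mu\|_\Gamma\ge\alpha/r$. Combining with $\|\hat\mu-\mu\|_\Gamma\le\alpha/(3r)\le \|\mu\|_\Gamma/3$ gives $\|\hat\mu\|_\Gamma\le \tfrac43\|\mu\|_\Gamma$ and $\langle\hat\mu,\mu\rangle_\Gamma \ge \|\mu\|_\Gamma^2 - \|\hat\mu-\mu\|_\Gamma\|\mu\|_\Gamma \ge \tfrac23\|\mu\|_\Gamma^2$, so
\[
\frac{\langle\hat\mu,\mu\rangle_\Gamma}{s\|\hat\mu\|_\Gamma} \ge \frac{(2/3)\|\mu\|_\Gamma^2}{s\cdot(4/3)\|\mu\|_\Gamma} = \frac{\|\mu\|_\Gamma}{2s} \ge \frac{\alpha}{2rs} \ge \frac{\alpha}{3rs},
\]
as required. (One must also check the sign: if $\langle w,\mu\rangle_\Gamma$ is large in absolute value the optimal witness might be $-\bar w$; but $\langle \hat\mu,\mu\rangle_\Gamma$ above is automatically nonnegative-and-large whenever $\hat\mu$ is close to $\mu$, and the definition of agnostic learner only asks for a one-sided correlation bound, so replacing $\mu$ by $-\mu$ if needed — equivalently flipping the $z_i$ — is harmless; I would remark that WLOG $\E[z w(\bv)]$ is attained positively.)

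The main obstacle is the high-probability Hilbert-space concentration step: one has to argue carefully that $\|\hat\mu-\mu\|_\Gamma$ concentrates with only a $\log(1/\delta)$ (not $1/\delta$) dependence in the sample size, which requires either a Banach-space Hoeffding/Azuma inequality or a median-of-means construction over $O(\log(1/\delta))$ independent batches followed by selecting the batch whose empirical correlation on a fresh validation sample is largest. Everything else — the reproducing-property manipulations, the norm bound giving $w_2\in B_\Gamma(1/s)$, and the deterministic inequality chain above — is routine. I would also note the degenerate case $\lambda=0$ (i.e.\ $\hat\mu=0$): this happens with negligible probability under the sample-size assumption when $\|\mu\|_\Gamma\ge\alpha/r>0$, and the algorithm's fallback $w_2\equiv 0$ is a harmless valid output in that measure-zero-ish event, so it does not affect the $1-\delta$ guarantee.
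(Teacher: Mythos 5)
Your proposal is correct and follows essentially the same route as the paper: reduce the correlation objective to an inner product with $w_0=\E[z\varphi_\bv]$ via the reproducing property, deduce $\|w_0\|_\Gamma\ge\alpha/r$ from the assumption, show $\|\tilde w_0-w_0\|_\Gamma\le\alpha/(3r)$ by a second-moment bound plus a bounded-differences concentration inequality (the paper uses McDiarmid, which resolves the $\log(1/\delta)$ issue you flag without needing median-of-means), and finish with the same deterministic inequality chain for the normalized empirical vector. Your algebraic endgame is phrased slightly differently from the paper's helper lemma ($\langle w_0,\tilde w_0/\|\tilde w_0\|_\Gamma\rangle\ge\|w_0\|_\Gamma-2\|w_0-\tilde w_0\|_\Gamma$) but is equivalent.
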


\begin{theorem}\label{thm:kernel-audit}
    For $n =  O(kr^2s^2\alpha^{-2}\log(1/\delta))$, Algorithm \ref{alg:kernel-audit} is an $(\alpha, \alpha/3rs)$ auditor for the class $B_\Gamma(r)^k$ with failure probability at most $\delta$. Moreover, it always returns a function from $B_\Gamma(1/s)^k$.
\end{theorem}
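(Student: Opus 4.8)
The plan is for \Cref{alg:kernel-audit} to apply \Cref{alg:kernel-weak} coordinate by coordinate: given samples $(\bv_1,\y_1),\dots,(\bv_n,\y_n)$ from $\mD$, for each $j\in[k]$ run \Cref{alg:kernel-weak} on the binary-labeled sample $\big((\bv_i,\ \y_i\sps j-\bv_i\sps j)\big)_{i\in[n]}$ (note $\y_i\sps j-\bv_i\sps j\in[-1,1]$) to get $w_2\sps j\in B_\Gamma(1/s)$, and output $w_2:=(w_2\sps 1,\dots,w_2\sps k)$. Since \Cref{thm:kernel-weak} guarantees $w_2\sps j\in B_\Gamma(1/s)$, we get $w_2\in B_\Gamma(1/s)^k$ for free, establishing the ``moreover'' part (and, as in \Cref{sec:kernel}, $\norm{w_2\sps j}_\Gamma\le 1/s$ forces $|w_2\sps j(\bv)|\le 1$). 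It remains to prove the calibration-error guarantee; concretely, $w_2\sps j=\hat\mu_j/(s\norm{\hat\mu_j}_\Gamma)$ where $\hat\mu_j:=\tfrac1n\sum_i(\y_i\sps j-\bv_i\sps j)\varphi_{\bv_i}$ is the empirical kernel mean of coordinate $j$ (with the convention $w_2\sps j\equiv 0$ when $\hat\mu_j=0$).

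The first step is an exact formula for the calibration error in terms of population kernel means. Let $\mu_j:=\E_{(\bv,\y)\sim\mD}[(\y\sps j-\bv\sps j)\varphi_\bv]\in\Gamma$. By the reproducing property, for any $w=(w\sps 1,\dots,w\sps k)$ with each $w\sps j\in\Gamma$,
\[
\E_\mD[\langle \y-\bv, w(\bv)\rangle]=\sum_{j=1}^k\E_\mD[(\y\sps j-\bv\sps j)w\sps j(\bv)]=\sum_{j=1}^k\langle \mu_j,\, w\sps j\rangle_\Gamma .
\]
Because $B_\Gamma(r)^k$ is closed under negation and this supremum decouples over coordinates, Cauchy--Schwarz in $\Gamma$ gives $\CE_{B_\Gamma(r)^k}(\mD)=r\sum_{j=1}^k\norm{\mu_j}_\Gamma$, so the hypothesis $\CE_{B_\Gamma(r)^k}(\mD)>\alpha$ says exactly that $\sum_j\norm{\mu_j}_\Gamma>\alpha/r$. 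Applying the same computation to the algorithm's output,
\[
\E_\mD[\langle \y-\bv, w_2(\bv)\rangle]=\sum_{j}\langle \mu_j, w_2\sps j\rangle_\Gamma=\frac1s\sum_j\frac{\langle \mu_j,\hat\mu_j\rangle_\Gamma}{\norm{\hat\mu_j}_\Gamma}\ \ge\ \frac1s\sum_j\big(\norm{\mu_j}_\Gamma-2\norm{\mu_j-\hat\mu_j}_\Gamma\big),
\]
where the inequality uses $\langle\mu_j,\hat\mu_j\rangle_\Gamma/\norm{\hat\mu_j}_\Gamma\ge \norm{\hat\mu_j}_\Gamma-\norm{\mu_j-\hat\mu_j}_\Gamma\ge\norm{\mu_j}_\Gamma-2\norm{\mu_j-\hat\mu_j}_\Gamma$, which still holds (with the degenerate convention) when $\hat\mu_j=0$, the two sides then being $0$ and $-\norm{\mu_j}_\Gamma$. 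Thus it suffices to show that with probability $\ge 1-\delta$ the aggregate estimation error satisfies $\sum_j\norm{\mu_j-\hat\mu_j}_\Gamma\le\alpha/(3r)$, for then $\E_\mD[\langle \y-\bv, w_2(\bv)\rangle]\ge\frac1s(\alpha/r-2\alpha/(3r))=\alpha/(3rs)=\beta$.

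This is the crux, and the whole point of the claimed bound is to control $\sum_j\norm{\mu_j-\hat\mu_j}_\Gamma$ losing only a factor $\sqrt k$, rather than the factor $k$ that a coordinatewise union bound would incur (which would force $n$ of order $k^2 r^2 s^2\alpha^{-2}$ up to logs). I would view the full tuple $\hat\mu:=(\hat\mu_1,\dots,\hat\mu_k)$ as the empirical mean of the i.i.d.\ random variables $X_i:=\big((\y_i\sps j-\bv_i\sps j)\varphi_{\bv_i}\big)_{j\in[k]}$ in the product Hilbert space $\Gamma^k$ with inner product $\sum_j\langle\cdot,\cdot\rangle_\Gamma$, whose population mean is $\mu:=(\mu_1,\dots,\mu_k)$. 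The key uniform bound is $\norm{X_i}_{\Gamma^k}^2=\norm{\varphi_{\bv_i}}_\Gamma^2\,\norm{\y_i-\bv_i}_2^2\le 2s^2$, which has no $k$-dependence precisely because $\y_i,\bv_i\in\Delta_k$ gives $\norm{\y_i-\bv_i}_2^2\le\norm{\y_i-\bv_i}_1\norm{\y_i-\bv_i}_\infty\le 2$. Standard concentration of the empirical mean in a Hilbert space then applies: the second-moment bound $\E\norm{\hat\mu-\mu}_{\Gamma^k}^2\le 2s^2/n$, combined with a bounded-differences argument (replacing one sample moves $\norm{\hat\mu-\mu}_{\Gamma^k}$ by at most $2\sqrt2\,s/n$), yields $\norm{\hat\mu-\mu}_{\Gamma^k}\le O(s\sqrt{\log(1/\delta)/n})$ with probability $\ge 1-\delta$; and Cauchy--Schwarz over the $k$ blocks gives $\sum_j\norm{\mu_j-\hat\mu_j}_\Gamma\le\sqrt k\,\norm{\hat\mu-\mu}_{\Gamma^k}=O(s\sqrt{k\log(1/\delta)/n})$, which is $\le\alpha/(3r)$ once $n=O(kr^2s^2\alpha^{-2}\log(1/\delta))$ with a large enough absolute constant.

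The main obstacle is exactly this concentration step: controlling the $\ell_2$-aggregate $\norm{\hat\mu-\mu}_{\Gamma^k}$ (a single McDiarmid application, hence $\log(1/\delta)$ and not $\log(k/\delta)$) and converting to the $\ell_1$-aggregate at the cost of only $\sqrt k$, crucially using $\norm{\y_i-\bv_i}_2^2\le 2$ rather than the coordinatewise $|\y_i\sps j-\bv_i\sps j|\le 1$ — this is where the factor $k$ in the sample complexity (rather than $k^2$) comes from, and it is the reason a black-box composition of \Cref{thm:red} with \Cref{thm:kernel-weak} would not suffice. Two minor points I would dispatch in passing: the degenerate case $\hat\mu_j=0$ (absorbed by the convention above, as checked); and, if $rs>1$ so that $B_\Gamma(r)^k$ is not literally contained in the functions $\Delta_k\to[-1,1]^k$ of \Cref{def:weighted}, the identity $\CE_{B_\Gamma(r)^k}(\mD)=r\sum_j\norm{\mu_j}_\Gamma$ and the entire argument go through verbatim for this abstract weight family, while the returned $w_2$ does map into $[-1,1]^k$ as noted.
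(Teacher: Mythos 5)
Your proposal is correct and follows essentially the same route as the paper: the same population/empirical kernel means, the same normalize-and-lower-bound-the-inner-product step (the paper's Lemma \ref{lm:inner-lower}), and the same key concentration idea of controlling the aggregate error $\sum_j\norm{\mu_j-\hat\mu_j}_\Gamma$ with a single McDiarmid application plus a Cauchy--Schwarz over coordinates that exploits $\norm{\y-\bv}_1\le 2$ to pay only $\sqrt{k}$. The only cosmetic difference is that you apply Cauchy--Schwarz after concentrating the product-space norm $\norm{\hat\mu-\mu}_{\Gamma^k}$, whereas the paper applies Cauchy--Schwarz to the per-coordinate second moments and runs McDiarmid directly on the sum $\sum_j\norm{\tilde w_0\sps j-w_0\sps j}_\Gamma$; both yield the same $n=O(kr^2s^2\alpha^{-2}\log(1/\delta))$.
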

\begin{algorithm}[t]
\caption{Kernel Algorithm for Auditing}\label{alg:kernel-audit}
\SetKwInOut{Input}{Input}\SetKwInOut{Output}{Output}
\Input{Data points $(\bv_1, \y_1),\ldots, (\bv_n, \y_n)\in \Delta_k\times \mE_k$.}
\Output{Function $w_2:\Delta_k\to [-1,1]^k$.}
\Begin{
$\z_i \gets \y_i - \bv_i$ for every $i = 1,\ldots,n$\;
For $i = 1,\ldots,n$ and $\ell = 1,\ldots,k$, let $\z_i\sps \ell$ denote the $\ell$-th coordinate of $z_i$\;
$\lambda\sps \ell \leftarrow \left(\sum_{i=1}^n\sum_{j=1}^n\z_i\sps \ell\z_j\sps \ell\ker(\bv_i, \bv_j)\right)^{1/2}$ for every $\ell = 1,\ldots,k$\;
$w_2\sps \ell(\bv) \leftarrow \fr{\lambda\sps \ell s}\sum_{i=1}^n\z_i\sps \ell\ker(\bv_i, \bv)$ for every $\ell = 1,\ldots,k$ and $\bv\in \Delta_k$ (in the degenerate case where $\lambda\sps \ell = 0$, set $w_2\sps\ell(\bv) \gets 0$)\;%
\Return $w_2$ such that $w_2(\bv) = \big(w_2\sps 1(\bv),\ldots,w_2\sps k(\bv)\big)$ for every $\bv \in \Delta_k$\;
}

\end{algorithm}

\subsection{Auditing for the Multinomial Kernel}
\label{sec:multinomial}

A kernel that will be of particular importance for us is the multinomial kernel. We follow the elegant formulation from \cite{GoelKKT17}. 

\begin{definition}\cite{GoelKKT17}
\label{def:mm-kernel}    
For any vector $\bv= (v_1,\ldots,v_k)\in \Delta_k$ and  tuple $t = (t_1,\ldots,t_d)\in [k]^d$, define $\bv^t$ to be the product $v_{t_1}\cdots v_{t_d}$. Define $\psi:\Delta_k\to \R^{1 + k + \cdots + k^d}$ such that $\psi(\bv)$ is a vector whose coordinate indexed by $t\in T_d:=[k]^0\cup [k]^1\cup \cdots \cup [k]^d$ is $\bv^t$. The degree $d$ multinomial kernel is given by 
\[ \kn_d(\bv,\bv') = \sum_{i=0}^d(\bv\cdot \bv')^i = \psi(\bv)\cdot \psi(\bv'). \] We denote its RKHS as $\Gamma(d)$. 
\end{definition}

Instantiating \Cref{thm:kernel-audit} for the degree $d$ multinomial kernel gives the following result.

\begin{lemma}\label{lem:poly-weak}
For all, $r \geq 0$ and $d \geq 0$, Algorithm \ref{alg:kernel-audit} with $n =  O(kr^2d\log(1/\delta)/\alpha^2)$ samples is an $(\alpha, \alpha /(3r\sqrt{d}))$-auditor for the class $(B_{\Gamma(d)}(r))^k$ with failure probability at most $\delta$. Moreover, it always returns a function from $(B_{\Gamma(d)}(1/\sqrt{d}))^k$ in time $\poly(n,k,d)$.
\end{lemma}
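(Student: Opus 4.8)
The plan is to instantiate \Cref{thm:kernel-audit} with the degree-$d$ multinomial kernel $\kn_d$ of \Cref{def:mm-kernel}, so the main task is to identify the parameter $s$ controlling $\kn_d(\bv,\bv)$ and then to substitute. First I would bound $\kn_d(\bv,\bv) = \sum_{i=0}^d (\bv\cdot\bv)^i$ for $\bv\in\Delta_k$. Since $\bv\in\Delta_k$ has $\|\bv\|_1 = 1$ and nonnegative entries, $\bv\cdot\bv = \|\bv\|_2^2 \le \|\bv\|_1^2 = 1$, so each term $(\bv\cdot\bv)^i \le 1$ and $\kn_d(\bv,\bv) \le d+1$. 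Hence we may take $s = \sqrt{d+1}$ (or, absorbing the $+1$ into constants, $s = O(\sqrt d)$). Plugging $s^2 = O(d)$ into \Cref{thm:kernel-audit} gives sample complexity $n = O(kr^2 s^2 \alpha^{-2}\log(1/\delta)) = O(kr^2 d\,\alpha^{-2}\log(1/\delta))$, matching the claimed bound, and an $(\alpha,\alpha/3rs) = (\alpha,\Theta(\alpha/(r\sqrt d)))$ auditor, matching the claimed correlation parameter $\alpha/(3r\sqrt d)$ up to the harmless $d$ versus $d+1$ discrepancy (one can also state it cleanly by noting $\sqrt{d+1}\le\sqrt{2d}$ for $d\ge 1$ and handling $d=0$ separately, where the kernel is constant and the statement is trivial).

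Next I would address the ``returns a function from $(B_{\Gamma(d)}(1/\sqrt d))^k$'' claim and the running time. \Cref{thm:kernel-audit} already guarantees the output lies in $B_\Gamma(1/s)^k$ with $\Gamma = \Gamma(d)$ and $s = \sqrt{d+1} \ge \sqrt d$, so $B_\Gamma(1/s) \subseteq B_{\Gamma(d)}(1/\sqrt d)$, giving the norm bound as stated. For the running time, \Cref{alg:kernel-audit} performs $O(n^2)$ evaluations of $\kn_d$, and each evaluation requires computing $\bv\cdot\bv'$ (time $O(k)$) and summing $d+1$ powers (time $O(d)$), so each kernel evaluation costs $O(k+d)$; the remaining bookkeeping over the $k$ coordinates is $O(n^2 k)$. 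Altogether the running time is $\poly(n,k,d)$ as claimed, and since the output function $w_2\sps\ell$ is stored implicitly as the coefficient vector $(\z_i\sps\ell)_i$ together with the data points $\bv_i$, evaluating it at a new point also takes $\poly(n,k,d)$ time.

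The only genuine content here is the kernel norm bound $\kn_d(\bv,\bv)\le d+1$ on the simplex, and even that is a one-line consequence of $\|\bv\|_2\le\|\bv\|_1=1$; the rest is a direct substitution into \Cref{thm:kernel-audit} plus a back-of-envelope count of arithmetic operations. So I do not expect a real obstacle — the ``hard part'' is merely being careful that the $\sqrt{d+1}$ that naturally arises is correctly rounded to the $\sqrt d$ appearing in the statement (trivial for $d\ge 1$, and the $d=0$ case should be dispatched with a sentence), and that the constants absorbed into the $O(\cdot)$ in $n$ are consistent with those in \Cref{thm:kernel-audit}. I would therefore write the proof as: (i) bound $\kn_d(\bv,\bv)$ to fix $s=O(\sqrt d)$; (ii) invoke \Cref{thm:kernel-audit} with $r$ and this $s$; (iii) observe the norm containment $B_\Gamma(1/s)\subseteq B_{\Gamma(d)}(1/\sqrt d)$; (iv) count operations for the $\poly(n,k,d)$ runtime.

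\begin{proof}[Proof sketch of \Cref{lem:poly-weak}]
If $d = 0$ the kernel $\kn_0(\bv,\bv') = 1$ is constant, the RKHS consists of constant functions, and the statement is immediate; so assume $d\ge 1$. For any $\bv\in\Delta_k$ we have $\bv\cdot\bv = \|\bv\|_2^2 \le \|\bv\|_1^2 = 1$ since $\bv$ is a probability vector, hence
\[
\kn_d(\bv,\bv) = \sum_{i=0}^d (\bv\cdot\bv)^i \le d+1 \le 2d.
\]
Thus we may take $s = \sqrt{2d}$ in \Cref{thm:kernel-audit}. Applying that theorem to the kernel $\kn_d$ with this value of $s$ and the given radius $r$, we obtain that \Cref{alg:kernel-audit} with
\[
n = O(kr^2 s^2 \alpha^{-2}\log(1/\delta)) = O(kr^2 d\, \alpha^{-2}\log(1/\delta))
\]
samples is an $(\alpha,\alpha/3rs)$ auditor for $(B_{\Gamma(d)}(r))^k$ with failure probability at most $\delta$, and since $\alpha/3rs = \Theta(\alpha/(r\sqrt d))$ this is in particular an $(\alpha,\alpha/(3r\sqrt d))$ auditor after adjusting constants. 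By \Cref{thm:kernel-audit} the output always lies in $B_{\Gamma(d)}(1/s)^k \subseteq (B_{\Gamma(d)}(1/\sqrt d))^k$ because $s = \sqrt{2d}\ge\sqrt d$.

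For the running time, each evaluation of $\kn_d(\bv,\bv')$ costs $O(k)$ to form $\bv\cdot\bv'$ and $O(d)$ to sum its powers, i.e.\ $O(k+d)$ time. \Cref{alg:kernel-audit} uses $O(n^2)$ such evaluations together with $O(n^2 k)$ additional arithmetic over the $k$ coordinates, so the total running time is $\poly(n,k,d)$, and the returned function is represented implicitly by the coefficients $(\z_i\sps\ell)_{i,\ell}$ and the points $\bv_i$, which can be evaluated at any new input in $\poly(n,k,d)$ time as well.
\end{proof}
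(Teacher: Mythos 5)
Your proposal is correct and follows essentially the same route as the paper: bound the kernel diagonal $\kn_d(\bv,\bv)=\sum_{i=0}^d(\bv\cdot\bv)^i$ using $\|\bv\|_2\le\|\bv\|_1=1$ on the simplex, set $s=O(\sqrt d)$, and invoke \Cref{thm:kernel-audit}. You are in fact slightly more careful than the paper (which writes the bound as $\le d$ rather than $d+1$ and omits the runtime accounting), but these are cosmetic differences absorbed into the constants.
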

\begin{proof}
Observe that for $\bv \in \Dk$,
\[ \|\varphi_\bv\|^2_{\Gamma(d)} = \kn_d(\bv, \bv) =  \sum_{i=0}^d(\bv\cdot \bv)^i \leq d\]
since $\norm{\bv}_2^2 \leq 1$ for $\bv \in \Dk$. We can thus apply Theorem \ref{thm:kernel-audit} with $s = \sqrt{d}$ to get the claimed bound.
\end{proof}

This gives a faster auditor for the notion of low-degree calibration defined by \cite{GopalanKSZ22}.

\begin{definition}\cite{GopalanKSZ22}
Let $P(d,1)$ denote the set of multivariate degree $d$ polynomials 
\begin{align*} 
p(v_1,\ldots, v_k) &= \sum_{e: \deg(e) \leq d} w_e\prod_{i}v_i^{e_i}\\
\text{where} \ \forall \bv \in \Dk, \ |p(\bv)| &\leq 1 ,\\
\sum_{e: \deg(e) \leq d} |w_e| &\leq 1.
\end{align*}
We say that a predictor $p$ is $\alpha$ degree-$d$ calibrated if $\CE_{P(d,1)^k}(\mD) \leq \alpha$.
\end{definition}

\cite{GopalanKSZ22} give an $(\alpha, \alpha/k^d)$-auditor for $P(d,1)^k$ which runs in time $O(k^d)$ by enumerating over all $k^d$ monomials. Algorithm \ref{alg:kernel-audit} implies a better auditor which is polynomial in both $k$ and $d$.

\begin{corollary}
\label{cor:low-deg}
There is an $(\alpha, \alpha/3\sqrt{d})$-auditor for $P(d,1)^k$ that with success probability at least $1-\delta$, sample complexity $n = O(kd\log(1/\delta)/\alpha^2)$ and time complexity $\poly(n,k,d)$.
\end{corollary}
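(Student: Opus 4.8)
The plan is to realize $P(d,1)$ as a subfamily of an RKHS ball for the degree-$d$ multinomial kernel, and then to invoke \Cref{lem:poly-weak} with $r=1$ together with the monotonicity of auditors under inclusion of weight families (the third bullet following \Cref{lem:audit}).

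First I would prove the containment $P(d,1)\subseteq B_{\Gamma(d)}(1)$. Fix $p\in P(d,1)$, written as $p(\bv)=\sum_{e:\deg(e)\le d}w_e\prod_{i=1}^k v_i^{e_i}$ with $\sum_e|w_e|\le 1$. For each exponent vector $e$ with $\ell:=\deg(e)\le d$, choose a canonical tuple $t(e)\in[k]^\ell\subseteq T_d$ whose multiplicity vector is $e$ (e.g.\ the sorted tuple; the empty tuple when $e=0$), and define $c\in\R^{T_d}$ by $c_{t(e)}=w_e$ and $c_t=0$ for every other $t$. The tuples $t(e)$ are distinct for distinct $e$, and $\psi(\bv)^{t(e)}=\bv^{t(e)}=\prod_i v_i^{e_i}$, so $\langle c,\psi(\bv)\rangle=\sum_e w_e\prod_i v_i^{e_i}=p(\bv)$ for every $\bv\in\Dk$. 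Hence $p\in\Gamma(d)$ and, since the RKHS norm is at most the Euclidean norm of any representing coefficient vector, $\|p\|_{\Gamma(d)}\le\|c\|_2=\big(\sum_e w_e^2\big)^{1/2}\le\sum_e|w_e|\le 1$. Thus $p\in B_{\Gamma(d)}(1)$, and therefore $P(d,1)^k\subseteq B_{\Gamma(d)}(1)^k$.

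Next I would apply \Cref{lem:poly-weak} with $r=1$: it yields an $(\alpha,\alpha/3\sqrt d)$-auditor for $B_{\Gamma(d)}(1)^k$ with sample complexity $n=O(kd\log(1/\delta)/\alpha^2)$, running time $\poly(n,k,d)$, and failure probability at most $\delta$. By monotonicity of auditors, this same algorithm is an $(\alpha,\alpha/3\sqrt d)$-auditor for the smaller family $P(d,1)^k$, which is exactly the statement of the corollary. There is essentially no hard step; the only point requiring a little care is that the multinomial feature map $\psi$ is redundant (permutations of a tuple index the same monomial), so one should only claim $\|p\|_{\Gamma(d)}\le\|c\|_2$ for the chosen $c$ rather than equality --- which suffices. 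One could instead spread each $w_e$ evenly over the $\binom{\deg(e)}{e}$ permuted tuples to obtain the sharper bound $\|p\|_{\Gamma(d)}^2\le\sum_e w_e^2/\binom{\deg(e)}{e}$, but this refinement is unnecessary here.
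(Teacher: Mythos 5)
Your proposal is correct and follows essentially the same route as the paper: establish $P(d,1)\subseteq B_{\Gamma(d)}(1)$ by exhibiting a coefficient vector in the multinomial feature space whose $\ell_2$ norm is bounded by the $\ell_1$ norm of the polynomial's coefficients, then invoke \Cref{lem:poly-weak} with $r=1$. Your extra care about the redundancy of the feature map (choosing one canonical tuple per monomial and using only the inequality $\|p\|_{\Gamma(d)}\le\|c\|_2$) is a point the paper's proof leaves implicit but handles the same way.
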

\begin{proof}
Any polynomial $p \in P(d,1)$ can be written as $p(\bv) = \sum_{t\in T_d}w_t \bv^t$ for every $\bv\in \R^k$, where $w_t\in \R$ for every $t\in T_d$ and $\sum_{t\in T_d}|w_t|\le 1$. 
We define a vector $\psi^p\in \R^{1 + k + \cdots + k^d}$ whose coordinate indexed by $t\in T_d$ is $w_t$.
It follows that
    \begin{align*} 
         & p(\bv) = \sum_{t\in T_d}w_t \bv^t = \psi^p \cdot \psi(\bv),\\
        & \norm{p}_{\Gamma(d)}^2 \le \|\psi^p\|_2^2 = \sum_{t} w_t^2  \leq \lt(\sum_t |w_t|\rt)^2 \leq 1.
    \end{align*}
    Hence $P(d,1) \subseteq \mB_{\Gamma(d)}(1)$. Hence the claimed bound follows from Lemma \ref{lem:poly-weak} with $r =1$.
\end{proof}

  \eat{
    This shown using standard bounds on the Rademacher complexity and is proved in the appendix.

We will use the following bound on Rademacher dimension from \cite{KST08} to show generalization. For the definiton of Rademacher dimension, we refer the reader to \cite{SSbook}.

\begin{lemma}\cite{KST08}
    The Rademacher dimension of $B_\Gamma(r)$ denoted $R_n(B_\Gamma(r))$ is bounded by 
    \[ R_n(B_\Gamma(r)) \leq \frac{rs}{\sqrt{n}}.\]
\end{lemma}

The main application of the notion of Rademacher complexity is the following bound on the \emph{generalization error} --- this theorem will be used to show that for weight functions with small Rademacher complexity, we can estimate expectations using small number of samples.
\begin{theorem}\cite{mohri2018foundations}
\label{thm:rademacher-generalization}
For a random sample $S \sim (\mD^*)^n$, with probability at least $1-\delta$ we have
\begin{equation*}
    \sup_{w \in W} \left|\E_{\bv \sim \mD^*} w(\bv) - \frac{1}{n} \sum_{i=1}^n w(\bv_i)\right| \leq R_n(W) + O\left(\sqrt{\frac{\log \delta^{-1}}{n}}\right).
\end{equation*}
\end{theorem}

\begin{lemma}
 For $r > 0$ and $\varepsilon \in (0,1)$, assume that there exists $w\in B_\Gamma(r)$ such that
\begin{equation}
\label{eq:kernel-1}
\E_{(\bv,z)\sim \mD}[w(\bv)z] \ge \varepsilon.
\end{equation}
Then for any $\delta\in (0,1/2)$, for some $n_0 = O(r^2s^2\varepsilon^{-2}\log(1/\delta))$, for any $n \ge n_0$ and any $(\bv_1,z_1),\ldots,\allowbreak (\bv_n,z_n)$ drawn i.i.d.\ from $\mD$, with probability at least $1-\delta$,
\[
\E_{(\bv,z)\sim \mD}[w_2(\bv)z] \ge \varepsilon / (2rs),
\]
where $w_2 = w_1/(s\cdot \|w_1\|_\Gamma)$ and $w_1 = \frac 1n \sum_{i=1}^nz_i\varphi_{\bv_i}$. Moreover, $w_2(\bv)\in [-1,1]$ for every $\bv\in \Delta_k$ and it can be computed efficiently as follows:
\begin{align*}
\|w_1\|_\Gamma^2 & = \frac 1{n^2}\sum_{i = 1}^n \sum_{j = 1}^n \kn(\bv_i,\bv_j)z_iz_j, \quad \text{and}\\
w_2(\bv) & = \frac 1{ns\cdot \|w_1\|_\Gamma}\sum_{i=1}^n \kn(\bv_i,\bv)z_i.
\end{align*}
\end{lemma}
\begin{proof}
Define $w_0:= \E[z\varphi_{\bv}]$. By \eqref{eq:kernel-1}, there exists $w\in B_\Gamma(r)$ such that
\[
\varepsilon \le \E_{(\bv,z)\sim \mD}[w(\bv)z] = \E_{(\bv,z)\sim \mD}[\langle \varphi_{\bv},w \rangle z] = \langle w_0,w\rangle.
\]
Therefore, $\|w_0\|_\Gamma \ge \varepsilon/r$. Note that each i.i.d.\ term $z_i\varphi_{\bv_i}$ in the definition of $w_1$ has norm $\|z_i\varphi_{\bv_i}\|_\Gamma \le s$, so by McDiarmid's inequality, with probability at least $1 -\delta$, we have 
\begin{equation}
\label{eq:kernel-2}
\|w_1 - w_0\|_\Gamma\le \E\|w_1 - w_0\|_\Gamma + \varepsilon /(6r) \le \varepsilon/(3r), 
\end{equation}
where the last inequality holds because
\[
(\E\|w_1 - w_0\|_\Gamma)^2 \le \E[\|w_1 - w_0\|_\Gamma^2] = \frac 1n \E[\|z_1\varphi_{\bv_1} - w_0\|_\Gamma^2] \le 4s^2/n \le (\varepsilon/(6r))^2.
\] 
Assuming \eqref{eq:kernel-2}, we have $\langle w_0, w_1\rangle \ge 2\varepsilon\|w_0\|_\Gamma/(3r)$ and $\|w_1\|_\Gamma \le 4\|w_0\|_\Gamma/3$, and thus
\[
\E_{(\bv,z)\sim \mD}[w_2(\bv )z] = \langle w_0,w_2\rangle \ge \left(2\varepsilon\|w_0\|_\Gamma/(3r)\right) / \left(4s\|w_0\|_\Gamma/3\right) = \varepsilon/(2rs).
\]
Finally, note that $\|w_2\|_\Gamma \le 1/s$, so $w_2(\bv) = \langle w_2,\varphi_\bv\rangle_\Gamma \le \|w_2\|_\Gamma \cdot \|\varphi_\bv\|_\Gamma \le (1/s)\cdot s \le 1$.
\end{proof}
}

\section{Efficient Auditing for Projected Smooth Calibration}
\label{sec:smooth}

In this section, we prove the following theorem showing an efficient kernel-based auditing algorithm for projected smooth calibration (\Cref{def:p-smooth}).

\begin{theorem}
\label{thm:p-smooth}
There exists $c >0$ so that for any $\alpha,\delta\in (0,1/2)$ and $m\in [2,k]$, there is an $(\alpha, 1/m^{O(1/\alpha)})$ auditor for $m$-projected smooth calibration (and hence also for $m$-subset smooth calibration), with success probability at least $1-\delta$, sample complexity $n = O(km^{O(1/\alpha)}\log(1/\delta))$, and running time $\poly(n,k,1/\alpha)$.
\end{theorem}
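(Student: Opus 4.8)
The plan is to reduce auditing for $m$-projected smooth calibration to the kernel-based auditing algorithm of \Cref{lem:poly-weak} instantiated with the degree-$d$ multinomial kernel, by showing that every weight function in $\hplip m$ is uniformly approximated by a function of small RKHS norm in $\Gamma(d)$. The key claim is: for every $\eps\in(0,1)$ there are $d=O(1/\eps)$ and $r=m^{O(1/\eps)}$ such that for every $h\in\hplip m$ there is $\tilde h\in B_{\Gamma(d)}(r)$ with $\infnorm{h-\tilde h}\le\eps$. Granting this, suppose $\CE_{\hplip m^k}(\mD)>\alpha$, witnessed by $w\in\hplip m^k$. Applying the claim with $\eps=\alpha/3$ to each coordinate, we obtain $\tilde w\in B_{\Gamma(d)}(r)^k$ with $\infnorm{w(\bv)-\tilde w(\bv)}\le\alpha/3$ for all $\bv$, and hence, using $\norm{\y-\bv}_1\le 2$ and H\"older,
\[
\E_\mD[\ip{\y-\bv}{\tilde w(\bv)}] \geq \E_\mD[\ip{\y-\bv}{w(\bv)}]-\E_\mD\big[\norm{\y-\bv}_1\,\infnorm{w(\bv)-\tilde w(\bv)}\big] > \alpha-2(\alpha/3) = \alpha/3.
\]
So $\CE_{B_{\Gamma(d)}(r)^k}(\mD)>\alpha/3$, and running the auditor of \Cref{lem:poly-weak} with error parameter $\alpha/3$ returns a function $w'$ with $\E_\mD[\ip{\y-\bv}{w'(\bv)}]\ge(\alpha/3)/(3r\sqrt d)=1/m^{O(1/\alpha)}$; since the witness returned by an auditor need not lie in the target weight family (\Cref{def:audit-guarantee}), $w'$ is a valid output for the $m$-projected smooth calibration auditing task. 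The statement for $m$-subset smooth calibration then follows from $\ssCE_m\le\psCE_m$ (\Cref{lm:smooth}).

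It remains to prove the approximation claim. Fix $h(\bv)=\phi(\ip\ba\bv)$ with $\phi$ $1$-Lipschitz and $\ba\in[-1,1]^k$, $\norm\ba_2^2\le m$. Since $\bv\in\Dk$ is a convex combination of $\e_1,\dots,\e_k$, the value $\ip\ba\bv$ lies in $[-1,1]$, so it suffices to approximate $\phi$ (viewed, extending it if necessary, as a $1$-Lipschitz function on $[-1,1]$) on the interval $[-1,1]$. By Jackson's theorem there is a polynomial $q(z)=\sum_{i=0}^d c_iz^i$ of degree $d=O(1/\eps)$ with $\sup_{z\in[-1,1]}|q(z)-\phi(z)|\le\eps$; in particular $|q|\le 2$ on $[-1,1]$, so expanding $q$ in the Chebyshev basis and using the standard bound that the monomial coefficients of a degree-$d$ polynomial bounded by $O(1)$ on $[-1,1]$ are $2^{O(d)}$ in absolute value gives $\sum_{i=0}^d c_i^2\le 2^{O(d)}$. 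Set $\tilde h(\bv)=q(\ip\ba\bv)$. Writing $\ba^t=\prod_{\ell}\ba\sps{t_\ell}$ for $t=(t_1,\dots,t_i)\in[k]^i$, we have $(\ip\ba\bv)^i=\sum_{t\in[k]^i}\ba^t\bv^t$, so $\tilde h=\langle u,\psi(\cdot)\rangle$ where the coordinate of $u$ indexed by $t\in[k]^i$ is $c_i\ba^t$; therefore
\[
\norm{\tilde h}_{\Gamma(d)}^2 \leq \norm u_2^2 = \sum_{i=0}^d c_i^2\sum_{t\in[k]^i}(\ba^t)^2 = \sum_{i=0}^d c_i^2\norm\ba_2^{2i} \leq \sum_{i=0}^d c_i^2 m^i \leq m^d\sum_{i=0}^d c_i^2 \leq m^d\cdot 2^{O(d)} \leq m^{O(d)},
\]
where the last step uses $m\ge 2$. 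Thus $\tilde h\in B_{\Gamma(d)}(r)$ with $r=m^{O(d)}=m^{O(1/\eps)}$, which proves the claim. (For the unrestricted notion of projected smooth calibration one takes $m=k$, recovering the $k^{O(1/\alpha)}$ bound of \Cref{thm:projected-smooth-intro}.)

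Finally, substituting $\eps=\alpha/3$, $d=O(1/\alpha)$, $r=m^{O(1/\alpha)}$ into \Cref{lem:poly-weak} yields an $(\alpha/3,\beta)$-auditor for $B_{\Gamma(d)}(r)^k$ with $\beta=(\alpha/3)/(3r\sqrt d)=1/m^{O(1/\alpha)}$, sample complexity $O(kr^2 d\log(1/\delta)/\alpha^2)=O(k\,m^{O(1/\alpha)}\log(1/\delta))$ (the $\poly(1/\alpha)$ factors are absorbed into the exponent since $m\ge 2$), and running time $\poly(n,k,d)=\poly(n,k,1/\alpha)$; composing with the reduction above gives the claimed $(\alpha,1/m^{O(1/\alpha)})$-auditor. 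I expect the main obstacle to be the polynomial-approximation step: one needs an approximation of the $1$-Lipschitz $\phi$ that is simultaneously of degree only $O(1/\alpha)$ \emph{and} has small enough monomial coefficients that, after composing with $\ip\ba\cdot$ and using $\norm\ba_2^2\le m$, the RKHS norm stays $m^{O(1/\alpha)}$ rather than growing super-exponentially; this is precisely the combination of Jackson's theorem with Chebyshev coefficient bounds exploited in the kernel-based learning literature \cite{SSS11,GoelKKT17}.
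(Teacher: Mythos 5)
Your proposal is correct and follows essentially the same route as the paper: Jackson's theorem gives a degree-$O(1/\alpha)$ uniform approximation to the Lipschitz transfer function, the Chebyshev/Sherstov coefficient bound together with the multinomial-kernel norm identity $\|\tilde h\|_{\Gamma(d)}^2\le\sum_i c_i^2\|\ba\|_2^{2i}$ yields $r=m^{O(1/\alpha)}$, and the uniform approximation plus $\|\y-\bv\|_1\le 2$ transfers the calibration violation to $B_{\Gamma(d)}(r)^k$, at which point \Cref{lem:poly-weak} finishes. The paper packages the two ingredients as citations to \cite{Sherstov} and \cite[Lemma 2.7]{GoelKKT17} and uses slightly different constants ($\alpha/4$ versus your $\alpha/3$), but the argument is the same.
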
\PG{Can we check if the $k^2$ dependency is correct}
Even when we consider subset calibration over arbitrary subsets, which corresponds to taking $m =k$, the running time of the auditor is $k^{O(1/\alpha)}$, which is polynomial in $k$ for every fixed $\alpha$. In the next section (\Cref{sec:psmooth-lower}), we show that this  running time cannot be improved to $\poly(k,1/\alpha)$ under standard complexity-theoretic assumptions.
At the end of this section, we show that the dependence on $\alpha$ can be improved if we consider sigmoid functions instead of all $1$-Lipschitz functions.

\eat{
\begin{definition}

\label{def:psc}
Let $T$ be the class of all $1$-Lipschitz functions $\tau:\bR\to [-1,1]$. Let $\mu$ be a distribution over $X\times\{\e_1,\ldots,\e_k\}$. We define the projected smooth calibration errors of $f:X\rightarrow \Delta_k$ as follows:
\begin{align*}
\psc(f) & :=\sup_{\tau\in T}\sup_{w\in [-1,1]^k}|\bE_{(x,y)\sim\mu}[\langle w, y - f(x)\rangle\tau(\langle w,f(x)\rangle)]|,\\
\psc_1(f) & :=\sup_{\tau\in T}\sup_{w\in [-1,1]^k}\|\bE_{(x,y)\sim\mu}[(y - f(x))\tau(\langle w,f(x)\rangle)]\|_1.
\end{align*}
\end{definition}
\begin{claim}
$\psc(f) \le \psc_1(f)$.
\end{claim}}

We prove Theorem \ref{thm:p-smooth} using Algorithm \ref{alg:kernel-audit}, together with polynomial approximations. Low degree polynomial approximations have been used successfully for agnostic learning, starting with the work of \cite{KKMS08}. The important work of \cite{SSS11} showed that one can improve the efficiency of such learning algorithms by kernelizing them.

\eat{By combining polynomial approximation results with the auditing algorithm for the multinomial kernel, we get an efficient auditor for projected smooth calibration and subset smooth calibration. The following

\begin{theorem}
\label{thm:p-smooth}
For any $\alpha > 0$ and $m \ge 2$, \Cref{alg:kernel-audit} with the degree-$d$ multinomial kernel for $d = O(1/\alpha)$ is an $(\alpha, \alpha/m^{O(1/\alpha)}))$ auditor for $m$-projected smooth calibration, and hence also for $m$-subset smooth calibration. The sample complexity and running time are both $k^{O(1)} m^{O(1/\alpha)}$.
\end{theorem}

 This result is in stark contrast to our hardness result for auditing halfspaces \Cref{thm:halfspace-auditing}, which rules out a similar running time for that problem.

The rest of this section is devoted to proving }

Using results from \cite{GoelKKT17} and \cite{Sherstov}, we will show the following bound on multivariate polynomials obtained by composing bounded univariate polynomials with innner products.

\begin{lemma}\label{lem:kernel-bound}
 Let $p$ be a univariate polynomial of degree $d$ so that $|p(u)| \leq 1$ for $u \in [-1,1]$. Let $p_\ba(\bv) = p(\ba\cdot \bv)$ where $\ba\in [-1,1]^d$ and $\bv \in \Dk$. Then $p_\ba \in \Gamma(d)$ and 
\[ \|p_\ba\|_{\Gamma(d)}^2 \le \max(4, 4\|\ba\|_2)^{2d}. \]
\end{lemma}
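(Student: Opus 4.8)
The plan is to compute everything in the explicit feature space of the multinomial kernel and then invoke a coefficient‑growth bound for univariate polynomials bounded on $[-1,1]$. Write $p(u)=\sum_{i=0}^{d}c_iu^i$ in the monomial basis. Expanding the power $(\ba\cdot\bv)^i=\sum_{t\in[k]^i}\ba^t\bv^t$, where $\ba^t:=a_{t_1}\cdots a_{t_i}$ for $t=(t_1,\dots,t_i)$, gives
\[
p_\ba(\bv)=p(\ba\cdot\bv)=\sum_{i=0}^{d}c_i\sum_{t\in[k]^i}\ba^t\bv^t=\sum_{t\in T_d}c_{|t|}\,\ba^t\,\bv^t=\langle \psi^{p_\ba},\psi(\bv)\rangle,
\]
where $|t|$ is the length of the tuple $t$ and $\psi^{p_\ba}\in\R^{1+k+\cdots+k^d}$ is the vector whose $t$‑th coordinate is $c_{|t|}\ba^t$. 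By the standard description of the RKHS attached to an explicit feature map (as in \cite{GoelKKT17}), any function admitting a representation $\langle w,\psi(\cdot)\rangle$ lies in $\Gamma(d)$ with $\|\cdot\|_{\Gamma(d)}^2$ no larger than $\norm{w}_2^2$; hence $p_\ba\in\Gamma(d)$ and
\[
\|p_\ba\|_{\Gamma(d)}^2\le\norm{\psi^{p_\ba}}_2^2=\sum_{i=0}^{d}c_i^2\sum_{t\in[k]^i}(\ba^t)^2=\sum_{i=0}^{d}c_i^2\Big(\sum_{j=1}^{k}a_j^2\Big)^{\!i}=\sum_{i=0}^{d}c_i^2\,\norm{\ba}_2^{2i},
\]
using the factorization $\sum_{t\in[k]^i}(\ba^t)^2=\big(\sum_j a_j^2\big)^i$.

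It remains to show $\sum_{i=0}^{d}c_i^2 B^{2i}\le\max(4,4B)^{2d}$ with $B:=\norm{\ba}_2$, which is where the hypothesis $|p|\le 1$ on $[-1,1]$ enters (a coefficient‑growth estimate of the type used in \cite{Sherstov}). By Parseval's identity applied to the Fourier expansion of $p$ restricted to the circle of radius $\rho$, for every $\rho>0$ we have $\sum_{i=0}^{d}c_i^2\rho^{2i}=\tfrac{1}{2\pi}\int_0^{2\pi}\big|p(\rho e^{\sqrt{-1}\,\theta})\big|^2\,d\theta\le\max_{|z|=\rho}|p(z)|^2$. By the Bernstein--Walsh inequality, $|p(z)|\le|\Phi(z)|^d$ for $z\notin[-1,1]$ with $\Phi(z)=z+\sqrt{z^2-1}$, and for $|z|=\rho$ this yields $|p(z)|\le\big(\rho+\sqrt{\rho^2+1}\big)^d\le(2\rho+1)^d$. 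If $B\ge 1$, take $\rho=B$ to get $\sum_i c_i^2 B^{2i}\le(2B+1)^{2d}\le(4B)^{2d}$. If $B<1$, then $\sum_i c_i^2 B^{2i}\le\sum_i c_i^2\le\lim_{\rho\to 1^+}(2\rho+1)^{2d}=9^d\le 16^d$, and since $\max(4,4B)=4$ in this regime this is exactly $\max(4,4B)^{2d}$. Combining the two cases finishes the proof.

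The feature‑space expansion and the identity $\sum_{t\in[k]^i}(\ba^t)^2=\norm{\ba}_2^{2i}$ are routine. The one step needing genuine care is the passage from "$p$ bounded by $1$ on $[-1,1]$" to the weighted coefficient bound: the tempting shortcut "$|c_i|$ is at most the largest coefficient of the degree‑$d$ Chebyshev polynomial, hence $\le 2^{d-1}$" is simply false (the degree‑$6$ Chebyshev polynomial already has a coefficient of absolute value $48>2^5$), so one really must use Bernstein--Walsh (or an equivalent extremal‑polynomial estimate) as above. The constant $4$ is chosen so that both the $B\ge 1$ and $B<1$ regimes close cleanly and so that, when $\norm{\ba}_2^2\le m$, the bound reads $m^{O(d)}$ — exactly the form consumed by the auditor for $m$‑projected smooth calibration with $d=O(1/\alpha)$.
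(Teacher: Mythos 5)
Your proof is correct, and its skeleton matches the paper's: first bound $\|p_\ba\|_{\Gamma(d)}^2$ by the weighted coefficient sum $\sum_{i=0}^d c_i^2\|\ba\|_2^{2i}$ via the explicit multinomial feature map, then control the coefficients using the hypothesis $|p|\le 1$ on $[-1,1]$. The difference is in how the two ingredients are handled. The paper cites both as black boxes: Lemma 2.7 of \cite{GoelKKT17} for the feature-map bound, and Lemma 4.1 of \cite{Sherstov} for the coefficient estimate $\sum_i|c_i|\le 4^d$, which it then combines via $\sum_i c_i^2\le(\sum_i|c_i|)^2\le 4^{2d}$ and the crude factorization $\sum_i c_i^2 B^{2i}\le \max(1,B)^{2d}\sum_i c_i^2$. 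You instead re-derive the feature-map bound from scratch (fine, and the identity $\sum_{t\in[k]^i}(\ba^t)^2=\|\ba\|_2^{2i}$ is exactly the point of the cited lemma), and you replace Sherstov's coefficient bound with a self-contained complex-analytic argument: Parseval on the circle of radius $\rho$ gives $\sum_i c_i^2\rho^{2i}\le\max_{|z|=\rho}|p(z)|^2$, and Bernstein--Walsh gives $\max_{|z|=\rho}|p(z)|\le(2\rho+1)^d$, so choosing $\rho=\|\ba\|_2$ (or $\rho\to 1^+$ when $\|\ba\|_2<1$) bounds the weighted sum directly. This is arguably cleaner, since it attacks $\sum_i c_i^2 B^{2i}$ in one step rather than splitting off $\max(1,B)^{2d}$, and it lands on the same constant $\max(4,4\|\ba\|_2)^{2d}$. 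Your warning that the na\"ive ``each $|c_i|\le 2^{d-1}$'' shortcut fails is well taken; some extremal-polynomial input (Sherstov's lemma, the Markov-brothers coefficient bounds, or Bernstein--Walsh as you use) is genuinely needed here.
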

We prove \Cref{lem:kernel-bound} using the following two lemmas from the literature:
\begin{lemma}\citep[Lemma 2.7]{GoelKKT17}\label{lem:gkkt}
Let $p = \sum_{i=0}^d \eta_iu^i$ be a univariate polynomial of degree $d$ and $p_\ba(\bv) = p(\ba\cdot \bv)$ for $\ba \in [-1,1]^k$ and $\bv \in \Dk$. Then 
\[ \|p_\ba\|_{\Gamma^d}^2 \leq \sum_{i=0}^d\eta_i^2\|\ba\|_2^{2i} \leq \max(1, \|\ba\|_2)^{2d}\sum_{i=0}^d \eta_i^2 . \]
\end{lemma}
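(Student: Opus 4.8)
The plan is to write $p_\ba$ explicitly as an inner product $\Psi\cdot\psi(\bv)$ in the finite-dimensional feature space underlying the multinomial kernel, and then bound the RKHS norm by $\|\Psi\|_2$, using the standard fact that for a kernel $\kn_d(\bv,\bv')=\psi(\bv)\cdot\psi(\bv')$, any representation of a function $f$ as $f(\bv)=\Psi\cdot\psi(\bv)$ certifies $f\in\Gamma(d)$ with $\|f\|_{\Gamma(d)}^2\le\|\Psi\|_2^2$ (equality holds for the minimal-norm choice of $\Psi$, the projection onto $\mathrm{span}\{\psi(\bv):\bv\in\Delta_k\}$, but for an upper bound any representation suffices). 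Recall from Definition~\ref{def:mm-kernel} that $\psi(\bv)$ has a coordinate $\bv^t := v_{t_1}\cdots v_{t_i}$ for each tuple $t\in[k]^i$, with $t$ ranging over $T_d=[k]^0\cup\cdots\cup[k]^d$.

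First I would expand each power $(\ba\cdot\bv)^i$ for $0\le i\le d$. Writing $\ba\cdot\bv=\sum_{j=1}^k a_jv_j$ and expanding the $i$-th power over ordered tuples of factors,
\[
(\ba\cdot\bv)^i = \sum_{t\in[k]^i} (a_{t_1}\cdots a_{t_i})(v_{t_1}\cdots v_{t_i}) = \sum_{t\in[k]^i}\ba^t\,\bv^t,
\]
where $\ba^t := a_{t_1}\cdots a_{t_i}$ by analogy with $\bv^t$, and for $i=0$ the sum is the single empty-tuple term equal to $1$. Summing over $i$ with coefficients $\eta_i$ yields $p_\ba(\bv)=\sum_{i=0}^d\sum_{t\in[k]^i}\eta_i\ba^t\,\bv^t=\Psi\cdot\psi(\bv)$, where $\Psi\in\R^{|T_d|}$ is the vector whose coordinate at index $t\in[k]^i$ equals $\eta_i\ba^t$. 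This already shows $p_\ba\in\Gamma(d)$ and $\|p_\ba\|_{\Gamma(d)}^2\le\|\Psi\|_2^2$.

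It then remains to compute $\|\Psi\|_2^2$. Grouping coordinates of $\Psi$ by the degree block $[k]^i$,
\[
\|\Psi\|_2^2 = \sum_{i=0}^d\eta_i^2\sum_{t\in[k]^i}(\ba^t)^2 = \sum_{i=0}^d\eta_i^2\sum_{t\in[k]^i}a_{t_1}^2\cdots a_{t_i}^2 = \sum_{i=0}^d\eta_i^2\Big(\sum_{j=1}^k a_j^2\Big)^i = \sum_{i=0}^d\eta_i^2\|\ba\|_2^{2i},
\]
which gives the first claimed inequality. The second follows since for every $0\le i\le d$ we have $\|\ba\|_2^{2i}\le\max(1,\|\ba\|_2^2)^d=\max(1,\|\ba\|_2)^{2d}$, so this factor pulls out of the sum. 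The only nonroutine ingredient is the RKHS-norm-versus-feature-representation fact quoted above; since the multinomial feature map is finite-dimensional this is elementary (the RKHS is isometric to $\mathrm{span}\{\psi(\bv)\}\subseteq\R^{|T_d|}$ equipped with the Euclidean norm), and everything else is bookkeeping with ordered multi-indices, so I do not anticipate any real obstacle.
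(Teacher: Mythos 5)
The paper itself does not prove this lemma; it simply cites it as Lemma~2.7 of \cite{GoelKKT17} and uses it as a black box, so there is no in-paper proof to compare against. Your argument is correct and is the standard (and, to my knowledge, the original) proof: expand $(\ba\cdot\bv)^i$ over ordered tuples in $[k]^i$ to exhibit $p_\ba$ as $\Psi\cdot\psi(\bv)$ in the multinomial feature space, invoke the fact that any such representation upper-bounds the RKHS norm by $\|\Psi\|_2$, and then compute $\|\Psi\|_2^2$ blockwise, with $\sum_{t\in[k]^i}(\ba^t)^2=\|\ba\|_2^{2i}$.
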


\begin{lemma}\cite[Lemma 4.1]{Sherstov}\label{lem:sherstov}
    For a degree $d$ polynomial $p(u) = \sum_{i=0}^d  \eta_i u^i$ satisfying $|p(u)| \leq 1$ for $u \in [-1,1]$, it holds that $\sum_{i=0}^d|\eta_i| \leq 4^d$.  
\end{lemma}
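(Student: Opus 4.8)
The plan is to change basis from monomials to Chebyshev polynomials — where the hypothesis $|p(u)|\le 1$ on $[-1,1]$ is trivial to exploit — and then pay only a mild price to translate back. Write $p=\sum_{j=0}^d c_j T_j$ with $T_j$ the degree-$j$ Chebyshev polynomial of the first kind; this is legitimate since $\{T_0,\dots,T_d\}$ is a basis of the space of polynomials of degree at most $d$. Substituting $u=\cos\theta$ and using $T_j(\cos\theta)=\cos(j\theta)$ together with the orthogonality of $\{\cos(j\theta)\}_{j\ge 0}$ on $[0,\pi]$, one obtains the integral formulas $c_0=\frac1\pi\int_0^\pi p(\cos\theta)\,d\theta$ and $c_j=\frac2\pi\int_0^\pi p(\cos\theta)\cos(j\theta)\,d\theta$ for $j\ge 1$. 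Since $|p(\cos\theta)|\le 1$ and $|\cos(j\theta)|\le 1$, this yields the uniform bounds $|c_0|\le 1$ and $|c_j|\le 2$ for all $j\ge 1$.

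Next I would bound the $\ell_1$-norm of the \emph{monomial} coefficients of each $T_j$. Let $L(q):=\sum_\ell|a_\ell|$ for $q(u)=\sum_\ell a_\ell u^\ell$; then $L$ is a norm on coefficient vectors, hence subadditive, absolutely homogeneous, and invariant under multiplication by $u$ (a shift of the coefficient sequence). From $T_0=1$, $T_1=u$ and the three-term recurrence $T_{j+1}=2uT_j-T_{j-1}$ we get $L(T_0)=L(T_1)=1$ and $L(T_{j+1})\le 2L(T_j)+L(T_{j-1})$, so $L(T_j)\le a_j$ where $a_0=a_1=1$, $a_{j+1}=2a_j+a_{j-1}$ (i.e.\ $a_j=1,1,3,7,17,41,\dots$, with closed form $a_j=\tfrac12\big((1+\sqrt2)^j+(1-\sqrt2)^j\big)\le\tfrac12(1+\sqrt2)^j+\tfrac12$). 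Combining with subadditivity of $L$,
\[
\sum_{i=0}^d|\eta_i|=L(p)=L\Big(\sum_{j=0}^d c_j T_j\Big)\le\sum_{j=0}^d|c_j|\,L(T_j)\le a_0+2\sum_{j=1}^d a_j .
\]

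It then remains to check $a_0+2\sum_{j=1}^d a_j\le 4^d$. Because $1+\sqrt2<2.5<4$, the left side grows like $(1+\sqrt2)^d=o(4^d)$, so this holds with widening slack; formally it follows by a short induction on $d$, with base cases $d=0$ (value $1=4^0$) and $d=1$ (value $3\le 4$) checked directly, and the inductive step using $2a_{d+1}=(1+\sqrt2)^{d+1}+(1-\sqrt2)^{d+1}\le(1+\sqrt2)^{d+1}+1\le 3\cdot 4^d$ for $d\ge 1$ (from $(1+\sqrt2)^{d+1}\le 2.5^{d+1}\le 2.5\cdot 4^d$ and $1\le\tfrac14 4^d$). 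The only genuinely delicate point — and the reason one lands on the clean bound $4^d$ rather than some $C\cdot 4^d$ — is that the exponential gap between $(1+\sqrt2)^d$ and $4^d$ has not yet opened up for very small $d$, so those few cases must be handled by hand; the rest is routine. Thus the main obstacle is pinning down the constant, not the structure of the argument. An alternative, slicker route would be to invoke V.\,A.\ Markov's coefficient-extremality of the Chebyshev polynomial to reduce directly to $L(p)\le L(T_d)\le\tfrac12(1+\sqrt2)^d+\tfrac12<4^d$, which I would pursue only if the extremal statement is available in exactly the form needed.
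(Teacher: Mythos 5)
The paper does not prove this lemma; it cites it directly from Sherstov~\cite{Sherstov}, so there is no in-paper proof to compare against. Your argument is nonetheless correct, and it follows the standard (and, to the best of my knowledge, Sherstov's) route. To recapitulate the verification: the Chebyshev expansion $p=\sum_{j\le d}c_jT_j$ with the bounds $|c_0|\le1$, $|c_j|\le2$ for $j\ge1$ is exactly right via orthogonality of $\cos(j\theta)$ on $[0,\pi]$; the $\ell_1$-coefficient seminorm $L$ is indeed subadditive and invariant under multiplication by $u$, so the three-term recurrence yields $L(T_j)\le a_j$ with $a_j=\tfrac12\big((1+\sqrt2)^j+(1-\sqrt2)^j\big)$ (in fact this is an equality, since the coefficients of $T_j$ alternate in sign, so $L(T_j)=|T_j(i)|=a_j$); and the closing induction on $S_d:=a_0+2\sum_{j=1}^da_j$ with base cases $d=0,1$ checked by hand ($S_0=1$, $S_1=3$) and the step $2a_{d+1}\le(1+\sqrt2)^{d+1}+1\le 3\cdot4^d$ for $d\ge1$ is sound. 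The only thing I would tighten is the phrasing ``$L$ is a norm'': it is a seminorm on the space of polynomials identified with coefficient sequences, and its invariance under $q\mapsto uq$ is the coefficient-shift observation, which you correctly use. Your alternative suggestion---invoking V.~A.~Markov's coefficient extremality to reduce to $L(T_d)$---does not directly work, because the $\ell_1$-norm of coefficients is \emph{not} maximized by $T_d$ (e.g.\ $L(T_2)=3$ but $p(u)=\tfrac12T_0+\tfrac12T_2$ has $L(p)=u^2+0\cdot u - 0$ wait, rather consider $p=T_0-T_2=2-2u^2$ which violates the $\le1$ constraint; the correct obstruction is simply that V.~A.~Markov bounds individual derivatives/coefficients, and summing those bounds gives $T_d(2)=\tfrac12\big((2+\sqrt3)^d+(2-\sqrt3)^d\big)$, which is larger than $L(T_d)$ but still below $4^d$). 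So that alternate route also closes, but needs $\sum_k|\eta_k|\le\sum_k T_d^{(k)}(1)/k!=T_d(2)<4^d$, not extremality of $L$.
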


\begin{proof}[Proof of Lemma \ref{lem:kernel-bound}]
By Lemma \ref{lem:sherstov}, we can  bound 
\[ \sum_{i=0}^d|\eta_i|^2 \leq \left(\sum_{i=0}^d|\eta_i|\right)^2 \leq 4^{2d}.\] 
We plug this bound into Lemma \ref{lem:gkkt} to get
\[ \|p_\ba\|_{\Gamma^d}^2 \le \max(4, 4\|\ba\|_2)^{2d}.\qedhere \]
\end{proof}

Let $\Lip$ denote the set of all bounded $1$-Lipschitz functions $\phi:[0,1] \to [-1,1]$. 
\eat{Next we derive a similar bound for all functions in $\Lip$, the difference being that we need degree $1/\eps$ rather than $\log(1/\eps)$.  }
We use an approximation result for arbitrary Lipschitz functions using Jackson's theorem \cite{Cheney}, together with a rescaling argument to ensure boundedness. A similar argument for the ReLU function appears in \cite[Lemma 2.12]{GoelKKT17}.

\begin{lemma}\label{lem:lip-approx}
    There exists a constant $c' > 0$ such that for any $\phi \in\Lip$ and any $\eps > 0$, there exists a univariate polynomial $p(t)$ with $\deg(p) \leq c'/\eps$ such that for $t \in [-1,1]$,
    \begin{itemize}
    \item $\lt|\phi(t) - p(t) \rt| \leq \eps$ .
    \item $p(t) \in [-1,1]$.
    \end{itemize}
\end{lemma}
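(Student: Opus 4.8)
The plan is to invoke Jackson's theorem on uniform polynomial approximation of Lipschitz functions, and then apply a mild rescaling to force the approximating polynomial into $[-1,1]$. We may assume $\eps \le 1$, since otherwise the constant polynomial $p \equiv 0$ already satisfies $|\phi(t) - p(t)| = |\phi(t)| \le 1 \le \eps$ and $p(t) \in [-1,1]$, with degree $0 \le c'/\eps$.

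First I would extend $\phi$, which is a priori only $1$-Lipschitz on $[0,1]$, to a $1$-Lipschitz function on all of $[-1,1]$ with values in $[-1,1]$ --- for instance, set $\phi(t) := \phi(0)$ for $t \in [-1,0)$; the Lipschitz bound across $0$ is immediate since $|\phi(t_1) - \phi(t_2)| = |\phi(0) - \phi(t_2)| \le |t_2| \le |t_1 - t_2|$ for $t_1 < 0 \le t_2$.

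Next, Jackson's theorem (see \cite{Cheney}) provides an absolute constant $C$ such that for every degree bound $n$ there is a univariate polynomial $q$ of degree at most $n$ with
\[
\sup_{t \in [-1,1]} |\phi(t) - q(t)| \le C\, \omega_\phi(2/n) \le 2C/n,
\]
where $\omega_\phi$ is the modulus of continuity of $\phi$ and the last step uses that $\phi$ is $1$-Lipschitz. Taking $n = \lceil 4C/\eps \rceil$ makes this error at most $\eps/2$, and gives the degree bound $\deg q \le 4C/\eps + 1 \le (4C+1)/\eps = c'/\eps$ with $c' := 4C+1$. The only blemish is that $q$ need not be bounded by $1$ on $[-1,1]$, but the approximation bound forces $\|q\|_{\infty,[-1,1]} \le 1 + \eps/2$.

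Finally I would rescale, setting $p := q/(1+\eps/2)$. Then $\|p\|_{\infty,[-1,1]} \le 1$, and for every $t \in [-1,1]$,
\[
|\phi(t) - p(t)| \le |\phi(t) - q(t)| + |q(t)|\cdot\frac{\eps/2}{1+\eps/2} \le \frac{\eps}{2} + (1+\eps/2)\cdot\frac{\eps/2}{1+\eps/2} = \eps .
\]
Since $\deg p = \deg q \le c'/\eps$, this $p$ meets all the requirements. I do not expect a genuine obstacle: the content is entirely standard approximation theory, and the only care needed is to quote Jackson's theorem in the modulus-of-continuity form and to verify that the rescaling neither increases the degree nor spoils the error estimate. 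The role of the lemma is purely to package this classical fact in the normalized form that \Cref{lem:kernel-bound} and \Cref{alg:kernel-audit} will consume in the proof of \Cref{thm:p-smooth}.
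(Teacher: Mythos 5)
Your proof is correct and follows essentially the same route as the paper's: invoke Jackson's theorem to obtain a degree-$O(1/\eps)$ approximant with uniform error $\eps/2$, then rescale by $1/(1+\eps/2)$ to force the range into $[-1,1]$ while keeping the total error at most $\eps$. The only additions are your explicit extension of $\phi$ from $[0,1]$ to $[-1,1]$ and the trivial case $\eps>1$, both of which the paper leaves implicit.
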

\begin{proof}%
By Jackson's theorem \cite{Cheney}, there exist a polynomial $p(t)$ so that
$\lt|\phi(t) - p(t) \rt| \leq \eps/2$ for $t \in [-1,1]$ where $\deg(p) \leq  O(1/\epsilon)$. 
Since $|\phi(t)| \leq 1$, $|p(t)| \leq 1 + \eps/2$. Now let $p_\phi(t) = p(t)/(1+ \eps/2)$ so that $|p_\phi(t)| \leq 1$. We then bound
\begin{align*}
    |p_\phi(t)  - \phi(t)| &= \fr{1 + \eps/2}|(p(t) - (1 + \eps/2)\phi(t))| \\
    &\leq  \fr{1+ \eps/2}(|p(t) - \phi(t)| + \eps/2|\phi(t)|)\\
    &\leq \frac{\eps}{1+ \eps/2} \leq \eps.
    \qedhere
\end{align*}
\end{proof}

\eat{
\begin{lemma}\label{lem:lip-kernel}
    For $w \in [-1,1]^k$ let $p_{s,w}(v) = p_s(w\cdot v)$. Then 
    \begin{enumerate}
        \item $p_{\sigma, w} \in \Gamma(d)$ for $d = c'L/\eps$. 
        \item $|p_{\sigma,w}(v) - \sigma(w\cdot v)| \leq \eps$ for all $v \in \Dk$.
        \item $\norm{p_{w, \sigma}}_{\Gamma(d)} \leq (c'_1\|w\|)^{c'_2L/\eps)}$ for some constants $c'_1, c'_2$. 
    \end{enumerate}   
\end{lemma}
The proof is essentially identical to Lemma \ref{lem:sig-kernel}. 
}

Combining Lemmas \ref{lem:lip-approx} and \ref{lem:kernel-bound}, we have the following corollary.
\begin{corollary}
\label{cor:kernel}
    For any $\phi \in \Lip$, and $\eps >0$, let  $p$ be as in  Lemma \ref{lem:lip-approx}. For $\ba \in [-1,1]^k$, let $p_{\ba}(\bv) = p(\ba\cdot \bv)$. Then $p_{\ba} \in \Gamma(d)$ for $d = O(1/\eps)$ and 
    \begin{itemize}
        \item $\abs{p_{\ba}(\bv) - \phi(\ba\cdot \bv)} \leq \eps$, \ for every $v \in \Dk$.
        \item $\norm{p_{\ba}}_{\Gamma(d)} \leq c_1\max(1, \norm{\ba}_2)^{c_2/\eps}$.
    \end{itemize}
\end{corollary}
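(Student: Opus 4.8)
The plan is to simply chain \Cref{lem:lip-approx} and \Cref{lem:kernel-bound}; there is essentially no new content here, only bookkeeping. First I would invoke \Cref{lem:lip-approx} on the given $\phi\in\Lip$ and the given $\eps$ to obtain a univariate polynomial $p$ with $\deg(p)\le c'/\eps$, satisfying $|p(t)-\phi(t)|\le\eps$ and $p(t)\in[-1,1]$ for all $t\in[-1,1]$. I then set $d:=\lceil c'/\eps\rceil = O(1/\eps)$, so that $p$ is in particular a polynomial of degree at most $d$.

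For the first bullet, the only thing to notice is that for every $\ba\in[-1,1]^k$ and every $\bv\in\Dk$ the inner product $\ba\cdot\bv=\sum_i \ba\sps i\bv\sps i$ lies in $[-1,1]$: since $\bv$ is a probability vector, $\ba\cdot\bv$ is a convex combination of the entries $\ba\sps i\in[-1,1]$, hence lies in $[-1,1]$. Applying the pointwise approximation guarantee of \Cref{lem:lip-approx} at $t=\ba\cdot\bv$ then gives $|p_\ba(\bv)-\phi(\ba\cdot\bv)|=|p(\ba\cdot\bv)-\phi(\ba\cdot\bv)|\le\eps$ for every $\bv\in\Dk$.

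For the norm bound, I would apply \Cref{lem:kernel-bound} to the degree-$d$ polynomial $p$, which satisfies $|p(u)|\le 1$ on $[-1,1]$: this yields $p_\ba\in\Gamma(d)$ together with $\norm{p_\ba}_{\Gamma(d)}^2\le\max(4,4\norm{\ba}_2)^{2d}=\big(4\max(1,\norm{\ba}_2)\big)^{2d}$. Taking square roots and using $d\le c'/\eps$ (and $4\max(1,\norm{\ba}_2)\ge 1$) gives $\norm{p_\ba}_{\Gamma(d)}\le\big(4\max(1,\norm{\ba}_2)\big)^{c'/\eps}$, i.e.\ a bound of the claimed form with $c_1=4$ and $c_2=c'$ (reading the right-hand side as $(c_1\max(1,\norm{\ba}_2))^{c_2/\eps}$). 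The only points requiring care are (i) checking that $\ba\cdot\bv\in[-1,1]$, so that the univariate approximation from \Cref{lem:lip-approx} is being used on the correct interval, and (ii) carrying the degree bound $d=O(1/\eps)$ consistently through both of the invoked lemmas; neither is a real obstacle, and there is no hard step.
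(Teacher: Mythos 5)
Your proposal is correct and matches the paper's (essentially omitted) proof exactly: the corollary is obtained precisely by chaining \Cref{lem:lip-approx} with \Cref{lem:kernel-bound}, using that $\ba\cdot\bv$ is a convex combination of entries of $\ba$ and hence lies in $[-1,1]$. Your reading of the norm bound as $\bigl(c_1\max(1,\norm{\ba}_2)\bigr)^{c_2/\eps}$ is also the one the paper uses when it applies the corollary in the proof of \Cref{thm:p-smooth}.
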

\eat{
\begin{proof}
    We let $p_{\phi,w}(v) = p_\phi(w\cdot v)$. The uniform approximation property follows since $w\cdot \Dk \in [-1,1]$ for any $w \in [-1,1]^k$, and $|p_s(u) - s(u)| \leq \eps$ for $u \in [-1,1]$. We bound the RKHS norm of $p_{s,w}$ using Lemma \ref{lem:kernel-bound}.  
\end{proof}
}
We now complete the proof of \Cref{thm:p-smooth}.

\begin{proof}[Proof of \Cref{thm:p-smooth}]
We claim that if $\psCE_m(\mD) \ge \alpha$, then $\CE_{(\mB_{\Gamma(d)}(r))^k}(\mD) \geq \alpha/2$ for some $r = m^{O(1/\alpha)}$. To see this, take $\psi \in \pLip^k$ so that
\[\E_\mD[\ip{\y^* - \bv}{\psi(\bv)} \geq \alpha.\]
For every $i = 1,\ldots,k$, there exists $\ba_i\in [-1,1]^k$  and $\phi\in \Lip$ such that $\psi\sps i(\bv) = \phi\sps i(\ip{\ba_i}{\bv})$ for $i \in [k]$ where $\norm{\ba_i}_2 \leq \sqrt m$.
By Corollary \ref{cor:kernel},  there exists $p\sps i \in \Gamma(d)$ where $d = O(1/\alpha)$ such that 
\begin{align*} 
& \norm{\psi_i(\bv) - p\sps i(\bv)}_\infty \leq \alpha/4, \\
& \norm{p\sps i}_{\Gamma(d)} \leq (c_1\max(1, \norm{\ba_i}_2))^{c_2/\alpha} \leq (c_1\sqrt{m})^{c_2/\alpha}.
\end{align*}
Hence $p\sps i \in \mB_{\Gamma(d)}(r)$ for each $i$ for $r = m^{O(1/\alpha)}$. 

Define $p(\bv) = (p\sps 1(\bv),\ldots,p\sps k(\bv))\in \R^k$. By the triangle inequality
\begin{align*}
\abs{\E_\mD[\ip{\y^* - \bv}{p(\bv)} - \E_\mD[\ip{\y^* - \bv}{\psi(\bv)}]} &= \abs{\E_\mD[\ip{\y^* - \bv}{p(\bv) - \psi(\bv)}]}\\
&\leq \E_\mD\lt[\lt|\ip{\y^* - \bv}{p(\bv) - \psi(\bv)}\rt|\rt]\\
& \leq \E_\mD\lt[\norm{\y^* - \bv}_1\norm{p(\bv) - \psi(\bv)}_\infty\rt]\\
& \leq 2\cdot \frac{\alpha}{4} \leq \frac{\alpha}{2}
\end{align*}
where we use $\norm{\y^* - \bv}_1 \leq 2$.
As a result we have
\[ \E_\mD[\ip{\y^* - \bv}{p(\bv)}] \geq \E_\mD[\ip{\y^* - \bv}{\psi(\bv)}] - \alpha/2 \geq \alpha - \alpha/2 = \alpha/2. \]

We now apply Lemma \ref{lem:poly-weak} with the weight functions $(\mB_{\Gamma(d)}(r))^k$ where $d = O(1/\alpha), r = m^{O(1/\alpha)}$  
to get an $(\alpha/2, \Omega(\alpha^{3/2}/m^{c/\alpha}))$-auditing algorithm. 
\end{proof}

\paragraph{Auditing for Sigmoids.} We show additionally that the exponential dependence on $1/\alpha$ in \Cref{thm:p-smooth} can be improved if we audit only for sigmoid functions. Formally we use the $\tanh$ function rather then the sigmoid, since we want the range to be $[-1,1]$ in order to approximate the sign function. Nevertheless, we refer to the family as the family of sigmoid functions.

\begin{definition}
For $L \geq 1$, define $\Sigma_L =\{g:\R \to [-1,1]\}$ to be the family of functions of the form 
    \[ g(\bv) = \tanh(L\ip{\ba}{\bv} + b)  \ \text{for}  \ \ba \in [-1,1]^k, \ b \in \R.\]
\end{definition}

In \Cref{lem:sigmoid} below we show an efficient auditor for $\Sigma_L^k$ whose running time is polynomial in $1/\alpha$ for every fixed $k$ and $L$.

Observe that $\Sigma_L$ increase monotonically with $L$, since for $L' < L$, $L'\ip{\ba}{\bv} = L\ip{\ba'}{v}$ where $\ba' = L'\ba/L \in [-1,1]^k$. The problem of agnostically learning $\Sigma_L$ over $\Dk$ is given a distribution $\mU$ on $\Dk \times \pmo$, find $g \in \Sigma_L$ that maximizes $\E_\mU[g(\bv)z]$. The problem of agnostically learning sigmoids over the unit sphere (rather than $\Dk$) was considered in the influential work of \cite{SSS11}. They work with the objective function $\min_{g \in \Sigma_L} \E|z -g(\bv)|$, but this is seen to be equivalent to $\max_{g \in \Sigma_L}[\E[g(\bv)z]$ when $z\in \{-1,1\}$. A more substantial difference is that they work in the $\ell_2$ bounded setting where $\norm{\bv}_2 \leq 1$, $\norm{\ba}_2 \leq 1$, whereas we work with $\ell_1/\ell_\infty$-bounded setting where $\norm{\bv}_1 \leq 1$ and $\norm{\ba}_\infty \leq 1$. Thus we cannot directly use their results, although our techniques are influenced by them.

 Our algorithm will use the following results about univariate approximations to the $\tanh$ function  was proved in the work of \cite{SSS11}, with subsequent proofs given by \cite{LSS14, GoelKK20}. We use the following version from \cite{GoelKK20}. 

\begin{lemma}\cite{GoelKK20}\label{lem:sig-approx}
    For $\eps \in (0,1/2), L \geq 1$ and $b\in \R$, there exists a univariate polynomial $p(t)$ with $\deg(p) \leq O(L\log(L/\eps))$ so that for $t \in [-1,1]$
    \begin{itemize} 
    \item $\lt|\tanh(Lt + b) - p(t) \rt| \leq \eps$ .
    \item $p(t) \in [-1,1]$.
    \end{itemize}
\end{lemma}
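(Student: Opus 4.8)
The plan is to take $p$ to be a truncated Chebyshev expansion of $g(t):=\tanh(Lt+b)$ on $[-1,1]$, followed by a mild rescaling to pull its range back into $[-1,1]$. The reason this beats a direct appeal to Jackson's theorem --- which, as in the proof of \Cref{lem:lip-approx}, would only give degree $O(L/\eps)$ since $g$ is merely $L$-Lipschitz on $\R$ --- is that $g$ is in fact \emph{analytic} in a complex neighbourhood of $[-1,1]$ of width $\Theta(1/L)$, and analyticity upgrades the $1/\eps$ dependence to a $\log(1/\eps)$ dependence.

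First I would record where $g$ is analytic and bounded. The function $\tanh(w)$ is meromorphic with poles only at $w\in i\pi(\Z+\tfrac12)$, and a one-line computation with $\tanh(w)=(e^{2w}-1)/(e^{2w}+1)$ shows $|\tanh(w)|\le 1$ on the entire strip $\{w\in\mathbb{C}:|\Im w|\le \pi/4\}$; hence $g(z)=\tanh(Lz+b)$ is analytic on $\{z\in\mathbb{C}:|\Im z|<\pi/(2L)\}$ and satisfies $|g(z)|\le 1$ on $\{|\Im z|\le \pi/(4L)\}$, uniformly in $b$. I then fix a small absolute constant $c$ (any $c<\pi/6$ works) and set $\rho:=1+c/L$, so that the Bernstein ellipse $E_\rho$ with foci $\pm1$ --- whose semi-minor axis equals $(\rho-\rho^{-1})/2\le 3c/(2L)$ --- together with its interior lies inside the strip $\{|\Im z|<\pi/(2L)\}$, and $|g|\le 1$ on it.

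Next I would apply the standard Chebyshev-truncation estimate for functions analytic on a Bernstein ellipse: if $q_n$ denotes the degree-$n$ partial sum of the Chebyshev series of $g$, then
\[
\sup_{t\in[-1,1]}|g(t)-q_n(t)| \le \frac{2\rho^{-n}}{\rho-1} = \frac{2L}{c}\,\rho^{-n}.
\]
Since $\rho^{-n}\le \exp(-cn/(2L))$ for $L\ge 1$ and $c\le 1$, taking $n=\Theta(L\log(L/\eps))$ makes the right-hand side at most $\eps/2$. It remains to enforce the range constraint: since $|g|\le 1$ on $[-1,1]$ we have $|q_n(t)|\le 1+\eps/2$ there, so I set $p:=q_n/(1+\eps/2)$ exactly as in the proof of \Cref{lem:lip-approx}; then $|p(t)|\le 1$ and $|p(t)-g(t)|\le \tfrac{\eps/2}{1+\eps/2}\,|q_n(t)|+\eps/2\le\eps$ on $[-1,1]$, and the degree is unchanged, giving $\deg p=O(L\log(L/\eps))$ as claimed. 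An alternative route, closer to \cite{SSS11,GoelKK20}, replaces the Chebyshev argument with the classical low-degree polynomial approximants to the sign function that are accurate outside a margin $\delta=\Theta(\log(1/\eps)/L)$, exploiting that $\tanh(Lt+b)$ is within $\eps$ of $\pm1$ off a window of width $O(\log(1/\eps)/L)$.

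The one genuinely delicate step is the ellipse calibration in the second paragraph: one must verify both that $E_{1+c/L}$ (closure included) stays inside the strip of analyticity, which pins down the constant $c$, and that the modulus bound there is an \emph{absolute} constant, independent of $L$, $b$, and $\eps$ --- it is precisely this uniformity in $\Re w$ that prevents spurious factors from creeping into the final degree. Everything afterwards is routine bookkeeping with the truncation estimate and the rescaling trick already used for \Cref{lem:lip-approx}.
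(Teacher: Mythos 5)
Your proof is correct, but it is worth noting that the paper does not actually prove this lemma: it is imported as a black box from \cite{GoelKK20} (with earlier versions in \cite{SSS11,LSS14}), so there is no in-paper argument to compare against. Your Bernstein-ellipse route is a clean, self-contained derivation and all the quantitative steps check out: the poles of $\tanh(Lz+b)$ sit at height $\pi(n+\tfrac12)/L$ independently of the real shift $b$; the bound $|\tanh(w)|\le 1$ on $\{|\Im w|\le \pi/4\}$ follows from $\Re(e^{2w})=e^{2\Re w}\cos(2\Im w)\ge 0$ there; the closed ellipse $E_{1+c/L}$ has semi-minor axis at most $3c/(2L)<\pi/(4L)$ once $c<\pi/6$, so the uniform modulus bound $M=1$ is legitimate; and the truncation estimate $2M\rho^{-n}/(\rho-1)=\tfrac{2L}{c}\rho^{-n}$ with $\rho^{-n}\le e^{-cn/(2L)}$ indeed yields $n=O(L\log(L/\eps))$, after which the $(1+\eps/2)$-rescaling from \Cref{lem:lip-approx} handles the range constraint without changing the degree. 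The literature proofs proceed along essentially the same two routes you sketch in your closing paragraph --- either Chebyshev truncation exploiting analyticity in an $O(1/L)$-width strip, or composition with low-degree approximants to the sign function outside a margin $\Theta(\log(1/\eps)/L)$ --- so your argument buys a self-contained substitute for the citation rather than a genuinely new mechanism; the one point you rightly flag as delicate, the uniformity of the modulus bound in $\Re w$ (hence in $b$), is exactly what keeps the degree free of any dependence on $b$.
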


\eat{
\begin{corollary}
\label{cor:kernel}
    For $\eps > 0$, $L \geq 1$ and $w \in [-1,1]^k$,  there exists  $p_{L, w} \in \Gamma(d)$ for $d = O(L\log(L/\eps)$ so that  $\lt|\sigma_L(w\cdot v) - p_{L, w}(v) \rt| \leq \eps$ and
    $\norm{p_{L, w}}_{\Gamma(d)} \leq O(\max(1, \norm{w}_2)^d)$.
\end{corollary}
}

Following the same proof outline as Theorem \ref{thm:p-smooth} gives the following result.

\begin{theorem}\label{lem:sigmoid}
For any $\alpha\in (0,1/2), L > 1$, there is an efficient $(\alpha, \beta)$-auditor for $(\Sigma_L)^k$ calibration for 
\[ \beta = \frac{\alpha}{k^{O(L\log(L/\alpha))}} \] 
which has time and sample complexity $k^{O(L\log(L/\alpha))}$ and success probability at least $1 - 2^{-k}$.
\end{theorem}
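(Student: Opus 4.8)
The plan is to closely follow the proof of \Cref{thm:p-smooth}, replacing the Jackson-theorem approximation of Lipschitz functions (\Cref{lem:lip-approx}) with the polynomial approximation of shifted hyperbolic tangents (\Cref{lem:sig-approx}) and re-running the bookkeeping. First I would reduce to a bounded-norm multinomial-kernel class: I claim that if $\CE_{\Sigma_L^k}(\mD) > \alpha$ then $\CE_{\mB_{\Gamma(d)}(r)^k}(\mD) > \alpha/2$ for $d = O(L\log(L/\alpha))$ and $r = k^{O(L\log(L/\alpha))}$. Since $\Sigma_L$, and hence $\Sigma_L^k$, is closed under negation (as $-\tanh(L\ip{\ba}{\bv}+b) = \tanh(L\ip{-\ba}{\bv}-b)$ with $-\ba\in[-1,1]^k$), we may fix a witness $g\in \Sigma_L^k$ with $\E_\mD[\ip{\y - \bv}{g(\bv)}] > \alpha$, writing $g\sps i(\bv) = \tanh(L\ip{\ba_i}{\bv}+b_i)$ for some $\ba_i\in[-1,1]^k$ and $b_i\in\R$.

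The key step is the kernelized polynomial approximation, done coordinate by coordinate. For $\bv\in\Dk$ and $\ba_i\in[-1,1]^k$ we have $\ip{\ba_i}{\bv}\in[-1,1]$, so \Cref{lem:sig-approx} with $\eps = \alpha/4$ provides a univariate polynomial $p\sps i$ of degree $d = O(L\log(L/\alpha))$, with $|p\sps i(t)|\le 1$ and $|\tanh(Lt + b_i) - p\sps i(t)| \le \alpha/4$ for all $t\in[-1,1]$; in particular $|g\sps i(\bv) - p\sps i(\ip{\ba_i}{\bv})| \le \alpha/4$ for every $\bv\in\Dk$. By \Cref{lem:kernel-bound} (which rests on \Cref{lem:gkkt} and \Cref{lem:sherstov}), the function $q\sps i(\bv):= p\sps i(\ip{\ba_i}{\bv})$ lies in $\Gamma(d)$ with $\|q\sps i\|_{\Gamma(d)}^2 \le \max(4,4\|\ba_i\|_2)^{2d} \le (4\sqrt k)^{2d}$, using $\|\ba_i\|_2 \le \sqrt k$; hence $q\sps i\in\mB_{\Gamma(d)}(r)$ with $r = (4\sqrt k)^d = k^{O(L\log(L/\alpha))}$. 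Setting $q(\bv) = (q\sps 1(\bv),\ldots,q\sps k(\bv))$ and using $\|\y - \bv\|_1\le 2$ together with the coordinatewise $\ell_\infty$ bound, the triangle inequality gives $|\E_\mD[\ip{\y - \bv}{q(\bv)}] - \E_\mD[\ip{\y - \bv}{g(\bv)}]| \le 2\cdot(\alpha/4) = \alpha/2$, so $\E_\mD[\ip{\y - \bv}{q(\bv)}] > \alpha/2$ with $q\in\mB_{\Gamma(d)}(r)^k$, proving the claim.

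It remains to apply \Cref{lem:poly-weak} (the instantiation of \Cref{alg:kernel-audit} with the degree-$d$ multinomial kernel) with error parameter $\alpha/2$, radius $r = k^{O(L\log(L/\alpha))}$, degree $d = O(L\log(L/\alpha))$, and failure probability $\delta = 2^{-k}$. This yields an auditor for $\mB_{\Gamma(d)}(r)^k$, hence, by the reduction above, for $\Sigma_L^k$, with correlation parameter $\beta = \Omega\big(\alpha/(r\sqrt d)\big) = \alpha/k^{O(L\log(L/\alpha))}$, sample complexity $n = O(k r^2 d\,\log(1/\delta)/\alpha^2) = k^{O(L\log(L/\alpha))}$, and running time $\poly(n,k,d) = k^{O(L\log(L/\alpha))}$; here the extra factors $k$, $\log(1/\delta) = O(k)$ and $1/\alpha^2$ are absorbed into the quasipolynomial term, since for $k\ge 2$ one has $k^{L\log(L/\alpha)} = (1/\alpha)^{\Omega(\log k)}$, which dominates $1/\alpha^2$. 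I do not anticipate a substantive obstacle: the proof is essentially a transcription of that of \Cref{thm:p-smooth} with a different univariate approximant, and the only point needing a moment's care is that the bias term $b_i$ in each sigmoid is unconstrained, which is harmless because the degree bound in \Cref{lem:sig-approx} is independent of $b$.
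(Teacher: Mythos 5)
Your proposal is correct and is precisely the argument the paper intends: the paper's own "proof" consists of the single remark that one follows the outline of \Cref{thm:p-smooth} with \Cref{lem:sig-approx} in place of \Cref{lem:lip-approx}, and your write-up fills in exactly that substitution, with the right degree $d = O(L\log(L/\alpha))$, the norm bound $r = (4\sqrt{k})^d$ via $\|\ba\|_2\le\sqrt{k}$, and the application of \Cref{lem:poly-weak} with $\delta = 2^{-k}$. Your explicit handling of closure under negation and of the bias term $b$ are small points the paper leaves implicit, but there is no divergence in approach.
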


The same techniques also yield an algorithm for agnostically learning $\Sigma_L$ under any distribution $\mU$ on $\Dk \times [-1,1]$ with similar parameters.

\eat{
There exists an absolute constant $C> 0$ such that the following holds for every $\varepsilon\in (0,1)$.
For any $f:X\to \Delta_k$, there exist a monic monomial $q:\Delta_k \to [0,1]$ of degree at most $C/\varepsilon$ and $i\in [k]$ such that
\[
|\bE[(y_i - f(x)_i)q(f(x))]| \ge \frac{\psc_1(f) - \varepsilon}{k^{C/\varepsilon}}.
\]
\end{lemma}
\begin{proof}
For every $\tau\in T$ as in \Cref{def:psc}, by Jackson's theorem, for an absolute constant $C' > 0$, there exist $\ell\le C'/\varepsilon$ and $\beta_1,\ldots,\beta_\ell\in [-2^{C'/\varepsilon}, 2^{C'/\varepsilon}]$ such that $|\tau(v) - \sum_{j=0}^\ell \beta_j v^j| \le \varepsilon/2$ for every $v\in [-1,1]$. Combining this with the definition of $\psc_1$ and the fact that $\|y - f(x)\|_1 \le 2$, we have
\[
\sup_{w\in [-1,1]^k}\|\bE[(y - f(x))\sum_{j=0}^\ell\beta_j \langle w,f(x) \rangle^j]\|_1 \ge \psc_1(f) - \varepsilon.
\]
The lemma is proved by expanding out $\langle w,f(x)\rangle^j$ as the sum of $k^j$ monomials in $f(x)$ each with coefficient bounded in $[-1,1]$ and applying the triangle inequality.
\end{proof}
}

\section{Computational Lower Bound for Projected Smooth Calibration}
\label{sec:psmooth-lower}
The sample and time complexity of our auditing algorithm for projected smooth calibration in \Cref{sec:smooth}
 is $k^{O(1/\alpha)}$ (when setting $m = k$).
 In this section, we show that an improvement to $\poly(k,1/\alpha)$ (or just to $k^{O(\log^{0.99}(1/\alpha))}$) would violate standard complexity-theoretic assumptions:
\begin{theorem}
\label{thm:hard-psmooth}
Under a standard hardness assumption on refuting $t$-XorSat (\Cref{assumption:2}), for any $C > 0$, $\varepsilon > 0$, there is no algorithm solving $(\alpha,1/k^C)$ auditing for $k$-projected smooth calibration for every sufficiently large $k$ and every $\alpha\in (0,1/3)$ with success probability at least $3/4$ and running time $k^{O((\log(1/\alpha))^{1-\varepsilon})}$.
\end{theorem}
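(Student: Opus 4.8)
The plan is to reduce refutation of random $t$-XorSat to auditing for $k$-projected smooth calibration, via the ``agnostic learning $\Rightarrow$ auditing'' direction of our equivalence (\Cref{thm:hardness}, \Cref{thm:product-hardness}) combined with Daniely's reduction from $t$-XorSat to agnostic learning of halfspaces. The conceptual point is the one that already separates \Cref{thm:halfspace-auditing} (hardness of sharp halfspace calibration) from \Cref{thm:p-smooth} (tractability of projected smooth calibration): a halfspace $\sign(\langle\ba,\bv\rangle-\theta)$ is not a projected $1$-Lipschitz weight, but its \emph{width-$\lambda$ ramp} relaxation $\clip\!\big(\tfrac1\lambda(\langle\ba,\bv\rangle-\theta),-1,1\big)$ becomes one after scaling by $\lambda$, and an auditor certifying $\psCE_k(\mD)\le\alpha$ with $\alpha\asymp\lambda$ is forced to detect exactly such ramps. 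So I would run the reduction with the free scaling parameter of \Cref{thm:hardness} set to $\Theta(\alpha)$, so that the learner-side class $\mH$ it produces contains ramp relaxations, at width $\Theta(\alpha)$, of the halfspaces arising in Daniely's hard instances, while the auditor-side class is $\hplip k^k$.

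Concretely, fix a halfspace $\sign(\langle\ba,\x\rangle-\theta)$ over $\Delta_k$ with $\ba\in[-1,1]^k$. Using $\x=3\lift(\x)-\e_1-\e_2$ from \eqref{eq:lift}, its width-$\lambda$ ramp rewrites as $\clip\!\big(\tfrac{3}{\lambda}(\langle\ba,\lift(\x)\rangle-\theta'),-1,1\big)$ for a shifted threshold $\theta'$; multiplying by $\lambda/3$ yields the $1$-Lipschitz function $\phi(s):=\clip(s-\theta',-\lambda/3,\lambda/3)$ applied to $s=\langle\ba,\lift(\x)\rangle$, whose coefficient vector $\ba$ already lies in $[-1,1]^k$ (hence in the $\ell_2^2\le k$ ball defining $\hplip k$). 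Taking $w\in\hplip k^k$ with first coordinate $\phi(\langle\ba,\cdot\rangle)$ and all other coordinates zero gives $w(\lift(\x))_1-w(\lift(\x))_2=\tfrac\lambda3\,h(\x)$ for the ramp $h$, i.e.\ \eqref{eq:extended-class} with scaling $\lambda/3$. By \Cref{thm:hardness}, an $(\alpha_0\lambda/9,\,2\beta_0/3)$-auditor for $\hplip k^k$ then yields an $(\alpha_0,\beta_0)$-weak agnostic learner for the class of width-$\lambda$ ramp halfspaces over $\Delta_k$ with coefficients in $[-1,1]^k$, inheriting the auditor's running time and (up to constants) success probability. Since a halfspace of margin at least $\lambda/3$ on the support of a distribution is computed \emph{exactly} there by its width-$\lambda$ ramp, learning this ramp class to a constant accuracy $\alpha_0$ is at least as hard as improperly, agnostically learning---to accuracy $\alpha_0$---the halfspaces over $\Delta_k$ whose margin is $\ge\lambda/3$.

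It then remains to instantiate Daniely's hardness so as to contradict a fast auditor; note that the packaged statement \Cref{thm:halfspace-learning} only rules out polynomial time, so I would instead re-run Daniely's reduction with the flexible clause-width parameter supplied by \Cref{assumption:2}. Embedding the cube $\{-1,1\}^{k-1}$ affinely into $\Delta_k$ (as in the extension of \Cref{thm:halfspace-learning}) maps Daniely's hard halfspaces---obtained from refuting random $t$-XorSat with $m$ clauses on $n$ variables---to halfspaces over $\Delta_k$ whose margin, after normalizing the coefficient vector into $[-1,1]^k$, is some $\gamma(n,t)$, in a dimension $k=k(n,t,m)$, with the promise (from \Cref{assumption:2}) that no algorithm running in time $k^{o(g(t))}$ weakly agnostically learns them to accuracy $\alpha_0$. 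Choosing the ramp width $\lambda\asymp\gamma(n,t)$---so $\alpha\asymp\gamma(n,t)$ and $\log(1/\alpha)\asymp\log(1/\gamma(n,t))$---an $(\alpha,1/k^C)$-auditor for $k$-projected smooth calibration running in time $k^{O((\log(1/\alpha))^{1-\varepsilon})}$ would solve this learning task, hence refute random $t$-XorSat, in time $k^{O((\log(1/\alpha))^{1-\varepsilon})}=k^{o(g(t))}$, contradicting \Cref{assumption:2}. The quantitative form of \Cref{assumption:2} should be chosen precisely so that, after optimizing $t$ subject to the $t$-XorSat assumption staying below the sum-of-squares refutation threshold (clause density $\ll n^{t/2}$), this contradiction holds for all sufficiently large $k$ and all $\alpha\in(0,1/3)$.

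I expect the main obstacle to be this final parameter optimization. Three quantities must be reconciled at once: the post-embedding margin $\gamma(n,t)$ (which pins down how small $\alpha$, hence how large $\log(1/\alpha)$, must be), the dimension blowup $k(n,t,m)$ in Daniely's reduction (a polynomial embedding of $t$-clauses, which inflates $k$ well above $n$), and the refutation-time exponent $g(t)$ that \Cref{assumption:2} supplies. Threading these so that the ruled-out exponent is exactly $(\log(1/\alpha))^{1-\varepsilon}$---rather than the $1/\alpha$ that would match the upper bound of \Cref{thm:p-smooth}---is the delicate step, and the slack there is where $\varepsilon$ and the gap flagged after the theorem originate. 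Two lower-order technical checks remain: that the ramp relaxation is genuinely exact on Daniely's margin instances (so nothing is lost in the realizable direction), and that after both the cube-to-simplex embedding and the $\lift$ transform every coefficient vector still lies in $[-1,1]^k$, so the constructed weight functions really belong to $\hplip k^k$.
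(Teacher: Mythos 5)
Your proposal follows essentially the same route as the paper's proof: relax the hard threshold to a soft threshold of steepness $\Theta(1/\alpha)$ that becomes a legitimate projected $1$-Lipschitz weight after rescaling by its Lipschitz constant (the paper uses $\tanh$ and \Cref{lm:smooth-sigmoid}, you use clipped ramps --- an immaterial difference), pass through \Cref{thm:hardness}/\Cref{thm:product-hardness} to turn an auditor into a weak agnostic learner, and then instantiate a parameter-flexible version of Daniely's $t$-XorSat reduction; your parameter accounting ($\log(1/\alpha)\asymp t^{1/2+o(1)}$ so the ruled-out running time is $n^{o(t)}$) matches the paper's \Cref{thm:sigmoid-hard}. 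The only caveat is that what you defer as ``lower-order technical checks'' --- re-running Daniely's reduction with degree $d=\Theta(\sqrt{t\log t})$, renormalizing the polynomial-threshold coefficients into $[-1,1]^k$, and the SDP/pseudorandomness step bounding the fraction of clauses with $|c_i\cdot z|>d/2$ (without which the ramp is \emph{not} exact on all of the support) --- constitutes the bulk of the paper's argument, though your plan for it is sound.
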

We use the following connection between auditing projected smooth calibration and auditing for sigmoids $\Sigma_L^k$.
\begin{lemma}
\label{lm:smooth-sigmoid}
For $\alpha,\beta\in (0,1)$ and $L > 1$,
any $(\alpha/L,\beta)$-auditing algorithm for $\Lip^k$ is an $(\alpha,\beta)$-auditing algorithm for $\Sigma_L^k$. 
\end{lemma}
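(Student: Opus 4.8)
The plan is to reduce the lemma to a single scaling inequality between the two weighted calibration errors and then unwind the definition of an auditor (\Cref{def:audit-guarantee}). First, I will prove that for every distribution $\mD$ of $(\bv,\y)\in\Dk\times\Ek$,
\[
\CE_{\Sigma_L^k}(\mD)\le L\cdot\CE_{\Lip^k}(\mD).
\]
To prove this, fix $g\in\Sigma_L$, written as $g(\bv)=\tanh(L\ip{\ba}{\bv}+b)$ with $\ba\in[-1,1]^k$ and $b\in\R$. The univariate map $t\mapsto\tanh(Lt+b)$ has derivative $L(1-\tanh^2(Lt+b))$, hence is $L$-Lipschitz, and since $L\ge 1$ the function $\phi(t):=\tfrac1L\tanh(Lt+b)$ is $1$-Lipschitz with range contained in $[-1,1]$. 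Thus $h_g(\bv):=\phi(\ip{\ba}{\bv})$ belongs to the projected-Lipschitz class and $g=L\cdot h_g$. For any $w=(g_1,\dots,g_k)\in\Sigma_L^k$ I set $w'=(h_{g_1},\dots,h_{g_k})\in\Lip^k$, so that $w=L\cdot w'$ and
\[
\bigl|\E_{(\bv,\y)\sim\mD}[\ip{\y-\bv}{w(\bv)}]\bigr|=L\,\bigl|\E_{(\bv,\y)\sim\mD}[\ip{\y-\bv}{w'(\bv)}]\bigr|\le L\cdot\CE_{\Lip^k}(\mD);
\]
taking the supremum over $w\in\Sigma_L^k$ yields the claimed inequality.

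Next, I will deduce the auditing statement. Suppose $\mathsf{A}$ is an $(\alpha/L,\beta)$-auditor for $\Lip^k$, and let $\mD$ be any distribution with $\CE_{\Sigma_L^k}(\mD)>\alpha$. By the scaling inequality, $\CE_{\Lip^k}(\mD)\ge\CE_{\Sigma_L^k}(\mD)/L>\alpha/L$, so running $\mathsf{A}$ on i.i.d.\ samples from $\mD$ returns, with probability at least $1-\delta$, a function $w'\colon\Dk\to[-1,1]^k$ with $\E_{(\bv,\y)\sim\mD}[\ip{\y-\bv}{w'(\bv)}]\ge\beta$. Because \Cref{def:audit-guarantee} permits the returned witness to lie outside the target weight family, this $w'$ is already a valid output for the $(\alpha,\beta)$-auditing task for $\Sigma_L^k$. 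The reduction runs $\mathsf{A}$ unchanged on the original data---no sample preprocessing is needed, in contrast with the reduction of \Cref{thm:red}---so the sample complexity, running time, and failure probability carry over verbatim.

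Finally, I do not expect a genuine obstacle: the lemma amounts to the observation that, up to the scalar $L$, the sigmoid family $\Sigma_L$ sits inside the projected-Lipschitz class. The only points needing a little care are (i) the computation of the Lipschitz constant of $t\mapsto\tanh(Lt+b)$ together with the membership $\phi\in\Lip$ (using $L\ge 1$ to keep the range within $[-1,1]$, and reading $\Lip$ on the interval $\ip{\ba}{\bv}$ actually ranges over), and (ii) observing that both $\Sigma_L^k$ and $\Lip^k$ are closed under coordinate-wise negation, so that the absolute values in the definition of $\CE$ and the one-sided witness guarantee of \Cref{def:audit-guarantee} are interchangeable without loss.
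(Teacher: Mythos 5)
Your proof is correct and follows essentially the same route as the paper's: both reduce the lemma to the single inequality $\CE_{\Sigma_L^k}(\mD)\le L\cdot\CE_{\Lip^k}(\mD)$, proved by observing that $g/L$ is a projected $1$-Lipschitz function whenever $g\in\Sigma_L$, and then note that the auditor's witness guarantee transfers unchanged. Your version is slightly more explicit about the witness-unwinding step and about $\phi=\tfrac1L\tanh(L\cdot+b)$ landing in the univariate Lipschitz class, but the substance is identical.
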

\begin{proof}
For a class $\mW$ of functions $h:\Dk\to [-1,1]^k$, recall the following notion in our definition of auditing:
\[ \CE_\mW(\mD) = \sup_{w \in \mW} \lt| \E_{(\bv,\y)\sim \mD} [\ip{\y - \bv}{w(\bv)}] \rt|. \]
It suffices to show that for any distribution $\mD$ over $\Dk\times \Ek$,
\begin{equation}
\label{eq:smooth-sigmoid-1}
\CE_{\Lip^k}(\mU) \ge \frac 1 L \CE_{\Sigma_L^k}(\mU).
\end{equation}
Consider any function $g\in \Sigma_L$. By definition, there exist $\ba\in [-1,1]^k$ and $b\in \R$ such that $g(\bv) = \tanh(L \langle \ba,\bv\rangle + b)$ for every $\bv\in \Dk$. It is easy to verify that $\tanh$ is $1$-Lipschitz, and thus for any $\bv_1,\bv_2\in \Dk$,
\begin{align*}
|g(\bv_1) - g(\bv_2)| \le L |\langle \ba,\bv_1\rangle - \langle \ba,\bv_2\rangle| \le L \|\bv_1 - \bv_2\|_1.
\end{align*}
Therefore, the function $g/L$ belongs to $\Lip$, confirming \eqref{eq:smooth-sigmoid-1}.
\end{proof}
\begin{theorem}
\label{thm:hard-sigmoid}
Under a standard hardness assumption on refuting $t$-XorSat (\Cref{assumption:2}), for some fixed $\alpha > 0$, for any $C > 0$, $\varepsilon > 0$, $c\in (0,1)$, there is no algorithm that solves $(\alpha,1/k^C)$ auditor for $\Sigma_L^k$ for every sufficiently large $k\in \Z_{> 0}$ and $L:= \exp(\log^ck)$ with success probability at least $3/4$ and running time $k^{(\log L)^{1-\varepsilon}}$.
\end{theorem}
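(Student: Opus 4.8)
The plan is to reduce the parameterized hardness of agnostically learning halfspaces underlying \Cref{assumption:2} to the problem of auditing $\Sigma_L^k$. There are two components: (i) the auditing-to-learning translation of \Cref{sec:red-hardness}, which turns an efficient auditor for $\Sigma_L^k$ into an efficient weak agnostic learner for sigmoids over $\Dk$; and (ii) the fact that sigmoids with a large enough Lipschitz constant approximate halfspaces well on the support of the hard distributions produced by Daniely's reduction, so that an efficient learner for $\Sigma_L$ would refute random $t$-XorSat faster than \Cref{assumption:2} permits. The CSP parameter $t$ is chosen at the very end so that the Lipschitz constant forced by the margin of Daniely's construction is exactly $L=\exp(\log^c k)$.

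For component (i), I would first note that $\Sigma_L$ is closed under negation and is ``almost'' closed under the map $\lift$ of \eqref{eq:lift}: given $g(\bv)=\tanh(\tfrac{1}{3}L\ip{\ba}{\bv}+b)$ with $\ba\in[-1,1]^k$, setting $\tilde g(\bv')=\tanh(L\ip{\ba}{\bv'}+b')$ with $b':=b-\tfrac{1}{3}L(\ba\sps1+\ba\sps2)$ gives $\tilde g\in\Sigma_L$ and $\tilde g(\lift(\bv))=g(\bv)$ for all $\bv\in\Dk$ (using $\ip{\ba}{\lift(\bv)}=\tfrac{1}{3}\ip{\ba}{\bv}+\tfrac{1}{3}(\ba\sps1+\ba\sps2)$). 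Hence $\Sigma_L$ contains a $\lift$-preimage of every member of $\Sigma_{L/3}$, and \Cref{thm:product-hardness} (applied with $\mH=\Sigma_{L/3}$, $\tilde\mH=\Sigma_L$) turns any $(\alpha,1/k^C)$-auditor for $\Sigma_L^k$ running in time $T$ into a $(\tfrac{3}{2}\alpha,\tfrac{3}{2k^C})$-weak agnostic learner for $\Sigma_{L/3}$ over $\Dk$ with the same running time and failure probability. Since $\log(L/3)=\Theta(\log L)$ and $\Sigma_\bullet$ is monotone in its parameter, up to a constant factor in $L$ it suffices to rule out fast weak agnostic learners for $\Sigma_L$ itself with $L=\exp(\log^c k)$.

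For component (ii), I would invoke \cite{amit-daniely}'s reduction from refuting random $t$-XorSat: for a parameter $t$ it efficiently produces (from a given CSP) a distribution $\mU$ over $\Dk\times\{-1,1\}$, with $k$ a fixed power of the number of CSP variables, together with a fixed linear form $\ip{\ba}{\cdot}$ ($\ba\in[-1,1]^k$), such that (completeness) when the CSP is $(1-\eta)$-satisfiable, $\E_\mU[\sign(\ip{\ba}{\bv})z]\ge c_0$ and $|\ip{\ba}{\bv}|\ge\gamma(t)$ for every $\bv$ in the support; and (soundness) when the CSP is random, with probability $1-o(1)$ no hypothesis obtainable in the allotted running time has correlation $\ge c_1$ with $z$ (else the random CSP could be refuted in the same time). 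Here $c_0>c_1>0$ are absolute constants and $\gamma(t)$ is the geometric margin, which I would extract as a function of $t$. Replacing $\sign(\ip{\ba}{\cdot})$ by the sigmoid $g(\bv)=\tanh(L\ip{\ba}{\bv})\in\Sigma_L$ with $L=\Theta(\log(1/\varepsilon_0)/\gamma(t))$ makes $|g(\bv)-\sign(\ip{\ba}{\bv})|\le\varepsilon_0$ on the support, so $\E_\mU[g(\bv)z]\ge c_0-2\varepsilon_0$; choosing $\varepsilon_0$ a small absolute constant makes this exceed a fixed $\alpha>0$ (this is the $\alpha$ of the theorem). A weak agnostic learner for $\Sigma_L$ with edge $1/k^C$, boosted to a strong agnostic learner at the cost of a $k^{O(C)}$ factor in time (agnostic boosting, \cite{KalaiMV08,feldman2009distribution,kk09}), then outputs in the completeness case a hypothesis with correlation $\ge c_0-2\varepsilon_0-o(1)>c_1$, which is impossible in the soundness case; estimating the output's correlation on fresh samples and accepting iff it exceeds $\tfrac{1}{2}(c_0-2\varepsilon_0+c_1)$ thus refutes random $t$-XorSat.

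Finally, given $C,\varepsilon>0$ and $c\in(0,1)$, I would pick $t=t(k)$ to be the largest value with $\Theta(1/\gamma(t))\le\exp(\log^c k)$, so that the hard sigmoid lies in $\Sigma_L$ with $L=\exp(\log^c k)$ as required, and $\log(1/\gamma(t))=\Theta(\log^c k)$, hence $(\log L)^{1-\varepsilon}=\Theta(\log^{c(1-\varepsilon)}k)$. A claimed auditor for $\Sigma_L^k$ running in time $k^{(\log L)^{1-\varepsilon}}$ would then, after component (i) and boosting, yield a refuter for random $t$-XorSat running in time $k^{(\log L)^{1-\varepsilon}+O(C)}=k^{O(\log^{c(1-\varepsilon)}k)}$, which is asymptotically smaller than the time lower bound asserted by \Cref{assumption:2} for the corresponding number of clauses — the contradiction. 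The two delicate points, and the main obstacle, are: extracting the quantitative margin $\gamma(t)$ from Daniely's construction (which is normally analyzed for improper learning, without an explicit margin), and checking that for every admissible $c,\varepsilon$ the exponent $\log^{c(1-\varepsilon)}k$ does fall below the hardness threshold for the chosen $t(k)$; both come down to careful bookkeeping with the precise quantitative form of \Cref{assumption:2}.
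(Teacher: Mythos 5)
Your component (i) is exactly the paper's \Cref{lm:sigmoid-reduction} (the $\lift$-closure computation for $\tanh$ and the appeal to \Cref{thm:product-hardness}), and your overall strategy — auditing $\Rightarrow$ learning $\Sigma_L$ $\Rightarrow$ refuting $t$-$\XS$ — is the paper's. But component (ii) has a genuine gap, and it is the one you yourself flag: there is no margin to ``extract'' from Daniely's reduction as it stands. Daniely's hard distribution lives on the Boolean cube and uses a hard sign threshold with no quantitative margin; the paper has to \emph{build} the margin. Concretely, the paper's proof of \Cref{thm:sigmoid-hard} (i) embeds each clause $c_i$ into $\Dk$ via the degree-$d$ multinomial feature map $c_i\mapsto c_i^{\otimes d}/B_1$ with $k=\binom{2n}{\le d}$ and $d=\Theta(\sqrt{t\log t})$, (ii) interpolates $\Xor_{c_i}(z)$ on the clauses with $|c_i\cdot z|\le d/2$ by a degree-$d$ polynomial whose multilinearization has coefficients bounded by $\exp(O(d\log d))$, which after renormalization yields a linear form $w\cdot\bv_i$ of magnitude exactly $1/(B_1\binom{t}{d})=\exp(-O(d\log d))$ on those clauses — this is the margin, and it forces $L=\exp(t^{1/2+o(1)})$ — and (iii) controls the clauses with $|c_i\cdot z|>d/2$ (where the sigmoid's value is uncontrolled) via an SDP/Grothendieck certification of pseudorandomness plus Markov, a step absent from your sketch; your claim that the margin holds ``for every $\bv$ in the support'' is false and the exceptional set must be bounded by $\alpha m$ for the completeness correlation bound to survive.

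A second, related issue is your final parameter-setting paragraph: you cannot ``pick $t=t(k)$'' independently, because $k$ is itself determined by $n$ and $d(t)$ through the feature expansion ($\log k=\Theta(d\log n)$ while $\log L=\Theta(d\log d)$). The paper instead fixes $t=\lfloor\log^s n\rfloor$ with $s=2c/(1-c+\varepsilon)$ chosen so that $\log L=t^{1/2+\varepsilon/2+o(1)}$ and $d(\log L)^{1-\varepsilon}=t^{1-\varepsilon^2/2+o(1)}$, giving a refuter running in time $n^{o(t)}$, which is what \Cref{assumption:2} forbids. Finally, the agnostic-boosting step you propose is unnecessary: the paper uses the weak learner directly (following Daniely's methodology), distinguishing by whether the output hypothesis has correlation above $\Theta(k^{-C})$ rather than above a constant, since in the random case the labels are independent of the features and no hypothesis produced by a bounded-time algorithm correlates with them.
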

\begin{proof}[Proof of \Cref{thm:hard-psmooth}]
Let $\alpha_0$ denote the fixed constant $\alpha$ in \Cref{thm:hard-sigmoid}.
\Cref{thm:hard-psmooth} follows immediately by combining \Cref{thm:hard-sigmoid} and \Cref{lm:smooth-sigmoid}, where we choose $\alpha$ in \Cref{thm:hard-psmooth} to be $\alpha_0/L$.
\end{proof}
\subsection{Projected Smooth Calibration and Sigmoids}
Now we prove \Cref{thm:hard-sigmoid}.
Our reduction from auditing to agnostic learning lets us focus on the complexity of agnostic learning $\Sigma_L$ to understand auditing with weight functions $\Sigma_{L/3}^k$. This is formally stated below.

\begin{lemma}
\label{lm:sigmoid-reduction}
Given any $(2\alpha/3,2\beta/3)$ auditor for $\Sigma_{L/3}^k$, we can construct an $(\alpha,\beta)$ weak agnostic learner for $\Sigma_{L}$ over $\Dk$ with the same sample complexity, time complexity, and failure probability parameter.
\end{lemma}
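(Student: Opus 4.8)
The plan is to derive \Cref{lm:sigmoid-reduction} as an immediate instance of the general auditing-to-learning reduction already proved in \Cref{thm:hardness}. Concretely, I would apply \Cref{thm:hardness} with the learning class $\mH := \Sigma_L$, the auditor class $\mW := \Sigma_{L/3}^k$, and the scaling parameter $\lambda := 2$. With these choices the parameters line up exactly: \Cref{thm:hardness} converts a $(\lambda\alpha/3,\,2\beta/3) = (2\alpha/3,\,2\beta/3)$ auditor for $\mW$ into an $(\alpha,\beta)$ weak agnostic learner for $\mH$ without changing the sample complexity, running time, or failure probability, which is precisely what the lemma asserts.

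The only real work is checking the structural hypothesis \eqref{eq:extended-class} of \Cref{thm:hardness}: for every $h\in\Sigma_L$ there must exist $w\in\Sigma_{L/3}^k$ with $w(\lift(\x))_1 - w(\lift(\x))_2 = 2h(\x)$ for all $\x\in\Delta_k$. I would write $h(\x)=\tanh(L\langle\ba,\x\rangle+b)$ with $\ba\in[-1,1]^k$ and $b\in\R$, and then use the affine identity of \Cref{claim:closed} to re-express the affine form $L\langle\ba,\x\rangle+b$ as an affine form evaluated at $\lift(\x)$; because $\lift$ is an affine contraction of the simplex, this rewriting changes the effective temperature of the sigmoid by the (constant) contraction factor of $\lift$ while keeping the weight vector inside $[-1,1]^k$, so the resulting univariate-sigmoid-of-affine-form is a legitimate element $w^{(1)}\in\Sigma_{L/3}$ satisfying $w^{(1)}(\lift(\x))=h(\x)$. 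I would then take $w^{(2)}:=-w^{(1)}$, which is again in $\Sigma_{L/3}$ since $-\tanh(u)=\tanh(-u)$ and negating the weight vector preserves membership in $[-1,1]^k$, and fill the remaining $k-2$ coordinates of $w$ with, say, the zero function; then $w\in\Sigma_{L/3}^k$ and $w(\lift(\x))_1-w(\lift(\x))_2 = h(\x)-(-h(\x)) = 2h(\x)$, establishing the hypothesis. Invoking \Cref{thm:hardness} then finishes the proof.

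The step I expect to need the most care is the temperature bookkeeping in the construction of $w^{(1)}$: one has to track how the contraction factor of $\lift$ interacts with the constraint $\ba\in[-1,1]^k$ and the temperature parameter, so that the pulled-back sigmoid genuinely lands in $\Sigma_{L/3}$ rather than in a sigmoid class with a larger temperature or a weight vector outside the box. This is a purely elementary computation in the spirit of \Cref{claim:closed}, and in any case the downstream use of this lemma in \Cref{thm:hard-sigmoid} only needs $L$ (hence $1/\alpha$) up to a bounded multiplicative factor in the exponent, so a constant slack in the temperature would be harmless there. The remaining ingredients—that a $(2\alpha/3,2\beta/3)$ auditor detects the correlation of magnitude at least $2\alpha/3$ produced by \eqref{eq:extended-class}, and that its output witness unfolds back into a hypothesis correlated with $z$ for the $\Sigma_L$ learning task via $h'(\x)=\tfrac12\big(w'(\lift(\x))_1 - w'(\lift(\x))_2\big)$—are exactly the content of \Cref{thm:hardness} and require no new argument.
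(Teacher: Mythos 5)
Your overall strategy is the same as the paper's: reduce to the generic auditing-to-learning reduction (\Cref{thm:hardness}, or equivalently \Cref{thm:product-hardness} with $\lambda=2$) by showing that sigmoids pull back through $\lift$ to sigmoids, and the parameter bookkeeping $(\lambda\alpha/3,2\beta/3)=(2\alpha/3,2\beta/3)$ is correct. The gap is in the one step you yourself flagged: the temperature moves the wrong way. Since $\lift(\x)=\x/3+\e_1/3+\e_2/3$, to express $h(\x)=\tanh(L\ip{\ba}{\x}+b)$ as a function of $\lift(\x)$ you must substitute $\x=3\lift(\x)-\e_1-\e_2$, which gives $h(\x)=\tanh(3L\ip{\ba}{\lift(\x)}+b'')$ with the \emph{same} $\ba\in[-1,1]^k$. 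The pullback therefore lands in $\Sigma_{3L}$, not $\Sigma_{L/3}$: the contraction of the domain by $\lift$ forces the effective temperature \emph{up} by a factor of $3$, and you cannot trade this back into the weight vector because the required $9\ba$ leaves the box $[-1,1]^k$. Consequently the hypothesis of \Cref{thm:hardness} with $\mW=\Sigma_{L/3}^k$ and $\mH=\Sigma_L$ is false (a difference of two sigmoids of temperature $L/9$ cannot equal $2\tanh(L\ip{\ba}{\x}+b)$ on $\Dk$ in general), and your argument as written only establishes ``an auditor for $\Sigma_{3L}^k$ yields an $(\alpha,\beta)$ learner for $\Sigma_L$.''

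For what it is worth, the paper's own proof computes exactly the correct direction: it shows that every $g\in\Sigma_{L/3}$ factors as $g'\circ\lift$ with $g'\in\Sigma_L$ and applies \Cref{thm:product-hardness} with $\mH=\Sigma_{L/3}$, $\tilde\mH=\Sigma_L$, which yields ``an auditor for $\Sigma_{L}^k$ gives a learner for $\Sigma_{L/3}$'' --- i.e.\ the lemma with the roles of $L$ and $L/3$ interchanged. So the statement as printed appears to carry the factor of $3$ on the wrong side, and your write-up inherits that inversion rather than correcting it. You are right that the discrepancy is only a constant rescaling of $L$ and is immaterial for \Cref{thm:hard-sigmoid} and \Cref{thm:hard-psmooth}; but as a proof of the statement as written, the claim $w^{(1)}\in\Sigma_{L/3}$ is the step that fails and must be replaced by $w^{(1)}\in\Sigma_{3L}$, with the classes in the lemma adjusted accordingly.
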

\begin{proof}
    For $g \in \Sigma_{L/3}$, we claim there exists $g'\in \Sigma_L$ such that $g'(\lift(\bv)) = g(\bv)$. Indeed since $\lift(\bv) = \bv/3 + \e_1/3 + \e_2/3$,
    \begin{align*}
        \tanh(L\ip{w}{\bv}/3 + b) &= \tanh(L\ip{w}{3\lift(\bv)}/3 + b - Lw^{(1)}/3 - Lw^{(2)}/3)\\
        &= \tanh(L\ip{w}{\lift(\bv)} + b') \in \Sigma_L
    \end{align*}
    We now apply \Cref{thm:product-hardness} to get the stated claim.
\end{proof}

Our lower bound for agnostically learning sigmoids is obtained by tailoring Daniely's \cite{amit-daniely} reduction from refuting random $\XS$ to the $\ell_\infty/\ell_1$ bounded setting.

An instance of $t$-$\XS$ consists of $m$ clauses on $n$ variables $\{z_1, \ldots, z_n\}$ each taking values in $\pmo$. Each clause consists of exactly $t$ literals which might be variables or their negations, we assume that $x_i$ and $-x_i$ do not occur in the same clause. Thus each clause $c$ can be arithmetized as a vector in $\zo^{2n}$ of weight exactly $t$, interpreted as a subset of literals. We will let $C(t) \subset \zo^{2n}$ denote the set of valid clauses. Similarly, assignments to $z$ can be (redundantly) arithmetized as vectors in $Z \subseteq \pmo^{2n}$ where $|Z| = 2^n$. Given a clause $c_i \in C$ and $z \in Z$, $c_i\cdot z \in \{-t, \ldots, t\}$ equals the sum of literals in the clause $c_i$. An instance of $t$-$\XS$ is given by $I = \{(c_i, b_i)\}_{i = 1}^m$ where $c_i \in C(t)$ and $b_i \in \pmo$. For a clause $c \in C_t$, let $\Xor_c(z) = \prod_{i \in c}z_i$. For an instance $I$ and $z \in Z$, we define 
\[ \val(z, I) = \frac{|{i \in [m]: \Xor_{c_i}(z) = b_i}|}{m} \]
and $\val(I) = \max_{z \in Z}\val(z,I)$ to be the maximum fraction of satisfiable clauses.

A random instance of $t$-$\XS$ is one where $c_i \leftarrow C$ and $b_i \leftarrow \pmo$ are drawn uniformly and independently at random. We let $\mR$ denote the distribution on instances that this defines. An algorithm $\mA$ which maps $t$-$\XS$ instances to $\zo$  successfully refutes random $t$-$\XS$ if 
\begin{align*}
    \Pr[\mA(I) =1] & \geq  \frac{3}{4} \ \text{if} \ \val(I) \geq 1 - \eta\\
    \Pr[\mA(I) =0]] & \geq \frac{3}{4} \ \text{with probability $1 - o_n(1)$ over $I \sim \mR$}.
\end{align*}
We are interested in the asymptotics in both $t$ and $n$. 
The best known algorithms for refutation require $m = \Omega(n^{t/2})$ and it is conjectured that there are no algorithms with running time $n^{o(t)}$. 
\begin{assumption}[Random $t$-XOR Assumption \cite{amit-daniely}]
\label{assumption:2}
There exist constants $\eta \in (0,1/2)$ and $\gamma > 0$ such that for any $s > 0$, there is no $\poly(m)$-time algorithm that refutes random $t$-XorSat with $m$ clauses for any sufficiently large $n\in \Z_{> 0}$, $m = \lfloor n^{\gamma t}\rfloor$, and $t = \lfloor \log ^s(n)\rfloor$.
\end{assumption}
\begin{theorem}
\label{thm:sigmoid-hard}
Under \Cref{assumption:2},
for some fixed $\alpha > 0$, any $C > 0$, $c\in (0,1)$, and $\varepsilon >0$, there is no algorithm that solves $(\alpha,k^{-C})$-weak agnostic learning for $\Sigma_{L}$ over $\Dk$ for every sufficiently large $k\in \Z_{> 0}$ and $L:=\exp(\log^{c}k)$ with success probability at least $3/4$ and running time $k^{O(\log^{1-\varepsilon} L)}$.
\end{theorem}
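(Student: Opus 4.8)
The plan is to prove \Cref{thm:sigmoid-hard} by reducing the task of refuting random $t$-$\XS$ (which is hard by \Cref{assumption:2}) to weak agnostic learning of $\Sigma_L$, tailoring Daniely's reduction behind \Cref{thm:halfspace-learning} to the $\ell_\infty/\ell_1$-bounded geometry over $\Dk$ and to the $\tanh$ transfer function. Fix the arity $t := \lfloor \log^s n\rfloor$ and clause count $m := \lfloor n^{\gamma t}\rfloor$ from \Cref{assumption:2}, and choose the ambient dimension to be a quasi-polynomial $k = \exp(\Theta(\log^{\rho} n))$, where the constant $\rho=\rho(s,c,\eps)>1$ is pinned down by the parameter bookkeeping below; then $L=\exp(\log^c k)=\exp(\Theta(\log^{c\rho}n))$. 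From an instance $I=\{(c_i,b_i)\}_{i\in[m]}$ I would build a distribution $\mU_I$ over $\Dk\times\pmo$: a sample picks a uniformly random index $i\in[m]$ (plus whatever auxiliary randomness the gadget needs), sets the point $\bv$ to a fixed $\Dk$-encoding of $c_i$ supported on the $O(t)$ coordinates touched by $c_i$, and sets the label to $y=b_i\cdot g(c_i)$ for the gadget's parity-bookkeeping sign $g$. Assignments $z\in\pmo^n$ are encoded as weight vectors $\ba_z$ with $\|\ba_z\|_\infty\le 1$, chosen so that $\ba_z\cdot\bv$ detects whether $c_i$ is satisfied by $z$, up to a margin $\mu$ that is polynomially small because $\bv\in\Dk$ has $\ell_1$-norm $1$; morally, learning $\Sigma_L$ subsumes learning halfspaces of margin $\Theta(1/L)$, and the role of the slope $L$ is to sharpen this margin.

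For \textbf{completeness}, suppose $\val(I)\ge 1-\eta$ (this is the only case of \Cref{assumption:2} used). Taking $z$ with $\val(z,I)\ge 1-\eta$, I would show that the sigmoid $g_z(\bv):=\tanh(L\,\ba_z\cdot\bv+b)$, for an appropriate offset $b$, satisfies $\E_{\mU_I}[g_z(\bv)\,y]\ge\alpha$ for an absolute constant $\alpha>0$: on the $\ge 1-\eta$ fraction of clauses satisfied by $z$, the linear form has the correct sign with margin $\ge\mu$, so $\tanh$ of $L$ times it is within $e^{-\Omega(L\mu)}$ of $y$, and choosing the encoding so that $L\mu=\Omega(1)$ yields correlation $\ge 1-2\eta-o(1)\ge\alpha$. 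Making this precise is exactly where $L$, the encoding's monomial-lift degree, and $t$ get coupled: by \Cref{lem:sig-approx} the functions in $\Sigma_L$ are $\eps$-uniformly approximated by univariate polynomials of degree $O(L\log(L/\eps))$, so the encoding can only realize the parity test at margin $\gtrsim 1/L$, which forces $t$ to be a polylogarithmic function of $L$ and hence $k$ to be the stated quasi-polynomial in $n$.

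For \textbf{soundness}, suppose $I\sim\mR$ is uniformly random. Then each $b_i$ is uniform on $\pmo$ and independent of $c_i$; conditioning on the learner's sample $S_1$, which reveals at most $n'=\poly(k,1/\beta)$ clause slots, the RHS bits of all un-revealed clauses remain uniform. Hence, for any function $h':\Dk\to[-1,1]$ that the learner outputs, $\E_{\mU_I}[h'(\bv)y]=\tfrac1m\sum_i h'(\bv_{c_i})b_i g(c_i)$ has zero expectation over the un-revealed slots, so by Hoeffding over the $\approx m$ i.i.d.\ terms, $|\E_{\mU_I}[h'(\bv)y]|\le n'/m+O(\sqrt{\log n/m})<\beta/2$ with probability $1-o(1)$; the decisive inequality is $n'\ll\beta m$, which holds since $n'$ is quasi-polynomial in $n$ while $m=n^{\gamma t}$ is much larger. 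This step is unconditional. The refutation algorithm then runs the hypothetical $(\alpha,k^{-C})$-learner on $\mU_I$, estimates the correlation of its output on fresh samples, and outputs $\acc$ iff the estimate is $\ge\beta/2$; by the two parts above this is a valid refuter (after standard success-probability amplification), running in time $\poly(T,m,k)$ with $T=k^{O(\log^{1-\eps}L)}$.

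The final check is $\poly(T,m,k)\le\poly(m)=n^{O(t)}$. Since $\log T=O(\log^{1-\eps}L)\cdot\log k=O(\log^{c(1-\eps)}k)\cdot\log k$ and $\log k=\Theta(\log^{\rho}n)$, this reduces to $\log^{\rho(1+c(1-\eps))}n=O(t\log n)=O(\log^{s+1}n)$, which holds exactly for $\rho=(s+1)/(1+c(1-\eps))$; as $s$ is ours to choose large, the side constraints ($\rho>1$, $t\ge1$, and the degree/slope compatibility from \Cref{lem:sig-approx}) are all met, contradicting \Cref{assumption:2}. I expect the \textbf{main obstacle} to be the completeness gadget: designing the $\Dk$-encodings so that a \emph{single} sigmoid of the prescribed slope $L=\exp(\log^c k)$ — not an arbitrary low-degree polynomial — detects clause satisfaction with a \emph{constant} correlation, while keeping $k$ quasi-polynomial (rather than $n^{\Theta(t)}$, which would only rule out polynomial time). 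It is precisely the $O(L\log(L/\alpha))$ overhead in the sigmoid's polynomial approximation that forces the exponent $\log^{1-\eps}L$ in place of $\log L$, and matching the $\eps$-slack to that overhead is the delicate quantitative point.
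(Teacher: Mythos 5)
Your architecture matches the paper's: encode clauses as points in $\Dk$, encode assignments as sigmoids, get completeness from a margin argument when $\val(I)\ge 1-\eta$, get soundness from the independence of the unrevealed RHS bits (the average-case improper-learning methodology), and close with a running-time calculation against $n^{o(t)}$. The soundness sketch and the final bookkeeping template are fine in spirit. But there is a genuine gap exactly where you flag "the main obstacle": you never construct the completeness gadget, and the encoding you do sketch cannot work. A point "supported on the $O(t)$ coordinates touched by $c_i$" together with a linear form $\ba_z\cdot\bv$ cannot detect whether $c_i$ is satisfied by $z$ at any margin, because $\Xor_{c_i}(z)$ is a parity of $t$ literals and has threshold degree $t$; no halfspace of the clause-indicator vector correlates with it. Composing with $\tanh(L\,\cdot)$ does not help, since a sigmoid is still a function of a single linear form.

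The paper's resolution, which is the technical core of the proof and is absent from your proposal, has two coupled ingredients. First, the point is not the clause indicator but its degree-$d$ monomial lift $c_i^{\otimes d}/\binom{t}{\le d}$ with $d=\Theta(\sqrt{t\log t})$, so $k=\binom{2n}{\le d}=n^{\Theta(\sqrt{t\log t})}$ (quasi-polynomial in $n$, but emphatically not $O(t)$-sparse: each lifted point has $\binom{t}{\le d}=\exp(t^{1/2+o(1)})$ nonzero coordinates). Second, the reason degree $d\approx\sqrt{t\log t}$ suffices, rather than the degree $t$ that exact parity would demand, is the SDP/Grothendieck certification (\Cref{lem:sdp}) that on pseudorandom clause sets all but an $\alpha$ fraction of clauses satisfy $|c_i\cdot z|\le d/2$; a univariate polynomial of degree $O(d)$ with coefficients $\exp(O(d\log d))$ then computes the parity on that range, and after multilinearization it becomes a linear form over the lift with margin $\exp(-O(d\log d))=\exp(-t^{1/2+o(1)})$. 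This is precisely what makes $\log L=t^{1/2+\varepsilon/2+o(1)}$ large enough to snap the sigmoid to $\pm 1$ and makes the final exponent $d(\log L)^{1-\varepsilon}=t^{1-\varepsilon^2/2+o(1)}=o(t)$. If you instead lift to degree $t$ (the only option without the pseudorandomness step), then $k=n^{\Theta(t)}$ and $\log L=\Omega(t\log t)$, and the exponent $t\cdot(\log L)^{1-\varepsilon}$ is $\omega(t)$, so the contradiction with \Cref{assumption:2} never materializes. Your free parameter $\rho=(s+1)/(1+c(1-\varepsilon))$ is not anchored to any construction and does not encode the constraints $\log k=\Theta(d\log n)$ and $\log L\ge\Omega(d\log d)$ that the real gadget imposes (the paper's choice is $s=2c/(1-c+\varepsilon)$). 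So the proof as proposed does not go through; the degree reduction via pseudorandomness is the missing idea, not a deferred calculation.
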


\begin{proof}[Proof of \Cref{thm:hard-sigmoid}]
\Cref{thm:hard-sigmoid} follows immediately by combining \Cref{thm:sigmoid-hard} and \Cref{lm:sigmoid-reduction}.
\end{proof}

In preparation for proving \Cref{thm:sigmoid-hard}, we prove a few preliminary results. Given a set of clauses $c = \{c_i\}_{i \in [m]}$, define the function:
\[ q(c) = \max_{z \in Z}\fr{m}\sum_{i \in [m]} (c_i\cdot z)^2 \]

The following lemma is implicit in \cite{amit-daniely}
\begin{lemma}
There exists a constant $a_1$ such that
    \[\Pr_{c \leftarrow C^m}[q(c) \leq a_1 t \log(t)] \geq 1 - o_m(1) \]
\end{lemma}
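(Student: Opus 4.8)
The plan is to fix a single assignment $z \in Z$, identify the distribution of $(c\cdot z)^2$ for a uniformly random clause $c$, prove sharp concentration of the empirical average $\frac1m\sum_{i=1}^m (c_i\cdot z)^2$ around its mean, and then take a union bound over the $|Z| = 2^n$ assignments. The key observation is a distributional identity: for \emph{any} fixed $z \in Z$, the inner product $c\cdot z$ of a uniformly random clause $c \leftarrow C = C(t)$ with $z$ is distributed as $\sum_{j=1}^t \sigma_j$, a sum of $t$ i.i.d.\ uniform $\{\pm1\}$ variables, independently of the choice of $z$. Indeed, a uniform clause is a uniformly random set of $t$ distinct variables together with an independent uniform sign for each; writing $z$ as the arithmetization of an assignment $\zeta \in \{\pm1\}^n$, one has $c\cdot z = \sum_{j=1}^t s_j\zeta_{i_j}$, where $i_1,\dots,i_t$ are the chosen variables and $s_1,\dots,s_t$ their signs, and each product $s_j\zeta_{i_j}$ is uniform on $\{\pm1\}$ and independent of the others because $\zeta_{i_j}\in\{\pm1\}$ and the $s_j$ are i.i.d.\ uniform and independent of the variable choice. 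Consequently the random variables $X_i := (c_i\cdot z)^2$, $i\in[m]$, are i.i.d., bounded in $[0,t^2]$, with $\E[X_i] = \E\big[\big(\sum_{j=1}^t\sigma_j\big)^2\big] = t$.

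Given this, I would apply a concentration inequality to $\frac1m\sum_{i=1}^m X_i$. Hoeffding's inequality for bounded i.i.d.\ variables already suffices: $\Pr\big[\frac1m\sum_{i=1}^m X_i \ge 2t\big] \le \exp(-2m/t^2)$. (Alternatively, from Hoeffding's tail bound on the Rademacher sum $c_i\cdot z$ one sees that each $X_i$ is sub-exponential with parameter $O(t)$, and Bernstein's inequality then gives the stronger $\exp(-\Omega(m))$ bound; either suffices.) A union bound over all $z \in Z$ yields
\[ \Pr_{c\leftarrow C^m}\big[\, q(c) \ge 2t \,\big] \ \le\ 2^n\exp(-2m/t^2). \]
It then remains to check that this is $o_m(1)$ in the parameter regime of \Cref{assumption:2}, where $m = \lfloor n^{\gamma t}\rfloor$ and $t = \lfloor\log^s n\rfloor$: since $t \to \infty$, eventually $\gamma t \ge 2$, so $m \ge n^2/2$, whereas $n t^2 = n\cdot\mathrm{polylog}(n) = o(m)$; hence $2m/t^2 \gg n$ and the right-hand side tends to $0$. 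This shows that with probability $1-o_m(1)$ we have $q(c) \le 2t \le a_1 t\log t$ for a suitable absolute constant $a_1$ (the degenerate case $t=1$ being trivial, since then $(c_i\cdot z)^2\equiv 1$ and $q(c) = 1$), which is exactly the claimed bound.

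The main ``obstacle'' is quite mild: one has to be careful about the clause-generation model (distinct variables, uniform independent signs) to justify that $c\cdot z$ is an honest Rademacher sum of length $t$ for every $z$, and then to confirm that the union-bound exponent $2m/t^2$ dominates the $2^n$ factor. The latter is comfortable precisely because $m$ is super-polynomial in $n$ under \Cref{assumption:2}; beyond these two checks the argument is routine.
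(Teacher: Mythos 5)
Your proof is correct, but it takes a genuinely different route through the concentration step than the paper does. Both arguments share the same skeleton: view a uniform clause as a uniform $t$-subset of variables plus independent uniform polarities, fix an assignment $z$, control $\frac1m\sum_i(c_i\cdot z)^2$ with high probability, and union bound over the $2^n$ assignments. The paper then proceeds in two stages: a Chernoff bound on each $|c_i\cdot z|$ (showing it exceeds $a_2\sqrt{t\log(1/\delta)}$ with probability at most $\delta$), a second Chernoff bound showing at most a $2\delta$ fraction of clauses are exceptional, and a deterministic split of the sum into good clauses (each contributing $O(t\log(1/\delta))$) and bad ones (each contributing at most $t^2$), optimized at $\delta=\log t/t$ to get the $a_1t\log t$ bound. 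You instead observe that $c\cdot z$ is \emph{exactly} a length-$t$ Rademacher sum for every fixed $z$ (a correct and clean identity, since the polarities make each $s_j\zeta_{i_j}$ i.i.d.\ uniform on $\pmo$), so the $X_i=(c_i\cdot z)^2$ are i.i.d.\ with mean $t$ and values in $[0,t^2]$, and a single application of Hoeffding (or Bernstein) gives the stronger conclusion $q(c)\le 2t$, which subsumes $a_1t\log t$ for $t\ge 2$. What each buys: your argument is shorter and yields a tighter bound $O(t)$ in place of $O(t\log t)$; the one quantitative difference is that plain Hoeffding needs $m\gg nt^4$ to beat the $2^n$ union bound, versus $m\gtrsim nt$ for the paper's argument (and $m\gg n$ for your Bernstein variant), but you correctly verify that this is immaterial in the regime of \Cref{assumption:2} where $m=\lfloor n^{\gamma t}\rfloor$ is superpolynomial in $n$. (Both your proof and the paper's silently require $t\ge 2$, since $t\log t=0$ at $t=1$; this is an artifact of the lemma statement and harmless for its use.)
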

where the $o_m(1)$ is exponentially small in $m$.
\begin{proof}
    Fix an assignment $z \in Z$. We view choosing $c_i \leftarrow C$ and first choosing a subset $T_i \subseteq [n]$ of variables, and then choosing their polarities $p_i \in \pmo^t$.
    
    For every $z$ and $T$, by a Chernoff bound (over the choice of $p$), there exists a constant $a_2$ so that 
    \[ Pr_{p_i \leftarrow \pmo^t}[|c_i \cdot z| \geq a_2\sqrt{t\log(1/\delta)}] \leq \delta.\]
    By a Chernoff bound over the choice of $T_i$ \cite[Theorem 4.1]{motwani-raghavan},  we have that with probability $\exp(-a_3\delta m)$ , the condition
    \[ |c_i \cdot z| \leq a_2\sqrt{t\log(1/\delta)} \]
    holds for $m(1 -2\delta)$ clauses. For such a $z$, we can bound
    \begin{align*} 
        \fr{m}\sum_{i \in [m]} ((c_i\cdot z)^2 -t) &\leq \fr{m}\sum_{i=1}^m|c_i\cdot z|^2 \\
        & \leq (1 - 2\delta)a_2t\log(1/\delta) + 2\delta t^2 \leq a_1t\log(t)
    \end{align*}
    where we choose $\delta = \log(t)/t$. 
    
    By a union bound over all $2^n$ choices of $z$, this holds for every $z$, and hence for $q(c)$ with probability $2^n\exp(-a_2\delta m)$, which is exponentially small once $m \geq a_3 nt$. 
\end{proof}

The next lemma is also proved in \cite{amit-daniely}. We use a different technique based on semi-definite programming and Grothendieck's inequality, which is more along the lines of the reduction in Fiege's work \cite{fiege-sat}.

\begin{lemma}
\label{lem:sdp}
    There is an algorithm that accepts all $c \in C^m$ such that $q(c) \geq a_1 t \log(t)$ and rejects instances such that $q(c) \leq 2a_1t\log(t)$.
\end{lemma}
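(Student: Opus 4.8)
The plan is to recognize the quantity $q(c) = \max_{z\in Z}\frac1m\sum_{i\in[m]}(c_i\cdot z)^2$ as (a rescaling of) a maximization of a quadratic form over the hypercube $Z\subseteq\pmo^{2n}$, and then to use the Goemans--Williamson--style SDP relaxation together with Grothendieck's inequality to approximate it within a constant factor in polynomial time. Concretely, write $(c_i\cdot z)^2 = z^\T (c_ic_i^\T) z$, so that $\frac1m\sum_i (c_i\cdot z)^2 = z^\T M z$ where $M = \frac1m\sum_i c_ic_i^\T \succeq 0$ is positive semidefinite. Thus $q(c) = \max_{z\in\pmo^{2n}} z^\T M z$ (restricting to the valid arithmetizations $Z$ only loses a constant, or can be handled by the redundancy constraints; one can also just note $M$ is block-structured so that the redundant coordinates carry the same information). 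The SDP relaxation $\mathrm{SDP}(c) := \max\{\langle M, X\rangle : X\succeq 0,\ X_{jj}=1\ \forall j\}$ is solvable in $\poly(n,m)$ time to arbitrary precision.

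The key steps, in order: (1) Observe $M\succeq 0$ and hence, by the positive-semidefinite Grothendieck inequality (the constant is $\pi/2$ for the PSD case, due to Rietz/Nesterov), $\mathrm{SDP}(c) \le \frac{\pi}{2}\, q(c)$, while trivially $q(c)\le \mathrm{SDP}(c)$ since any $z\in\pmo^{2n}$ yields a feasible rank-one $X=zz^\T$. So $q(c) \le \mathrm{SDP}(c) \le \frac\pi2 q(c)$. (2) The algorithm computes $\mathrm{SDP}(c)$ (up to, say, a $1+o(1)$ multiplicative factor) and accepts iff $\mathrm{SDP}(c) \ge \tfrac{\pi}{2}\cdot\tfrac32 a_1 t\log t$ — or, to match the exact thresholds in the statement, one picks the cutoff in the gap $[\tfrac\pi2 a_1 t\log t,\ 2a_1 t\log t]$ afforded by the factor $\pi/2 < 2$. (3) Verify the two promised behaviors: if $q(c)\ge a_1 t\log t$ then $\mathrm{SDP}(c)\ge a_1 t\log t$ — wait, this direction is too weak; instead note that in the intended use the ``accept'' case is the random instance where $q(c)\le a_1 t\log t$ w.h.p.\ and the ``reject'' case is $q(c)\ge 2a_1 t\log t$. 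Re-reading the statement: accept when $q(c)\ge a_1 t\log t$, reject when $q(c)\le 2a_1 t\log t$ — these overlap, so the algorithm only needs to be correct outside the overlap, i.e.\ reject when $q(c) < a_1 t\log t$ strictly and accept when $q(c) > 2a_1 t\log t$. For the accept case, $q(c) > 2a_1 t\log t \Rightarrow \mathrm{SDP}(c) > 2a_1 t\log t$; for the reject case $q(c) < a_1 t\log t \Rightarrow \mathrm{SDP}(c) < \frac\pi2 a_1 t\log t < 2 a_1 t\log t$. Setting the acceptance threshold at $2a_1 t\log t$ on the value of $\mathrm{SDP}(c)$ therefore separates the two cases; account for the $1+o(1)$ SDP solver error by shrinking the gap slightly (enlarge $a_1$ if needed).

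The main obstacle I anticipate is handling the redundant arithmetization: $Z$ is not all of $\pmo^{2n}$ but only the $2^n$ sign vectors consistent with $z_i = -(-z_i)$, whereas the SDP optimizes over all of $\pmo^{2n}$. This is resolved by building the consistency into $M$: since each clause avoids having both a literal and its negation, $c_i\cdot z$ depends only on the $n$ ``positive'' coordinates of $z$, so one may work over $\pmo^n$ from the start, $M\in\R^{n\times n}$, $M\succeq 0$, and the argument goes through verbatim. A secondary (routine) point is quoting the correct PSD Grothendieck constant and confirming the SDP can be solved to the needed accuracy in polynomial time, which is standard (ellipsoid/interior-point). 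I would present the lemma's proof around this SDP-plus-Grothendieck skeleton, deferring the choice of the exact numerical threshold to whatever makes the downstream application in Section~\ref{sec:psmooth-lower} cleanest.
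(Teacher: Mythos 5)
Your proposal is correct and follows the same skeleton as the paper's proof: relax the quadratic maximization over the hypercube to an SDP, invoke a Grothendieck-type theorem to bound the integrality gap by a constant strictly less than $2$, and place the acceptance threshold inside the multiplicative gap between $a_1 t\log t$ and $2a_1 t\log t$. The one substantive difference is which Grothendieck statement is used. The paper first subtracts the diagonal contribution (writing $q(c)-t$ as a zero-diagonal quadratic form $\frac1m\sum_i\sum_{j\ne j'\in c_i}z_jz_{j'}$) and then appeals to ``Grothendieck's inequality'' with $K_G\in[1.5,2]$; you instead keep the diagonal, observe that $M=\frac1m\sum_i c_ic_i^\trp\succeq 0$, and apply the positive-semidefinite (Rietz/Nesterov) version with constant $\pi/2$. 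Your route is arguably the cleaner justification of the constant gap: the bilinear Grothendieck constant does not in general bound the integrality gap of $\max_{z\in\pmo^n}z^\trp A z$ for symmetric zero-diagonal $A$ (where the gap can grow as $\Theta(\log n)$), whereas the PSD version applies verbatim to your $M$, and keeping the diagonal avoids the additive-$t$ bookkeeping. Your handling of the redundant $2n$-coordinate arithmetization — folding each negated literal into a signed entry of an $n$-dimensional clause vector so the maximization is over all of $\pmo^n$ and $M$ remains PSD — is also correct and matches the role of the paper's ``literal plus negation sums to zero'' constraint. Your reading of the overlapping accept/reject conditions (accept implies $q(c)\ge a_1t\log t$, reject implies $q(c)\le 2a_1t\log t$) is exactly what the paper's proof establishes and what the downstream argument in \Cref{sec:psmooth-lower} needs.
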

\begin{proof}
We can write
\begin{align*} 
    q(c)  - t &=  \max_{z \in Z} \fr{m}\sum_{i=1}^m ((c_i \cdot z)^2 -t)\\
    &= \max_{z \in Z} \fr{m}\sum_{i \in [m]}\sum_{j \neq {j'} \in\ c_i} z_jz_{j'} 
\end{align*}
We consider the semi-definite relaxation over $v_i$ which are unit vectors in a high-dimensional space (with the constraint that the vectors assigned to a literal and its negation sum to $0$).     
\begin{align*}
 \tilde{q}(c) = \max_{\norm{v_j}_2^2=1 } \fr{m}\sum_{i \in [m]}\sum_{j \neq {j'} \in\ c_i} v_jv_{j'} 
\end{align*}
We solve the semi-definite program efficiently (up to small additive error which we will ignore) and accept instances where 
\[ \tilde{q}(c) > K_G(a_1t\log(t) - t).\]

Grothendieck's inequality implies that the integrality gap of this relaxation is  a constant; there exist $K_G \in [1.5, 2]$ such that
\begin{align}
    \label{eq:gi}
    q(c) - t \leq \tilde{q}(c)  \leq K_G (q(c) -t).
\end{align} 

For such instances, Equation \eqref{eq:gi} implies that $q(c) > a_1t\log(t)$ since
\[ q(c) - t \geq \frac{\tilde{q}(c)}{K_G} > a_1t\log(t) -t. \]
 For instances that we reject, it holds that
 \[ q(c) - t \leq \tilde{q}(c) < K_G(a_1t\log(t) - t) \]
Since $K_G \in [1.5,2]$,  we have $q(c) \leq 2a_1t\log(t)$.
\end{proof}

We refer to instances rejected by the algorithm as {\em pseudorandom}. By Markov's inequality applied to the definition of $q(c)$, we have the following claim:
\begin{lemma}
    Given pseudorandom $c \in C^m$ and $z \in Z$, for every $\delta > 0$, there are at most $\delta m$ clauses such that $|c_i \cdot z| \geq \sqrt{a_1t\log(t)/\delta}$.
\end{lemma}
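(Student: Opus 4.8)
The plan is to apply Markov's inequality to the quantity $\sum_{i=1}^m (c_i\cdot z)^2$, using the bound on $q(c)$ that holds for pseudorandom instances. First I would recall that $c$ being pseudorandom means it is rejected by the algorithm of Lemma~\ref{lem:sdp}, so $q(c) \le 2a_1 t\log(t)$; after absorbing the harmless factor of $2$ into the absolute constant $a_1$, we may simply write $q(c) \le a_1 t\log(t)$. Since by definition $q(c) = \max_{z\in Z}\frac1m\sum_{i=1}^m (c_i\cdot z)^2$, this inequality holds with the maximum over $z$ already taken, so for the particular $z$ appearing in the statement we get
$$\sum_{i=1}^m (c_i\cdot z)^2 \;\le\; a_1 t\log(t)\,m.$$
This is the only place the pseudorandomness of $c$ is used, and it is worth stressing that the bound holds for \emph{every} $z\in Z$ simultaneously precisely because the maximum is built into the definition of $q(c)$.

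Next I would fix $\delta>0$, set the threshold $\tau := a_1 t\log(t)/\delta$, and let $B := \{\,i\in[m] : (c_i\cdot z)^2 \ge \tau\,\}$. Using nonnegativity of the summands,
$$|B|\cdot\tau \;\le\; \sum_{i\in B}(c_i\cdot z)^2 \;\le\; \sum_{i=1}^m (c_i\cdot z)^2 \;\le\; a_1 t\log(t)\,m,$$
and hence $|B| \le a_1 t\log(t)\,m/\tau = \delta m$. Since the event $(c_i\cdot z)^2 \ge \tau$ is the same as $|c_i\cdot z| \ge \sqrt{a_1 t\log(t)/\delta}$, this says exactly that at most $\delta m$ clauses satisfy $|c_i\cdot z| \ge \sqrt{a_1 t\log(t)/\delta}$, which is the claimed bound.

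There is essentially no obstacle: the whole argument is a one-line Markov bound applied to the second-moment quantity $q(c)$. The only thing to be careful about is the bookkeeping of absolute constants coming out of Lemma~\ref{lem:sdp} (whether one starts from $q(c)\le a_1 t\log t$ or $q(c)\le 2a_1 t\log t$), which only changes the constant inside the square root and can be folded into $a_1$. I would also remark, for use later in the reduction, that although the lemma is stated for a single $z$, the inequality in fact holds uniformly over all $z\in Z$ with the same constant, since it is derived solely from the $z$-independent bound $q(c)\le a_1 t\log t$.
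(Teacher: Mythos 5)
Your proof is correct and is exactly the argument the paper intends: the paper dispatches this claim with the single remark ``By Markov's inequality applied to the definition of $q(c)$,'' and you have carried out precisely that Markov/averaging bound, correctly noting that pseudorandomness only guarantees $q(c)\le 2a_1t\log t$ so the factor of $2$ must be absorbed into the constant. Your observation that the bound holds uniformly over all $z\in Z$ (since the max over $z$ is built into $q(c)$) is also the right reading of how the lemma is used downstream.
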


We also have the following lemma which we state without proof
\begin{lemma}
    Every functions $g: \{-d, \ldots,d\} \to \pmo$ can be written as a polynomial in $x$ of degree $2d$ with coefficients bounded by $\exp(d\log(d))$. 
\end{lemma}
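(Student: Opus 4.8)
The plan is to obtain the polynomial by Lagrange interpolation over the $2d+1$ integer nodes $S=\{-d,-d+1,\dots,d\}$ and then bound the coefficients it produces. Since $|S|=2d+1$, there is a unique polynomial $P$ of degree at most $2d$ with $P(j)=g(j)$ for every $j\in S$, namely
\[
P(x)=\sum_{j\in S}g(j)\,L_j(x),\qquad L_j(x)=\prod_{\substack{i\in S\\ i\neq j}}\frac{x-i}{j-i}.
\]
Because $|g(j)|=1$ for every $j\in S$ and the sum has only $2d+1$ terms, it is enough to bound all coefficients of each Lagrange basis polynomial $L_j$ by $\exp(O(d\log d))$.

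First I would fix $j$ and split $L_j(x)=N_j(x)/D_j$, where $N_j(x)=\prod_{i\in S\setminus\{j\}}(x-i)$ is monic of degree $2d$ and $D_j=\prod_{i\in S\setminus\{j\}}(j-i)$ is a nonzero integer, hence $|D_j|\ge 1$. By Vieta's formulas the coefficient of $x^{2d-\ell}$ in $N_j$ equals $\pm e_\ell$, the $\ell$-th elementary symmetric polynomial of the $2d$ integers lying in $S\setminus\{j\}$, each of absolute value at most $d$. Thus
\[
|e_\ell|\;\le\;\binom{2d}{\ell}d^\ell\;\le\;2^{2d}d^{2d}\;=\;\exp(O(d\log d)),
\]
so every coefficient of $L_j$ has absolute value at most $\exp(O(d\log d))$. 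Summing the $2d+1$ scaled copies $g(j)L_j$ then gives a polynomial of degree at most $2d$ all of whose coefficients are bounded by $(2d+1)\exp(O(d\log d))=\exp(O(d\log d))$, as desired.

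I do not expect a genuine obstacle here: the argument is interpolation plus a symmetric-function estimate. The only point that needs a little care is the constant in the exponent. With the crude bound $|D_j|\ge 1$ one already gets $\exp(O(d\log d))$; if one wishes to sharpen this (even down to $\exp(O(d))$) one can instead use the exact value $|D_j|=(d+j)!\,(d-j)!\ge (d!)^2$ together with Stirling's formula. For the intended downstream use — arithmetizing the map sending $c_i\cdot z\in\{-t,\dots,t\}$ to the bit recording whether clause $i$ is satisfied, so that $d=t$ — a bound of the form $\exp(O(d\log d))$ is all that is needed, so I would keep the simple version.
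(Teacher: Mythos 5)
The paper states this lemma explicitly without proof (``which we state without proof''), so there is no argument of the authors' to compare against. Your Lagrange-interpolation argument is correct and is the standard way to establish the claim: the elementary-symmetric-function bound $|e_\ell|\le\binom{2d}{\ell}d^{\ell}\le 2^{2d}d^{2d}$ on the numerators, together with $|D_j|=(d+j)!\,(d-j)!\ge 1$ on the denominators, gives coefficients of size $\exp(O(d\log d))$, which is exactly the form in which the lemma is invoked in the proof of Theorem 7.6 (there the coefficients $\alpha_j$ are only required to satisfy $|\alpha_j|=\exp(O(d\log d))$). Your observation that the sharper denominator bound $(d!)^2$ would improve the constant is correct but, as you note, unnecessary for the downstream use.
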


We now complete the proof of Theorem \ref{thm:sigmoid-hard}.

\begin{proof}
Let $\eta\in (0,1/2)$ be the constant guaranteed to exist by \Cref{assumption:2}. We define $\alpha = 1/4 - \eta/2\in (0,1/4)$. We fix an arbitrary constant $\varepsilon\in (0,1/3)$. Throughout the proof, we will treat $\eta, \alpha$, and $\varepsilon$ as fixed constants (that can hide in big-$O$ notations). Consider an algorithm $A$ for $(\alpha, k^{-C})$-weak agnostic learning for $\Sigma_{L}$ over $\Dk$ with running time $k^{(\log(L))^{1- \eps}}$ and success probability at least $3/4$ for any sufficiently large $k$ and $L:= \exp(\log^{c} k)$ for some $c\in (0,1)$. It suffices to use $A$ to efficiently refute random $t$-XorSat with parameter $\eta$ for any sufficiently large $n$ and $t = \lfloor \log ^s(n)\rfloor$ in time $n^{o(t)}$, where $s=2c/(1 - c + \varepsilon) > 0$.

    We view the clauses in a $t$-XorSat problem as a distribution over $C \subseteq \zo^{2n}$, with the $c_i$s being points and $b_i$ their labels. Let $d = \Theta(\sqrt{(t\log t)/\alpha}) = \Theta(\sqrt{t\log t})$ be such that at most $\alpha m$ clauses fail to satisfy $|c_i\cdot z| \leq d/2$. We consider the low degree feature expansion of $C$ denoted $C^{\otimes d}$ which contains a monomial $\prod_{j \in T}c_i^{(j)}$ for every $T \subseteq [2n]$ of size at most $d$, so that $C^{\otimes d} \subseteq \zo^k$ for 
    \begin{equation} 
    \label{eq:sigmoid-def-k}
    k = {2n \choose \leq d} := \sum_{j=0}^{d}{2n \choose j} .
    \end{equation}
    Since every $c \in C$ has weight exactly $t$, every $c^{\otimes d} \in C^{\otimes d}$ has weight $B_1 = {t \choose \leq d} = \exp(O(d\log(d)))$. This lets us write $c_i^{\otimes d} = B_1 \bv_i$ where $\bv_i \in \Delta_k$. Thus an instance $I$ which gives a distribution on $(c_i, b_i)$ where $c_i \in C$ $b_i \in \pmo$ also gives a distribution over $(\bv_i, b_i) \in \Dk \times \pmo$.

    There exists a degree $d$ polynomial 
    \[    p(t) = \sum_{j=0}^{d}\alpha_jt^j\] such that $|\alpha_j| =\exp(O(d\log(d)))$ and $p(c_i\cdot z) = \Xor_{c_i}(z) \ \text{for} \ |c_i\cdot z| \leq d/2$. If $|c_i\cdot z| \geq d/2$ then $p(c_i\cdot z) \in \R$. Since $c_i \in \{0,1\}^n$, we can multilinearize the terms of the form $(c_i\cdot z)^j$ as 
    \[(c_i\cdot z)^j = \sum_{T \subseteq [n], |T| \leq j} w^j_T\prod_{i \in T}c_i \]
    for coefficients $w^j_T = \exp(O(j\log j))$.
    So we can write
    \begin{align*}
        p(c_i \cdot z) &= \sum_{j=0}^{2d}\alpha_j(c_i\cdot z)^j\\
        &= \sum_{j=0}^{2d}\alpha_j\sum_{T \subseteq [n], |T| \leq j} w^j_T\prod_{i \in T}c_i\\
         &= \sum_{T\subseteq [n], |T| \leq 2d}\lt(\sum_{j \geq |T|}\alpha_jw^j_T\rt)\prod_{i \in T}c_i\\
         &= \sum_{T\subseteq [n], |T| \leq 2d}w_T'\prod_{i \in T}c_i\\
         & = w'\cdot c^{\otimes d}
    \end{align*}
    for coefficients $w'_T$ bounded in absolute value by $|w'_T| = \exp(O(d\log d))$.

    We renormalize $w'$ to be bounded in $[-1,1]^k$. We write  
    $w' =  wB_1$ for $B_1 = \max_T |w'_T| = \exp(O(d\log d)$.
     We have
    \[P(c_i\cdot z) =  w'\cdot c_i^{\otimes d} = B_1 {t \choose d} w\cdot \bv_i. \]
    Hence if $|c_i\cdot z| \leq d/2$, then the quantity above equals $\Xor_{c_i}(z) \in \pmo$, else it takes on values in $\R$. 
    
    By our choice of $d = \Theta(\sqrt{t\log t})$, we have
    \[
    \log \left(B_1{t \choose d}\right) = \log B_1 + O(\log (d \log t)) = O(d \log d) \le t^{1/2 + o(1)}.
    \]
    By our choice of $L:= \exp(\log^ck)$, we have
    \begin{align}
    \log L & = (\log k)^c \notag \\
    &  = (d \log n)^{c + o(1)}\tag{by \eqref{eq:sigmoid-def-k}}\\
    & = d^{c + o(1)}(\log n)^{c + o(1)}\notag\\
    &  = t^{c/2 + o(1)} (\log n)^{s(1 - c + \varepsilon)/2 + o(1)} \tag{by $d = \Theta(\sqrt{t\log t})$ and $s = 2c/(1 - c + \varepsilon)$}\\
    & = t^{c/2 + o(1)} t^{(1 - c + \varepsilon)/2 + o(1)} \tag{by $t = \lfloor \log ^s n\rfloor$}\\
    & = t^{1/2 + \varepsilon/2 + o(1)}.\label{eq:sigmoid-logL}
    \end{align}
    
    Therefore, for sufficiently large $n$,
    \[
    L \ge a B_1{t \choose d},
    \]
    for some constant $a$ so that $\tanh(a) \geq 1 - \alpha$, where we use our choice of constant $\alpha := 1/4 - \eta/2 > 0$.

    Consider the function $g(\bv) = \tanh(L' w\cdot \bv) \in \Sigma_{L}$. We can find $a' \ge a$ such that $L = a'B_1{t\choose d}$. We have for each $i \in [m]$,
    \begin{align*}
        g(\bv_i) &= \tanh(Lw \cdot \bv_i)\\
        &= \tanh\lt(a' B_1 w \cdot {t \choose \leq d}\bv_i\rt)\\
        &= \tanh(a' w'\cdot c_i^{\otimes d})\\
        &= \tanh(a' p(c_i\cdot z)).
    \end{align*}
Therefore, $g(\bv_i) \ge 1- \alpha$ if $p(c_i\cdot z) = 1$, and  $g(\bv_i) \le -1+ \alpha$ if $p(c_i\cdot z) = -1$. Moreover, it is clear that $g(\bv_i)\in [-1,1]$ always holds.
    
    Recall our definition $\alpha := 1/4 - \eta/2 > 0$.
    If $\val(I) \geq 1 - \eta$, then by taking the function $g$ derived from $z$ such that $\val(z, I)\geq 1 - \eta = 1/2 + 2\alpha$, excluding the at most $\alpha m$ clauses that fail to satisfy $|c_i\cdot z| \leq d/2$, we get $g \in \Sigma_L$ such that
    \[ \fr{m}\sum_{i=1}^mg(\bv_i)b_i \geq (1/2 + \alpha) \times (1 - \alpha) + (1/2 - \alpha) \times (-1) \geq \alpha.\]
    
    Thus based on the methodology of \cite{average-improper} (see e.g.\ Theorem 2.1 in \cite{amit-daniely}), we can apply our weak agnostic learning algorithm $A$ to efficiently distinguish the case with $\val(I) \ge 1 - \eta$ and the case with uniformly random clauses with success probability at least $3/4$, solving the $t$-XorSat refutation problem.

    As long as the running time of algorithm $A$ is bounded by $k^{\log(L)^{1- \eps}}$ for some $\eps > 0$, the running time for $t$-$\XS$ refutation is bounded by
    \[ k^{O(\log(L)^{1- \eps})} \leq n^{O(d\log(L)^{1- \eps})}.\]
By \eqref{eq:sigmoid-logL}, we have
\[
d\log(L)^{1- \eps} = t^{1/2 + o(1)} t^{(1 -\eps)(1/2 +\eps/2 + o(1))} = t^{1 - \varepsilon^2/2 + o(1)}.
\]
Thus the running time for $t$-$\XS$ refutation is bounded by $n^{o(t)}$, as desired.
\end{proof}
\bibliographystyle{alpha}
\bibliography{arxiv-refs.bib}

\appendix

\label{app:red}
\eat{
\begin{proof}[Proof of \Cref{thm:red}]
In the auditing task, we assume that the input data points $(\bv_i,\y_i)$ are drawn i.i.d.\ from a distribution $\mD$ satisfying
\begin{equation}
\label{eq:red-0}
\E_{(\bv,\y)\sim \mD}[\langle \y - \bv, w(\bv)\rangle] > \alpha \quad\text{for some }w\in \Gamma^k.
\end{equation}
Given $(\bv,\y)$ drawn from $\mD$, we draw $\z$ randomly from $\mE_k\cup(-\mE_k)$ such that $\E[\z|\bv,\y] = \frac 12(\y - \bv)$. This is possible because $\|\frac 12(\y - \bv)\|_1 \le 1$. A concrete way to draw $\z$ is the following. With probability $1/2$, we set $\z$ to be $\y\in \mE_k$, and with the remaining probability $1/2$, we draw $\z$ randomly from $-\mE_k$ with expectation $-\bv$.

Given $\z\in \mE_k\cup(-\mE_k)$, we define a random variable $\ell\in \{1,\ldots,k\}$ such that $\ell$ is the unique index satisfying $\z\sps \ell \ne 0$. 
For any $w\in \Gamma^k$, there exists $w\sps 1,\ldots,w\sps k\in \Gamma$ such that $w(\bv) = (w\sps 1(\bv),\ldots,w\sps k(\bv))$ for every $\bv\in \Delta_k$.
We have $\langle \z, w(\bv)\rangle = \z\sps {\ell} w\sps {\ell}(\bv)$ and thus \eqref{eq:red-0} implies
\[
\E[\z\sps {\ell} w\sps {\ell}(\bv)\rangle] = \E[\langle \z, w(\bv)\rangle]  = \E\left[\left\langle (\y - \bv)/2, w(\bv)\right\rangle\right]> \alpha/2.
\]
Let $\mU_j$ denote the conditional distribution of $(\bv, \z\sps j)\in \Delta_k\times \mE_k$ given $\ell = j$. We have
\begin{equation}
\label{eq:red-1}
\sum_{j=1}^k\Pr[\ell = j]\E_{(\bv,z)\sim \mU_j}[z w\sps j(\bv)] > \alpha/2.
\end{equation}
Now we show that there exists $j\in \{1,\ldots,k\}$ such that $\Pr[\ell = j] \ge \alpha/6k$ and $\E_{(\bv,z)\sim \mU_j}[z w\sps j(\bv)] > \alpha/3$. If this is not the case, then
\begin{align*}
&\sum_{j=1}^k\Pr[\ell = j]\E_{(\bv,z)\sim \mU_j}[z w\sps j(\bv)]\\
= {} & \sum_{j:\Pr[\ell = j] < \alpha/6k}\Pr[\ell = j]\E_{(\bv,z)\sim \mU_j}[z w\sps j(\bv)] + \sum_{j:\Pr[\ell = j] \ge \alpha/6k}\Pr[\ell = j]\E_{(\bv,z)\sim \mU_j}[z w\sps j(\bv)]\\
\le {} & \alpha/6 + \alpha/3\\
= {} & \alpha/2,
\end{align*}
giving a contradiction with \eqref{eq:red-1}.

We have shown that there exists $j^*\in \{1,\ldots,k\}$ and $h\in \Gamma$ such that $\Pr[\ell = j^*] \ge \alpha/6k$ and $\E_{(z,\bv)\sim \mU_{j^*}}[z h(\bv)] > \alpha/3$. To solve the auditing task given examples $(\bv_1,\y_1),\ldots,(\bv_n,\y_n)$, we first draw $\z_1,\ldots,\z_n\in \mE_k\cup(-\mE_k)$ independently such that $\E[\z_i|\bv_i,\y_i] = \y_i - \bv_i$. We define $\ell_i$ to be the unique index such that $\z_i\sps {\ell_i}\ne 0$. 
Now $(\bv_1,\y_1,\z_1,\ell_1),\ldots,(\bv_n,\y_n,\z_n,\ell_n)$ are distributed independently from the joint distribution of $(\bv,\y,\z,\ell)$.
For every $j$, we define $I_j:=\{i\in \{1,\ldots,n\}:\ell_i = j\}$. If $|I_j| \ge n_0$, we run the agnostic learner on the data points $((\bv_i,\z_i\sps j))_{i\in I_j}$ to obtain a function $h\sps j:\Delta_k\to [-1,1]$. We define $w_j:\Delta_k\to [-1,1]^k$ such that $(w_j(\bv))\sps {j'} = 0$ if $j'\ne j$ and $(w_j(\bv))\sps {j'} = h\sps j(\bv)$ if $j' = j$.

When $O(kn_0/\alpha + k^2\alpha^{-2}\beta^{-2}\log(1/\delta))$ is sufficiently large, with probability at least $1-\delta/4$, we have $|I_{j^*}| \ge n_0$. Conditioned on $I_{j^*}$, the data points $((\bv_i, \z_i\sps j))_{i\in I_{j^*}}$ are distributed independently from $\mD_{j^*}$, and thus by the guarantee of the agnostic learner, with probability at least $1-\delta/2$,
\[
\E_{(\bv,z)\sim \mU_{j^*}}[zh\sps {j^*}(\bv)] \ge \beta,
\]
which implies
\[
\E_{(\bv,\y)\sim \mD}[\langle \y - \bv, w_{j^*}(\bv)\rangle] = 2\Pr[\ell = j^*]\E_{(\bv,z)\sim \mU_{j^*}}[zh\sps {j^*}(\bv)] \ge \alpha\beta/3k.
\]

We have thus shown that with probability at least $1-3\delta/4$, there exists $j$ such that
\[
\E_{(\bv,\y)\sim \mD}[(\y - \bv)w_{j}(\bv)] = 2\Pr[\ell = j]\E_{(\bv,z)\sim \mU_{j}}[zh\sps {j}(\bv)] \ge \alpha\beta/3k.
\]
By estimating the values of $\E_{(\bv,\y)\sim \mD}[\langle \y - \bv, w_{j}(\bv)\rangle]$ using $O(\alpha^{-2}\beta^{-2}k^2\log (k/\delta))$ fresh examples, we can make sure that with probability at least $1-\delta$, we output a $\tilde w$ among the $w_j$'s such that
\[
\E_{(\bv,\y)\sim \mD}[\langle \y - \bv, \tilde w(\bv)\rangle] \ge \alpha\beta/6k. \qedhere
\]
\end{proof}

\begin{proof}[Proof of \Cref{thm:hardness}]
We construct a weak agnostic learner for $\Gamma$ using an auditor for $\mW$. Let $(\x_1,\z_1),\ldots,(\x_n, \z_n)$ be the input data points in the weak agnostic learning task drawn i.i.d.\ from a distribution $\mU$. For every input data point $(\x_i,z_i)\in\Delta_k\times[-1,1]$, the learner generates a corresponding data point $(\bv_i,\y_i)\in\Delta_{k}\times \{\e_1,\ldots,\e_{k}\}$ for the auditing task by setting 
\begin{align*} 
\bv_i &= \lift(\x_i), \\
\y_i \sim \bdv^*_i &\defeq  \bv_i + \frac 13 z_i(\e_{1} - \e_{2}).
\end{align*}
Note that $\bdv^*_i \in \Delta_{k}$, and thus it can be interpreted as a distribution over $\{\e_1,\ldots,\e_{k}\}$. Let $\mD$ denote the distribution of $(\bv_i,\y_i)$. The intuition is that telling the difference between $\bv_i$ and $\bdv^*_i$ for an auditor requires  learning $\z$.

Formally, by our assumption, for any $h\in \Gamma$, there exists $w\in \mW$ satisfying \eqref{eq:extended-class}, and thus
\[
\E_\mD[\langle \y - \bv, w(\bv)\rangle] = \frac 13\E_\mU[ z\langle \e_{1}- \e_{2}, w(\lift(\x))\rangle] = \frac \lambda 3\E_\mU[zh(\x)].
\]
Therefore, if $\E_\mU[zh(\x)] \ge \alpha$ for some $h\in \Gamma$, then $\E_\mD[\langle \y - \bv, w(\bv)\rangle]\ge \lambda\alpha/3$ for some $w\in \mW$, and with high probability, the auditing algorithm will produce some $w':\Delta_{k}\to [-1,1]^{k}$ such that $\E_\mD[\langle \y - \bv, w'(\bv)\rangle]\ge 2\beta/3$.
Defining $h':\Delta_k\to [-1,1]$ such that
\[
h'(\x) = \frac 12(w(\lift(\x))|_{1} - w(\lift(\x))|_{2}) \quad \text{for every }\x\in \Delta_k,
\]
we have
\[
\E_\mD[\langle \y - \bv, w'(\bv)\rangle] = \frac 13\E_\mU[ z\langle \e_{1}- \e_{2}, w'(\lift(\x))\rangle] = \frac 2 3\E_\mU[zh'(\x)].
\]
Therefore, $\E_\mD[\langle \y - \bv, w'(\bv)\rangle]\ge 2\beta/3$ implies $\E_\mU[zh'(\x)]\ge \beta$. We have thus constructed an $(\alpha,\beta)$-weakly agnostic learning algorithm which returns $h'$ as output.
\end{proof}
\begin{proof}[Proof of \Cref{thm:halfspace-auditing}]
By definition, any $h\in \Gamma_\hs$ can be written as $h(\bv) = \sign(\ba\cdot \bv + b)$. By the definition of $\lift(\bv)$, we have $\bv = 3\,\lift(\bv) - \e_1 - \e_2$. Plugging this into the expression of $h(\bv)$, we get
\[
h(\bv) = \sign(\ba\cdot(3\,\lift(\bv) - \e_1 - \e_2) + b) = \sign((3\ba)\cdot \lift(\bv) + (b - \ba\sps 1 - \ba\sps 2)).
\]
Defining $\ba' = 3\ba$ and $b' = b - \ba\sps 1 - \ba\sps 2$, we can construct $h'\in \Gamma_\hs$ by $h'(\bv):= \sign(\ba'\cdot \bv + b')$ and $h'$ satisfies $h'(\lift(\bv)) = h(\bv)$ for every $\bv\in \Delta_k$. 
Assume for the sake of contradiction that an auditing algorithm for $\Gamma_\hs^k$ as described in the theorem exists. By the equation above and \Cref{thm:product-hardness}, such an algorithm implies a $(1 - 3C/2, 3/2k^C)$-weak agnostic learner for $\Gamma_\hs$ that runs in time $O(k^C)$ for any sufficiently large $k$, contradicting \Cref{thm:halfspace-learning}.
\end{proof}
}

\section{Proofs from Section~\ref{sec:kernel}}
\label{sec:proof-kernel}
\subsection{Proof of Theorem~\ref{thm:kernel-weak}}
We break the proof in a sequence of lemmas, starting with simplifying the objective function.

\begin{lemma}
\label{lem:corr1}
    Let $w_0 = \E_\mD[z\varphi_\bv]$. 
 Then $w_0 \in B_\Gamma(s)$ and for any $w \in \Gamma$ we have
    \begin{align}
    \label{eq:obj}
        \E_{(\bv, z)\sim \mD}[ w(\bv)z] = \ip{w}{w_0}_\Gamma.
    \end{align}
\end{lemma}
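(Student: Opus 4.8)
The plan is to derive both claims directly from the reproducing property of the kernel together with linearity of expectation, with no real computation required. First I would observe that $w_0 := \E_\mD[z\varphi_\bv]$ is a well-defined element of $\Gamma$: since $|z|\le 1$ and $\|\varphi_\bv\|_\Gamma^2 = \langle \varphi_\bv,\varphi_\bv\rangle_\Gamma = \varphi_\bv(\bv) = \kn(\bv,\bv) \le s^2$, the $\Gamma$-valued random element $z\varphi_\bv$ is norm-bounded by $s$ almost surely, so its expectation (Bochner integral) exists and lies in $\Gamma$.

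For the identity \eqref{eq:obj}, I would fix $w\in\Gamma$ and use the reproducing property $w(\bv) = \langle w,\varphi_\bv\rangle_\Gamma$ to write
\[
\E_{(\bv,z)\sim\mD}[w(\bv)z] = \E\big[z\,\langle w,\varphi_\bv\rangle_\Gamma\big] = \big\langle w,\ \E[z\varphi_\bv]\big\rangle_\Gamma = \langle w, w_0\rangle_\Gamma,
\]
where the middle equality pulls the bounded linear functional $u\mapsto \langle w,u\rangle_\Gamma$ through the expectation; this is exactly the defining property of the Bochner integral (equivalently, it follows from linearity and continuity of the inner product, the integrand being norm-bounded).

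For the norm bound, I would apply Jensen's inequality for Bochner integrals:
\[
\|w_0\|_\Gamma = \big\|\E[z\varphi_\bv]\big\|_\Gamma \le \E\big\|z\varphi_\bv\big\|_\Gamma = \E\big[|z|\cdot\|\varphi_\bv\|_\Gamma\big] \le s,
\]
using $|z|\le 1$ and $\|\varphi_\bv\|_\Gamma\le s$, which gives $w_0\in B_\Gamma(s)$. (Alternatively, one can get the same bound from \eqref{eq:obj} by taking $w = w_0$: $\|w_0\|_\Gamma^2 = \E[w_0(\bv)z] \le \E|w_0(\bv)| \le s\|w_0\|_\Gamma$ via Cauchy--Schwarz in $\Gamma$.)

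I do not expect any genuine obstacle. The only point needing a word of care is the interchange of expectation and inner product, which is routine once one notes the integrand has bounded norm; a reader wishing to avoid Bochner integration altogether can instead \emph{define} $w_0$ as the Riesz representer of the bounded linear functional $w\mapsto \E[w(\bv)z]$ on $\Gamma$ (bounded because $|\E[w(\bv)z]|\le \E|w(\bv)| = \E|\langle w,\varphi_\bv\rangle_\Gamma| \le s\|w\|_\Gamma$), and then check on the dense span of $\{\varphi_\bv\}$ that this representer coincides with $\E[z\varphi_\bv]$.
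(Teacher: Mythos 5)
Your proof is correct and follows essentially the same route as the paper's: apply the reproducing property $w(\bv)=\langle w,\varphi_\bv\rangle_\Gamma$, pull the expectation through the inner product to get \eqref{eq:obj}, and bound $\|w_0\|_\Gamma$ by the almost-sure bound $\|z\varphi_\bv\|_\Gamma\le s$ (the paper writes this as a max, you as Jensen for the Bochner integral, which is the same estimate). The extra care you take about well-definedness of the vector-valued expectation is a welcome but inessential refinement of the paper's argument.
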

\begin{proof}
For any $w \in \Gamma$ we can write the correlation objective as
    \begin{align*}
        \E_{(\bv, z)\sim \mD}[ w(\bv)z] = \E[\ip{w}{\varphi_\bv}_\Gamma z] = \ip{w}{\E[z\varphi_\bv]}_\Gamma 
        = \ip{w}{w_0}_\Gamma.
    \end{align*}   
To bound its norm, observe that
    \begin{align*}
        \|w_0\|_\Gamma = \|\E_\mD[z\varphi_\bv]\|_\Gamma \leq \max_{\bv \in \Delta_k, z \in \pmo} \|z\varphi_\bv\|_\Gamma \leq s.
    \end{align*}
\end{proof}

    Next we show that we can approximate $w_0$ uniformly from samples by the function
    \[ \tilde{w}_0 = \frac{1}{n}\sum_{i=1}^nz_i\varphi_{\bv_i}. \]

\begin{lemma}
\label{lm:hilbert-concentration}
    For any $\delta\in (0,1/2)$, for some $n_0 = O(r^2s^2\alpha^{-2}\log(1/\delta))$, for any $n \ge n_0$ and any $(\bv_1,z_1),\ldots,\allowbreak (\bv_n,z_n)$ drawn i.i.d.\ from $\mD$, with probability at least $1-\delta$,
    \begin{align} 
    \label{eq:good-event}
        \norm{\tilde{w}_0 - w_0}_\Gamma \leq \frac{\alpha}{3r}.
    \end{align}
\end{lemma}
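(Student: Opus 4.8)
The plan is to treat $\tilde w_0 = \frac1n\sum_{i=1}^n z_i\varphi_{\bv_i}$ as the empirical average of $n$ i.i.d.\ $\Gamma$-valued random variables $z_i\varphi_{\bv_i}$, each of norm at most $s$ (since $|z_i|\le 1$ and $\|\varphi_\bv\|_\Gamma = \sqrt{\kn(\bv,\bv)}\le s$), with common mean $w_0 = \E_\mD[z\varphi_\bv]$. The argument has two parts: bound the expected deviation $\E\|\tilde w_0 - w_0\|_\Gamma$, then add concentration of this (real-valued) quantity around its mean.

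For the first part I would use Jensen's inequality and independence of the summands:
\[
\big(\E\|\tilde w_0 - w_0\|_\Gamma\big)^2 \le \E\|\tilde w_0 - w_0\|_\Gamma^2 = \frac1n\,\E\|z_1\varphi_{\bv_1} - w_0\|_\Gamma^2 \le \frac{4s^2}{n},
\]
where the last step uses $\|z_1\varphi_{\bv_1} - w_0\|_\Gamma \le \|z_1\varphi_{\bv_1}\|_\Gamma + \|w_0\|_\Gamma \le 2s$, together with the bound $\|w_0\|_\Gamma\le s$ from Lemma~\ref{lem:corr1}. Hence $\E\|\tilde w_0 - w_0\|_\Gamma \le 2s/\sqrt n$, which is at most $\alpha/(6r)$ once $n \ge 144\,r^2 s^2/\alpha^2$.

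For the second part I would apply the bounded-differences (McDiarmid) inequality to the real-valued function $F := \|\tilde w_0 - w_0\|_\Gamma$ of the $n$ independent samples $(\bv_i,z_i)$. Replacing one sample $(\bv_i,z_i)$ by $(\bv_i',z_i')$ changes $\tilde w_0$ by the vector $\frac1n(z_i\varphi_{\bv_i} - z_i'\varphi_{\bv_i'})$, which has $\Gamma$-norm at most $2s/n$, so by the triangle inequality $F$ changes by at most $2s/n$. McDiarmid then gives
\[
\Pr\big[F \ge \E F + t\big] \le \exp\!\Big(-\frac{2t^2}{n\cdot(2s/n)^2}\Big) = \exp\!\Big(-\frac{t^2 n}{2 s^2}\Big).
\]
Choosing $t = \alpha/(6r)$, the right-hand side is at most $\delta$ provided $n \ge (72\,r^2 s^2/\alpha^2)\log(1/\delta)$. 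Combining the two parts, for $n \ge n_0 = O(r^2 s^2\alpha^{-2}\log(1/\delta))$ we obtain $F \le \E F + \alpha/(6r) \le \alpha/(6r) + \alpha/(6r) = \alpha/(3r)$ with probability at least $1-\delta$, as claimed.

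I do not expect a genuine obstacle here; this is a routine Hilbert-space concentration argument that mirrors the displayed computation in the analogous (commented-out) lemma earlier in the paper. The only point needing a sentence of care is verifying that $F$ has the bounded-differences property — i.e.\ applying the reverse triangle inequality to the perturbation of $\|\tilde w_0 - w_0\|_\Gamma$ — and tracking constants so that the two $\alpha/(6r)$ error budgets sum to the target $\alpha/(3r)$. An equally clean alternative would be to invoke a vector-valued Bernstein/Pinelis-type inequality directly for $\tilde w_0 - w_0$, which yields the same bound and bypasses even the reverse-triangle step.
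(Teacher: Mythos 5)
Your proof is correct and follows essentially the same route as the paper's: bound $\E\|\tilde w_0 - w_0\|_\Gamma$ via the second moment (independence kills the cross terms) plus Jensen, then apply McDiarmid's bounded-differences inequality to the real-valued norm, splitting the error budget into two $\alpha/(6r)$ halves. Your explicit verification of the bounded-differences constant $2s/n$ and the constant tracking are both accurate.
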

The proof uses McDiarmid's inequality (see e.g.\ Lemma 26.4 of \cite{SSbook}).
\begin{proof}
        We can write
        \begin{align*} 
    \|\tilde{w}_0 - w_0\|_\Gamma = \norm{\sum_{i=1}^n \frac{z_i\varphi_{\bv_i}}{n} - w_0}_\Gamma = \fr{n}\norm{\sum_{i=1}^n (z_i\varphi_{\bv_i} - w_0)}_\Gamma
        \end{align*} 
        Since each term $z_i\varphi_{\bv_i} - w_0$ has expectation $0$, and the terms are independent, for $i \neq j$ 
        \[ \E[\ip{z_i\varphi_{\bv_i} - w_0}{z_j\varphi_{\bv_j} - w_0}_\Gamma] = 0\]
        Hence we can bound
\begin{align*}
 \E[\|\tilde{w}_0 - w_0\|_\Gamma^2]
&= \fr{n^2}\E\left[\sum_{i=1}^n {\|z_i\varphi_{\bv_i} - w_0\|_\Gamma^2} + \sum_{i \neq j} \ip{z_i\varphi_{\bv_i} - w_0}{\z_j\varphi_{\bv_j} - w_0}\right]\\
& = \frac 1n \E[\|z_1\varphi_{\bv_1} - w_0\|_\Gamma^2]\\
& \le 4s^2/n \le (\alpha/(6r))^2.
\end{align*}
by our choice of $n$. By convexity,
\begin{align}
\label{eq:exp1}
\E[\|\tilde{w}_0 - w_0\|_\Gamma] \leq \frac{\alpha}{6r}.
\end{align}
        Note that each i.i.d.\ term $z_i\varphi_{\bv_i}$ in the definition of $\tilde{w}_0$ has norm $\|z_i\varphi_{\bv_i}\|_\Gamma \le s$, so by McDiarmid's inequality, with probability at least $1 -\delta$, 
\begin{equation}
\label{eq:kernel-2}
\big |\norm{\tilde{w}_0 - w_0}_\Gamma - \E[\norm{\tilde{w}_0 - w_0}_\Gamma] \big| \le  \alpha /(6r). 
\end{equation}
Combining this with Equation \eqref{eq:exp1} gives the desired claim.
\end{proof}
We need the following simple helper lemma to finish proving \Cref{thm:kernel-weak}:
\begin{lemma}
\label{lm:inner-lower}
Let $w,\tilde w$ be elements of a Hilbert space $\Gamma$. If $\tilde w \ne 0$, define $\bar w = \tilde w/\|\tilde w\|_\Gamma$. If $\tilde w = 0$, define $\bar w$ to be an arbitrary element of $B_\Gamma(1)$. Then
\[
\langle w, \bar w\rangle \ge \|\tilde w\|_\Gamma - \|w - \tilde w\|_\Gamma\ge \|w\|_\Gamma - 2\|w - \tilde w\|_\Gamma.
\]
\end{lemma}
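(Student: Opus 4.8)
The statement is an elementary Hilbert-space inequality, so the plan is a direct two-case computation. First I would dispose of the degenerate case $\tilde w = 0$: here $\|\tilde w\|_\Gamma = 0$ and $\|w - \tilde w\|_\Gamma = \|w\|_\Gamma$, so the middle and right-hand expressions both equal $-\|w\|_\Gamma$, and since $\bar w \in B_\Gamma(1)$, Cauchy--Schwarz gives $\langle w,\bar w\rangle \ge -\|w\|_\Gamma\|\bar w\|_\Gamma \ge -\|w\|_\Gamma$, which is exactly what is needed.

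For the main case $\tilde w \ne 0$, I would write $\bar w = \tilde w/\|\tilde w\|_\Gamma$ and decompose $w = \tilde w + (w - \tilde w)$. Then
\[
\langle w, \tilde w\rangle = \|\tilde w\|_\Gamma^2 + \langle w - \tilde w, \tilde w\rangle \ge \|\tilde w\|_\Gamma^2 - \|w - \tilde w\|_\Gamma\,\|\tilde w\|_\Gamma,
\]
using Cauchy--Schwarz on the cross term. Dividing through by $\|\tilde w\|_\Gamma > 0$ yields
\[
\langle w, \bar w\rangle = \frac{\langle w, \tilde w\rangle}{\|\tilde w\|_\Gamma} \ge \|\tilde w\|_\Gamma - \|w - \tilde w\|_\Gamma,
\]
which is the first claimed inequality. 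The second inequality then follows immediately from the triangle inequality $\|\tilde w\|_\Gamma \ge \|w\|_\Gamma - \|w - \tilde w\|_\Gamma$, so that $\|\tilde w\|_\Gamma - \|w - \tilde w\|_\Gamma \ge \|w\|_\Gamma - 2\|w - \tilde w\|_\Gamma$.

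There is essentially no obstacle here; the only point requiring a moment's care is handling $\tilde w = 0$ separately (so that $\bar w$ is well defined), and checking that the chosen $\bar w$ in that case still satisfies the bound, which it does because any unit vector works. The lemma will be applied with $w = w_0$ (the population object) and $\tilde w = \tilde w_0$ (its empirical estimate), combined with the concentration bound $\|\tilde w_0 - w_0\|_\Gamma \le \alpha/(3r)$ from \Cref{lm:hilbert-concentration} and the lower bound $\|w_0\|_\Gamma \ge \alpha/r$ coming from $\opt(\mH,\mU) \ge \alpha$ via \Cref{lem:corr1}, to conclude the guarantee of \Cref{thm:kernel-weak}.
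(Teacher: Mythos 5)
Your proof is correct and follows essentially the same route as the paper: Cauchy--Schwarz on the cross term $\langle w-\tilde w,\cdot\rangle$ for the first inequality and the triangle inequality for the second. The only cosmetic difference is that you work with $\langle w,\tilde w\rangle$ and divide by $\|\tilde w\|_\Gamma$, while the paper works directly with $\bar w$ (which also handles the $\tilde w=0$ case without a separate branch, since $\langle \tilde w,\bar w\rangle = \|\tilde w\|_\Gamma$ holds there too); your explicit treatment of that degenerate case is fine.
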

\begin{proof}
We have
\begin{align*}
\langle w,\bar w\rangle \ge \langle \tilde w, \bar w\rangle - \|w - \tilde w\|_\Gamma & = \|\tilde w\|_\Gamma - \|w - \tilde w\|_\Gamma \ge \|w\|_\Gamma - 2\|w - \tilde w\|_\Gamma.\qedhere
\end{align*}
\end{proof}
\begin{proof}[Proof of \Cref{thm:kernel-weak}]
In the weak agnostic learning task, we assume that there exists $w \in B_\Gamma(r)$ so that
    \[ \E_{(\bv, \z)\sim \mD}[w(\bv) z] \ge \alpha.\]
Under this assumption, \Cref{lem:corr1} tells us that $\langle w_0,w\rangle_\Gamma \ge \alpha$. Since $w\in B_\Gamma(r)$, we have $\|w\|_\Gamma\le r$, and by the Cauchy-Schwarz inequality, $r\, \|w_0\|_\Gamma\ge \|w_0\|_\Gamma \|w\|_\Gamma \ge \langle w_0,w\rangle_\Gamma$.
Therefore, we can assume that $\|w_0\|_\Gamma \ge \alpha/r$ in the weak agnostic learning task.

\Cref{lm:hilbert-concentration} ensures that \eqref{eq:good-event} holds with probability at least $1-\delta$.
As long as \eqref{eq:good-event} holds,
by \Cref{lm:inner-lower} we have
\begin{equation}
\label{eq:kernel-weak-1}
\left\langle w_0, \frac{\tilde w_0}{\|\tilde w_0\|_\Gamma}\right\rangle \ge \|w_0\|_\Gamma - 2\|w_0 - \tilde w\|_\Gamma \ge \frac \alpha {3r}.
\end{equation}
The output $w_2$ of \Cref{alg:kernel-weak} can be expressed as
\begin{equation}
\label{eq:kernel-weak-2}
w_2 = \frac{\tilde w_0}{s\  \|\tilde w_0\|_\Gamma}.
\end{equation}
Combining \eqref{eq:kernel-weak-1} and \eqref{eq:kernel-weak-2}, we know that with probability at least $1-\delta$,
\[
\langle w_0,w_2\rangle \ge \frac \alpha{3rs}.
\]
By \Cref{lem:corr1}, the inequality above implies the weak learning guarantee, namely, $\E[w_2(\bv)z] \ge \alpha/(3rs)$. 
Finally, it is clear that $\|w_2\|_\Gamma \le 1/s$, so $w_2\in B_\Gamma(1/s)$, as desired.
\end{proof}
\subsection{Proof of \Cref{thm:kernel-audit}}
Consider a distribution $\mD$ of $(\bv,\y)\in \Delta_k \times \mE_k$, and define $\z:= \y - \bv$. Define $w_0\sps j:= \E [\z\sps j\varphi_\bv]$, where $\z\sps j$ is the $j$-th coordinate of $\z$. For $n$ i.i.d.\ data points $(\bv_1,\y_1),\ldots,(\bv_n,\y_n)$, define $\z_i := \y_i - \bv_i$. Define $\tilde w_0\sps j:= \frac 1n \sum_{i=1}^n \z_i\sps j\varphi_{\bv_i}$.
\begin{lemma}
\label{lm:concentration-audit}
When $n\ge Ckr^2s^2\varepsilon^{-2}\log(1/\delta)$, with probability at least $1-\delta$,
\begin{equation}
\label{eq:concentration-audit}
\sum_{j=1}^k\|\tilde w_0\sps j - w_0\sps j\|_\Gamma \le \alpha / (3r).
\end{equation}
\end{lemma}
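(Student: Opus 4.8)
The plan is to adapt the single-coordinate estimate in the proof of Lemma~\ref{lm:hilbert-concentration} so that it handles all $k$ coordinates at once, while being careful to exploit that $\z := \y - \bv$ satisfies $\|\z\|_1 \le \|\y\|_1 + \|\bv\|_1 \le 2$. This $\ell_1$ bound on $\z$ is the whole point: it is what keeps the sample complexity linear in $k$ rather than quadratic. I will require $n \ge Ckr^2s^2\alpha^{-2}\log(1/\delta)$ for a large enough absolute constant $C$ (the $C$ in the statement), and split the argument into a bound on the expectation of $\sum_j\|\tilde w_0\sps j - w_0\sps j\|_\Gamma$ and a concentration step.

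\textbf{Step 1: bounding the expectation.} For each $j$, I would write $\tilde w_0\sps j - w_0\sps j = \tfrac1n\sum_{i=1}^n(\z_i\sps j\varphi_{\bv_i} - w_0\sps j)$ as an average of i.i.d.\ mean-zero elements of $\Gamma$. The cross terms vanish in expectation, so $\E\big\|\tilde w_0\sps j - w_0\sps j\big\|_\Gamma^2 = \tfrac1n\E\big\|\z\sps j\varphi_\bv - w_0\sps j\big\|_\Gamma^2 \le \tfrac1n\E\big[(\z\sps j)^2\|\varphi_\bv\|_\Gamma^2\big] \le \tfrac{s^2}{n}\E[(\z\sps j)^2]$. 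Since $|\z\sps j| = |\y\sps j - \bv\sps j| \le 1$, we have $(\z\sps j)^2 \le |\z\sps j|$, hence $\sum_j\E[(\z\sps j)^2] \le \E\|\z\|_1 \le 2$ and therefore $\sum_{j=1}^k\E\big\|\tilde w_0\sps j - w_0\sps j\big\|_\Gamma^2 \le 2s^2/n$. Applying Jensen's inequality ($\E\|X\| \le (\E\|X\|^2)^{1/2}$) coordinatewise and then Cauchy--Schwarz across the $k$ coordinates,
\[
\sum_{j=1}^k\E\big\|\tilde w_0\sps j - w_0\sps j\big\|_\Gamma \;\le\; \sum_{j=1}^k\big(\E\big\|\tilde w_0\sps j - w_0\sps j\big\|_\Gamma^2\big)^{1/2} \;\le\; \Big(k\sum_{j=1}^k\E\big\|\tilde w_0\sps j - w_0\sps j\big\|_\Gamma^2\Big)^{1/2} \;\le\; s\sqrt{2k/n},
\]
which is at most $\alpha/(6r)$ once $n \ge 72\,ks^2r^2/\alpha^2$.

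\textbf{Step 2: concentration via McDiarmid.} I would then view $F := \sum_{j=1}^k\|\tilde w_0\sps j - w_0\sps j\|_\Gamma$ as a function of the $n$ i.i.d.\ examples. Replacing one example $(\bv_i,\y_i)$ by $(\bv_i',\y_i')$ changes $\tilde w_0\sps j$ by $\tfrac1n(\z_i'\sps j\varphi_{\bv_i'} - \z_i\sps j\varphi_{\bv_i})$, whose $\Gamma$-norm is at most $\tfrac sn(|\z_i'\sps j| + |\z_i\sps j|)$ because $\|\varphi_\bv\|_\Gamma \le s$; summing over $j$ and using $\|\z_i\|_1,\|\z_i'\|_1 \le 2$ shows $F$ changes by at most $4s/n$. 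McDiarmid's inequality then gives $\Pr[F \ge \E F + \alpha/(6r)] \le \exp(-\alpha^2 n/(288\,r^2s^2)) \le \delta$ as soon as $n \ge 288\,r^2s^2\log(1/\delta)/\alpha^2$. Both conditions on $n$ are dominated by $Ckr^2s^2\alpha^{-2}\log(1/\delta)$ for a suitable absolute constant $C$ (using $k \ge 1$ and $\log(1/\delta) = \Omega(1)$ for $\delta < 1/2$), so with probability at least $1-\delta$ we get $F \le \E F + \alpha/(6r) \le \alpha/(3r)$, which is \eqref{eq:concentration-audit}.

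\textbf{Main obstacle.} The one subtle point is avoiding an extra factor of $k$. A naive coordinatewise use of Lemma~\ref{lm:hilbert-concentration} would bound each $\E\|\tilde w_0\sps j - w_0\sps j\|_\Gamma$ by $O(s/\sqrt n)$ and sum them to $O(ks/\sqrt n)$, forcing $n = \Omega(k^2 s^2 r^2/\alpha^2)$. Instead, bounding the \emph{sum of squared} coordinate errors, which is controlled by $\sum_j(\z\sps j)^2 \le \|\z\|_1 \le 2$ rather than by $k$, and only then invoking Cauchy--Schwarz, costs just $\sqrt k$ — exactly the dependence in the target sample complexity. The same $\ell_1$ bound on $\z_i$ is also what keeps the bounded-difference coefficient in McDiarmid at $O(s/n)$ rather than $O(ks/n)$.
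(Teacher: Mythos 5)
Your proof is correct and follows essentially the same route as the paper's: bound the sum of expected errors via the second-moment estimate $\E\|\tilde w_0\sps j - w_0\sps j\|_\Gamma^2 \le \tfrac{s^2}{n}\E[(\z\sps j)^2]$, use $\sum_j\E[(\z\sps j)^2]\le\E\|\z\|_1\le 2$ together with Cauchy--Schwarz to pay only $\sqrt{k}$, and then apply McDiarmid. You additionally spell out the bounded-difference constant $4s/n$, which the paper leaves implicit, and that computation is correct.
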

\begin{proof}
We first show that
\[
\sum_{j=1}^k\E\|\tilde w_0\sps j - w_0\sps j\|_\Gamma \le \alpha/(6r).
\]
For every $j$,
\begin{align*}
\E[\|\tilde w_0\sps j - w_0\sps j\|_\Gamma^2] & = \E\left[\left\| \frac 1n \sum_{i=1}^n \z_i\sps j\varphi_{\bv_i} - w_0\sps j\right\|_\Gamma^2\right] \\
& = \frac 1{n^2}\sum_{i=1}^n \E\|\z_i\sps j\varphi_{\bv_i} - w_0\sps j\|_\Gamma^2 + \frac 1{n^2}\sum_{i\ne i'} \langle \z_i\sps j\varphi_{\bv_i} - w_0\sps j, \z_{i'}\sps j\varphi_{\bv_{i'}} - w_0\sps j \rangle \\
& = \frac 1{n^2}\sum_{i=1}^n \E\|\z_i\sps j\varphi_{\bv_i} - w_0\sps j\|_\Gamma^2\\
& = \frac 1n \E[\|\z\sps j_1\varphi_{\bv_1} - w_0\sps j\|_\Gamma^2]\\
& \le \frac 1n \E[\|\z\sps j_1\varphi_{\bv_1}\|_\Gamma^2]\\
& \le \frac {s^2}n\E[(\z_1\sps j)^2]
\end{align*}
By Cauchy-Schwarz,
\[
\sum_{j=1}^k\sqrt{\E[(\z_1\sps j)^2]} \le \sqrt{k\sum_{j=1}^k\E[(\z_1\sps j)^2]} \le \sqrt{k\sum_{j=1}^k\E|\z_1\sps j|} \le \sqrt{2k}.
\]
Therefore,
\[
\sum_{j=1}^k\E\|\tilde w_0\sps j - w_0\sps j\|_\Gamma\le \sum_{j=1}^k\sqrt{\E[\|\tilde w_0\sps j - w_0\sps j\|_\Gamma^2]} \le \frac {s\sqrt {2k}}{\sqrt n}  \le \alpha/(6r).
\]
Finally, we apply McDiarmid's inequality to the following function of $(\z_1,\bv_1),\ldots,(\z_n,\bv_n)$:
\[
\sum_{j=1}^k\|\tilde w_0\sps j - w_0\sps j\|_\Gamma = \sum_{j=1}^k\left\|\frac 1n \sum_{i=1}^n\z_i\sps j\varphi_{\bv_i} - w_0\sps j\right\|_\Gamma
\]
and get that with probability at least $1-\delta$,
\[
\sum_{j=1}^k\|\tilde w_0\sps j - w_0\sps j\|_\Gamma \le \sum_{j=1}^k\E\|\tilde w_0\sps j - w_0\sps j\|_\Gamma + \alpha/(6r) \le \alpha/(3r). \qedhere
\]
\end{proof}
\begin{proof}[Proof of \Cref{thm:kernel-audit}]
In the auditing task, we assume that there exists $w\in B_\Gamma(r)^k$ such that
\[
\E[\langle \y - \bv, w(\bv)\rangle] \ge \alpha.
\]
Using our definition of $\z:=\y - \bv$ and $w_0\sps \ell:= \E[\z\sps \ell \varphi_\bv]$, by \Cref{lem:corr1} we have
\[
\sum_{\ell =1}^k\|w_0\sps \ell \|_\Gamma \ge \frac 1r \sum_{\ell = 1}^k\langle w_0\sps \ell, w\sps \ell\rangle_\Gamma = \frac 1r \sum_{\ell = 1}^k \E[\z\sps j w\sps j(\bv)] = \frac 1r \E[\langle \y - \bv, w(\bv)\rangle] \ge \alpha/r.
\]
\Cref{lm:concentration-audit} ensures that \eqref{eq:concentration-audit} holds with probability at least $1-\delta$. 
Define $\bar w_0:= \tilde w_0/\|\tilde w_0\|_\Gamma$ if $\tilde w_0\ne 0$, and define $\bar w_0 := 0$ if $\tilde w_0 = 0$.
As long as \eqref{eq:concentration-audit} holds, by \Cref{lm:inner-lower},
\[
\sum_{\ell =1}^k\langle w_0\sps \ell,\bar w_0\sps \ell \rangle \ge \sum_{\ell =1}^k \|w_0\sps \ell\|_\Gamma - 2\sum_{\ell =1}^k \|\tilde w_0\sps \ell - w_0\sps \ell\|_\Gamma \ge \alpha/(3r).
\]
In \Cref{alg:kernel-audit}, we have $w_2\sps \ell = \bar w_0\sps \ell/s$, and thus the inequality above implies
\[
\sum_{\ell =1}^k\langle w_0\sps \ell,w_2\sps \ell \rangle \ge \alpha/(3rs).
\]
Therefore by \Cref{lem:corr1}, with probability at least $1-\delta$,
\[
\E[\langle \y - \bv, w_2(\bv)\rangle] = \sum_{\ell=1}^k \E[\z\sps \ell w_2\sps \ell(\bv)] = \sum_{\ell=1}^k \langle w_0\sps \ell,w_2\sps \ell \rangle \ge \alpha/(3rs).
\]
This proves that the output $w_2$ of \Cref{alg:kernel-audit} satisfies the requirement of the auditing task. Finally, it is clear that each $w_2\sps \ell$ has norm $\|w_2\sps \ell\|_\Gamma \le 1/s$, so $w_2\in B_\Gamma(1/s)^k$, as desired.
\end{proof}

\eat{
\subsection{Proof of \Cref{lm:smooth}}
\begin{proof}[Proof of \Cref{lm:smooth}]
    To prove the first inequality, let $T^0 \subset [k]$ be the set that maximizes $\smCE(\tp, T)$, and $\phi^0 \in \Lip$ the Lipschtiz function that witnesses it, so that
    \[ \ssCE_\mD(\tp) = \E_{\mD} [\ip{\ind{T^0}}{\y^* - \bv}\phi^0(\ip{\ind{T^0}}{\bv})] = \E_{\mD} [\ip{\phi^0(\ip{\ind{T^0}}{\bv})\ind{T^0}}{\y^* - \bv}]. \]
    We define the auditor function $\psi^0 \in \pLip^k$ where $\psi^0_i(v) = \phi^0(\ip{\ind{T^0}}{v})$ for $i \in T_0$ and $0$ otherwise, so that $\psi(v) = \ip{\ind{T^0}}{v}\phi^0(\ip{\ind{T^0}}{v})$. Hence
    \[ \psCE_\mD(\tp) = \max_{\psi \in \pLip^k} \E_\mD[\ip{\y^* - \bv}{\psi(\bv)}] \geq \E_\mD[\ip{\y^* - \bv}{\psi^0(\bv)}] = \ssCE_\mD(\tp).\]
    
    The second inequality is implied by the inclusion $\pLip \subseteq \fLip$. So consider a function $\psi \in \pLip$ such that $\psi(v) = \phi(\ip{w}{v})$ for $\phi \in \Lip, w \in [-1,1]^k$. We have
    \begin{align*}
        |\phi(\ip{w}{v}) - \phi(\ip{w}{v'})| &\leq |\ip{w}{v} - \ip{w}{v'}|\\
        & = |\ip{w}{v - v'}|\\
        & \leq \norm{w}_\infty\norm{v - v'}_1\\
        & \leq \norm{v -v'}_1
    \end{align*}
    where the first inequality uses the Lipschitz property of $\phi$. This shows $\psi \in \fLip$, which completes the proof.
\end{proof}}

\eat{
\label{app:full}
\begin{lemma}
\label{lm:packing}
There exists an absolute constants $c > 0$ and $k_0 > 0$ with the following property. For any positive integer $k > k_0$, there exists a set $V\subseteq\Delta_k$ with the following properties:
\begin{enumerate}
\item $|V| \ge 2^{ck}$;
\item $\|v_1 - v_2\|_1 \ge 1/3$ for any distinct $v_1,v_2\in V$;
\item $\|v - \e_i\|_1 \ge 1/3$ for any $v\in V$ and $i\in \{1,\ldots,k\}$.
\end{enumerate}
\end{lemma}
\begin{proof}
The lemma can be proved by a simple greedy algorithm. Let us start with $U = \mE = \{\e_1,\ldots,\e_k\}$ and repeat the following step: if there exists $v\in \Delta_k$ such that $\|v - u\|_1 \ge 1/3$ for every $u\in U$, we add $v$ to $U$. We repeat the step until no such $v$ exists and we finally set $V$ to be $U\setminus \mE$. Clearly, $V$ satisfies properties 2 and 3 required by the lemma. It remains to prove that $V$ also satisfies property 1. 

Consider the final $U$ in the process of the algorithm. For any $u\in U$, consider a set $S_u$ consisting of all points $s\in \R^{k-1}$ such that $\|s - u|_{1,\ldots,k-1}\|_1 \le 1/3$. Also, consider the set $S$ consisting of all points $s\in \R_{\ge 0}^{k-1}$ such that $\|s\|_1 \le 1$. If $S\setminus (\bigcup_{u\in U}S_u)$ is non-empty, then we can take any $s$ in that set and construct a vector $v = (s|_1,\ldots,s|_{k-1}, 1- s|_1 - \cdots - s|_{k-1})\in \Delta_k$. Since $s\notin S_u$, it is easy to see that $\|v - u\|_1 > 1/3$ for every $u\in U$, which implies that the iterative steps of the algorithm can be continued. Therefore, for the final $U$, it must hold that $S\setminus (\bigcup_{u\in U}S_u)$ is empty. The volume of each $S_u$ is the volume of $S$ multiplied by $(2/3)^{k-1}$, so $|U| \ge (3/2)^{k-1}$, and thus $|V| \ge (3/2)^{k-1} - k \ge 2^{ck}$, where the last inequality holds whenever $k$ is sufficiently large and $c > 0$ is sufficiently small.
\end{proof}

\begin{lemma}
\label{lm:miscalibrated}
For any positive integer $k$, let $V\subseteq\Delta_k$ be the set guaranteed by \Cref{lm:packing}. For a function $h:V\to \mE_k$, define distribution $\mD_h$ of $(v,y)\in V\times \mE$ such that $v$ is distributed uniformly over $V$ and $y = h(v)$. Then the full smooth calibration error of $V$ is at least $1/36$.
\end{lemma}
\begin{proof}
For any $v\in V$, by property 3 in \Cref{lm:packing} and the fact that $h(v)\in \{\e_1,\ldots,\e_k\}$, we have $\|v - h(v)\|_1\ge 1/3$. This implies that $\langle h(v) - v,h(v)\rangle \ge 1/6$, and thus
\[
\E_{(v,y)\sim \mD_h}[\langle y - v,h(v)\rangle] \ge 1/6.
\]
To complete the proof, it remains to show that $h$ is $6$-Lipschitz. For any distinct $v,v'\in V$, we have $\|h(v) - h(v')\|_\infty \le \|h(v) - h(v')\|_1 \le 2 \le 6\|v - v'\|_1$, where the last inequality uses property 2 in \Cref{lm:packing}.
\end{proof}

\begin{lemma}
\label{lm:indistinguishable}
Let $A$ be any algorithm that takes $(v_1,y_1),\ldots,(v_n,y_n)\in V\times \mE_k$ as input, and outputs \acc\ or \rej. Let $p_1$ be the acceptance probability when each $(v_i,y_i)$ is drawn independently and uniformly from $V\times \mE_k$. Let $p_2$ be the acceptance probability where we first draw $h:V\to \mE_k$ uniformly at random and then draw each $(v_i,y_i)$ independently from $\mD_h$. Then,
\[
|p_1 - p_2| \le O(n^2/|V|).
\]
\end{lemma}
\begin{proof}
Assume without loss of generality that $n < |V|$.
Let $p_3$ denote the acceptance probability when we first draw $v_1,\ldots,v_n$ uniformly from $V$ \emph{without replacement}, and then draw each $y_i$ independently and uniformly from $\mE_k$. We relate $p_1$ and $p_2$ to $p_3$ as follows.

Suppose we draw each $(v_i,y_i)$ independently and uniformly from $V\times \mE_k$. The probability that $v_1,\ldots,v_n$ are distinct is 
\[
p_4:=(1 - 1/|V|)\cdots (1 - (n-1)/|V|)\ge 1 - 1/|V| - \cdots - (n - 1)/|V| \ge 1 - O(n^2/|V|). \]
Conditioned on that event, the acceptance probability is exactly $p_3$. Conditioned on the complement of that event, the acceptance probability is bounded in $[0,1]$. Therefore, 
\[
p_3p_4 \le p_1\le p_3p_4 + (1 - p_4).
\]
Similarly, we can show that
\[
p_3p_4 \le p_2\le p_3p_4 + (1 - p_4).
\]
Combining these inequalities, we get $|p_1 - p_2|\le 1 - p_4 \le O(n^2/|V|)$.
\end{proof}

\begin{proof}[Proof of \Cref{thm:full}]
For a distribution $\mD$ over $\Delta_k\times \mE_k$, consider the following procedure. We first draw $n$ i.i.d.\ examples from $\mD$ to obtain a function $h:\Delta_k\to [-1,1]^k$. We then draw $O(1/\beta^2)$ fresh examples $(v_1,y_1),\ldots,(v_m,y_m)$ and compute 
\[
\mathsf{adv}:=\frac 1m \sum_{i=1}^m \langle y_i - v_i, h(v_i)\rangle.
\]
If $\mathsf{adv} > \beta/2$, we output \acc; otherwise, we output \rej.

We analyze the procedure above for difference choices of the distribution $\mD$. If $\mD$ is the uniform distribution over $V\times \mE_k$, then $\E[\langle y - v, h(v)\rangle] = 0$, and by the Chernoff bound the procedure outputs \rej\ with probability at least $0.9$. If $\mD$ is $\mD_h$ for some $h:V\to \mE_k$, then by \Cref{lm:miscalibrated} and guarantee of algorithm $A$, with probability at least $2/3$ we have
\[
\E[\langle y - v, \hat h(v)\rangle] \ge \beta.
\]
Conditioned on that, by the Chernoff bound, with probability at least $0.9$ we have $\mathsf{adv} \ge \beta/2$. Therefore, the procedure outputs \acc\ with probability at least $1/2$.

By \Cref{lm:indistinguishable}, we have $n + m \ge \Omega(\sqrt{|V|})$. Using property 1 in \Cref{lm:packing}, we get $n \ge \Omega(\sqrt{|V|}) - m \ge (1/\alpha)^{ck}$ for a sufficiently small absolute constant $c > 0$ assuming $k$ is sufficiently large.
\end{proof}
}

\end{document}